\newcommand{\x}{{\bf x}}
\newcommand{\y}{{\bf y}}
\newcommand{\w}{{\bf w}}
\newcommand{\bu}{{\bf u}}
\newcommand{\bz}{{\bf z}}
\newcommand{\bepsilon}{{\bm \varepsilon}}
\newcommand{\bLambda}{{\bm \Lambda}}
\newcommand{\balpha}{{\bm \alpha}}
\newcommand{\bupsilon}{{\bm \upsilon}}
\newcommand{\bkappa}{{\bm \kappa}}
\newcommand{\bUpsilon}{{\bm \Upsilon}}
\newcommand{\bDelta}{{\bm \Delta}}
\newcommand{\btLambda}{{\widetilde{\bm \Lambda}}}
\newcommand{\btmu}{{\widetilde{\bm \mu}}}
\newcommand{\btu}{{\widetilde{\bf u}}}
\newcommand{\bttheta}{{\widetilde{\bm \theta}}}
\newcommand{\bPhi}{{\bm \Phi}}
\newcommand{\btPhi}{{\widetilde{\bm \Phi}}}
\newcommand{\bEpsilon}{\bm{\mathcal E}}
\newcommand{\bU}{{\bf U}}
\newcommand{\bX}{{\bf X}}
\newcommand{\bK}{{\bf K}}
\newcommand{\btK}{{\widetilde{\bf K}}}
\newcommand{\bV}{{\bf V}}
\newcommand{\bQ}{{\bf Q}}
\newcommand{\bA}{{\bf A}}
\newcommand{\bR}{{\bf R}}
\newcommand{\bv}{{\bf v}}
\newcommand{\br}{{\bf r}}
\newcommand{\ba}{{\bf a}}
\newcommand{\bone}{\mathbbm{1}}
\newcommand{\bI}{{\bf I}}
\newcommand{\bS}{{\bf S}}
\newcommand{\btS}{{\widetilde{\bf S}}}
\newcommand{\bsigma}{{\bm \sigma}}
\newcommand{\bGamma}{{\bf \Gamma}}
\newcommand{\btheta}{{\bm \theta}}
\newcommand{\bpi}{{\bm \pi}}
\newcommand{\bPi}{{\bm \Pi}}
\newcommand{\bmu}{{\bm \mu}}
\newcommand{\bphi}{{\bm \phi}}
\newcommand{\btphi}{\widetilde{\bm \phi}}
\newcommand{\bnu}{{\bm \nu}}
\newcommand{\bbP}{\mathbb P}
\newcommand{\bbR}{\mathbb R}
\newcommand{\bbE}{\mathbb E}
\newcommand{\cX}{\mathcal X}
\newcommand{\cH}{\mathcal H}
\newcommand{\ctO}{{\widetilde{\mathcal O}}}
\newcommand{\cN}{\mathcal N}
\newcommand{\cF}{\mathcal F}
\newcommand{\cW}{\mathcal W}
\newcommand{\cL}{\mathcal L}
\newcommand{\cA}{\mathcal A}
\newcommand{\ctZ}{\widetilde{\mathcal Z}}
\newcommand{\cO}{\mathcal O}
\newcommand{\cM}{\mathcal M}
\newcommand{\cV}{\mathcal V}
\newcommand{\cZ}{\mathcal Z}
\newcommand{\cC}{\mathcal C}
\newcommand{\cU}{\mathcal U}
\newcommand{\cS}{\mathcal S}
\newcommand{\fR}{\mathfrak R}
\newcommand{\fs}{\mathfrak s}
\DeclareMathOperator*{\argmin}{arg\,min}
\newtheorem{theorem}{Theorem}
\newtheorem{lemma}{Lemma}
\newtheorem{assumption}{Assumption}
\newtheorem{proposition}{Proposition}
\newtheorem{definition}{Definition}
\newtheorem{remark}{Remark}
\DeclareMathOperator*{\argmax}{arg\,max}
\begin{document}

\title{Provably Efficient Cooperative Multi-Agent Reinforcement Learning with Function Approximation}

\author{Abhimanyu Dubey and Alex Pentland\thanks{Media Lab and Institute for Data, Systems and Society, Massachusetts Institute of Technology. Corresponding email: \texttt{dubeya@mit.edu}.}}
\date{}
\maketitle
\begin{abstract}%
Reinforcement learning in cooperative multi-agent settings has recently advanced significantly in its scope, with applications in cooperative estimation for advertising, dynamic treatment regimes, distributed control, and federated learning. In this paper, we discuss the problem of cooperative multi-agent RL with function approximation, where a group of agents communicates with each other to jointly solve an episodic MDP.  We demonstrate that via careful message-passing and cooperative value iteration, it is possible to achieve near-optimal no-regret learning even with a fixed constant communication budget. Next, we demonstrate that even in heterogeneous cooperative settings, it is possible to achieve Pareto-optimal no-regret learning with limited communication. Our work generalizes several ideas from the multi-agent contextual and multi-armed bandit literature to MDPs and reinforcement learning.
\end{abstract}
    
\section{Introduction}
Cooperative multi-agent reinforcement learning (MARL) systems are widely prevalent in many engineering systems, e.g., robotic systems~\citep{ding2020distributed}, power grids~\citep{yu2014multi}, traffic control~\citep{bazzan2009opportunities}, as well as team games~\citep{zhao2019multi}. Increasingly, federated~\citep{yang2019federated} and distributed~\citep{peteiro2013survey} machine learning is gaining prominence in industrial applications, and reinforcement learning in these large-scale settings is becoming of import in the research community as well~\citep{zhuo2019federated, liu2019lifelong}. 

Recent research in the statistical learning community has focused on cooperative multi-agent decision-making algorithms with provable  guarantees~\citep{zhang2018fully, wai2018multi, zhang2018networked}. However, prior work focuses on algorithms that, while are decentralized, provide guarantees on convergence (e.g.,~\citet{zhang2018fully}) but no finite-sample guarantees for regret, in contrast to efficient algorithms with function approximation proposed for single-agent RL (e.g.,~\citet{jin2018q, jin2020provably, yang2020provably}). Moreover, optimization in the decentralized multi-agent setting is also known to be non-convergent without assumptions~\citep{tan1993multi}. Developing no-regret multi-agent algorithms is therefore an important problem in RL.

For the (relatively) easier problem of multi-agent multi-armed bandits, there has been significant recent interest in decentralized algorithms involving agents communicating over a network~\citep{landgren2016distributed, landgren2018social, martinez2018decentralized, dubey2020private}, as well as in the distributed settings~\citep{hillel2013distributed, wang2019distributed}. Since several application areas for distributed sequential decision-making regularly involve non-stationarity and contextual information~\citep{polydoros2017survey}, an MDP formulation can potentially provide stronger algorithms for these settings as well. Furthermore, no-regret algorithms in the single-agent RL setting with function approximation~(e.g., \citet{jin2020provably}) build on analysis techniques for contextual bandits, which leads us to the question -- \textit{Can no-regret function approximation be extended to (decentralized) cooperative multi-agent reinforcement learning?}

\textbf{Contributions}. In this paper, we answer the above question affirmatively for \textit{cooperative} multi-agent learning. Specifically, we study cooperative multi-agent reinforcement learning with \textit{linear} function approximation in two practical scenarios - the first being \textit{parallel} reinforcement learning~\citep{kretchmar2002parallel}, where a group of agents simultaneously solve isolated MDPs that are ``similar'' to each other, and communicate to facilitate faster learning. This corresponds to \textit{heterogenous} federated learning~\citep{li2018federated}, and generalizes transfer learning in RL~\citep{taylor2009transfer} to multiple sources~\citep{yao2010boosting}.

The second scenario we study is the heterogeneous multi-agent MDP, where a group of agents interact in an MDP by playing moves simultaneously, with the objective being to recover \textit{Pareto-optimal} multi-agent policies~\citep{tuyls112005evolutionary}, a more general notion of performance, compared to the standard objective of maximizing cumulative reward~\citep{boutilier1996planning}. Multi-agent MDPs are present in cooperative and distributed applications such as multi-agent robotics~\citep{yang2004multiagent,gupta2017cooperative}.

For each setting, we propose decentralized algorithms that are provably efficient with limited communication. Existing regret bounds for single-agent episodic settings scale as $\ctO(H^2\sqrt{d^3T})$ for $T$ episodes of length $H$\footnote{The $\ctO$ notation ignores logarithmic factors and failure probability, and $d$ is the dimensionality of the ambient feature space. See, e.g., \citet{yang2020provably} and \citet{jin2018q} for bounds.}, leading to a cumulative regret of $\ctO(MH^2\sqrt{d^3T})$ if $M$ agents operate in isolation. Similarly, a~\textit{fully centralized} agent running for $MT$ episodes will consequently obtain $\ctO(H^2\sqrt{d^3MT})$ regret. In comparison, for the \textit{parallel} setting, we provide a least-squares value iteration (LSVI) algorithm \texttt{\textbf{CoopLSVI}} which obtains a group regret of $\ctO((d+k)H^2\sqrt{(d+\chi)MT})$, where $\chi$ is a measure of heterogeneity between different MDPs, and $k$ is the minimum number of dimensions required to model the heterogeneity. When the MDPs are homogenous, our rate matches the \textit{centralized} single-agent regret. Moreover, our algorithm only requires $\cO(HM^3)$ rounds of communication, i.e., independent of the number of episodes $T$. The algorithm is a multi-agent variant of the popular upper confidence bound (UCB) strategies for reinforcement learning, where our key contributions are to first design an estimator that takes into account the bias introduced via heterogeneity between the different MDPs, and second, to strategically select episodes in which to synchronize statistics across agents without excessive communication.

For multi-agent MDPs, we introduce a variant of \texttt{\textbf{CoopLSVI}}, which attempts to recover the set of \textit{cooperative} Pareto-optimal policies, i.e., policies that cannot improve any individual agent's reward without decreasing the reward of the other agents~\citep{desai2018negotiable}. \texttt{\textbf{CoopLSVI}} obtains a cumulative \textit{Bayes regret} of $\ctO(H^2\sqrt{d^3T})$ over $T$ episodes, which is the first no-regret bound on learning Pareto-optimal policies. We use the method of \textit{random scalarizations}, a popular approach in multi-objective decision-making~\citep{van2014multi} coupled with optimistic least-squares value iteration, to provide a no-regret algorithm. Moreover, a direct corollary of our analysis is the \textit{first} no-regret algorithm for multi-objective RL~\citep{mossalam2016multi} with function approximation. 

\section{Related Work}
Our work is related to several areas of multi-agent learning. We discuss connections sequentially.

\textbf{Multi-Agent Multi-Armed Bandits}. The multi-agent bandit literature has seen a lot of interest recently, and given that our techniques build on function approximation techniques initially employed by the bandit community, many parallels can be drawn to our work and the multi-agent bandit literature. Several recent works have proposed no-regret algorithms. One line of work utilizes consensus-based averaging~\citep{martinez2018decentralized, landgren2016distributed, landgren2016distributed2, landgren2018social} which provide regret depending on statistics of the communication graph. For contextual bandits, similar algorithms have been derived that alternatively utilize message-passing algorithms~\citep{dubey2020kernel} or server-synchronization~\citep{wang2019distributed}. In the competitive multi-agent bandit setting, where agents must avoid collisions, algorithms have been proposed for distributed~\citep{liu2010distributed, liu2010distributed2, hillel2013distributed} and limited-communication~\citep{bistritz2018distributed} settings. Differentially-private algorithms have also been proposed~\citep{dubey2020private}.

\textbf{Reinforcement Learning with Function Approximation}. Our work builds on the body of recent work in (single-agent) reinforcement learning with function approximation. Classical work in this line of research, e.g.,~\citet{bradtke1996linear, melo2007q} provide algorithms, however, with no polynomial-time sample efficiency guarantees. In the presence of a simulator~\citet{yang2020reinforcement} provide a sample-efficient algorithm under linear function approximation. For the linear MDP assumption studied in this paper, our algorithms build on the seminal work of~\citep{jin2020provably}, that present an efficient (i.e., no-regret) algorithm. This research was further extended to kernel and neural function approximation in the recent work of~\citet{yang2020provably, wang2020provably}. Other approaches in this approximation setting are either computationally intractable~\citep{krishnamurthy2016pac, dann2018oracle, dong2020root} or require strong assumptions on the transition model~\citep{wen2017efficient}.

\textbf{Cooperative Multi-Agent Reinforcement Learning}. Cooperative multi-agent reinforcement learning has a very large body of related work, beginning from classical algorithms in the \textit{fully-}cooperative setting~\citep{boutilier1996planning}, i.e., when all agents share identical reward functions. This setting has been explored as multi-agent MDPs in the AI community~\citep{lauer2000algorithm, boutilier1996planning} and as \textit{team Markov games} in the control community~\citep{yoshikawa1978decomposition, wang2003learning}. However, the more general \textit{heterogeneous} reward setting considered in our work, where each agent may have unique rewards, corresponds to the \textit{team average} games studied previously~\citep{kar2013cal, zhang2018fully, zhang2018networked}. While some of these approaches do provide tractable algorithms that are decentralized and convergent, none consider the setting of linear MDPs with polynomial regret. Moreover, as pointed out in prior work (e.g., as studied in~\citet{szepesvari1999unified}), a centralized agent controlling each agent can converge to the optimal joint policy, which leaves only the sparse communication theoretically interesting. In our paper, however, we study a more general form of regret in order to discover multiple policies on the \textit{Pareto frontier}, instead of the single policy that maximizes team-average reward. We refer the readers to the illuminating survey paper by~\citet{zhang2019multi} for a detailed overview of algorithms in this setting.

\textbf{Parallel and Federated Reinforcement Learning}. Parallel reinforcement learning is a very relevant practical setting for reinforcement learning in large-scale and distributed systems, studied first in~\citep{kretchmar2002parallel}. A variant of the SARSA was presented for parallel RL in~\citet{grounds2005parallel}, that provides an efficient algorithm but with no regret guarantees. Modern deep-learning based approaches (with no regret guarantees) have been studied recently as well (e.g.,~\citet{clemente2017efficient, espeholt2018impala, horgan2018distributed, nair2015massively}). In the federated setting, which corresponds to a decentralized variant of parallel reinforcement learning, there has been recent interested from application domains as well~\citep{yu2020deep, zhuo2019federated}.

\textbf{Multi-Objective Sequential Decision-Making}. Our algorithms for multi-agent MDP build on recent work in multi-objective sequential decision-making. We extend the framework of obtaining Pareto-optimal \textit{reinforcement learning} policies from multi-objective optimization, presented in the work of~\citep{paria2020flexible}. We utilize a novel vector-valued noise concentration result from~\citet{chowdhury2020no}, which is an extension of the self-normalized martingale concentration for scalar noise presented in~\citep{chowdhury2017kernelized}, adapted to multi-objective gaussian process optimization. The framework of scalarizations has been studied both in the context of gaussian process optimization~\citep{knowles2006parego, zhang2007moea, zhang2009expensive} and multi-objective reinforcement learning~\citep{van2014multi}. 

\textbf{Notation}. We denote vectors by lowercase solid letters, i.e., $\x$, matrices by uppercase solid letters $\bX$, and sets by calligraphic letters, i.e., $\cX$. We denote the $\bS-$ellipsoid norm of a vector $\x$ as $\lVert \x \rVert_\bS = \sqrt{\x^\top\bS\x}$. We denote the interval $a, ..., b$ for $b \geq a$ by $[a, b]$ and by $[b]$ when $a=1$.
\section{\texttt{\textbf{CoopLSVI}} for Parallel MDPs}
\subsection{Parallel Markov Decision Processes}
Parallel MDPs (PMDPs,~\citep{sucar2007parallel, kretchmar2002parallel}) are a set of discrete time Markov decision processes that are executed in parallel, where a different agent interacts with any single MDP within the parallel MDP. Each agent interacts with their respective MDPs, each with identical (but disjoint) action and state spaces, but possibly unique reward functions and transition probabilities. We have a group of $M$ agents (denoted by $\cM$), where the MDP for any agent $m \in \cM$ is given by $\text{MDP}(\cS, \cA, H, \bbP_m, \br_m)$, where the state and action spaces are given by $\cS$ and $\cA$ respectively, the reward functions $\br_m = \{r_{m, h}\}_{h \in [H]}, r_{m, h} : \cS \times \cA \rightarrow [0, 1]$\footnote{We consider $r_{m, h}$ to be deterministic and bounded for simplicity. Our results can easily be extended to random rewards with sub-Gaussian densities.}, and transition probabilities $\bbP_m = \left\{\bbP_{m, h}\right\}_{h \in [H]} , \bbP_{m, h} : \cS \times \cA \rightarrow \cS$, i.e., $\bbP_{m, h}(x' | x, a)$ denotes the probability of the agent moving to state $x'$ if at step $h$ it selects action $a$ from state $x$. We assume that $\cS$ is measurable with possibly infinite elements, and that $\cA$ is finite with some size $A$. For any agent $m$, the policy $\pi_m$ is a set of $H$ functions $\pi_m = \{\pi_{m, h}\}_{m \in [M]}, \pi_{m, h} : \cS \rightarrow \cA$ such that $\sum_{a \in \cA} \pi_{m, h}(a|x) = 1 \ \forall \ x \in \cS$ and $\pi_{m, h}(a|x)$ is the probability of agent $m$ taking action $a$ from state $x$ at step $h$.

The problem proceeds as follows. At every episode $t = 1, 2, ...$, each agent $m \in \cM$ fixes a policy $\pi^t_m = \{\pi^t_{m, h}\}_{h \in [H]}$, and starts in an initial state $x^t_{m, 1}$ picked arbitrarily by the environment. For each step $h \in [H]$ of the episode, each agent observes its state $x^t_{m, h}$, selects an action $a^t_{m, h} \sim \pi^t_{m, h}(\cdot | x^t_{m, h})$, obtains a reward $r_{m, h}(x^t_{m, h}, a^t_{m, h})$, and transitions to state $x^t_{m, h+1}$ sampled according to $\bbP_{m, h}(\cdot | x^t_{m, h}, a^t_{m, h})$. The episode terminates at step $H+1$ where agents receive 0 reward. After termination, the agents can communicate among themselves via a server, if required. The performance of any policy $\pi$ in the $m^{th}$ MDP is measured by the value function $V^\pi_{m, h}(x) : \cS \rightarrow \bbR$, defined $\forall\ x \in \cS, h \in [H], m \in \cM$ as,
\begin{equation*}V^\pi_{m, h}(x) \triangleq \bbE_\pi\left[ \sum_{i=h}^{H} r_{m, i}(x_i, a_i) \ \Big| \ x_{m, h} = x\right].
\end{equation*}
The expectation is taken with respect to the random trajectory followed by the agent in the $m^{th}$ MDP under policy $\pi$. A related function $Q^\pi_{m, h} : \cS \times \cA \rightarrow \bbR$ determines the total expected reward from any action-state pair at step $h$ for the $m^{th}$ MDP for any state $x \in \cS$ and action $a \in \cA$:
\begin{equation*}
    Q^\pi_{m, h}(x, a) \triangleq \bbE_\pi\left[ \overset{H}{\underset{i=h}{\sum}} r_{m, i}(x_i, a_i) \ \Big| \ (x_{m, h}, a_{m, h}) = (x, a)\right].
\end{equation*}
Let $\pi^\star_m$ denote the optimal policy for the $m^{th}$ MDP, i.e., the policy that gives the maximum value, $V^\star_{m, h}(x) = \sup_{\pi}V^\pi_{m, h}(x)$, for all $x \in \cS, h \in [H]$. We can see that with the current set of assumptions, the optimal policy for each agent is possibly unique. For $T$ episodes, the cumulative \textit{group} regret (in expectation), is defined as,
\begin{equation*}\fR(T) \triangleq \underset{m \in \cM}{\sum} \overset{T}{\underset{t=1}{\sum}} \left[V^\star_{m, 1}(x^t_{m, 1}) -  V^{\pi_{m, t}}_{m, 1}(x^t_{m, 1})\right].
\end{equation*}
\subsection{Cooperative Least-Squares Value Iteration}
Our algorithms are a cooperative variant of linear least-squares value iteration with optimism~\citep{jin2020provably}. The prmary motivation behind our algorithm design is to strategically allow for communication between the agents such that with minimal overhead, we achieve a regret close to the single-agent case. Value iteration proceeds by obtaining the optimal Q-values $\{Q^\star_{m, h}\}_{h \in [H], m \in \cM}$ by recursively applying the Bellman equation. Specifically, each agent $m \in \cM$ constructs a sequence of action-value functions $\{Q_{m, h}\}_{h \in [H]}$ as, for each $x \in \cS, a \in \cA, m \in \cM$,
\begin{equation*}
Q_{m, h}(x, a) \leftarrow \left[r_{m, h} + \bbP_hV_{m, h+1}\right](x, a),\ V_{m, h+1}(x, a) \leftarrow \max_{a' \in \cA} Q_{m, h+1}(x, a').
\end{equation*}
Where $\bbP_hV(x, a) = \bbE_{x'}\left[V(x')\bbP(x' | x, a)\right]$. Our approach is to solve a linear least-squares regression usingproblem based on \textit{multi-agent} historical data. For any function class $\cF$, assume that any agent $m \in [M]$ has observed $k$ transition tuples $\{x^\tau_h, a^\tau_h, x^\tau_{h+1}\}_{\tau \in [k]}$ for any step $h \in [H]$. Then, the agent estimates the optimal Q-value for any step by LSVI, solving the regularized least-squares regression:
\begin{equation*}\label{eqn:q}
\widehat Q^t_{m, h} \leftarrow \underset{f \in \cF}{\argmin}\left\{\underset{\tau \in [k]}{\sum} \left[r_h(x^\tau_h, a^\tau_h) + V^t_{m, h+1}(x^\tau_{h+1}) - f(x^\tau_h, a^\tau_h)\right]^2 + \lVert f \rVert^2\right\}.
\end{equation*}
Here, the targets $y^\tau_h = r_h(x^\tau_h, a^\tau_h) + V^t_{m, h+1}(x^\tau_{h+1})$ denote the empirical value from specific transitions possessed by the agent, and $\lVert f \rVert$ denotes an appropriate regularization term based on the capacity of $f$ and the class $\cF$. To foster exploration, an additional bonus $\sigma^t_{m, h} : \cS \times \cA \rightarrow \bbR$ term is added that is inspired by the principle of optimism in the face of uncertainty, giving the final $Q$-value as,
\begin{equation}\label{eqn:q_ind}
Q^t_{m, h}(\cdot, \cdot) = \min\left\{ \left[\widehat Q^t_{m, h}+ \beta^t_{m, h}\sigma^t_{m, h}\right](\cdot, \cdot), H-h+1\right\},
\end{equation}
\begin{equation}\label{eqn:v_ind}
V^t_{m, h}(\cdot) = \max_{a \in \cA}Q^t_{m, h}(\cdot, a).
\end{equation}
Here $\{\beta^t_{m, h}\}_{t \in [T], m \in \cM}$ is an appropriately selected sequence. For episode $t$, we denote $\bpi^t = \{\pi^t_m\}_{m \in \cM}$ as the (joint) greedy policy with respect to the $Q$-values $\{ Q^t_{m, h}\}_{h \in [H]}$ for each agent. While this describes the multiagent LSVI algorithm abstractly for any general function class $\cF$, we first describe the \textit{homogeneous} parallel setting, where we assume $\cF$ to be linear in $d$ dimensions, called the \textit{linear} MDP assumption~\citep{bradtke1996linear, jin2020provably, melo2007q}.
\begin{definition}[Linear MDP, \citet{jin2020provably}]
\label{def:linear_mdp}
An $\text{MDP}(\cS, \cA, H, \bbP, R)$ is a linear MDP with feature map $\bphi : \cS \times \cA \rightarrow \bbR^d$, if for any $h \in [H]$, there exist $d$ unknown (signed) measures $\bmu_h = (\mu_h^1, ..., \mu_h^d)$ over $\cS$ and an unknown vector $\btheta_h \in \bbR^d$ such that for any $(x, a) \in \cS \times \cA$,
\begin{align*}
    \bbP_h(\cdot | x, a) = \langle \bphi(x, a), \bmu_h(\cdot)\rangle, r_h(x,a) = \langle \bphi(x, a), \btheta_h\rangle
\end{align*}
We assume, without loss of generality, $\lVert \bphi(x, a) \rVert \leq 1 \ \forall\ (x, a) \in \cS \times \cA$; $\max\left\{\lVert\bmu_h(\cS)\rVert, \lVert \btheta_h \rVert\right\} \leq \sqrt{d}$.
\end{definition}
\begin{algorithm}[t]
\caption{\texttt{\textbf{Coop-LSVI}}}
\small
\label{alg:ind_homo}
\begin{algorithmic}[1] 
\STATE \textbf{Input}: $T, \bphi, H, S$, sequence $\beta_h = \{(\beta^t_{m, h})_{m, t}\}$.
\STATE \textbf{Initialize}: $\bS^t_{m, h}, \delta\bS^t_{m, h} = {\bf 0}, \cU^m_h, \cW^m_h = \emptyset$.
\FOR{episode $t=1, 2, ..., T$}
\FOR{agent $m \in \cM$}
\STATE Receive initial state $x^t_{m, 1}$.
\STATE Set $V^t_{m, H+1}(\cdot) \leftarrow 0$.
\FOR{step $h = H, ..., 1$}
\STATE Compute $\bLambda^t_{m, h} \leftarrow \bS^t_{m, h} + \delta\bS^t_{m, h}$.
\STATE Compute $\widehat Q^t_{m, h}$ and $\sigma^t_{m, h}$ (Eqns.~\ref{eqn:q_ind_homo} and ~\ref{eqn:s_ind_homo}).
\STATE Compute $Q^t_{m, h}(\cdot, \cdot)$ (Eqn.~\ref{eqn:q_ind})
\STATE Set $V^t_{m, h}(\cdot) \leftarrow \max_{a \in \cA}Q^t_{m, h}(\cdot, a)$.
\ENDFOR
\FOR{step $h=1, ..., H$}
\STATE Take action $a^t_{m, h} \leftarrow \argmax_{a \in \cA} Q^t_{m, h}(x^t_{m, h}, a)$.
\STATE Observe $r^t_{m, h}, x^t_{m, h+1}$.
\STATE Update $\delta\bS^t_{m, h} \leftarrow \delta\bS^t_{m, h} + \bphi(z^t_{m, h})\bphi(z^t_{m, h})^\top$.
\STATE Update $\cW^m_h \leftarrow \cW^m_h \cup (m, x, a, x')$.
\IF{$\log\frac{\det\left(\bS^t_{m, h} + \delta\bS^t_{m, h}+ \lambda\bI\right)}{ \det\left(\bS^t_{m, h}+ \lambda\bI\right)} > \frac{S}{\Delta t_{m, h}}$} \label{step:alg_ind_homo}
\STATE \textsc{Synchronize}$ \leftarrow $ \textsc{True}. 
\ENDIF
\ENDFOR
\ENDFOR
\IF{\textsc{Synchronize}}
\FOR{step $h = H, ..., 1$}
\STATE [$\forall$ \textsc{Agents}] Send $\cW^h_m \rightarrow$\textsc{Server}.
\STATE [\textsc{Server}] Aggregate $\cW^h \rightarrow \cup_{m \in \cM} \cW^m_h$.
\STATE [\textsc{Server}] Communicate $\cW^h$ to each agent.
\STATE [$\forall$ \textsc{Agents}] Set $\delta\bS^t_h \leftarrow 0, \cW^m_h \leftarrow \emptyset$.
\STATE [$\forall$ \textsc{Agents}] Set $\bS^t_h \leftarrow \bS^t_h + \sum_{z \in \cW^h} \bphi(z)\bphi(z)^\top$.
\STATE [$\forall$ \textsc{Agents}] Set $\cU^m_h \leftarrow \cU^m_h \cup \cW^m_h$
\ENDFOR
\ENDIF
\ENDFOR
\end{algorithmic}
\end{algorithm}

\subsection{Warm Up: Homogenous Parallel MDPs}
As a warm up, we describe \texttt{\textbf{CoopLSVI}} in the homogenous setting for simplicity first. In this case, the transition functions $\bbP_{m, h}$ and reward functions $r_{m, h}$ are identical for all agents and can be given by $\bbP_h$ and $r_h$ respectively for any episode $h \in [H]$. This environment corresponds to distributed applications, e.g., federated and concurrent reinforcement learning. Corresponding to Eq.~\ref{eqn:q}, we assume $\cF$ to be the class of linear functions in $d$ dimensions over the feature map $\bphi$, i.e., $f(\cdot) = \w^\top\bphi(\cdot), \w \in \bbR^d$, and set the ridge norm $\lVert \w \rVert_2^2$ as the regularizer. Furthermore, we fix a threshold constant $S$ that determines the amount of communication between the agents. The algorithm is summarized in Algorithm~\ref{alg:ind_homo}. In a nutshell, the algorithm operates by each agent executing a local linear LSVI and then synchronizing observations between other agents if the threshold condition is met every episode. Specifically, for each $t \in [T]$, each agent $m \in \cM$ obtains a sequence of value functions $\{Q^t_{m, h}\}_{h \in [H]}$ by iteratively performing linear least-squares ridge regression from the \textit{multi-agent} history available from the previous $t-1$ episodes. Assume that for any step $h$, the previous synchronization round occured after episode $k_t$. Then, the set of transitions available to agent $m \in \cM$ for any step $h$ before episode $t$ can be given by,
\begin{align*}
\cU^m_h(t) = \left\{ \cup \left( x^\tau_{n, h}, a^\tau_{n, h}, x^\tau_{n, h+1}\right)_{n \in \cM, \tau \in [k_{t}]}\right\} 
\bigcup \left\{ \cup \left( x^\tau_{m, h}, a^\tau_{m, h}, x^\tau_{m, h+1}\right)_{\tau=k_t+1}^{t-1}\right\}.
\end{align*}
Let $\psi^m_h(t)$ be an ordering of $\cU^m_{h}(t)$, and $U^m_h(t) = | \cU^m_h(t) |$. We have that $\cU^m_h$ is a set of $U^m_h(t)$ elements, where each element is a set of the form $(n, x, a, x')$, where $n \in \cM$ specifies the agent, and $(x, a, x')$ specifies a transition occuring at step $h$. Each agent $m$ first sets $Q_{m, H+1}^t$ to be ${\bm 0}_d$, and for $h = H, ..., 1$, iteratively solves $H$ regressions:
\begin{equation}
\widehat Q^t_{m, h} \leftarrow \underset{\w}{\argmin}\left\{\underset{(n, x, a, x') \in \cU^m_{h}(t)}{\sum} \left[r_h(x, a) + V^t_{m, h+1}(x')- \w^\top\bphi(x, a)\right]^2 + \lambda\lVert \w \rVert^2_2\right\}.
\end{equation}
Here $\lambda > 0$ is a regularizer. Next, $Q^t_{m, h}$ and $V^t_{m, h}$ are obtained via Equations~\ref{eqn:q_ind} and~\ref{eqn:v_ind}. We denote the targets $y_\tau = y_{m, h}(x_\tau, a_\tau, a'_\tau)$, and the features $\bphi_\tau = \bphi(x_\tau, a_\tau), \forall\ \tau \in \psi^m_h(t)$. Next, we denote the covariance $\bLambda^t_{m, h} \in \bbR^{d \times d}$ and bias $\bu^t_{m, h} \in \bbR^{d}$ as,
\begin{equation}\label{eqn:gram_ind_homo}
\bLambda^t_{m, h} =\sum_{\tau \in \psi^m_h(t)} \left[\bphi_\tau\bphi_\tau^\top\right] + \lambda\bI_d, \bu^t_{m,h} = \sum_{\tau \in \psi^m_h(t)} \left[\bphi_\tau y_\tau\right]
\end{equation}
Denote $\cZ = \cS \times \cA$ and $z = (x, a)$. We have $\forall\ z \in \cZ$,
\begin{equation}\label{eqn:q_ind_homo}
\widehat Q^t_{m, h}(z) = \bphi(z)^\top\left(\bLambda^t_{m, h}\right)^{-1}\bu^t_{m, h}.
\end{equation}
And correspondingly, the UCB bonus is given as,
\begin{equation}\label{eqn:s_ind_homo}
\sigma^t_{m, h}(z) = \left\lVert\bphi(z)\right\rVert_{(\bLambda^t_{m, h})^{-1}} = \sqrt{ \bphi(z)^\top(\bLambda^t_{m, h})^{-1}\bphi(z)}.
\end{equation}
This exploration bonus is similar to that of Gaussian process (GP) optimization~\citep{srinivas2009gaussian} and linear bandit~\citep{abbasi2011improved} algorithms, and it can be interpreted as the posterior variance of a Gaussian process regression. The motivation for adding the UCB term is similar to that in the bandit and GP case, to adequately overestimate the uncertainty in the ridge regression solution. When $\beta^t_{m, h}$ is appropriately selected, the $Q-$values overestimate the optimal $Q$-values with high probability, which is the foundation to bounding the regret.

The algorithm essentially involves this synchronization of transitions at carefully chosen instances. To achieve this, each agent maintains two sets of parameters. The first is $\bS_{m, h}^t$ which refers to the parameters that have been updated with the other agents via the synchronization, and the second is $\delta\bS_{m, h}^t$, which refers to the parameters that are not synchronized. Now, in each step of any episode $t$, after computing $Q-$values (Eq.~\ref{eqn:q_ind}), each agent executes the greedy policy with respect to the $Q-$values, i.e., $a^t_{m, h} = \argmax_{a \in \cA} Q^t_{m, h}(x^t_{m, h}, a)$, and updates the unsyncrhonized parameters $\delta\bS_{m, h}^t$. If any agent's new unsynchronized parameters satisfy the determinant condition with threshold $S$, i.e., if
\begin{align}\label{eqn:determinant_threshold}
    \log\frac{\det\left(\bS^t_{m, h} + \delta\bS^t_{m, h} + \lambda\bI_d\right)}{\det\left(\bS^{t}_{m, h} + \lambda\bI_d\right)} \geq \frac{S}{(t-k_t)},
\end{align}
then the agent signals a synchronization with the server, and messages are exchanged. Here $k_t$ denotes the episode after which the previous round of synchronization took place (step~\ref{step:alg_ind_homo} of Algorithm~\ref{alg:ind_homo}).  We can demonstrate the complexity of communication as a function of $S$.
\begin{lemma}[Communication Complexity]
\label{prop:communication_complexity}
If Algorithm~\ref{alg:ind_homo} is run with threshold $S$, then the total number of episodes with communication $n \leq 2H\sqrt{d(T/S)\log(MT)}+4H$.
\end{lemma}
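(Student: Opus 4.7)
The approach is a log-determinant potential argument, applied separately to each step $h$, analogous to the classical Abbasi-Yadkori et al.\ analysis for linear bandits adapted to the sync setting. Fix $h \in [H]$. Right after any global sync all agents share a common Gram matrix at step $h$, which I denote $\bS^{(i)}_h$ (after the $i$-th sync). Let $D^{(i)}_h := \log\det(\bS^{(i)}_h + \lambda\bI_d)$. Since each sync can only add rank-one updates $\bphi(z)\bphi(z)^\top$ to $\bS_h$, the sequence $D^{(0)}_h \le D^{(1)}_h \le \dots$ is monotone.

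The first key step is to attribute each of the $n$ sync episodes to a triggering step $h \in [H]$ (breaking ties arbitrarily), giving a partition $[n] = \sqcup_{h} I_h$ with $n_h := |I_h|$. For each $i \in I_h$, let $\tau_i$ denote the gap since the previous sync. By the trigger condition (Eq.\ \ref{eqn:determinant_threshold}) there is an agent $m_i$ with
\[
\log\frac{\det(\bS^{(i-1)}_h + \delta\bS^{t_i}_{m_i,h} + \lambda\bI_d)}{\det(\bS^{(i-1)}_h + \lambda\bI_d)} \;\ge\; \frac{S}{\tau_i}.
\]
After sync $i$ completes, the new shared matrix is $\bS^{(i)}_h = \bS^{(i-1)}_h + \sum_m \delta\bS^{t_i}_{m,h} \succeq \bS^{(i-1)}_h + \delta\bS^{t_i}_{m_i,h}$, so by monotonicity of $\log\det$ on the PSD cone, $D^{(i)}_h - D^{(i-1)}_h \ge S/\tau_i$.

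The second step telescopes this across syncs. Since $D^{(i)}_h$ is non-decreasing in $i$, summing only over $i \in I_h$ is bounded above by the full telescoping sum. The trace bound gives $\mathrm{tr}(\bS^{(n)}_h) \le MT + d\lambda$, so by AM-GM on eigenvalues $D^{(n)}_h - D^{(0)}_h \le d\log(1 + MT/(d\lambda))$. Hence
\[
\sum_{i \in I_h} \frac{S}{\tau_i} \;\le\; d\log\!\left(1 + \frac{MT}{d\lambda}\right).
\]
The third step converts this into a count by Cauchy-Schwarz: since the gaps $\{\tau_i\}_{i \in I_h}$ are disjoint sub-intervals of $[1,T]$, we have $\sum_{i \in I_h}\tau_i \le T$, and
\[
n_h^2 \;=\; \Bigl(\sum_{i \in I_h} 1\Bigr)^{\!2} \;\le\; \Bigl(\sum_{i \in I_h} \tau_i\Bigr)\!\Bigl(\sum_{i \in I_h} \tfrac{1}{\tau_i}\Bigr) \;\le\; \frac{Td}{S}\log\!\left(1 + \frac{MT}{d\lambda}\right).
\]
Finally, summing $n = \sum_h n_h \le H \max_h n_h$ (or directly $\sum_h n_h$) and folding $\log(1 + MT/(d\lambda)) \le 2\log(MT)$ for $\lambda \ge 1$ and $MT$ large yields $n \le 2H\sqrt{d(T/S)\log(MT)}$; the additive $4H$ absorbs small boundary cases ($MT$ small, initial syncs where $\tau_i$ is ill-defined).

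The main obstacle is the attribution/monotonicity step: a single sync may be triggered simultaneously at several $(m,h)$ pairs, and the $\bS_h$ matrix may grow between consecutive syncs attributed to step $h$ because of syncs attributed to other steps $h' \ne h$. This is handled cleanly by attributing each sync to one triggering step (so $\sum_h n_h = n$) and exploiting monotonicity of $D^{(i)}_h$ in $i$ to still bound the partial sum of per-sync increments by the total telescoping difference $D^{(n)}_h - D^{(0)}_h$.
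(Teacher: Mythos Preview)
Your argument is correct and yields the stated bound, but it takes a different route from the paper. The paper's proof fixes a length threshold $q = \sqrt{ST/(d\log(1+T/d))}$ and splits epochs into ``long'' ($\tau_i \ge q$, of which there are at most $T/q$) and ``short'' ($\tau_i < q$, each of which forces the log-determinant at the triggering step to jump by at least $S/q$, so there are at most $d\log(\cdot)\,q/S$ of them); the choice of $q$ balances the two counts, and a final sum over $h\in[H]$ gives the result. Your approach instead keeps all epochs together, extracts the inequality $\sum_{i\in I_h} 1/\tau_i \le (d/S)\log(1+MT/(d\lambda))$ from the telescoping potential, and then converts to a count via Cauchy--Schwarz against $\sum_{i\in I_h}\tau_i\le T$.

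Both arguments rest on the same log-determinant potential bound; the difference is purely in how the per-epoch budget $\sum S/\tau_i \le d\log(\cdot)$ is converted into a count on $n_h$. Your Cauchy--Schwarz step is more direct---no auxiliary parameter $q$ to optimize---and also makes the attribution/monotonicity issue (syncs triggered at different steps $h$ still increase every $D^{(i)}_h$) explicit, which the paper handles somewhat loosely. The paper's split, on the other hand, is closer to the original Abbasi-Yadkori--P\'al--Szepesv\'ari rarely-switching analysis and may be more familiar to readers coming from that line of work. The resulting constants are essentially the same.
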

The above result demonstrates that when $S = o(T)$, it is possible to ensure that the agents communicate only in a constant number of episodes, regardless of $T$. We now present the regret guarantee for the homogenous setting.
\begin{theorem}[Homogenous Regret]
\label{thm:ind_homo}
Algorithm~\ref{alg:ind_homo} when run on $M$ agents with communication threshold $S$, $\beta_t = \cO(H\sqrt{d\log (tMH)})$ and $\lambda = 1$ obtains the following cumulative regret after $T$ episodes, with probability at least $1-\alpha$,
\begin{equation*} 
\fR(T)=\widetilde\cO\left(dH^2\left(dM\sqrt{S} + \sqrt{dMT}\right)\sqrt{\log\left(\frac{1}{\alpha}\right)}\right).
\end{equation*}
\end{theorem}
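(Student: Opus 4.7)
The analysis adapts the optimism-based regret decomposition for single-agent LSVI-UCB (Jin et al.\ 2020) to the cooperative multi-agent setting, where the essential new difficulty is that each agent's ridge estimate is built from a data set that lags behind a hypothetical centralized learner. The proof proceeds in four stages: (i) establish a self-normalized concentration bound on $\widehat{Q}^t_{m,h}-[r_h+\bbP_hV^t_{m,h+1}]$ in the $(\bLambda^t_{m,h})^{-1}$-norm; (ii) deduce pointwise optimism $Q^t_{m,h}(x,a)\ge Q^\star_h(x,a)$ by backward induction on $h$; (iii) convert optimism plus the Bellman identity into a per-episode regret proportional to $\sum_h \beta^t_{m,h}\sigma^t_{m,h}(z^t_{m,h})$ with a bounded-martingale residual handled by Azuma; and (iv) sum the exploration bonuses across $M$ agents and $T$ episodes while paying carefully for the staleness controlled by the threshold $S$.

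\textbf{Concentration and optimism.} Because the MDP is linear, the Bellman target $r_h+\bbP_hV$ is linear in $\bphi$ for every fixed $V$, so one can write $y_\tau=\bphi_\tau^\top\btheta^\star_h+\varepsilon_\tau$ with $\varepsilon_\tau=V^t_{m,h+1}(x^\tau_{h+1})-(\bbP_hV^t_{m,h+1})(z^\tau)$ bounded and conditionally mean-zero. Although $\cU^m_h(t)$ pools transitions collected by multiple agents running (possibly stale) policies, all transitions obey a single MDP, so the noise remains a martingale difference sequence with respect to the natural filtration. A self-normalized Hoeffding inequality of the Abbasi-Yadkori–P\'al–Szepesv\'ari form, combined with a uniform covering argument over the class of candidate value functions parameterized by $(\w,\bLambda,\beta)$, yields
\begin{equation*}
\bigl\lvert \bphi(z)^\top\w^t_{m,h}-[r_h+\bbP_hV^t_{m,h+1}](z)\bigr\rvert\;\le\;\beta^t_{m,h}\,\sigma^t_{m,h}(z),
\end{equation*}
uniformly in $z\in\cZ$, with probability $\ge 1-\alpha$, provided $\beta^t_{m,h}=\widetilde{\cO}(H\sqrt{d\log(tMH)})$. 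Backward induction from $h=H+1$ then gives $Q^t_{m,h}\ge Q^\star_h$, and the greedy-policy telescoping identity reduces the per-agent per-episode regret to $\sum_{h=1}^H 2\beta^t_{m,h}\sigma^t_{m,h}(z^t_{m,h})+\xi^t_{m,h}$, with $\xi^t_{m,h}$ a bounded martingale difference.

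\textbf{Bonus summation with staleness — the key step.} Let
\begin{equation*}
\widetilde{\bLambda}^t_h \;:=\; \lambda\bI+\sum_{n\in\cM}\sum_{\tau<t}\bphi(z^\tau_{n,h})\bphi(z^\tau_{n,h})^\top
\end{equation*}
be the virtual Gram matrix a fully centralized learner would possess. The threshold rule~\eqref{eqn:determinant_threshold} is designed exactly so that the local $\bLambda^t_{m,h}$ never deviates too much from $\widetilde{\bLambda}^t_h$ in log-determinant: when no agent has triggered, simultaneous application of the inequality $\log\det(\bS^t_{n,h}+\delta\bS^t_{n,h}+\lambda\bI)-\log\det(\bS^t_{n,h}+\lambda\bI)\le S/(t-k_t)$ for every $n\in\cM$ implies $\log\det(\widetilde{\bLambda}^t_h)-\log\det(\bLambda^t_{m,h})\le S$. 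Applying the matrix determinant lemma rank-one-at-a-time across the at most $M-1$ missing unsynchronized blocks converts this scalar gap into a pointwise inflation $\sigma^t_{m,h}(z)\le e^{S/2}\,\widetilde{\sigma}^t_h(z)$. The pooled elliptical potential lemma applied to $\widetilde{\bLambda}^t_h$ across all $MT$ feature observations gives
\begin{equation*}
\sum_{m=1}^M\sum_{t=1}^T\widetilde{\sigma}^t_h(z^t_{m,h})^2\;\le\;2d\log\!\bigl(1+MT/(d\lambda)\bigr),
\end{equation*}
so Cauchy–Schwarz bounds the ``in-sync'' contribution by $\widetilde{\cO}(\sqrt{dMT})$. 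The ``out-of-sync'' episodes — those in which some agent's unsynchronized block has triggered but synchronization has not yet been executed — inflate each bonus by a factor of order $1+S/\Delta t_{m,h}$; combined with the communication bound of Lemma~\ref{prop:communication_complexity} these episodes contribute the additive $\widetilde{\cO}(dM\sqrt{S})$ term.

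\textbf{Assembly and principal obstacle.} Summing over $h\in[H]$, multiplying by $\beta^t_{m,h}$, adding the Azuma remainder, and tracking a union bound over $(m,t,h)$ yields the claimed $\fR(T)=\widetilde\cO(dH^2(dM\sqrt{S}+\sqrt{dMT})\sqrt{\log(1/\alpha)})$. The principal technical obstacle is stage (iv): the threshold controls only a scalar (log-determinant) quantity, yet the elliptical-potential machinery needs a pointwise comparison $\sigma^t_{m,h}(z)\le c(S)\widetilde{\sigma}^t_h(z)$. The determinant-lemma manipulation above provides such an inflation, but one must then distinguish the bulk of ``good'' episodes where the ratio is $1+\cO(S/\Delta t)$ from the relatively few ``bad'' episodes where the ratio approaches $e^{S/2}$, and show that the bad episodes are few enough (via Lemma~\ref{prop:communication_complexity}) that their accumulated penalty remains linear in $M\sqrt{S}$ rather than scaling with $MT$. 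This careful bookkeeping is precisely the sense in which the communication budget $S$ trades off against additive regret.
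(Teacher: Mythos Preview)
Your stages (i)--(iii) match the paper's approach closely and are correct in outline: the paper likewise builds a self-normalized bound on the LSVI residual via a covering argument over the value-function class (its Lemma~\ref{lem:parallel_homo_lsvi}), deduces optimism by backward induction, and reduces per-episode regret to a bonus sum plus an Azuma-controlled martingale.

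The gap is in stage (iv), the bonus summation. Your claimed implication ``no agent has triggered $\Rightarrow\log\det(\widetilde{\bLambda}^t_h)-\log\det(\bLambda^t_{m,h})\le S$'' is not correct. The per-agent threshold gives $\log\det(\bS+\delta\bS_n+\lambda\bI)-\log\det(\bS+\lambda\bI)\le S/(t-k_t)$ for each $n$ separately; by submodularity of $\log\det$ one can sum these to get at best $(M-1)S/(t-k_t)$ for the gap between the centralized and local Gram matrices, not $S$. Consequently the inflation factor you quote, $e^{S/2}$ (or the correct $e^{(M-1)S/(2(t-k_t))}$), is useless in the regime of interest: with $S=\Theta(T/M^2)$ it is astronomically large, and even for moderate $S$ it degrades the bound irrecoverably when $t-k_t$ is small. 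Your subsequent ``good/bad'' bookkeeping inherits this problem because your ``good'' ratio $1+\cO(S/\Delta t)$ is itself unbounded for small $\Delta t$.

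The paper's actual mechanism (Lemma~\ref{lem:parallel_homo_variance_sum}) is different in a crucial way: it never bounds the inflation by anything depending on $S$. Instead it partitions the inter-sync intervals $[\sigma_{k-1},\sigma_k)$ according to whether the \emph{centralized} determinant ratio $\det(\bar\bLambda^{\sigma_k}_h)/\det(\bar\bLambda^{\sigma_{k-1}}_h)$ exceeds $2$. In ``good'' intervals the inflation factor is the absolute constant $\sqrt{2}$, independent of $S$, and the pooled elliptical potential handles these. ``Bad'' intervals number at most $R_h\le\lceil d\log(1+MT/d)\rceil$ because each contributes at least $\log 2$ to the total log-determinant growth; within each bad interval the paper bounds $\sum_{m}\sum_{t}\sigma^t_{m,h}(z^t_{m,h})$ directly by applying the \emph{per-agent} elliptical potential (no comparison to the centralized Gram at all), then Cauchy--Schwarz together with the synchronization criterion yields $M\sqrt{S}$ per bad interval. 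Multiplying by $R_h$ gives the $\widetilde\cO(dM\sqrt{S})$ additive term. The threshold $S$ enters only through this last per-interval Cauchy--Schwarz step, not through any inflation factor.
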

\begin{remark}[Regret Optimality]
\normalfont
Theorem~\ref{thm:ind_homo} claims that appropriately chosen $\beta$ and $\lambda$ ensures sublinear group regret. Similar to the single-agent analysis in linear~\citep{jin2020provably} and kernel~\citep{yang2020provably} function approximation settings, our analysis admits a dependence on the (linear) function class via the $\ell_\infty-$covering number, which we simplify in Theorem~\ref{thm:ind_homo} by selecting appropriate values of the parameters. Generally, the regret scales as $\cO\left(H^2\left(M\sqrt{S} + \sqrt{MT\log\cN_\infty(\epsilon^\star)}\right)\sqrt{\log\left(\frac{1}{\alpha}\right)}\right)$, where $\cN_\infty(\epsilon)$ is the $\epsilon$-covering number of the set of linear value functions under the $\ell_\infty$ norm, and $\epsilon^\star = \cO(dH/T)$. We elaborate on this connection in the full proof. 
\end{remark}

\begin{remark}[Multi-Agent Analysis]
\normalfont
The aspect central to the multi-agent analysis is the dependence on the communication parameter $S$. If the agents communicate every round, i.e., $S = \cO(1)$, we observe that the cumulative regret is $\widetilde\cO(d^{\frac{3}{2}}H^2\sqrt{MT})$, matching the centralized setting. With no communication, the agents simply operate independently and the regret incurred is $\widetilde\cO(M\sqrt{T})$, matching the group regret incurred by isolated agents. Furthermore, with $S = \cO(T\log(MT)/dM^2)$ we observe that with a total of $\cO(dHM^3)$ episodes with communication, we recover a group regret of $\cO(d^{\frac{3}{2}}H^2\sqrt{MT}(\log MT))$, which matches the optimal rate (in terms of $T$) up to logarithmic factors.
\end{remark}

\subsection{Heterogeneous Parallel MDPs}
The above algorithm assumes that the underlying MDPs each agent interacts with are identical. We now present variants of this algorithm that generalizes to the case when the underlying MDPs are different. Note that even for heterogeneous settings, our algorithms require assumptions on the nature of heterogeneity in order to benefit from cooperative estimation.

\subsubsection{Small Heterogeneity}
The first heterogeneous setting we consider is when the deviations between MDPs are much smaller than the horizon $T$, which allows Algorithm~\ref{alg:ind_homo} to be no-regret as long as an upper bound on the heterogeneity is known. We first present this ``small deviation'' assumption.
\begin{assumption}
\label{assumption:bellman_small_deviation}
For any $\xi = o(T^{-\alpha}) < 1, \alpha > 0$, a parallel MDP setting demonstrates ``small deviations'' if for any $m, m' \in \cM$, the corresponding linear MDPs defined in Definition~\ref{def:linear_mdp} obey the following for all $(x, a) \in \cS \times \cA$:
\begin{align*}
    \left\lVert (\bbP_{m, h} - \bbP_{m', h})(\cdot | x, a) \right\rVert_{\textsc{TV}} \leq \xi, \text{ and }\left| (r_{m, h} - r_{m', h})(x, a)\right| \leq \xi.
\end{align*}
\end{assumption}

Under this assumption, when an upper bound on $\xi$ is known, we do not have to modify Algorithm~\ref{alg:ind_homo} as the confidence intervals employed by \texttt{\textbf{CoopLSVI}} are robust to small deviations. We formalize this with the following regret bound.
\begin{theorem}
\label{thm:ind_hetero_small}
    Algorithm~\ref{alg:ind_homo} when run on $M$ agents with parameter $S$ in the small deviation setting (Assumption~\ref{assumption:bellman_small_deviation}), with $\beta_t = \cO(H\sqrt{d\log (tMH)} + \xi\sqrt{dMT})$ and $\lambda = 1$ obtains the following cumulative regret after $T$ episodes, with probability at least $1-\alpha$,
\begin{equation*} 
\fR(T) = \ctO\left(dH^2\left(dM\sqrt{S} + \sqrt{dMT} \right)\left(\sqrt{\log\left(\frac{1}{\alpha}\right)} + 2\xi\sqrt{dMT}\right)\right).
\end{equation*}
\end{theorem}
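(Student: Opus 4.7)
The plan is to reduce the heterogeneous analysis to the homogeneous one by showing that \texttt{\textbf{CoopLSVI}} with a larger exploration bonus $\beta_t$ still enjoys optimism. Under Definition~\ref{def:linear_mdp}, for each $(m, h)$ and every value function $V: \cS \to [0, H]$, there exists a weight vector $\w^V_{m, h}$ with $(r_{m, h} + \bbP_{m, h} V)(x, a) = \bphi(x, a)^\top \w^V_{m, h}$. For a sample $\tau = (n_\tau, x_\tau, a_\tau, x'_\tau) \in \cU^m_h(t)$ generated by agent $n_\tau$ acting in MDP $n_\tau$, the regression target $y_\tau = r_{n_\tau, h}(x_\tau, a_\tau) + V^t_{m, h+1}(x'_\tau)$ decomposes, relative to agent $m$'s desired estimand, as
\begin{align*}
y_\tau - \bphi(x_\tau, a_\tau)^\top \w^{V^t_{m, h+1}}_{m, h}
&= \underbrace{V^t_{m, h+1}(x'_\tau) - (\bbP_{n_\tau, h} V^t_{m, h+1})(x_\tau, a_\tau)}_{\text{mean-zero noise } \eta_\tau} \\
&\quad + \underbrace{(r_{n_\tau, h} - r_{m, h})(x_\tau, a_\tau) + \bigl((\bbP_{n_\tau, h} - \bbP_{m, h}) V^t_{m, h+1}\bigr)(x_\tau, a_\tau)}_{\text{heterogeneity bias } b_\tau}.
\end{align*}
Assumption~\ref{assumption:bellman_small_deviation} gives $|b_\tau| \leq \xi + H\xi = \mathcal{O}(H\xi)$, using $V^t_{m, h+1} \in [0, H]$ and the fact that TV closeness implies closeness of expectations of bounded functions.

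Next, I would bound the two contributions in the self-normalized norm $\|\cdot\|_{(\bLambda^t_{m, h})^{-1}}$. The noise term $\sum_\tau \bphi_\tau \eta_\tau$ is handled exactly as in the proof of Theorem~\ref{thm:ind_homo}: conditioning on the appropriate joint filtration, $\eta_\tau$ is an $H$-bounded martingale difference under the per-sample kernel $\bbP_{n_\tau, h}(\cdot \mid x_\tau, a_\tau)$, and Abbasi-Yadkori-style self-normalized concentration combined with an $\ell_\infty$ cover over the value-function class yields the familiar $\mathcal{O}(H\sqrt{d\log(tMH)})$ bound. The bias term is handled deterministically: letting $\bPhi$ denote the matrix with rows $\bphi_\tau^\top$ and $\mathbf{b} = (b_\tau)_\tau$,
\begin{align*}
\Bigl\|\sum_\tau \bphi_\tau b_\tau\Bigr\|^2_{(\bLambda^t_{m, h})^{-1}}
= \mathbf{b}^\top \bPhi (\bLambda^t_{m, h})^{-1} \bPhi^\top \mathbf{b}
\leq \|\mathbf{b}\|_2^2
\leq MT \cdot \mathcal{O}(H\xi)^2,
\end{align*}
where the middle inequality uses $\bPhi (\bLambda^t_{m, h})^{-1} \bPhi^\top \preceq \bI$. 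Combining the two contributions produces $\|\widehat{\w}^t_{m, h} - \w^{V^t_{m, h+1}}_{m, h}\|_{\bLambda^t_{m, h}} = \mathcal{O}\bigl(H\sqrt{d\log(tMH)} + \xi\sqrt{dMT}\bigr)$ (absorbing $H$ and a factor of $\sqrt{d}$ into the constants), which matches the choice of $\beta_t$ in the theorem.

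With this inflated confidence radius, the optimism inequality $Q^t_{m, h}(x, a) \geq Q^\star_{m, h}(x, a)$ still holds with high probability for all $(t, m, h, x, a)$ via the same Bellman induction used for Theorem~\ref{thm:ind_homo}. The remaining regret decomposition is then identical to the homogeneous one: per-episode regret is bounded by twice the sum of exploration bonuses $\beta_t \sigma^t_{m, h}$; the bonuses are aggregated across episodes via the elliptical potential lemma; and the synchronization schedule from Lemma~\ref{prop:communication_complexity} controls the multiplicative gap between each agent's local $\bLambda^t_{m, h}$ and the fully-pooled covariance, producing the $dH^2(dM\sqrt{S} + \sqrt{dMT})$ geometric factor. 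Multiplying this factor by the enlarged $\beta_t$ and rearranging yields the claimed $\ctO$ bound.

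The main obstacle is verifying that the self-normalized concentration for $\sum_\tau \bphi_\tau \eta_\tau$ remains valid even though the sample stream is assembled from different agents' MDPs interleaved by a data-dependent synchronization schedule, and $V^t_{m, h+1}$ depends on the full multi-agent history. The cleanest route is to enlarge the filtration to include every agent's trajectory up to step $h$ of episode $\tau$; conditioned on this filtration, $\eta_\tau$ remains a mean-zero, $H$-bounded increment under $\bbP_{n_\tau, h}(\cdot \mid x_\tau, a_\tau)$, after which a uniform $\epsilon$-net over the value-function class and a union bound over $(m, h, t)$ reproduces the homogeneous concentration statement. A secondary subtlety is that the bias bound is necessarily worst-case per sample — the Cauchy-Schwarz step above is essentially tight — which is why the penalty enters $\beta_t$ as $\xi\sqrt{dMT}$ and emerges in the final regret as the extra multiplicative factor $(2\xi\sqrt{dMT})$ alongside the original confidence radius.
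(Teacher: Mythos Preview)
Your proposal is correct and follows essentially the same route as the paper: decompose each regression residual into a mean-zero martingale increment (handled by the homogeneous self-normalized bound, Lemma~\ref{lem:parallel_homo_lsvi}) plus a heterogeneity bias of size $\cO(H\xi)$, inflate $\beta_t$ accordingly to preserve optimism, and then rerun the homogeneous regret decomposition (Lemmas~\ref{lem:parallel_homo_ucb}--\ref{lem:parallel_homo_variance_sum}). The only technical difference is in the bias step: you use $\bPhi(\bLambda^t_{m,h})^{-1}\bPhi^\top \preceq \bI$ to get $\bigl\lVert\sum_\tau\bphi_\tau b_\tau\bigr\rVert_{(\bLambda^t_{m,h})^{-1}}\leq H\xi\sqrt{MT}$, whereas the paper applies Cauchy--Schwarz together with $\sum_\tau\lVert\bphi_\tau\rVert_{(\bLambda^t_{m,h})^{-1}}^2\leq d$ (Lemma~\ref{lem:parallel_small_hetero_bias}) to obtain $H\xi\sqrt{dMt}\,\lVert\bphi\rVert_{(\bLambda^t_{m,h})^{-1}}$; your bound is actually a factor $\sqrt{d}$ sharper, but both feed into the same $\ctO$ conclusion.
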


\begin{remark}[Comparison with Misspecification]
\normalfont
While this demonstrates that \texttt{\textbf{CoopLSVI}} is robust to small deviations in the different MDPs, the analysis can be extended to the case when the MDPs are ``approximately'' linear, as done in~\citet{jin2020provably} (Theorem 3.2). A key distinction in the above result and the standard bound in the misspecification setting is that in the general misspecified linear MDP there are two aspects to the anlaysis - the first being the (adversarial) error introduced from the linear approximation, and the second being the error introduced by executing a policy following the misspecified linear approximation. In our case, the second term does not exist as the policy is valid within each agents' own MDP, but the misspecification error still remains.
\end{remark}

\subsubsection{Large Heterogeneity}
For the large heterogeneity case, in order to transfer knowledge from a different agents' MDP, we assume that each agent $m \in \cM$ possesses an additional contextual description $\bkappa(m) \in \bbR^k$ that describes the heterogeneity linearly. We state this formally below.

\begin{definition}[Heterogenous Linear MDP]
\label{def:bellman_hetero}
A heterogeneous parallel $\text{MDP}(\cS, \cA, H, \bbP, R)$ is a set of linear MDPs with two feature maps $\bphi : \cS \times \cA \rightarrow \bbR^d$ and $\bkappa : \cM \rightarrow \bbR^k$, if for any $h \in [H]$, there exist $d$ unknown (signed) measures $\bmu_h = (\mu_h^1, ..., \mu_h^d)$ over $\cS$, an unknown vector $\btheta_h \in \bbR^d$, $k$ unknown (signed) measures $\bnu_{h} = (\nu_{ h}^1, ..., \nu_{h}^k)$ over $\cS$ and an unknown vector $\balpha_{h} \in \bbR^k$ such that for any $(x, a) \in \cS \times \cA$ and $m \in \cM$,
\begin{equation*}
\bbP_{m, h}(\cdot | x, a) = \begin{bmatrix}
    \bphi(x, a) \\
    \bkappa(m)
    \end{bmatrix}^\top
    \begin{bmatrix}
    \bmu_h(\cdot) \\
    \bnu_h(\cdot)
    \end{bmatrix},
    r_{m, h}(x,a) = \begin{bmatrix}
    \bphi(x, a) \\
    \bkappa(m)
    \end{bmatrix}^\top
    \begin{bmatrix}
    \btheta_h \\
    \balpha_h
    \end{bmatrix}.
\end{equation*}
We denote the combined features via the shorthand:
\begin{align*}
    \btphi(m, x, a) = \left[\bphi(x, a)^\top, \bkappa(m)^\top\right]^\top, \btmu_h(\cdot) = \left[\bmu_h(\cdot)^\top, \bnu_h(\cdot)^\top\right]^\top, \bttheta_h = \left[\btheta_h^\top, \balpha_h^\top\right]^\top.
\end{align*}
 We assume, WLOG, that $\lVert \btphi(m, x, a) \rVert \leq 1 \ \forall\ (m, x, a) \in \cM \times \cS \times \cA$, $\max\left\{\lVert\bmu_h(\cS)\rVert, \lVert \btheta_h \rVert\right\} \leq \sqrt{d}$, and  $\max\left\{\lVert\bnu_h(\cS)\rVert, \lVert \balpha_h \rVert\right\} \leq \sqrt{k}$.
\end{definition}

The above assumption is identical to the linear MDP assumption (Definition~\ref{def:linear_mdp}) except that the transition and reward functions are also a function of contextual information possessed by the agent about its environment. Such information is usually present in many applications, e.g., as demonstrated in~\citet{krause2011contextual} for the contextual bandit. Concretely, we assume that for each MDP, the discrepancies between both the transition and reward functions can be explained as a linear function of the underlying agent-specific contexts, i.e., $\bkappa$, which is independent of the state and action pair $(x, a)$\footnote{Intricate models can be assumed that exploit interdependence in a sophisticated manner (see, e.g., \citet{dubey2020kernel, deshmukh2017multi}), however, we leave that for future work.}. Then, each agent predicts an \textit{agent-specific} $Q$ and value function.

Specifically, for each $t \in [T]$, each agent $m \in \cM$ obtains a sequence of value functions $\{Q^t_{m, h}\}_{h \in [H]}$ by iteratively performing linear least-squares ridge regression from the \textit{multi-agent} history available from the previous $t-1$ episodes, but in contrast to the homogenous case, it now learns a $Q-$function over $\cM \times \cS \times \cA$ and value function over $\cM \times \cA$. Each agent $m$ first sets $Q_{m, H+1}^t$ to be a zero function, and for any $h \in [H]$, solves the regression problem in $\bbR^{d+k}$ to obtain $Q-$values.
\begin{equation}\label{eqn:qhat_ind_hetero}
\widehat Q^t_{m, h} \leftarrow \underset{\w \in \bbR^{d+k}}{\argmin}\left\{\underset{(n, x, a, x') \in \cU^m_{h}(t)}{\sum} \left[r_{n, h}(x, a) + V^t_{m, h+1}(n, x')- \w^\top\btphi(n, x, a)\right]^2 + \lambda\lVert \w \rVert^2_2\right\}.
\end{equation}
Here, $\lambda > 0$ is a regularizer, and $n \in \cM$ denotes the agent whose $Q$ function the agent $m$ is estimating, and $V_{m, h+1}^t(n, x) = \max_{a \in \cA} Q(n, x, a)$, where the $Q$ values are given by, for any $n, x, a$,
\begin{equation*}
    Q(n, x, a) = \widehat Q^t_{m, h}(n, x, a) + \beta^t_{m, h}\lVert \btphi(n, x, a) \rVert_{\btLambda^t_{m, h}}.
\end{equation*}
Here as well, we have the least-squares solution described as follows.  Let $\psi^m_h(t)$ be an ordering of $\cU^m_{h}(t)$, and $U^m_h(t) = | \cU^m_h(t) |$. Then, we denote the targets $y_\tau = y_{m, h}(n_\tau, x_\tau, a_\tau, a'_\tau)$, and the features $\btphi_\tau = \btphi(n_\tau, x_\tau, a_\tau), \forall\ \tau \in \psi^m_h(t)$. Next, we denote the covariance $\btLambda^t_{m, h} \in \bbR^{d+k \times d+k}$ and bias $\btu^t_{m, h} \in \bbR^{d+k}$ as,
\begin{equation}\label{eqn:gram_ind_hetero}
\btLambda^t_{m, h} =\sum_{\tau \in \psi^m_h(t)} \left[\btphi_\tau\btphi_\tau^\top\right] +\lambda\bI_{d+k}, \btu^t_{m,h} = \sum_{\tau \in \psi^m_h(t)} \left[\btphi_\tau y_\tau\right].
\end{equation}
Let $\ctZ = \cM \times \cS \times \cA$ and $\tilde z = (n, x, a)$. We have $\forall\ \tilde z \in \ctZ$,
\begin{equation}\label{eqn:q_ind_hetero}
\widehat Q^t_{m, h}(\tilde z) = \btphi(\tilde z)^\top\left(\btLambda^t_{m, h}\right)^{-1}\btu^t_{m, h}.
\end{equation}
At any episode $t$ and step $h$, each agent $m$ then follows the greedy policy with respect to $Q(m, x^t_{m, h})$. The remainder of the algorithm is identical to Algorithm~\ref{alg:ind_homo}, and is presented in Algorithm~\ref{alg:ind_hetero} in the appendix. To present the regret bound, we first define coefficient of heterogeneity $\chi$, and then present the regret bound in terms of this coefficient.
\begin{definition}
\label{def:coff_heterogeneity}
For the parallel MDP defined in Definition~\ref{def:bellman_hetero}, let $\bK^\kappa_h = \left[\bnu_h(m)^\top\bnu_h(m')\right]_{m, m' \in \cM}$ be the Gram matrix of agent-specific contexts. The coefficient of heterogeneity is defined as $\chi = \max_{h \in [H]}\text{rank}(\bK^\kappa_h) \leq k$.
\end{definition}
\begin{theorem}
\label{thm:ind_hetero_large}
Algorithm~\ref{alg:ind_hetero} when run on $M$ agents with parameter $S$ in the heterogeneous setting (Definition~\ref{def:bellman_hetero}), with $\beta_t = \cO(H\sqrt{(d+k)\log (tMH)})$ and $\lambda = 1$ obtains the following cumulative regret after $T$ episodes, with probability at least $1-\alpha$,
\begin{align*}
\fR(T)=\ctO\left((d+k)H^2\left(M(d +\chi)\sqrt{S} + \sqrt{(d +\chi)MT} \right)\sqrt{\log\left(\frac{1}{\alpha}\right)} \right).
\end{align*}
\end{theorem}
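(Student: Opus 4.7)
The proof will adapt the template used for Theorem \ref{thm:ind_homo} (the homogeneous case) to the combined $(d+k)$-dimensional feature $\btphi$, while leveraging the heterogeneity structure captured by $\chi$ to obtain a sharper effective-dimension bound. First, I would establish a self-normalized martingale concentration inequality for the ridge regression estimator defined via Equations \eqref{eqn:gram_ind_hetero}--\eqref{eqn:q_ind_hetero}. Following the standard argument (as in Abbasi-Yadkori et al., adapted to the LSVI setting via covering of the value-function class as in Jin et al.), this produces a confidence radius $\beta^t_{m,h} = O(H\sqrt{(d+k)\log(tMH)})$, scaling with the full dimensionality of $\btphi$. Under this high-probability event, the usual backward induction over $h$ then shows that for each agent $m$, the optimistic $Q$-value overestimates $Q^\star_{m,h}$ at each step; this uses the linearity of the Bellman backup in $\btphi$, which follows from Definition \ref{def:bellman_hetero}.

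Second, I would apply the standard optimistic-LSVI regret decomposition per agent. Letting $\tilde z^t_{m,h} = (m, x^t_{m,h}, a^t_{m,h})$ and summing the one-step Bellman errors yields
\[
\fR(T) \;\leq\; O\!\left(\sum_{m,t,h} \beta^t_{m,h}\,\bigl\lVert \btphi(\tilde z^t_{m,h})\bigr\rVert_{(\btLambda^t_{m,h})^{-1}}\right) + \text{(martingale tail)},
\]
where the martingale term is controlled by Azuma--Hoeffding. Applying Cauchy--Schwarz over the $MT$ episodes reduces the problem to bounding $\sum_{m,t,h} \lVert \btphi(\tilde z^t_{m,h})\rVert^2_{(\btLambda^t_{m,h})^{-1}}$, which by the standard elliptical-potential argument is controlled by $H \log\det(\btLambda^T / \lambda \bI)$ aggregated over agents.

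The key novel step, and the main obstacle, is showing that this log-determinant grows only like $(d+\chi)\log(MT)$ rather than the naive $(d+k)\log(MT)$. The idea is to observe that the Gram matrix of the collected features $\{\btphi(n, x, a)\}$ decomposes as $\bK^\phi + \bK^\kappa$ where $\bK^\phi$ has rank at most $d$ and $\bK^\kappa$ has rank at most $\chi$ by Definition \ref{def:coff_heterogeneity}; hence the effective range of $\btLambda$ across cross-agent transitions lies in a subspace of dimension at most $d+\chi$. Combined with the AM-GM/log-det trace bound, this yields the $(d+\chi)\log(MT)$ information-gain scaling. Multiplying by $\beta^t_{m,h} = \widetilde O(H\sqrt{d+k})$ and by $\sqrt{MT}$ from Cauchy--Schwarz reproduces the $(d+k)H^2\sqrt{(d+\chi)MT}$ term.

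Finally, I would handle the communication-delay cost in parallel with the homogeneous proof. An analogue of Lemma \ref{prop:communication_complexity} applied to the $(d+\chi)$-effective-rank covariance growth bounds the number of synchronization rounds by $\widetilde O(H(d+\chi)M^{?}\sqrt{T/S})$, and the ``pending'' observations between syncs introduce an additional factor controlled by the determinant condition \eqref{eqn:determinant_threshold} with threshold $S$. Tracking this through the regret decomposition contributes the $M(d+\chi)\sqrt{S}$ term in the bound. Combining the two contributions, taking a union bound over the concentration events across $m,t,h$, and absorbing logarithmic factors into $\widetilde O(\cdot)$ yields the stated regret.
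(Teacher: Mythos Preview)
Your proposal is essentially correct and follows the same architecture as the paper's proof. Two points of clarification are worth making.

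For the information-gain step, the paper proceeds slightly differently from your rank argument: it first passes to the $MT\times MT$ Gram matrix via the determinant identity $\log\det(\btLambda^T_h)=\log\det((\btPhi^T_h)(\btPhi^T_h)^\top+\lambda\bI_{MT})$, splits the latter as $(\bPhi^T_h)(\bPhi^T_h)^\top+\btK^\kappa$, and then invokes a log-determinant subadditivity inequality (Theorem~IV of Madiman, 2008) to bound the log-det of the sum by the sum of the two log-dets, yielding $d\log(MT)+\chi\log(MT)$ after the usual determinant--trace bound on the first piece and a rank bound on the second. Your direct rank-of-sum argument ($\mathrm{rank}(\bK^\phi+\bK^\kappa)\le d+\chi$, hence $\log\det\le (d+\chi)\log(MT)$ when $\lambda=1$) reaches the same conclusion and is arguably more elementary.

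For the communication-delay term, your reference to Lemma~\ref{prop:communication_complexity} is misplaced: that lemma bounds the \emph{number} of synchronization rounds, which does not by itself feed into the regret. The $M(d+\chi)\sqrt{S}$ term in the paper comes instead from the variance-sum control (Lemma~\ref{lem:parallel_homo_variance_sum}, applied verbatim with $\btphi$ in place of $\bphi$). That lemma partitions episodes into ``good'' epochs where the local-to-synchronized covariance ratio is at most $2$ (contributing the $\sqrt{MT\cdot\log\det(\btLambda^T_h)}$ term via Cauchy--Schwarz and the elliptical potential) and at most $O(\log\det(\btLambda^T_h))$ ``bad'' epochs, each contributing $M\sqrt{S}$ by the threshold condition~\eqref{eqn:determinant_threshold}. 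Substituting $\log\det(\btLambda^T_h)\le (d+\chi)\log(MT)$ from the previous paragraph then gives both the $\sqrt{(d+\chi)MT}$ and the $M(d+\chi)\sqrt{S}$ contributions. You have the right outputs, but you should route the argument through the variance-sum lemma rather than the communication-count lemma.
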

\begin{remark}[Optimality Discussion]
\normalfont
Heterogeneous \texttt{\textbf{CoopLSVI}} regret is bounded by the similarity in the agents' MDPs. In the case when the agents' have identical MDPs, $\chi = 1$, which implies that the heterogenous variant has a worse regret by a factor of $(1+\frac{k}{d})\sqrt{1+\frac{1}{d}}$, which arises from the fact that we use a model that lies in $\bbR^{d+k}$. Nevertheless, this suboptimality is indeed an artifact of our regret analysis, particularly introduced by the covering number of linear functions in $\bbR^{d+k}$, and future work can address modifications to ensure tightness. Alternatively, in the worst case, $\chi = k$, which matches the linear parallel MDP in $d+k$ dimensions, which ensures that no suboptimality has been introduced by the heterogeneous analysis. Under this model however, one can only observe improvements when $k = o(\sqrt{M})$ suffices for modeling the heterogeneity within the parallel MDP.
\end{remark}

\section{\texttt{CoopLSVI} for Multiagent MDPs}
The next environment we consider is the simultaneous-move multi-agent MDP~\citep{boutilier1996planning}, which is an extension of an MDP to multiple agents~\citep{xie2020learning}.

\subsection{Heterogeneous Multi-agent MDPs} 
A multi-agent MDP (MMDP) can be formally described as $\text{MMDP}(\cS, \cM, \cA, H, \bbP, \bR)$, where the set of agents $\cM$ is finite and countable with cardinality $M$, the state and action spaces are factorized as $\cS = \cS_1 \times \cS_2 \times \dots \cS_M$ and $\cA = \cA_1 \times \cA_2 \times \dots \cA_M$, where $\cS_i$ and $\cA_i$ denote the individual state and action space for agent $i$ respectively. Furthermore, the transition matrix $\bbP = \{\bbP_h\}_{h \in [H]}$ depends on the joint action-state configuration for all agents, i.e., $\bbP_h : \cS \times \cA \rightarrow \cS$, and so does the (vector-valued) reward function $\bR = \{\br_h\}_{h \in [H]}, \br_h : \cS \times \cA \rightarrow \bbR^M$.

The MMDP proceeds as follows. In each episode $t \in [T]$ each agent fixes a policy $\pi^t_m = \{\pi^t_{m, h}\}_{h \in [H]}$ in a common initial state $\x^t_{1} = \{x^t_{m, 1}\}_{h \in [H]}$ picked arbitrarily by the environment. For each step $h \in [H]$ of the episode, each agent observes the overall state $\x^t_{h}$, selects an individual action $a^t_{m, h} \sim \pi^t_{m, h}(\cdot | \x^t_{h})$ (denoted collectively as the joint action $\ba^t_h = \{a^t_{m, h}\}_{m \in \cM}$), and obtains a reward $r_{m, h}(\x^t_{h}, \ba^t_{h})$ (denoted collectively as the joint reward $\br_h(\x^t_{h}, \ba^t_{h}) = \{r_{m, h}(\x^t_{h}, \ba^t_{h})\}_{m \in \cM}$). All agents transition subsequently to a new joint state $\x^t_{h+1} = \{x^t_{m, h+1}\}_{m \in \cM}$ sampled according to $\bbP_{h}(\cdot | \x^t_{h}, \ba^t_{h})$. The episode terminates at step $H+1$ where all agents receive no reward. After termination, the agents communicate by exchanging messages with a server, prior to the next episode. Let $\bpi = \{\bpi_h\}_{h \in [H]}, \bpi_h = \{\pi_{m, h}\}_{m \in \cM}$ denote the joint policy for all $M$ agents. We can define the vector-valued value function over all states $\x \in \cS$ for a policy $\bpi$ as,
\begin{align*}
    \bV^\bpi_h(\x) \triangleq \bbE_\bpi\left[ \sum_{i=h}^H \br_i(\x_i, \ba_i) \ \Big| \ \x_h = \x \right].
\end{align*}
One can define the analogous vector-valued $Q$-function for a policy $\bpi$ and any $\x \in \cS, \ba \in \cA, h \in [H]$,
\begin{equation*}
\bQ^\bpi_h(\x, \ba) \triangleq \br_h(\x, \ba) + \bbE_\bpi\left[ \sum_{i=h+1}^H \br_i(\x_i, \ba_i) \ \Big| \ \x_h = \x, \ba_h = \ba \right].
\end{equation*}

Without assumptions, a general MMDP represents a wide variety of environments, including competitive stochastic games as well as cooperative games~\citep{shapley1953stochastic}. In this paper, our focus is to consider cooperative learning in the \textit{fully-observable} setting, where the complete state and actions (but \textit{not} rewards) are visible to all agents. 

\begin{remark}[Multi-agent Environments]
\normalfont
MMDPs have been used to model a variety of decision processes, and they are an instance of \textit{stochastic games}~\citep{shapley1953stochastic}, and are most closely related to the general framework for \textit{repeated games}~\citep{myerson1982optimal}. Repeated games are in turn generalizations of partially observable MDPs (POMDPs,~\cite{aastrom1965optimal}), and involve a variety of distinct challenges beyond communication and non-stationarity. For these multi-agent environments it is not possible to characterize optimal behavior without global objectives, as, unlike the single-agent setting, achieving a large cumulative reward is typically at odds with individually optimal behavior.
\end{remark}
We focus on recovering the set of \textit{cooperative} Pareto-optimal policies, i.e., policies that cannot improve any individual agent's reward without decreasing the reward of the other agents. Formally,
\begin{definition}[Pareto domination,~\citet{paria2020flexible}]
A (multi-agent) policy $\bpi$ Pareto-dominates another policy $\bpi'$ if and only if $\bV^\bpi_1(\x) \succeq \bV^{\bpi'}_1(\x) \forall \ \x \in \cS$. Consequently, a policy is Pareto-optimal if it is not Pareto-dominated by any policy. We denote the set of possible policies by $\bPi$, and the set of Pareto-optimal policies by $\bPi^\star$. As an example, it is evident that the policies that maximize any agent's individual reward as well as the average reward are all elements of $\bPi^\star$. 
\end{definition}
Our objective is to design an algorithm that recovers $\bPi^\star$ with high probability. 
\subsection{LSVI with Random Scalarizations}
Our approach for multiagent MDPs, inspired by multi-objective optimization, is to utilize the method of \textit{random scalarizations}~\citep{knowles2006parego, paria2020flexible}, where we define a probability distribution over the Pareto-optimal policies by converting the vector-valued function $\bV$ to a parameterized scalar measure. Consider a \textit{scalarization function} $\fs_\bupsilon(\x) = \bupsilon^\top\x : \bbR^M \rightarrow \bbR$ parameterized by $\bupsilon$ belonging to  $\bUpsilon \subseteq \Delta^M$ (unit simplex in $M$ dimensions). We then have the \textit{scalarized} value function $V^\bpi_{\bupsilon, h}(x) : \cS \rightarrow \bbR$ and $Q-$function $Q^\bpi_{\bupsilon, h} : \cS \times \cA \rightarrow \bbR$ for some joint policy $\bpi$ as
\begin{equation}\label{eqn:v_scalarized}
    V^\bpi_{\bupsilon, h}(\x) \triangleq \fs_\bupsilon(\bV^\bpi_h(\x)) = \bupsilon^\top\bV^\bpi_h(\x)\text{ , and } Q^\bpi_{\bupsilon, h}(\x, \ba) \triangleq \fs_\bupsilon(\bQ^\bpi_h(\x, \ba)) = \bupsilon^\top\bQ^\bpi_h(\x, \ba).
\end{equation}
Since both $\cA = \prod_i \cA_i$ and $H$ are finite, there exists an optimal multi-agent policy for any fixed scalarization $\bupsilon$, which gives the optimal value $V^\star_{\bupsilon, h} = \sup_{\bpi \in \bPi} V^\bpi_{\bupsilon, h}(\x)$ for all $\x \in \cS$ and $h \in [H]$. The primary advantage of considering a scalarized formulation is that for any fixed scalarization parameter, an optimal \textit{joint} policy exists, which coincides with the optimal policy for an MDP defined over the joint action space $\cS \times \cA$, given as follows.
\begin{proposition} 
\label{prop:mg_bellman_optimal}
For the scalarized value function given in Equation~\ref{eqn:v_scalarized}, the Bellman optimality conditions are given as, for all $h \in [H], \x \in \cS, \ba \in \cA, \bupsilon \in \bUpsilon$,
\begin{align*}
    &Q^\star_{\bupsilon, h}(\x, \ba) = \fs_\bupsilon\br_h(\x, \ba) + \bbP_h V^\star_{\bupsilon, h}(\x, \ba), V^\star_{\bupsilon, h}(\x) = \max_{\ba \in \cA} Q^\star_{\bupsilon, h}(\x, \ba)\text{, and } V^\star_{\bupsilon, H+1}(\x) = 0.
\end{align*}
\end{proposition}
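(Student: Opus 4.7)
The plan is to reduce the multi-agent problem with a fixed scalarization $\bupsilon$ to a standard (single-controller) MDP on the joint state-action space, and then invoke the classical Bellman optimality equations for finite-horizon MDPs. The crucial observation is that $\fs_\bupsilon(\cdot) = \bupsilon^\top(\cdot)$ is a \emph{linear} map, so it commutes with expectation. In particular, for any joint policy $\bpi$,
\begin{align*}
V^\bpi_{\bupsilon, h}(\x)
= \bupsilon^\top \bbE_\bpi\!\left[\sum_{i=h}^H \br_i(\x_i,\ba_i) \,\Big|\, \x_h = \x\right]
= \bbE_\bpi\!\left[\sum_{i=h}^H \fs_\bupsilon \br_i(\x_i,\ba_i) \,\Big|\, \x_h = \x\right].
\end{align*}
Thus $V^\bpi_{\bupsilon,h}$ is precisely the value function of the scalar-reward MDP $\mathrm{MDP}(\cS,\cA,H,\bbP,\fs_\bupsilon\br)$ on the joint state-action space, with rewards bounded in $[0,1]$ (since $\bupsilon\in\Delta^M$ and each $r_{m,h}\in[0,1]$). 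An identical calculation holds for $Q^\bpi_{\bupsilon,h}$.

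Next I would verify that an optimal policy $\bpi^\star_\bupsilon$ exists. Because $\cA=\prod_m \cA_m$ is finite and the horizon $H$ is finite, backward induction on the scalar MDP produces a deterministic Markov joint policy attaining the supremum pointwise, so $V^\star_{\bupsilon,h}(\x)=\sup_\bpi V^\bpi_{\bupsilon,h}(\x)$ is well-defined and is itself a value function of some policy. This lets me write $V^\star_{\bupsilon,h}(\x)=\max_{\ba\in\cA} Q^\star_{\bupsilon,h}(\x,\ba)$, where the maximum (not sup) is attained by finiteness of $\cA$.

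Then the Bellman optimality identity follows by the standard one-step lookahead argument for finite-horizon MDPs. Conditioning on $(\x_h,\ba_h)=(\x,\ba)$,
\begin{align*}
Q^\star_{\bupsilon,h}(\x,\ba)
&= \fs_\bupsilon\br_h(\x,\ba) + \bbE\!\left[V^\star_{\bupsilon,h+1}(\x_{h+1}) \mid \x_h=\x,\ba_h=\ba\right] \\
&= \fs_\bupsilon\br_h(\x,\ba) + (\bbP_h V^\star_{\bupsilon,h+1})(\x,\ba),
\end{align*}
using the shorthand $\bbP_h V(\x,\ba) = \bbE_{\x'\sim \bbP_h(\cdot\mid \x,\ba)}[V(\x')]$ from the homogeneous setting; the two halves (optimality of the greedy action and consistency of the continuation value) combine in the usual way. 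Finally, the terminal condition $V^\star_{\bupsilon,H+1}(\x)=0$ is immediate from the convention that no reward is collected at step $H+1$, together with $\fs_\bupsilon(\mathbf 0)=0$.

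There is no real obstacle here: the entire content of the proposition is that linear scalarization preserves the MDP structure, after which everything is textbook finite-horizon dynamic programming. The only step requiring minimal care is justifying the interchange $\bupsilon^\top \bbE_\bpi[\cdot] = \bbE_\bpi[\bupsilon^\top (\cdot)]$ coordinate-wise (trivial by linearity and boundedness of the integrand since rewards lie in $[0,1]^M$ and $H$ is finite, so Fubini applies), and confirming that the joint action space $\cA$ being finite guarantees attainment of $\max_{\ba\in\cA}Q^\star_{\bupsilon,h}(\x,\ba)$—both are routine.
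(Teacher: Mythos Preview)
Your proposal is correct and follows essentially the same route as the paper: reduce the scalarized MMDP to a single-agent MDP on the joint state-action space with reward $r'_h = \bupsilon^\top\br_h \in [0,1]$, then invoke the standard finite-horizon Bellman optimality equations. Your version is in fact more careful than the paper's, spelling out the linearity-of-expectation step and the attainment of the max via finiteness of $\cA$, both of which the paper leaves implicit.
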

The optimal policy for any fixed $\bupsilon$ is given by the greedy policy with respect to the Bellman-optimal scalarized Q values. We denote this (unique) optimal policy by $\pi^\star_{\bupsilon}$. We now demonstrate that each $\bpi^\star_\bupsilon$ lies in the Pareto frontier. 
\begin{proposition}
\label{prop:mg_pareto_frontier}
For any parameter $\bupsilon \in \bUpsilon$, the optimal greedy policy $\bpi^\star_\bupsilon$ with respect to the scalarized $Q$-value that satisfies Proposition~\ref{prop:mg_bellman_optimal} lies in the Pareto frontier $\bPi^\star$.
\end{proposition}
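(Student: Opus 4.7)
\textbf{Proof Plan for Proposition~\ref{prop:mg_pareto_frontier}.}

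The plan is to argue by contradiction, leveraging Proposition~\ref{prop:mg_bellman_optimal} and the linearity of the scalarization $\fs_\bupsilon$. First, I would suppose that $\bpi^\star_\bupsilon \notin \bPi^\star$. By the definition of Pareto-domination, this means there exists a joint policy $\bpi' \in \bPi$ and some initial state $\x_0 \in \cS$ such that $\bV^{\bpi'}_1(\x) \succeq \bV^{\bpi^\star_\bupsilon}_1(\x)$ for every $\x \in \cS$, with at least one coordinate $i^\star \in [M]$ satisfying $V^{\bpi'}_{i^\star, 1}(\x_0) > V^{\bpi^\star_\bupsilon}_{i^\star, 1}(\x_0)$. (Here I'm writing $V^{\bpi}_{i, h}$ for the $i$-th coordinate of $\bV^{\bpi}_h$.)

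The second step applies the linear scalarization $\fs_\bupsilon$. Since $\bupsilon \in \bUpsilon \subseteq \Delta^M$ has nonnegative entries, $\fs_\bupsilon$ is order-preserving on $\bbR^M_{\geq 0}$, and hence for every $\x \in \cS$,
\begin{equation*}
V^{\bpi'}_{\bupsilon, 1}(\x) \;=\; \bupsilon^\top\bV^{\bpi'}_1(\x) \;\geq\; \bupsilon^\top\bV^{\bpi^\star_\bupsilon}_1(\x) \;=\; V^{\bpi^\star_\bupsilon}_{\bupsilon, 1}(\x).
\end{equation*}
Assuming $\bupsilon$ lies in the relative interior of $\Delta^M$ (i.e., has strictly positive entries, which is the standard convention under random scalarizations in \citet{paria2020flexible}), the strict inequality in coordinate $i^\star$ at $\x_0$ carries through, giving $V^{\bpi'}_{\bupsilon, 1}(\x_0) > V^{\bpi^\star_\bupsilon}_{\bupsilon, 1}(\x_0)$. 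This contradicts the fact that $\bpi^\star_\bupsilon$ is the optimum of the scalarized problem, i.e., $V^{\bpi^\star_\bupsilon}_{\bupsilon, 1}(\x) = V^\star_{\bupsilon, 1}(\x) = \sup_{\bpi \in \bPi}V^{\bpi}_{\bupsilon, 1}(\x)$, which was established in Proposition~\ref{prop:mg_bellman_optimal}.

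The main obstacle I anticipate is the boundary case where $\bupsilon$ has zero coordinates. In that case, the scalarized value is insensitive to coordinates on which the dominating policy $\bpi'$ strictly improves, so the argument above does not immediately deliver a contradiction. I would handle this either by (i) restricting $\bUpsilon$ to the relative interior of $\Delta^M$, consistent with the random scalarization framework invoked above, or (ii) invoking a tie-breaking rule on the greedy policy in Proposition~\ref{prop:mg_bellman_optimal}: among all policies attaining the optimal scalarized value $V^\star_{\bupsilon, 1}$, select one that is not Pareto-dominated (such a policy exists by a standard lexicographic refinement on the finite-action, finite-horizon MMDP, since $\bPi^\star$ is nonempty and intersects every maximizer of a nonnegative linear scalarization on the compact set of achievable vector values). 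Either convention suffices, and I would state the one most aligned with the paper's setup. No other step should be delicate: the whole argument rests on the linearity and monotonicity of $\fs_\bupsilon$ together with Proposition~\ref{prop:mg_bellman_optimal}.
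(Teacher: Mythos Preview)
Your proposal is correct and takes a genuinely more direct route than the paper. The paper also argues by contradiction but proceeds via backward induction on $h$: at step $H$ it observes that $\bV^{\bpi'}_H \succeq \bV^{\bpi^\star_\bupsilon}_H$ forces $\fs_\bupsilon\br_H(\x,\bpi'_H(\x)) \geq \fs_\bupsilon\br_H(\x,\bpi^\star_{\bupsilon,H}(\x))$, and since $\bpi^\star_{\bupsilon,H}$ is the greedy argmax this must hold with equality, hence $\bpi'_H = \bpi^\star_{\bupsilon,H}$; it then carries this argument inductively through $h=H-1,\dots,1$ to conclude $\bpi' = \bpi^\star_\bupsilon$, a contradiction. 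Your argument stays entirely at $h=1$ and uses only monotonicity of the linear scalarization together with the optimality statement of Proposition~\ref{prop:mg_bellman_optimal}, which is the standard multi-objective optimization fact that any maximizer of a strictly positive linear functional is Pareto efficient. This is shorter and does not need the Bellman recursion at all. Note also that the paper's inductive step tacitly assumes the greedy argmax is unique (``only true with equality if $\bpi'(\x)=\bpi^\star_\bupsilon(\x)$''), which is exactly the boundary/tie issue you identify and address explicitly; in that sense your treatment of the degenerate case is more careful than the paper's.
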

The above result claims that by ``projecting'' an MMDP to an MDP via scalarization, one can recover a policy on the Pareto frontier. Indeed, if the set of policies $\bPi^\star_\bUpsilon = \{\bpi^\star_\bupsilon | \bupsilon \in \bUpsilon\}$ spans $\bPi^\star$, one can recover $\bPi^\star$ by simply learning $\bPi^\star_\bUpsilon$. The success of this approach, is however, limited by the scalarization technique as well as $\bPi^\star$ may not be convex.
\begin{remark}[Limits of Scalarization]\label{rem:convexity_pistar}
\normalfont
Using scalarizations to recover $\bPi^\star$ suffers from the drawback that convexity assumptions on the scalarization function limit algorithms to only recover policies within the convex regions of $\bPi^\star$~\citep{vamplew2008limitations}. Subsequently, our algorithm is limited in this sense as it relies on convex scalarizations, however, we leave the extension to non-convex regions as future work, and assume $\bPi^\star$ to be convex for simplicity.
\end{remark}
\subsection{Bayes Regret}

Generally, algorithms for cooperative multi-agent RL consider maximizing the cumulative reward of all agents~\citep{littman1994markov}. In the fully-observable scenario (i.e., all agent observe the complete $(\x, \ba)$), the problem reduces to that of an MDP with fixed value and reward functions (given as the sum of individual value and rewards). Indeed by considering the scalarization $\bupsilon' = \frac{1}{M}\cdot{\bm 1}_M$ we can observe that by Proposition~\ref{prop:mg_bellman_optimal}, the optimal policy $\bpi^\star_{\bupsilon'}$ corresponds to the optimal policy for the MPD defined over $\cS \times \cA$ with rewards given by the average rewards obtained by each agent. It is therefore straightforward to recover a no-regret policy using a single-agent algorithm (by making a linear MDP assumption over the joint state and action space $\cS \times \cA$, as in~\cite{jin2020provably}), as it is similar to the parallel setting. Moreover, in distributed applications, additional constraints in the environment (e.g., load balancing,~\citet{schaerf1994adaptive}), may require learning policies that prioritize an agent over others, and hence we consider a general notion of \textit{Bayes regret}. Our objective is to approximate $\bPi^\star$ by learning a set of $T$ policies $\widehat \bPi_T$ that minimize the Bayes regret, given by,
\begin{equation}
\label{def:bayes_regret}
\fR_B(T) \triangleq \underset{\bupsilon \sim p_\bUpsilon}{\bbE}\left[\underset{\x \in \cS}{\max}\left[V_{\bupsilon, 1}^\star(\x) - \underset{\bpi \in \widehat\bPi_T}{\max}V^{\bpi}_{\bupsilon,1}(\x)\right]\right].
\end{equation}
Here $p_\bUpsilon$ is a distribution over $\bUpsilon$ that characterizes the nature of policies we wish to recover. For example, if we set $p_\bUpsilon$ as the uniform distribution over $\Delta^M$ then we can expect the policies recovered to prioritize all agents equally\footnote{One may consider minimizing the regret for a fixed scalarization $\bupsilon' = \bbE_{p_\bUpsilon}\bupsilon$, however, that will also recover only one policy in $\bPi^\star$, whereas we desire to capture regions of $\bPi^\star$.}. The advantage of minimizing Bayes regret can be understood as follows. For any $\bupsilon \in \bUpsilon$, if $\bpi^\star_{\bupsilon} \in \widehat\bPi_T$, then the regret incurred is 0. Hence, by collecting policies that minimize Bayes regret, we are effectively searching for policies that span dense regions of $\bPi^\star$ (assuming convexity, see Remark~\ref{rem:convexity_pistar}). Consider now the \textit{cumulative regret}:
\begin{equation}
\label{def:cumulative_regret}\fR_C(T) \triangleq \underset{t \in [T]}{\sum} \underset{\bupsilon_t \sim p_\bUpsilon}{\bbE}\left[\underset{\x \in \cS}{\max}\left[V_{\bupsilon_t, 1}^\star(\x) - V^{\bpi_{t}}_{\bupsilon_t,1}(\x)\right]\right].
\end{equation}
Where $\bupsilon_1, ..., \bupsilon_T \sim p_\bUpsilon$ are sampled i.i.d. from $p_\bUpsilon$, and $\bpi_{t}$ refers to the joint policy at episode $t$. Under suitable conditions on $\fs$ and $\bUpsilon$, we can bound the two quantities.
\begin{proposition}\label{prop:bayes_regret_bound}
For $\fs$ that is Lipschitz and bounded $\bUpsilon$, we have that $\fR_B(T) \leq \frac{1}{T}\fR_C(T) + o(1).$
\end{proposition}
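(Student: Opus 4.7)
The approach I would take combines Lipschitz continuity of the scalarized value functions in $\bupsilon$ with a nearest-neighbor decomposition over the sampled scalarizations $\{\bupsilon_t\}_{t \in [T]}$. First, since $\fs_\bupsilon(\x) = \bupsilon^\top\x$ and each coordinate of $\bV^\bpi_1(\x)$ lies in $[0, H]$ (rewards are bounded in $[0,1]$), I have $|V^\bpi_{\bupsilon, 1}(\x) - V^\bpi_{\bupsilon', 1}(\x)| \leq L\|\bupsilon - \bupsilon'\|$ with $L = H\sqrt{M}$, uniformly in $\bpi$ and $\x$. Since $V^\star_{\bupsilon, 1}(\x) = \sup_\bpi V^\bpi_{\bupsilon, 1}(\x)$ is a pointwise supremum of $L$-Lipschitz functions it is itself $L$-Lipschitz, so $\bar R(\bpi, \bupsilon) := \max_\x[V^\star_{\bupsilon, 1}(\x) - V^\bpi_{\bupsilon, 1}(\x)]$ is $2L$-Lipschitz in $\bupsilon$ uniformly in $\bpi$.

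Take $\widehat\bPi_T = \{\bpi_1, \ldots, \bpi_T\}$ to be the policies executed by the algorithm and define the nearest-neighbor index $\tau(\bupsilon) := \arg\min_{t \in [T]} \|\bupsilon - \bupsilon_t\|$. Since $\bpi_{\tau(\bupsilon)} \in \widehat\bPi_T$, lower-bounding the inner max by the value at $\bpi_{\tau(\bupsilon)}$ and then applying the Lipschitz bound to decouple $\bupsilon$ from $\bupsilon_{\tau(\bupsilon)}$ yields
\begin{equation*}
\fR_B(T) \;\leq\; \mathbb{E}\!\left[\bar R(\bpi_{\tau(\bupsilon)}, \bupsilon_{\tau(\bupsilon)})\right] \;+\; 2L \cdot \mathbb{E}\!\left[\|\bupsilon - \bupsilon_{\tau(\bupsilon)}\|\right].
\end{equation*}
The second expectation is the expected nearest-neighbor distance for $T$ i.i.d.\ samples from $p_\bUpsilon$ on the bounded set $\bUpsilon \subset \bbR^M$, which shrinks at rate $O(T^{-1/M}) = o(1)$ by a standard covering argument (this is where boundedness of $\bUpsilon$ enters).

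For the first expectation, let $R_t := \bar R(\bpi_t, \bupsilon_t)$, so that $\fR_C(T) = \sum_t \mathbb{E}[R_t]$, and write it as $\sum_t \mathbb{E}[R_t \cdot \mathbf{1}\{\tau(\bupsilon) = t\}]$. By exchangeability of the i.i.d.\ sample $(\bupsilon, \bupsilon_1, \ldots, \bupsilon_T)$, conditional on any value of $\bupsilon_t$ the probability that $\bupsilon$ is closer to $\bupsilon_t$ than any other $\bupsilon_s$ is exactly $1/T$. Were $R_t$ conditionally independent of the Voronoi indicator given $\bupsilon_t$, the sum would collapse immediately to $\fR_C(T)/T$. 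The \textbf{main obstacle} is the residual covariance arising because both $R_t$ (through $\bpi_t$) and the Voronoi indicator share dependence on the prior scalarizations $\bupsilon_{<t}$. I plan to control this residual by conditioning on the history $\mathcal{F}_T$ that fixes $\bpi_{1:T}$ and then invoking the Lipschitz bound a second time to replace $R_{\tau(\bupsilon)}$ by a ``ghost'' value $\bar R(\bpi_{\tau(\bupsilon)}, \bupsilon)$ evaluated at the fresh sample, at a cost again bounded by the nearest-neighbor distance. The decoupled expression factorizes, producing the target $\fR_C(T)/T$, while the coupling slack is absorbed into the $o(1)$ term already identified above.
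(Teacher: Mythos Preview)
Your Lipschitz and nearest-neighbor setup is fine and close in spirit to the paper's argument, but the handling of the ``first expectation'' $\bbE[R_{\tau(\bupsilon)}]$ has a real gap that your proposed fix does not close. The claim that, conditional on $\bupsilon_t$, the Voronoi indicator has probability exactly $1/T$ is false: the Voronoi mass of $\bupsilon_t$ depends on where $\bupsilon_t$ sits relative to $p_{\bUpsilon}$. More importantly, even after your second Lipschitz step, $\bbE[\bar R(\bpi_{\tau(\bupsilon)},\bupsilon)] = \sum_t \bbE\big[\bar R(\bpi_t,\bupsilon)\,\mathbf{1}\{\tau(\bupsilon)=t\}\big]$ does \emph{not} factorize: conditional on $\bupsilon_{1:T}$, the indicator confines $\bupsilon$ to the $t$-th Voronoi cell, so $\bar R(\bpi_t,\cdot)$ is only being integrated over that cell, not over all of $\bUpsilon$. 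Shuttling between $\bupsilon$ and $\bupsilon_{\tau(\bupsilon)}$ via Lipschitz is circular and does not decouple the policy index from the fresh sample.

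The paper (following \citet{paria2020flexible}) sidesteps this by never committing to a single policy from $\widehat\bPi_T$. Set
\[
g(\bupsilon)\;=\;\max_{\x\in\cS}\Big[V^{\star}_{\bupsilon,1}(\x)-\max_{\bpi\in\widehat\bPi_T}V^{\bpi}_{\bupsilon,1}(\x)\Big],
\]
which is Lipschitz in $\bupsilon$ conditional on the history. Because $\bpi_t\in\widehat\bPi_T$, one has $g(\bupsilon_t)\le \bar R(\bpi_t,\bupsilon_t)$ \emph{pathwise}, hence $\tfrac{1}{T}\sum_t g(\bupsilon_t)\le \tfrac{1}{T}\fR_C(T)$ with no dependence issue at all. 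It then remains only to bound $\bbE_{\bupsilon}[g(\bupsilon)]-\tfrac{1}{T}\sum_t g(\bupsilon_t)$, which by Lipschitzness is at most $L\cdot W_1(p_{\bUpsilon},\widehat p_{\bUpsilon})$; the expected Wasserstein distance between a distribution on a bounded subset of $\bbR^M$ and its $T$-sample empirical measure is $\tilde O(T^{-1/M})=o(1)$. Your nearest-neighbor distance controls essentially the same quantity, so the repair is small: keep the inner $\max_{\bpi\in\widehat\bPi_T}$ intact rather than selecting $\bpi_{\tau(\bupsilon)}$, and the problematic coupling term never arises.
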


We focus on minimizing the regret $\fR_C(T)$, as any no-regret algorithm for $\fR_C$ bounds $\fR_B$. 

\subsection{\texttt{Coop-LSVI} for Linear Multiagent MDPs}

The key observation for algorithm design in this setting is that for any MMDP$(\cS, \cM, \cA, H, \bbP, \bR)$, the optimal policy with respect to a fixed scalarization parameter $\bupsilon$ coincides with the optimal policy for the MDP$(\cS, \cA, H, \bbP, R')$ where $R'(\x, \ba) = \bupsilon^\top\br(\x, \ba) \ \forall\ \cS \times \cA$. Based on this observation, we extend the notion of a linear MDP to the multi-agent setting under random scalarizations. We first provide a natural regularity assumption of linearity.

\begin{definition}[Linear Multiagent MDP]
\label{def:linear_mg}
A Multiagent MDP $\text{MMDP}(\cS, \cM, \cA, H, \bbP, \bR)$ is a linear multi-agent MDP with $M+1$ features $\left\{\bphi_i\right\}_{i=1}^M, \bphi_i : \cS \times \cA \rightarrow \bbR^{d_1}$ and $\bphi_c: \cS \times \cA \rightarrow \bbR^{d_2}$ where $d_1 + d_2 = d$, if for any $h \in [H]$ there exist an unknown vector $\btheta_h \in \bbR^{d_1}$ and there exist $d_2$ unknown (signed) measures $\bmu_h = (\mu_h^1, ..., \mu_h^{d_2})$ over $\cS$ such that for any $m \in \cM, \x \in \cS, \ba \in \cA, h \in [H]$,
\begin{align*}
    r_{m, h} = \left\langle \bphi_m(\x, \ba), \btheta_h \right\rangle\text{, and } \bbP_h(\cdot | \x, \ba) = \langle \bphi_c(\x, \ba), \bmu_h(\cdot)\rangle.
\end{align*}
We denote the overall feature vector as $\bPhi(\cdot) \in \bbR^{d \times M}$, where, for any $\x \in \cS, \ba \in \cA$,
\begin{align*}
    \bPhi(\x, \ba) = \begin{bmatrix}
    \bphi_1(\x, \ba)^\top,& \bphi_c(\x, \ba)^\top \\
    \vdots & \vdots \\
    \bphi_M(\x, \ba)^\top,& \bphi_c(\x, \ba)^\top \\
    \end{bmatrix}^\top.
\end{align*}
Under this representation, we have that for any $\x \in \cS, \ba \in \cA, h \in [H]$,
\begin{align*}
    \br_h(\x, \ba) = \bPhi(\x, \ba)^\top \begin{bmatrix}
    \btheta_h \\
    {\bm 0}_{d_2}
    \end{bmatrix},\text{ and, }
    {\bm 1}_M\cdot\bbP_h(\cdot | \x, \ba) = \bPhi(\x, \ba)^\top \begin{bmatrix}
    {\bm 0}_{d_1} \\
    \bmu_h(\cdot)
    \end{bmatrix}.
\end{align*}
We assume without loss of generality that $\lVert \bPhi(\x, \ba) \rVert \leq 1 \ \forall\ (\x, \ba) \in \cS \times \cA$, $\lVert \btheta_h \rVert \leq \sqrt{d_1} \leq \sqrt{d}$ and $\lVert \bmu_h(\cS) \rVert \leq \sqrt{d_2} \leq \sqrt{d}$.
\end{definition}
Observe that this assumption implies that there exist a set of weights such that the scalarized $Q-$values for any sclarization parameter $\bupsilon$ are linear projections of the combined features $\bPhi(\cdot)$. 
\begin{lemma}[Linearity of weights in MMDP]
\label{lem:bound_mg_policy_weight_main_paper}
Under the linear MMDP Assumption (Definition~\ref{def:linear_mg}), for any fixed policy $\pi$ and $\bupsilon \in \bUpsilon$, there exist weights $\{\w^\pi_{\bupsilon, h}\}_{h \in [H]}$ such that $Q^\pi_{\bupsilon, h}(\x, \ba) = \bupsilon^\top \bPhi(\x, \ba)^\top \w^\pi_{\bupsilon, h}$ for all $(\x, \ba, h) \in \cS \times \cA \times [H]$, where $\lVert \w^\pi_{\bupsilon, h} \rVert_2 \leq 2H\sqrt{d}$.
\end{lemma}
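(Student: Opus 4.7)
The plan is to argue by backward induction on $h$, from $H+1$ down to $1$, adapting the standard linear-MDP argument of \citet{jin2020provably} (Proposition 2.3) to the scalarized multi-agent setting. The base case $h=H+1$ is trivial: $Q^\pi_{\bupsilon, H+1} \equiv 0$, so $\w^\pi_{\bupsilon, H+1} = {\bm 0}_d$ works and obviously satisfies the norm bound.

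For the inductive step at $h$, I would start from the scalarized policy-evaluation Bellman identity
\begin{equation*}
Q^\pi_{\bupsilon, h}(\x, \ba) = \bupsilon^\top \br_h(\x, \ba) + \bbE_{\x' \sim \bbP_h(\cdot \mid \x, \ba)}\bigl[V^\pi_{\bupsilon, h+1}(\x')\bigr],
\end{equation*}
and handle the two terms separately. By Definition~\ref{def:linear_mg}, the reward term equals $\bupsilon^\top \bPhi(\x,\ba)^\top [\btheta_h^\top,\,{\bm 0}_{d_2}^\top]^\top$ directly. For the transition term, the inductive hypothesis gives $V^\pi_{\bupsilon, h+1}(\x') = \bbE_{\ba' \sim \pi(\cdot \mid \x')}\bigl[\bupsilon^\top \bPhi(\x', \ba')^\top \w^\pi_{\bupsilon, h+1}\bigr]$, which is a scalar function $f_\pi(\x')$ bounded in absolute value by $H$ (since $r_{m,i}\in[0,1]$ and $\bupsilon\in\Delta^M$ force $|V^\pi_{\bupsilon, h+1}|\le H$). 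The linear transition model then yields $\bbE_{\x'}[f_\pi(\x')] = \bphi_c(\x,\ba)^\top \bm{p}^\pi_{\bupsilon, h}$, where $\bm{p}^\pi_{\bupsilon, h} \triangleq \int f_\pi(\x')\,\bmu_h(d\x') \in \bbR^{d_2}$.

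The key structural observation that fuses both pieces into the common $\bupsilon^\top\bPhi(\x,\ba)^\top(\cdot)$ form is that $\bPhi(\x,\ba)^\top [{\bm 0}_{d_1}^\top,\,(\bm{p}^\pi_{\bupsilon,h})^\top]^\top$ is the $M$-vector whose every entry equals $\bphi_c(\x,\ba)^\top\bm{p}^\pi_{\bupsilon,h}$; since $\bupsilon \in \bUpsilon \subseteq \Delta^M$ satisfies $\bupsilon^\top {\bm 1}_M = 1$, premultiplication by $\bupsilon^\top$ collapses this to the scalar $\bphi_c(\x,\ba)^\top\bm{p}^\pi_{\bupsilon,h}$, exactly matching the transition term. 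Setting $\w^\pi_{\bupsilon, h} = [\btheta_h^\top,\,(\bm{p}^\pi_{\bupsilon, h})^\top]^\top$ therefore yields the claimed linear representation.

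The norm bound closes by the triangle inequality together with the standard coordinatewise estimate for integrals against signed measures:
\begin{equation*}
\|\w^\pi_{\bupsilon, h}\|_2 \le \|\btheta_h\|_2 + \|\bm{p}^\pi_{\bupsilon, h}\|_2 \le \sqrt{d_1} + \|f_\pi\|_\infty \|\bmu_h(\cS)\|_2 \le \sqrt{d} + H\sqrt{d_2} \le 2H\sqrt{d}.
\end{equation*}
The only mildly subtle ingredient is the interplay between $\bPhi$'s shared-$\bphi_c$-column block structure and the simplex normalization $\bupsilon^\top{\bm 1}_M=1$, which is what lets the single common transition contribution be absorbed into the $\bPhi$-representation; without the constraint $\bupsilon\in\Delta^M$ one would need to carry a $\bupsilon$-dependent scalar factor into $\w^\pi_{\bupsilon,h}$ and the clean identity would break.
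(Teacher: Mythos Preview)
Your proposal is correct and follows essentially the same route as the paper: you construct the identical weight vector $\w^\pi_{\bupsilon,h}=[\btheta_h^\top,\,(\int V^\pi_{\bupsilon,h+1}(\x')\,d\bmu_h(\x'))^\top]^\top$, use the same key observation that $\bupsilon^\top{\bm 1}_M=1$ collapses the shared-$\bphi_c$ block, and close with the same triangle-inequality norm bound. The only cosmetic difference is that you frame the argument as backward induction, whereas the paper proceeds directly at each $h$; the inductive hypothesis is in fact unnecessary, since all you use about $V^\pi_{\bupsilon,h+1}$ is that it is a scalar function bounded by $H$, which follows immediately from $r_{m,i}\in[0,1]$ and $\bupsilon\in\Delta^M$ without any appeal to linearity at step $h+1$.
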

\begin{remark}[Multi-agent modeling assumptions]
\normalfont
We now discuss how this assumption differs from the typical modeling assumption in the single-agent function approximation setting~\citep{jin2020provably, wang2020provably}. In contrast to the typical assumption (see Assumption A in \citet{jin2020provably}, also in~\citet{bradtke1996linear, melo2007q}), here we model, for any agent, the reward $r_{m, h}$ and dynamics $\bbP_h$ for all agents separately, each with $d_1$ and $d_2$ dimensions respectively such that $d_1 + d_2 = d$. In the single-agent setting, identical assumptions on the reward and transition kernels will lead to a model with complexity $\max\{d_1, d_2\}$, whereas in our formulation we have a complexity of $d_1 + d_2$. Given that both $d_1, d_2 \leq d$ this implies that our fomulation incurs a maximum overhead of $2\sqrt{2}$ in the regret if applied to the LSVI algorithm presented in~\citet{jin2020provably}. Furthermore,  observe that in the \textit{fully-cooperative} setting, where $r_{1, h} = ... = r_{M, h} \forall h \in [H]$, we have that $\bphi_1 = \bphi_2 = ... = \bphi_M$ satisfies the modeling requirement. We also have that the features $\bPhi$ are functions of the~\textit{joint} action, which differentiates the MMDP from a parallel MDP setting.
\end{remark}

\begin{algorithm}[t]
\caption{\texttt{\textbf{Coop-LSVI}} for Multiagent MDPs}
\label{alg:mg}
\begin{algorithmic}[1] 
\STATE \textbf{Input}: $T, \bPhi, H, S$, sequence $\beta_h = \{(\beta^t_{h})_{t}\}$.
\STATE \textbf{Initialize}: $\bLambda^t_{h} = \lambda\bI_d, \delta\bLambda^t_{h} = {\bf 0}, \cU^m_h, \cW^m_h = \emptyset$.
\FOR{episode $t=1, 2, ..., T$}
\FOR{agent $m \in \cM$}
\STATE Receive initial state $\x^t_{1}, \bupsilon_t \sim p_\bUpsilon$.
\STATE Set $V^t_{\bupsilon_t, H+1}(\cdot) \leftarrow 0$.
\FOR{step $h = H, ..., 1$}
\STATE Compute $Q^t_{\bupsilon_t, h}(\cdot, \cdot)$ (Eqn.~\ref{eqn:q_mg})
\STATE Set $V^t_{\bupsilon_t, h}(\cdot) \leftarrow \max_{\ba \in \cA}Q^t_{\bupsilon_t, h}(\cdot, \ba)$.
\ENDFOR
\FOR{step $h=1, ..., H$}
\STATE Take action $a^t_{m, h} \leftarrow [\argmax_{a \in \cA} Q^t_{\bupsilon_t, h}(\x^t_{h}, \ba)]_m$.
\STATE Observe $r^t_{m, h}, \x^t_{h+1}$.
\STATE Update $\delta\bLambda^t_{h} \leftarrow \delta\bLambda^t_{h} + \bPhi(\bz^t_{h})\bPhi(\bz^t_{h})^\top$.
\STATE Update $\cW^m_h \leftarrow \cW^m_h \cup (m, r^t_{m, h})$.
\IF{$\log\frac{\det\left(\bLambda^t_{h} + \delta\bLambda^t_{h}+ \lambda\bI\right)}{ \det\left(\bLambda^t_{h}+ \lambda\bI\right)} > S$}
\STATE \textsc{Synchronize}$ \leftarrow $ \textsc{True}. \label{step:alg_ind_homo}
\ENDIF
\ENDFOR
\ENDFOR
\IF{\textsc{Synchronize}}
\FOR{step $h = H, ..., 1$}
\STATE [$\forall$ \textsc{Agents}] Send $\cW^h_m \rightarrow$\textsc{Server}.
\STATE [\textsc{Server}] Aggregate $\cW^h \rightarrow \cup_{m \in \cM} \cW^m_h$.
\STATE [\textsc{Server}] Communicate $\cW^h$ to each agent.
\STATE [$\forall$ \textsc{Agents}] Set $\delta\bLambda^t_{h} \leftarrow 0, \cW^m_h \leftarrow \emptyset$.
\STATE [$\forall$ \textsc{Agents}] Set $\bLambda^t_{h} \leftarrow \bLambda^t_{h} + \sum_{(\x, \ba) \in \cW^h} \bPhi(\x, \ba)\bPhi(\x, \ba)^\top$.
\STATE [$\forall$ \textsc{Agents}] Set $\cU^m_h \leftarrow \cU^m_h \cup \cW^m_h$
\ENDFOR
\ENDIF
\ENDFOR
\end{algorithmic}
\end{algorithm}
\subsection{Algorithm Design} 

The motivation behind our design is to learn a function that simultaneously can recover a policy close to $\bpi^\star_\bupsilon$ for each MDP corresponding to $\bupsilon \in \bUpsilon$, recovering $\bPi^\star$. The algorithm is once again a distributed variant of least-squares value iteration with UCB exploration. Following Proposition~\ref{prop:mg_bellman_optimal}, the central idea is to scalarize the MMDP with a randomly sampled parameter $\bupsilon$, and each of the $M$ agents will execute the \textit{joint} policy that coincides with (an approximation of) $\bpi^\star_\bupsilon$. Now, the key idea is to make sure that each agent acts according to the joint policy that is aiming to mimic $\bpi^\star_\bupsilon$. Therefore, we must ensure that the local estimate for the \textit{joint} policy obtained by any agent must be identical, such that the \textit{joint} action is in accordance with $\bpi^\star_\bupsilon$. To achieve this we will utilize a rare-switching technique similar to Algorithm~\ref{alg:ind_homo}. In any episode $t \in [T]$, the objective would be to obtain the optimal (scalar) $Q-$values $Q^\star_{\bupsilon_t, h}$ by recursively applying the Bellman equation and solving the resulting equations via a \textit{vector-valued} regression. Since the policy variables are designed to be identical across agents at all times, we drop the $m$ subscript.

Specifically, let us assume the last synchronization between agents occured in episode $k_t$. Each agent obtains an \textit{identical} sequence of value functions $\{Q^t_{\bupsilon, h}\}_{h \in [H]}$ by iteratively performing linear least-squares ridge regression from the history available from the previous $k_t$ episodes, but in contrast to the parallel setting, it now first learns a vector $Q-$function $\bQ_{t, h}$ over $\bbR^M$, which is scalarized to obtain the $Q-$value as $ Q_{\bupsilon_t, h} = \bupsilon_t^\top\bQ_{t, h}$. Each agent $m$ first sets $\bQ_{t, H+1}$ to be a zero vector, and for any $h \in [H]$, solves the following sequence of regressions to obtain $Q-$values. For each $h  = H, ..., 1$, for each agent computes,
\begin{align*}\label{eqn:v_mg}
V^t_{\bupsilon_t, h+1}(\x) \leftarrow \argmax_{\ba \in \cA} \left\langle \bupsilon_t, \bQ_{t, h+1}(\x, \ba) \right\rangle, \ \widehat\bQ_{t, h} \leftarrow \underset{\w \in \bbR^d}{\argmin}\left[\underset{\tau \in [k_t]}{\sum} \left\lVert \y_{\tau, h} - \bPhi(\x_\tau, \ba_\tau)^\top\w\right\rVert_2^2 + \lambda\lVert \w \rVert_2^2\right].
\end{align*}
\begin{equation}\label{eqn:q_mg}
Q_{t, h}(\x, \ba) \leftarrow \bupsilon_t^\top\widehat\bQ_{t, h} + \beta_t\cdot\lVert \bPhi(\x, \ba)^\top(\bLambda^h_{t} )^{-1}\bPhi(\x, \ba)\rVert_{2}.
\end{equation}
Here the targets $\y_{\tau, h} = \br_h(\x, \ba) + {\bm 1}_M\cdot V^t_{\bupsilon_t, h+1}$, ${\bm 1}_M$ denotes the all-ones vector in $\bbR^M$, $\beta_t$ is an appropriately chosen sequence and $\bLambda^h_t$ is described subsequently. Once all of these quantities are computed, each agent $m=1, .., M$ selects the action $a^t_{m, h} = \left[\argmax_{\ba \in \cA} Q_{t, h}(\x^t_h, \ba)\right]_m$ for each $h \in [H]$.
Hence, the joint action $\ba^t_h = \{a^t_{m, h}\}_{m=1}^M = \argmax_{\ba \in \cA}Q_{t, h}(\x^t_h, \ba)$. Observe that while the computations of the policy is decentralized, the policies coincide at all instances by the modeling assumption and the periodic synchronizations between the agents. We now present the closed form of $\widehat \bQ_{t, h}$. Consider the contraction $\bz^h_\tau = (\x^h_\tau, \ba^h_\tau)$ and the map $\bPhi^h_t: \bbR^d \rightarrow \bbR^{Mt}$ given by,
\begin{align*}
    \bPhi^h_t\btheta \triangleq \left[(\bPhi(\bz^h_1)^\top\btheta)^\top, ..., (\bPhi(\bz_{t}^\top\btheta)^\top \right]^\top\ \forall\ \btheta \in \bbR^d.
\end{align*}
Now, consider $\bLambda^h_t = (\bPhi^h_{k_t})^\top\bPhi^h_{k_t} + \lambda\bI_d \in \bbR^{d\times d}$, and the matrix $\bU^h_t = \sum_{\tau=1}^{k_t} \bPhi(\bz^h_\tau)\y_{\tau, h}$. Then, we have by a multi-task concentration (see Appendix B of~\citet{chowdhury2020no}),
\begin{align}
    \widehat \bQ_{t, h}(\x, \ba) = \bPhi(\x, \ba)^\top\left(\bLambda^h_{t}\right)^{-1}\bU^h_{t}.
\end{align}
Now, to limit communication, the synchronization protocol is similar to that in Section 5.1 of~\citet{abbasi2011improved}. Whenever $\det(\bLambda^h_t) \geq S\cdot\det( \bLambda^h_{k_t})$, for some constant $S$, the agents synchronize their rewards. The pseudocode for the algorithm is presented in Algorithm~\ref{alg:mg}.

\subsection{Regret Analysis}

The primary challenges in bounding the regret arise from the fact that the agents are simultaneously trying to recover a class of policies (the Pareto frontier, $\bPi^\star$) instead of any single policy within $\bPi^\star$. This modeling choice requires us to estimate the individual rewards for each of the $M$ agents simultaneously, and leverage vector-valued concentration such that we can bound the estimation error from the optimal policy for any $\bupsilon \in \bUpsilon$. For this, we first derive a concentration result on the $\ell_2$-error on estimating vector-valued value functions $(\bV)$ which is independent of the sampled $\bupsilon_t$, by leveraging the (vector-valued) sef-normalized martingale analysis from~\citet{chowdhury2020no}. Next, we note that the scalarization function $\fs$ is Lipschitz with constant $1$, and therefore, for any $\bupsilon \in \bUpsilon$ and vectors $\bv_1 , \bv_2 \in \bbR^M$, $|\fs_\bupsilon(\bv_1) - \fs_\bupsilon(\bv_2)| \leq \lVert \bv_1 - \bv_2 \rVert_2$, and hence our previous concentration result allows us to bound the estimation error \textit{independently} of the scalarization $\bupsilon$. The structure of the rest of the bound is similar to that of the parallel MDP setting. Here we provide the primary regret bound, and defer the proof to the Appendix.

\begin{theorem}
\label{thm:ind_mg}
\textbf{\texttt{CoopLSVI}} when run on an MMDP with $M$ agents and communication threshold $S$, $\beta_t=\cO(H\sqrt{d\log (tMH)})$ and $\lambda=1$ obtains, with probability at least $1-\alpha$, cumulative regret:
\begin{equation*}
\fR_C(T)=\widetilde\cO\left(d^{\frac{3}{2}}H^2\sqrt{ST\log\left(\frac{1}{\alpha}\right)}\right).
\end{equation*}
\end{theorem}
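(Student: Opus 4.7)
My plan is to follow the optimistic-LSVI template of \citet{jin2020provably} with three adaptations required by the multiagent setting: (i) the regression is vector-valued with $M$ reward targets, so I would use the multi-task self-normalized martingale bound from \citet{chowdhury2020no}; (ii) the optimism statement must hold uniformly over the random scalarization $\bupsilon_t \in \bUpsilon$; and (iii) the rare-switching synchronization introduces an extra $\sqrt{S}$ factor through the elliptic potential, analogous to \citet{abbasi2011improved}.

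I would first establish, with probability at least $1-\alpha$, a simultaneous vector concentration bound showing that the regressor $\widehat\bQ_{t,h}(\x,\ba)$ deviates from the true vector-valued Bellman backup of $V^t_{\bupsilon_t, h+1}$ at $(\x, \ba)$ by at most $\beta_t\cdot\lVert \bPhi(\x, \ba)^\top (\bLambda^h_t)^{-1} \bPhi(\x, \ba)\rVert_2$ in $\ell_2$ norm, uniformly over all $t, h, \x, \ba$. This would follow from the vector-valued self-normalized martingale bound of \citet{chowdhury2020no} applied to the ridge regressor, combined with a union bound over an $\epsilon$-cover of the admissible value-function class at step $h+1$. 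By Lemma~\ref{lem:bound_mg_policy_weight_main_paper} each such $V$ is $\fs_\bupsilon$ applied to a linear map of $\bPhi$ with weight norm at most $2H\sqrt{d}$, and since $\fs_\bupsilon$ is $1$-Lipschitz and $\bupsilon$ lies in a low-dimensional simplex, the resulting covering number is $\log\cN_\infty(\epsilon) = \ctO(d)$ up to logarithmic factors in $M$ and $T$.

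Next, using the $1$-Lipschitzness of $\fs_\bupsilon$ (since $\lVert \bupsilon \rVert_1 = 1$), this vector bound converts into the scalar bound $|Q^\star_{\bupsilon_t, h}(\x, \ba) - \bupsilon_t^\top\widehat\bQ_{t,h}(\x, \ba)| \leq \beta_t \lVert \bPhi(\x, \ba)^\top (\bLambda^h_t)^{-1} \bPhi(\x, \ba)\rVert_2$ plus a next-step propagation term. Inducting on $h$ from $H+1$ down to $1$ then establishes optimism: $Q_{t, h}(\x, \ba) \geq Q^\star_{\bupsilon_t, h}(\x, \ba)$ at the sampled $\bupsilon_t$, for all $\x, \ba, h$, with high probability. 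With optimism in hand, the per-episode regret $V^\star_{\bupsilon_t, 1}(\x^t_1) - V^{\bpi_t}_{\bupsilon_t, 1}(\x^t_1)$ decomposes by the Bellman equation into a telescoping sum over $h$ whose terms are bounded by $2\beta_t$ times the UCB bonus from Equation~\ref{eqn:q_mg} evaluated along the trajectory, plus a bounded martingale term controlled via Azuma's inequality.

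Finally, I would sum over $t \in [T]$ using Cauchy--Schwarz and the elliptic potential lemma. The rare-switching condition $\det(\bLambda^h_t) \leq S\cdot\det(\bLambda^h_{k_t})$ guarantees that the bonus covariance $\bLambda^h_{k_t}$ deviates from the instantaneous covariance by at most a $\sqrt{S}$ factor in the relevant norm, contributing an additional $\sqrt{S}$ so that the sum of bonuses over $t$ and $h$ is $\ctO(H\sqrt{dST})$. Multiplying by $\beta_t$ yields the claimed $\ctO(d^{3/2}H^2\sqrt{ST})$ bound, and Proposition~\ref{prop:bayes_regret_bound} then transfers this to the Bayes regret. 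I expect the main obstacle to be the first step: the covering argument must jointly control the random scalarization $\bupsilon_t$, the data-dependent value functions $V^t_{\bupsilon, h+1}$, and the vector-valued targets, without introducing any spurious $M$ factor that would degrade the final regret bound.
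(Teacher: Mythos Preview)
Your proposal follows essentially the same route as the paper: vector-valued self-normalized concentration from \citet{chowdhury2020no} combined with a covering argument over the UCB value-function class, conversion to the scalarized bound via the $1$-Lipschitzness of $\fs_\bupsilon$, optimism by backward induction, the standard Bellman recursion plus Azuma for the martingale residuals, and finally the elliptic potential with the extra $\sqrt{S}$ factor coming from the rare-switching determinant condition.

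One point to correct: the log-covering number of the algorithm's value-function class is not $\ctO(d)$ but $\ctO(d^2)$. The value function $V^t_{\bupsilon,h+1}$ is not merely $\fs_\bupsilon$ applied to a linear map (Lemma~\ref{lem:bound_mg_policy_weight_main_paper} concerns the true $Q^\pi$, not the estimate); it also carries the bonus $\beta\lVert \bPhi(\cdot)^\top\bLambda^{-1}\bPhi(\cdot)\rVert_2$, and covering the matrix $\bLambda^{-1}\in\bbR^{d\times d}$ costs $d^2$ parameters. This is precisely what forces $\beta_t=\cO(dH\sqrt{\log(\cdot)})$ rather than $\cO(H\sqrt{d\log(\cdot)})$, and is the source of the $d^{3/2}$ in the final bound. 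With that fix, your sketch matches the paper's argument.
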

\begin{remark}[Multiagent Regret Bound]
\normalfont
Theorem~\ref{thm:ind_mg} claims in conjunction with Proposition~\ref{prop:bayes_regret_bound} that \textbf{\texttt{CoopLSVI}} obtains Bayes regret of $\ctO(\sqrt{T})$ even when communication is limited. Note that the dependence on $T$ matches that of MDP algorithms (e.g.,~\citet{jin2020provably}), and moreover, we recover the same rate (up to logarithmic factors) when $M=1$, ensuring that the analysis is tight. Additionally, we see that this algorithm can easily be applied to an MDP by simply selecting $p_\bUpsilon$ to be a point mass at the appropriate $\bupsilon$, with no increase in regret. Finally, we see that \texttt{\textbf{CoopLSVI}} can also be emulated on a single agent with $M$ objectives, where $S = 1$, which provides, to the best of our knowledge, the first no-regret algorithm for multi-objective reinforcement learning.
\end{remark}
Note that as the common state $\x$ is visible to all agents, the agents only require communication of rewards. While the protocol in this setting essentially operates on a similar idea (communicating only when the variance of the history for any step crosses a threshold), the exact form of the threshold differs slightly in this case: instead of computing individual policy parameters using local observations, here the agents employ a \textit{rarely-switching} strategy that delays updating global parameters until the threshold condition is met. Despite the slight difference in the communication strategy, the overall communication complexity is similar to the parallel MDP variant.
\begin{lemma}[Communication Complexity]
\label{lem:communication_complexity_mg}
If Algorithm~\ref{alg:mg} is run with threshold $S > 1$, then the total number of episodes with communication $n \leq dH\log_S\left(1+MT/d)\right) + H$. When $S\leq 1$, $n = T$.
\end{lemma}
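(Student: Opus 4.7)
}
The plan is to reduce the bound to a standard determinant--doubling argument, applied separately at each step $h \in [H]$. Fix any $h \in [H]$ and let $t_1 < t_2 < \cdots < t_{n_h}$ be the episodes during which the synchronization trigger (line~\ref{step:alg_ind_homo} of Algorithm~\ref{alg:mg}) fires because of step $h$. By construction of the rarely--switching rule, between two consecutive such episodes the determinant of the (symmetric, positive definite) covariance matrix $\bLambda^h_t + \lambda\bI$ associated with step $h$ grows by a multiplicative factor of at least $S$, that is,
\begin{equation*}
\det\!\bigl(\bLambda^{t_j}_h + \lambda\bI\bigr) \;\ge\; S \cdot \det\!\bigl(\bLambda^{t_{j-1}}_h + \lambda\bI\bigr), \qquad j = 2, \dots, n_h.
\end{equation*}

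Next I would upper bound the total determinant growth across the whole run. Since $\lVert \bPhi(\x,\ba) \rVert \le 1$ by Definition~\ref{def:linear_mg}, each episode contributes at most one rank--one outer product per agent to $\bLambda^h_t$, so after $T$ episodes the trace satisfies $\mathrm{tr}(\bLambda^h_{T+1} + \lambda\bI) \le \lambda d + MT$. By the AM--GM inequality applied to the eigenvalues,
\begin{equation*}
\det\!\bigl(\bLambda^h_{T+1} + \lambda\bI\bigr) \;\le\; \left(\frac{\lambda d + MT}{d}\right)^{\!d} \;=\; (\lambda + MT/d)^d,
\end{equation*}
whereas the initial determinant is $\det(\lambda \bI_d) = \lambda^d$. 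With $\lambda = 1$, the total multiplicative growth of $\det(\bLambda^h_t + \lambda\bI)$ is thus at most $(1 + MT/d)^d$. Combining this with the per-step lower bound gives $S^{n_h - 1} \le (1 + MT/d)^d$, and hence $n_h \le d \log_S\!\bigl(1 + MT/d\bigr) + 1$.

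Finally, I would note that a single synchronization event jointly updates the parameters for all $h \in [H]$, but a sync episode is counted once regardless of how many steps $h$ triggered it; therefore the number of distinct synchronization episodes is at most $\sum_{h=1}^H n_h$. Summing the per-step bound over $h \in [H]$ yields
\begin{equation*}
n \;\le\; \sum_{h=1}^H n_h \;\le\; dH \log_S\!\bigl(1 + MT/d\bigr) + H,
\end{equation*}
which is the claimed bound. For the degenerate case $S \le 1$, the threshold condition $\det(\bLambda^t_h + \delta\bLambda^t_h + \lambda\bI) > S \cdot \det(\bLambda^t_h + \lambda\bI)$ is triggered in every episode as soon as any new feature is appended (since $\delta\bLambda^t_h \succeq 0$ and the inequality becomes strict whenever a non--trivial observation is added), so synchronization occurs in all $T$ episodes, giving $n = T$. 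The only mildly delicate point is to make sure the per-step counting is correctly aggregated across $h$ without double--counting a sync episode; aside from that the argument is entirely standard and does not present a serious obstacle.
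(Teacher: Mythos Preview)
Your proposal is correct and follows essentially the same determinant--doubling argument as the paper: fix $h$, telescope the multiplicative growth of $\det(\bLambda^h_t)$ across sync events to get $S^{n_h-1}$ bounded by the total growth $(1+MT/d)^d$, then sum over $h$. One minor note: the phrase ``one rank--one outer product per agent'' is slightly imprecise in the MMDP setting (there is a single joint state--action $\bz^t_h$ per episode), but whether the factor $M$ enters through $M$ identical updates or through $\lVert\bPhi\rVert\le\sqrt{M}$, the resulting trace bound $\lambda d+MT$ and hence the final inequality are the same.
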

\begin{remark}[Communication complexity]
\normalfont
As is with the parallel MDP setting, we can control the communication budget by adjusting the threshold parameter $S$. Note that when $S = 1$, we have that communication will occur each round round, as the threshold will be satisfied trivially by the rank-1 update to the covariance matrix. If the horizon $T$ is known in advance, one can set $S=(1+MT/d)^{1/C}$ for some independent constant $C > 1$, to ensure that the total rounds of communication is a fixed constant $(dC+1)H$, which provides us a group regret of $\ctO(M^{\frac{1}{2C}}\cdot T^{\frac{1}{2}+\frac{1}{2C}})$. This dependence of the regret on the threshold $S$ is indeed worse than the protocol for the parallel MDP, as in this case, the agents \textit{do not} utilize local observations until they are synchronized, and merely readjust policy parameters for different scalarizations $\bupsilon$, necessitating frequent communication. A balance between communication and regret can be obtained by setting $S=C'$ for some absolute constant $C'$, leading to a total $\cO(\log MT)$ rounds of communication with $\ctO(\sqrt{T})$ regret.
\end{remark}
\section{Conclusion}

We presented \textbf{\texttt{CoopLSVI}}, a cooperative multi-agent reinforcement learning algorithm that attains sublinear regret while maintaining sublinear communication under linear function approximation. While most research in multi-agent RL focuses either on \textit{fully-cooperative} settings (i.e., a multi-agent MDP with identical reward functions), or stochastic games~\citep{shapley1953stochastic}, the heterogeneous setting considered here allows for the agents to both observe their rewards privately, while generalizing to Bayes regret guarantees, extending several lines of prior work~\citep{kar2013cal, zhang2018fully}. Similarly, \textbf{\texttt{CoopLSVI}} is the first algorithm to provide provably sublinear regret in heterogeneous parallel MDPs. Given the rapid advancements in federated learning, we believe this to be a valuable line of inquiry, extending beyond the work that has been done in the related problem of multi-armed bandits. 

There are several open questions that our work posits. A few areas include extending \textbf{\texttt{CoopLSVI}} to a fully-decentralized network topology; tight lower bounds on communication-regret tradeoffs; and robust estimation to avoid side information in heterogeneous settings. We believe our work will serve as a valuable stepping stone to further developments in this area.

\appendix
\onecolumn

\section{Omitted Algorithms}

\begin{algorithm}[H]
\caption{\texttt{\textbf{Coop-LSVI}} for Heterogeneous Rewards}
\label{alg:ind_hetero}
\begin{algorithmic} 
\STATE \textbf{Input}: $T, \btphi, H, S$, sequence $\beta_h = \{(\beta^t_{m, h})_{m, t}\}$.
\STATE \textbf{Initialize}: $\bS^t_{m, h}, \delta\bS^t_{m, h} = {\bf 0}, \cU^m_h, \cW^m_h = \emptyset$.
\FOR{episode $t=1, 2, ..., T$}
\FOR{agent $m \in \cM$}
\STATE Receive initial state $x^t_{m, 1}$.
\STATE Set $V^t_{m, H+1}(\cdot) \leftarrow 0$.
\FOR{step $h = H, ..., 1$}
\STATE Compute $\btLambda^t_{m, h} \leftarrow \bS^t_{m, h} + \delta\bS^t_{m, h}$.
\STATE Compute $\widehat Q^t_{m, h}$ and $\sigma^t_{m, h}$ (Eqn.~\ref{eqn:qhat_ind_hetero}).
\STATE Compute $Q^t_{m, h}(\cdot, \cdot, \cdot)$ (Eqn.~\ref{eqn:q_ind_hetero})
\STATE Set $V^t_{m, h}(\cdot) \leftarrow \max_{a \in \cA}Q^t_{m, h}(\cdot, a)$.
\ENDFOR
\FOR{step $h=1, ..., H$}
\STATE Take action $a^t_{m, h} \leftarrow \argmax_{a \in \cA} Q^t_{m, h}(m, x^t_{m, h}, a)$.
\STATE Observe $r^t_{m, h}, x^t_{m, h+1}$.
\STATE Update $\delta\bS^t_{m, h} \leftarrow \delta\bS^t_{m, h} + \btphi(m, z^t_{m, h})\btphi(m, z^t_{m, h})^\top$.
\STATE Update $\cW^m_h \leftarrow \cW^m_h \cup (m, x, a, x')$.
\IF{$\log\frac{\det\left(\bS^t_{m, h} + \delta\bS^t_{m, h}+ \lambda\bI\right)}{ \det\left(\bS^t_{m, h}+ \lambda\bI\right)} > \frac{S}{\Delta t_{m, h}}$}
\STATE \textsc{Synchronize}$ \leftarrow $ \textsc{True}. \label{step:alg_ind_homo}
\ENDIF
\ENDFOR
\ENDFOR
\IF{\textsc{Synchronize}}
\FOR{step $h = H, ..., 1$}
\STATE [$\forall$ \textsc{Agents}] Send $\cW^h_m \rightarrow$\textsc{Server}.
\STATE [\textsc{Server}] Aggregate $\cW^h \rightarrow \cup_{m \in \cM} \cW^m_h$.
\STATE [\textsc{Server}] Communicate $\cW^h$ to each agent.
\STATE [$\forall$ \textsc{Agents}] Set $\delta\bLambda^t_{h} \leftarrow 0, \cW^m_h \leftarrow \emptyset$.
\STATE [$\forall$ \textsc{Agents}] Set $\bLambda^t_{h} \leftarrow \bLambda^t_{h} + \sum_{(n, x, a) \in \cW^h} \btphi(n, x, a)\bphi(n, x, a)^\top$.
\STATE [$\forall$ \textsc{Agents}] Set $\cU^m_h \leftarrow \cU^m_h \cup \cW^m_h$
\ENDFOR
\ENDIF
\ENDFOR
\end{algorithmic}
\end{algorithm}

\newpage
\section{Parallel MDP Proofs}
\subsection{Proof of Lemma~\ref{prop:communication_complexity}}
\begin{proof}
Denote an epoch as the number of episodes between two rounds of communication. Let $q = \sqrt{\frac{ST}{d\log(1 + T/d)}+1}$. There can be at most $\lceil T/q \rceil$ rounds of communication such that they occur after an epoch of length $q$. On the other hand, if there is any round of communication succeeding an epoch (that begins, say at time $t$) of length $< n'$, then for that epoch, $\log\frac{\det\left(\bS^t_{m, h} + \delta\bS^t_{m, h} + \lambda\bI_d\right)}{\det\left(\bS^{t}_{m, h} + \lambda\bI_d\right)} \geq \frac{S}{q}$. Let the communication occur at a set of episodes $t'_1, ..., t'_n$. Now, since:
\begin{align}
    \sum_{i=1}^{n-1}\log\frac{\det\left(\bS^{t_{i+1}}_{m, h}\right)}{\det\left(\bS^{t_{i}}_{m, h}\right)} = \log\frac{\det\left(\bLambda^T_h\right)}{\det\left(\bLambda^0_h\right)} \leq d\log(1 + T/(d)),
\end{align}
We have that the total number of communication rounds succeeding epochs of length less than $n'$ is upper bounded by $ \log\frac{\det\left(\bLambda^T_h\right)}{\det\left(\bLambda^0_h\right)} \leq d\log(1 + T/(d))\cdot(q/S)$. Combining both the results together, we have the total rounds of communication as:
\begin{align}
    n &\leq \lceil T/q \rceil + \lceil d\log(1 + T/(d))\cdot(q/S) \rceil \\
    &\leq T/q + d\log(1 + T/(d))\cdot(q/S) + 2
\end{align}
Replacing $q$ from earlier and summing over $h \in [H]$ (as communication may be triggered by any of the steps satisfying the condition) gives us the final result.
\end{proof}
\subsection{Proof of Theorem~\ref{thm:ind_homo} (Homogenous Setting)}
We first present our primary concentration result to bound the error in the least-squares value iteration.
\begin{lemma}
Under the setting of Therorem~\ref{thm:ind_homo}, let $c_\beta$ be the constant defining $\beta$, and $\bS^t_{m, h}$ and $\bLambda^k_t$ be defined as follows.
\begin{multline*}
    \bS^t_{m, h} = \sum_{n=1}^M \sum_{\tau=1}^{k_t}\bphi(x^\tau_{n, h}, a^\tau_{n, h}) \left[V^t_{m, h+1}(x^\tau_{n, h+1})- (\bbP_hV^t_{m, h+1})(x^\tau_{n, h}, a^\tau_{n, h})\right] \\+ \sum_{\tau = k_t + 1}^{t-1} \bphi(x^\tau_{m, h}, a^\tau_{m, h})\left[V^t_{m, h+1}(x^\tau_{m, h+1})- (\bbP_hV^t_{m, h+1})(x^\tau_{m, h}, a^\tau_{m, h})\right],
\end{multline*} 
\begin{align*}
    \bLambda^t_{m, h} =  \sum_{n=1}^M \sum_{\tau=1}^{k_t}\bphi(x^\tau_{n, h}, a^\tau_{n, h})\bphi(x^\tau_{n, h}, a^\tau_{n, h})^\top + \sum_{\tau = k_t + 1}^{t-1} \bphi(x^\tau_{m, h}, a^\tau_{m, h})\bphi(x^\tau_{m, h}, a^\tau_{m, h})^\top + \lambda\bI_d.
\end{align*}
Where $V \in \cV$ and $\cN_\epsilon$ denotes the $\epsilon-$covering of the value function space $\cV$. Then, there exists an absolute constant $c_\beta$ independent of $M, T, H, d$, such that, with probability at least $1-\delta'/2$ for all $m \in \cM, t \in [T], h \in [H]$ simultaneously,
\begin{align*}
    \left\lVert \bS^t_{m, h} \right\rVert_{(\bLambda^t_{m, h})^{-1}} &\leq c_\beta\cdot dH\sqrt{2\log\left(\frac{dMTH}{\delta'}\right)}.
\end{align*}
\label{lem:parallel_homo_lsvi}
\end{lemma}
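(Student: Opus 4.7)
The plan is the standard covering-plus-self-normalized-martingale argument of \citet{jin2020provably}, adapted to the pooled multi-agent history appearing in $\bS^t_{m,h}$. First, I would fix an arbitrary deterministic $V : \cS \to [0, H]$ that does not depend on any transition inside $\bS^t_{m,h}$, enumerate the $U^m_h(t) \le MT$ triples in $\cU^m_h(t)$ in their chronological (interleaved) order, and observe that each increment $V(x^\tau_{h+1}) - (\bbP_h V)(x^\tau_h, a^\tau_h)$ is mean-zero and $H$-sub-Gaussian conditional on the preceding $\sigma$-field. Applying Theorem~1 of \citet{abbasi2011improved} together with $\det(\bLambda^t_{m,h}) \le (\lambda + MT/d)^d$ then yields, for this fixed $V$ and with probability at least $1 - \delta''$,
\[
\Bigl\lVert \textstyle\sum_\tau \bphi_\tau\bigl[V(x^\tau_{h+1}) - (\bbP_h V)(x^\tau_h, a^\tau_h)\bigr] \Bigr\rVert_{(\bLambda^t_{m,h})^{-1}}^2 \;\le\; 2H^2\Bigl[\tfrac{d}{2}\log\bigl(1 + \tfrac{MT}{d\lambda}\bigr) + \log\tfrac{1}{\delta''}\Bigr].
\]

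Second, the realized $V^t_{m,h+1}$ is built from the same transitions, so I remove this dependence by a covering argument. By Equations~\ref{eqn:q_ind}--\ref{eqn:v_ind} together with \ref{eqn:q_ind_homo}--\ref{eqn:s_ind_homo}, $V^t_{m,h+1}$ lies in the parametric class
\[
\cV \;=\; \Bigl\{ V_{\w,\bA,\beta}(\cdot) = \min\bigl\{\max_{a \in \cA}\bigl[\w^\top\bphi(\cdot,a) + \beta\lVert \bphi(\cdot,a)\rVert_{\bA^{-1}}\bigr],\; H\bigr\} : \lVert \w \rVert_2 \le L,\; \lambda\bI \preceq \bA,\; 0 \le \beta \le B\Bigr\},
\]
where $L$ is an a priori bound on the ridge weights (polynomial in $d,H,M,T$, obtained from $\lVert \bphi \rVert \le 1$ and $|y_\tau| \le H$) and $B$ is the prescribed upper bound on $\beta^t_{m,h}$. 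A Lipschitz argument on $(\w, \bA^{-1}, \beta)$ (Lemma~D.6 of \citet{jin2020provably}) produces a sup-norm $\epsilon$-cover $\cN_\epsilon$ of $\cV$ with $\log|\cN_\epsilon| \le c\, d^2 \log(dMTH/(\epsilon\lambda))$.

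Third, I would union-bound the Step~1 inequality over every $V \in \cN_\epsilon$ and over all $(m,t,h) \in \cM \times [T] \times [H]$ at failure level $\delta'' = \delta'/(2MTH\,|\cN_\epsilon|)$, and then bound the realized quantity by replacing $V^t_{m,h+1}$ with its nearest cover element $\widetilde V$. The discretization cost is at most $2\epsilon\sqrt{U^m_h(t)/\lambda} \le 2\epsilon\sqrt{MT/\lambda}$ in the $(\bLambda^t_{m,h})^{-1}$-norm. Choosing $\epsilon = dH/(MT)$ collapses the discretization term into an absolute constant and leaves a bound dominated by $H\sqrt{d + \log|\cN_\epsilon| + \log(1/\delta'')} = \cO\bigl(dH\sqrt{\log(dMTH/\delta')}\bigr)$, matching the claimed rate with an absolute constant $c_\beta$.

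The main obstacle I expect is the filtration bookkeeping in the multi-agent setting. The sum inside $\bS^t_{m,h}$ mixes agent-synchronized transitions up to episode $k_t$ with agent $m$'s local transitions from $k_t + 1$ to $t-1$, and the target $V^t_{m,h+1}$ is itself produced by a backward induction that draws on step-$(h+1)$ transitions of the same episodes. To make the conditional-mean-zero property of each step-$h$ increment rigorous, I would interleave transitions chronologically across agents and then stratify step-$h$ and step-$(h+1)$ noise within an episode, exactly as in the single-agent proof of \citet{jin2020provably}, verifying that $V^t_{m,h+1}$ is measurable with respect to a $\sigma$-field that excludes the step-$h$ transition noise entering $\bS^t_{m,h}$. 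The second subtlety is uniformly bounding $L$ so that $\log|\cN_\epsilon|$ contributes only polylogarithmically; this I would handle by an elementary norm bound on the ridge solution $(\bLambda^t_{m,h})^{-1}\bu^t_{m,h}$ using $\lVert \bphi \rVert \le 1$, $|y_\tau| \le H$, and $\lVert (\bLambda^t_{m,h})^{-1} \rVert \le 1/\lambda$.
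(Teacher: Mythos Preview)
Your overall plan—self-normalized martingale bound for a fixed $V$, a covering argument over the UCB value-function class, then union bounds over the cover and over $(m,t,h)$—is exactly the paper's approach, and your handling of the covering number and the choice of $\epsilon$ matches. There is, however, one genuine gap in your filtration step that the paper confronts explicitly and you do not.

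The synchronization time $k_t$ is itself data-dependent. Whether a transition $(n,x^\tau_{n,h},a^\tau_{n,h},x^\tau_{n,h+1})$ with $n\neq m$ belongs to $\cU^m_h(t)$ is the event $\{\tau\le k_t\}$, and that event is determined by the communication triggers in episodes $\tau,\tau+1,\dots,t-1$, i.e.\ by data that arrives \emph{after} the episode-$\tau$ transition. Hence if you write $\bS^t_{m,h}=\sum_i \tilde\bphi_i\,\eta_i$ with $\tilde\bphi_i=\bphi_i\cdot\bone\{i\in\cU^m_h(t)\}$ in chronological order, the selector $\tilde\bphi_i$ is \emph{not} $\cF_{i-1}$-measurable, and Theorem~1 of \citet{abbasi2011improved} does not apply. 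Your proposed fix (chronological interleaving and step-$h$/step-$(h{+}1)$ stratification) addresses the dependence of $V^t_{m,h+1}$ on the data—which the covering argument already handles—but not this selection issue.

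The paper's remedy is to first \emph{fix} a deterministic value of $k_t\in[t]$, so that the indicators $\bone\{(\mu(i)=m)\lor(\nu(i)\le k_t)\}$ become deterministic and the filtration construction goes through, apply the self-normalized bound (their Lemma~\ref{lem:self_normalized_single_task}), and then take a union bound over all $k_t\in[t]$ in addition to $(m,t,h)$ and the cover. This costs an extra factor of $T$ inside the logarithm (the paper replaces $\delta'$ by $\delta'/(MHT^2)$ rather than $\delta'/(MHT)$), which is harmless in the final bound. Once you add this step, the remainder of your sketch coincides with the paper's proof.
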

\begin{proof}
The proof is done in two steps. The first step is to bound the deviations in $\bS$ for any fixed function $V$ by a martingale concentration. The second step is to bound the resulting concentration over all functions $V$ by a covering argument. Finally, we select appropriate constants to provide the form of the result required.

\noindent\textbf{\underline{Step 1}}.
Note that for any agent $m$, the function $V^t_{m, h+1}$ depends on the historical data from all $M$ agents from the first $k_t$ episodes, and the personal historical data for the first $(t-1)$ episodes, and depends on
\begin{align}
    \cU^m_h(t) = \left(\cup_{n \in [M], \tau \in [k_t]}\{(x^\tau_{n, h}, a^\tau_{n, h}, x^\tau_{n, h+1})\}\right) \bigcup \left(\cup_{\tau \in [k_t+1, t-1} \{(x^\tau_{m, h}, a^\tau_{m, h}, x^\tau_{m, h+1})\}\right).
\end{align}

To bound the term we will construct an appropriate filtration to use a self-normalized concentration defined on elements of $\cU^m_h(t)$. We highlight that in the multi-agent case with stochastic communication, it is not straightforward to provide a uniform martingale concentration that holds for all $t \in [T]$ simultaneously (as is done in the single-agent case), as the stochasticity in the environment dictates when communication will take place, and subsequently the quantity considered within self-normalization will depend on this communication itself. To circumvent this issue, we will first fix $k_t \leq t$ and obtain a filtration for a fixed $k_t$. Then, we will take a union bound over all $k_t \in [t]$ to provide the final self-normalized bound. We first fix $k_t$ and define the following mappings where $i \in \left[M(t-1)\right], l \in [t-1], $ and $n \in [M]$.
\begin{align*}
    \mu(i) = \left\lceil \frac{i}{M} \right\rceil,
    \nu(i) = i (\text{mod } M), \text{ and, }
    \eta(l, n) = l\cdot(M+1) + n - 1.
\end{align*}  
Now, for a fixed $k_t$, consider the stochastic processes $\{\tilde x_\tau\}_{\tau = 1}^\infty$ and $\{\tilde\bphi_\tau\}_{\tau = 1}^\infty$, where,
\begin{align*}
   \tilde \bphi_i = \bphi(x^{\nu(i)}_{\mu(i), h+1}) \otimes \bone_d\left\{ \left(\mu(i) = m\right) \lor \left(\nu(i) \leq k_t\right)\right\}
\end{align*} 
Here $\otimes$ denotes the Hadamard product, and $\bone_{d}$ is the indicator function in $\bbR^d$. Consider now the filtration $\{\cF_\tau\}_{\tau=0}^\infty$, where $\cF_0$ is empty, and $\cF_\tau = \sigma\left(\left\{ \bigcup (\tilde x_i, \tilde \bphi_i)\right\}_{i \leq \tau}\right)$, where $\sigma(\cdot)$ denotes the corresponding $\sigma-$algebra formed by the set.

At any instant $t$ for any agent $m$, the function $V^t_{m, h+1}$ and features $\bphi(x^t_{m, h}, a^t_{m, h})$ depend only on historical data from all other agents $[M] \setminus \{m\}$ up to the last episode of synchronization $k_t \leq t - 1$ and depend on the personal data up to episode $t-1$. Hence, both are $V^t_{m, h+1}$ and $\bphi(x^t_{m, h}, a^t_{m, h})$ are measurable with respect to 
\begin{align*}
    \sigma\left(\left\{ \bigcup_{l=1}^{k_t}\bigcup_{n=1}^{M} (\tilde x_{\eta(l, n)}, \tilde \bphi_{\eta(l, n)})\right\} \bigcup \left\{\bigcup_{l=k_t+1}^{t-1} (\tilde x_{\eta(l, m)}, \tilde\bphi_{\eta(l, m)})\right\} \right).
\end{align*}
This is a subset of $\cF_{\eta(t, m)}$. Therefore $V^t_{m, h+1}$ is $\cF_{\eta(t, m)}-$measurable for fixed $k_t$. Now, consider $\cU^m_h(\tau)$, the set of features available to agent $m$ at episode $\tau \leq t$. We therefore have that, for any value function $V$,
\begin{align*}
    &\sum_{\tau=1}^{M(t-1)} \btphi_{m, h}(\tau)\left\{ V(\tilde x_\tau) - \bbE[V(\tilde x_\tau) | \cF_{\tau-1}] \right\} \\
    &= \sum_{\tau=1}^{M(t-1)} \left[ \bphi(x^{\nu(i)}_{\mu(i), h+1}) \otimes \bone_d\left\{ \left(\mu(i) = m\right) \lor \left(\nu(i) \leq k_t\right)\right\}\right]\left\{ V(\tilde x_\tau) - \bbE\left[V(\tilde x_\tau) | \cF_{\tau-1}\right] \right\} \\
    &= \sum_{(x_\tau, a_\tau, x'_\tau) \in \cU^m_h(t)} \bphi(x_\tau, a_\tau)\left\{ V(x'_\tau) - \bbE[V(x'_\tau) | \cF_{\tau-1}] \right\}. 
\end{align*}
Now, when $V = V^t_{m, h+1}$, we have from the above,
\begin{align*}
    &\sum_{\tau=1}^{M(t-1)} \btphi_\tau\left\{ V^t_{h, m+1}(\tilde x_\tau) - \bbE[V^t_{h, m+1}(\tilde x_\tau) | \cF_{\tau-1}]\right\} \\&= \sum_{(n_\tau, x_\tau, a_\tau, x'_\tau) \in \cU^m_h(t)} \bphi(x_\tau, a_\tau)\left\{ V^t_{m, h+1}(x'_\tau) - \bbE[V^t_{m, h+1}(x'_\tau) | \cF_{\tau-1}] \right\} = \bS^t_{m, h}. 
\end{align*}
Furthermore, consider $\widetilde\bLambda^t_{m, h} = \lambda \bI_d + \sum_{\tau=1}^{M(t-1)} \btphi_\tau\btphi_\tau^\top$. For the second term, we have,
\begin{align*}
     &\widetilde\bLambda^t_{m, h}=\lambda \bI_d +\sum_{\tau=1}^{M(t-1)} \btphi_\tau\btphi_\tau^\top \\
     =&\lambda \bI_d\\& + \sum_{\tau=1}^{M(t-1)} \left[\bphi(x^{\nu(i)}_{\mu(i), h+1}) \otimes \bone_d\left\{ \mu(i) = m \lor \nu(i) \leq k_t\right\}\right]\left[\bphi(x^{\nu(i)}_{\mu(i), h+1}) \otimes \bone_d\left\{ \mu(i) = m \lor \nu(i) \leq k_t\right\}\right]^\top \\
     =&\lambda \bI_d +\sum_{(n_\tau, x_\tau, a_\tau, x'_\tau) \in\cU^m_h(t)} \bphi(x_\tau, a_\tau)\bphi(x_\tau, a_\tau)^\top = \bLambda^t_{m, h}.
\end{align*}
To complete the proof, we bound $\left\lVert \sum_{\tau=1}^{M(t-1)} \widetilde\bphi_{m, h}(\tau)\left\{V^t_{h, m+1}(\tilde x_\tau) - \bbE[V^t_{h, m+1}(\tilde x_\tau) | \cF_{\tau-1}]\right\} \right\rVert_{(\widetilde\bLambda^t_{m, h})^{-1}}$ over all $k_t \in [t]$. We proceed following a self-normalized martingale bound and a covering argument, as done in~\cite{yang2020provably}.

Applying Lemma~\ref{lem:self_normalized_single_task} to  $\left\lVert \sum_{\tau=1}^{M(t-1)} \widetilde\bphi_{m, h}(\tau)\left\{V^t_{h, m+1}(\tilde x_\tau) - \bbE[V^t_{h, m+1}(\tilde x_\tau) | \cF_{\tau-1}]\right\} \right\rVert_{(\widetilde\bLambda^t_{m, h})^{-1}}$ under the filtration $\{\cF_\tau\}_{\tau=0}^\infty$ described earlier, we have that with probability at least $1-\delta'$,
\begin{align*}
    \left\lVert \bS^t_{m, h} \right\rVert^2_{(\bLambda^t_{m, h})^{-1}} &= \left\lVert \sum_{\tau=1}^{M(t-1)} \btphi_\tau\left\{V^t_{h, m+1}(\tilde x_\tau) - \bbE[V^t_{h, m+1}(\tilde x_\tau) | \cF_{\tau-1}]\right\} \right\rVert^2_{(\widetilde\bLambda^t_{m, h})^{-1}} \\
    &\leq \sup_{V \in \cV} \left\lVert \sum_{\tau=1}^{M(t-1)} \btphi_\tau\left\{V(\tilde x_\tau) - \bbE[V(\tilde x_\tau) | \cF_{\tau-1}]\right\} \right\rVert^2_{(\widetilde\bLambda^t_{m, h})^{-1}} \\
    &\leq 4H^2\cdot\log\frac{\det\left(\widetilde\bLambda^t_{m,h}\right)}{\det\left(\lambda\bI_d\right)} + 8H^2\log(|\cN_\epsilon|/\delta') + 8M^2t^2\epsilon^2/\lambda.
\end{align*}
Where $\cN_\epsilon$ is an $\epsilon-$covering of $\cV$. Therefore, we have that, with probability at least $1-\delta'$, for any fixed $k_t \leq t$,
\begin{align*}
    \left\lVert \bS^t_{m, h} \right\rVert_{(\bLambda^t_{m, h})^{-1}} &\leq 2H\sqrt{ \log\left(\frac{\det\left(\widetilde\bLambda^t_{m,h}\right)}{\det\left(\lambda\bI_d\right)}\right) + 2\log\left(\frac{|\cN_\epsilon|}{\delta}\right) + \frac{2M^2t^2\epsilon^2}{H^2\lambda}}.
\end{align*}
Taking a union bound over all $k_t \in [t]$, $m \in \cM, t \in [T], h \in [H]$ and replacing $\delta' = \delta/(MHT^2)$ gives us that with probability at least $1-\delta'/2$ for all $m \in \cM, t \in [T], h \in [H]$ simultaneously,
\begin{align}
    \left\lVert \bS^t_{m, h} \right\rVert_{(\bLambda^t_{m, h})^{-1}} &\leq 2H\sqrt{ \log\left(\frac{\det\left(\widetilde\bLambda^t_{m,h}\right)}{\det\left(\lambda\bI_d\right)}\right) + \log\left(MHT^2\cdot\frac{|\cN_\epsilon|}{\delta'}\right) + \frac{2M^2t^2\epsilon^2}{H^2\lambda}}.\\
    &\leq 2H\sqrt{\log\left(\frac{\det\left( \bLambda^t_h\right)}{\det\left(\lambda\bI_d\right)}\right) + 2\log\left(\frac{MHT^2|\cN_\epsilon|}{\delta'}\right) + \frac{2t^2\epsilon^2}{H^2\lambda}}\\
    &\leq2H\sqrt{d\log\frac{t+\lambda}{\lambda} + 4\log(MHT)+  2\log\left(\frac{|\cN_\epsilon|}{\delta'}\right) + \frac{2t^2\epsilon^2}{H^2\lambda}}\tag{AM $\geqslant$ GM; determinant-trace inequality}
\end{align}

\noindent\textbf{\underline{Step 2}}.
Here $\cN_\epsilon$ is an $\epsilon-$covering of the function class $\cV_{\text{UCB}}$ for any $h \in [H], m \in [M]$ or $t \in [T]$ under the distance function $\text{dist}(V, V') = \sup_{x \in \cS} |V(x) - V'(x)|$. To bound this quantity by the appropriate covering number, we first observe that for any $V \in \cV_{\text{UCB}}$, we have that the policy weights are bounded as $2H\sqrt{dMT/\lambda}$ (Lemma~\ref{lem:bound_homo_algo_weight}). Therefore, by Lemma~\ref{lem:covering_ind_homo} we have for any constant $B$ such that $\beta^t_{m, h} \leq B$,
\begin{align}
    \log |\cN_\varepsilon| \leq d \log\left(1+8H\sqrt{\frac{dMT}{\lambda\epsilon^2}}\right) + d^2 \log\left(1 + \frac{8d^{1/2}B^2}{\lambda\epsilon^2}\right).
\end{align}
Recall that we select the hyperparameters $\lambda = 1$ and $\beta = \cO(dH\sqrt{\log(TMH)})$, and to balance the terms in $\bar\beta^t_h$ we select $\epsilon = \epsilon^\star = dH/T$. Finally, we obtain that for some absolute constant $c_\beta$, by replacing the above values,
\begin{align}
    \log |\cN_\varepsilon| \leq d \log\left(1+8\sqrt{\frac{MT^3}{d}}\right) + d^2 \log\left(1 + 8c_\beta d^{1/2}T^2\log(TMH)\right).
\end{align}
Therefore, for some absolute constant $C'$ independent of $M, T, H, d$ and $c_\beta$, we have,
\begin{align}
    \log |\cN_\varepsilon| \leq C'd^2 \log\left(CdT\log(TMH)\right).
\end{align}
Replacing this result in the result from Step 1, we have that with probability at least $1-\delta'/2$ for all $m \in \cM, t \in [T], h \in [H]$ simultaneously,
\begin{align*}
    &\left\lVert \bS^t_{m, h} \right\rVert_{(\bLambda^t_{m, h})^{-1}} \\ &\leq 2H\sqrt{(d+2)\log\frac{t+\lambda}{\lambda} + 2\log\left(\frac{1}{\alpha}\right) +  C'd^2 \log\left(c_\beta dT\log(TMH)\right) + 2 + 4\log(TMH)}.
\end{align*}
This implies that there exists an absolute constant $C$ independent of $M, T, H, d$ and $c_\beta$, such that, with probability at least $1-\delta'/2$ for all $m \in \cM, t \in [T], h \in [H]$ simultaneously,
\begin{align}
    \left\lVert \bS^t_{m, h} \right\rVert_{(\bLambda^t_{m, h})^{-1}} &\leq C\cdot dH\sqrt{2\log\left(\frac{(c_\beta + 2)dMTH}{\delta'}\right)}.
\end{align}
Now, following the procedure in Lemma B.4 of~\citet{jin2020provably}, we can select $c_\beta$ such that we have,
\begin{align}
    \left\lVert \bS^t_{m, h} \right\rVert_{(\bLambda^t_{m, h})^{-1}} &\leq c_\beta\cdot dH\sqrt{2\log\left(\frac{dMTH}{\delta'}\right)}.
\end{align}
This finishes the proof.
\end{proof}
Next, we present the key result for cooperative value iteration, which demonstrates that for any agent the estimated $Q-$values have bounded error for any policy $\pi$. This result is an extension of Lemma B.4 of~\cite{jin2020provably} on to the homogenous setting. 
\begin{lemma}\label{lem:parallel_homo_weight_delta}
There exists an absolute constant $c_\beta$ such that for $\beta^t_{m, h} = c_\beta\cdot dH \sqrt{\log(2dMHT/\delta')}$ for any policy $\pi$, such that for each $x \in \cS, a \in \cA$ we have for all $ m \in \cM, t \in[T], h \in [H]$ simultaneously, with probability at least $1-\delta'/2$,
\begin{align*}
     \left| \langle \bphi(x, a), \w^t_{m, h} - \w^\pi_h \rangle \right| \leq  \bbP_h(V^t_{m, h+1} - V^\pi_{m, h+1})(x, a) + c_\beta\cdot dH\cdot\lVert \bphi(z) \rVert_{(\bLambda^t_{m, h})^{-1}}\cdot \sqrt{2\log\left(\frac{dMTH}{\delta'}\right)}.
\end{align*}
\end{lemma}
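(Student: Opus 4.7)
The plan is to follow the standard least-squares decomposition used in the single-agent linear MDP analysis (Lemma B.4 of \citet{jin2020provably}), and simply plug in the multi-agent self-normalized martingale bound from Lemma~\ref{lem:parallel_homo_lsvi} where the scalar-agent concentration would normally be invoked. The only care needed in the cooperative setting is that the data available to agent $m$ at episode $t$ is $\cU^m_h(t)$ (synchronized data up to $k_t$ together with personal data from $k_t+1,\ldots,t-1$), and that the filtration set up in the proof of Lemma~\ref{lem:parallel_homo_lsvi} matches this decomposition.

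First, I would write out the closed form of the ridge estimator $\w^t_{m,h}=(\bLambda^t_{m,h})^{-1}\sum_{\tau\in\psi^m_h(t)}\bphi_\tau\bigl(r_h(x_\tau,a_\tau)+V^t_{m,h+1}(x'_\tau)\bigr)$, and use the linear MDP definition (Definition~\ref{def:linear_mdp}) to identify $\w^\pi_h=\btheta_h+\int V^\pi_{h+1}(x')\,d\bmu_h(x')$, with $\|\w^\pi_h\|_2\le 2H\sqrt d$ by the bounds on $\btheta_h$ and $\bmu_h(\cS)$. Using $\bLambda^t_{m,h}\w^\pi_h=\lambda\w^\pi_h+\sum_\tau\bphi_\tau\bphi_\tau^\top\w^\pi_h$ and $\bphi_\tau^\top\w^\pi_h=r_h(x_\tau,a_\tau)+(\bbP_h V^\pi_{h+1})(x_\tau,a_\tau)$, I obtain the decomposition
\begin{equation*}
\w^t_{m,h}-\w^\pi_h=\underbrace{-\lambda(\bLambda^t_{m,h})^{-1}\w^\pi_h}_{\text{(I)}}+\underbrace{(\bLambda^t_{m,h})^{-1}\bS^t_{m,h}}_{\text{(II)}}+\underbrace{(\bLambda^t_{m,h})^{-1}\sum_\tau\bphi_\tau(\bbP_h(V^t_{m,h+1}-V^\pi_{h+1}))(x_\tau,a_\tau)}_{\text{(III)}}.
\end{equation*}

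Next, I would bound each term after inner product with $\bphi(x,a)$. For (I), Cauchy--Schwarz in the $\bLambda$-norm combined with $\|\w^\pi_h\|_2\le 2H\sqrt d$ and $\lambda=1$ yields $|\langle\bphi(x,a),\text{(I)}\rangle|\le 2H\sqrt d\,\|\bphi(x,a)\|_{(\bLambda^t_{m,h})^{-1}}$. For (II), Cauchy--Schwarz together with the concentration of Lemma~\ref{lem:parallel_homo_lsvi} gives $|\langle\bphi(x,a),\text{(II)}\rangle|\le c_\beta\,dH\sqrt{2\log(dMTH/\delta')}\,\|\bphi(x,a)\|_{(\bLambda^t_{m,h})^{-1}}$. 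For (III), I would use the linear MDP identity $(\bbP_h f)(x,a)=\bphi(x,a)^\top\int f(x')\,d\bmu_h(x')$ applied to $f=V^t_{m,h+1}-V^\pi_{h+1}$, and rewrite $\sum_\tau\bphi_\tau\bphi_\tau^\top=\bLambda^t_{m,h}-\lambda\bI_d$ so that (III) splits as a clean term $\bbP_h(V^t_{m,h+1}-V^\pi_{h+1})(x,a)$ (from the $\bLambda^t_{m,h}$ part) plus a residual of size at most $2H\sqrt d\,\|\bphi(x,a)\|_{(\bLambda^t_{m,h})^{-1}}$ using $\|\int(V^t_{m,h+1}-V^\pi_{h+1})(x')\,d\bmu_h(x')\|_2\le 2H\sqrt d$.

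Finally, I would absorb the constants into $c_\beta$ by enlarging it to dominate the $4H\sqrt d$ slack from (I) and (III), giving a single coefficient $c_\beta\,dH\sqrt{2\log(dMTH/\delta')}$ in front of $\|\bphi(x,a)\|_{(\bLambda^t_{m,h})^{-1}}$, as required. I would then take a union bound over $m\in\cM$, $t\in[T]$, $h\in[H]$---this is already handled inside Lemma~\ref{lem:parallel_homo_lsvi}, so no additional probability loss is incurred.

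\textbf{Main obstacle.} The conceptually novel part of the multi-agent argument is already done in Lemma~\ref{lem:parallel_homo_lsvi}: namely, constructing the correct filtration so that the self-normalized bound applies uniformly over all possible values of the (stochastic) synchronization time $k_t$, which in turn requires an extra union bound over $k_t\in[t]$ on top of the usual covering-number argument for $\cV_{\text{UCB}}$. Given that, the present lemma is essentially a plug-in application of the decomposition above, and the only real care is making sure the decomposition of term (III) uses $\w^\pi_{m,h+1}$ for the \emph{same} policy $\pi$ (so that the linear MDP identity applies) rather than mixing in the data-dependent $V^t_{m,h+1}$ in a way that would break measurability.
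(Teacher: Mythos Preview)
Your proposal is correct and follows essentially the same approach as the paper: the three-term decomposition (I), (II), (III) matches the paper's $\bv_1,\bv_2,\bv_3$ exactly, with identical bounds via Cauchy--Schwarz, Lemma~\ref{lem:parallel_homo_lsvi}, and the linear-MDP identity $\sum_\tau\bphi_\tau\bphi_\tau^\top=\bLambda^t_{m,h}-\lambda\bI_d$ respectively. Your observation that the multi-agent novelty is entirely contained in Lemma~\ref{lem:parallel_homo_lsvi} (and that the present lemma is a direct plug-in) is also how the paper treats it.
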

\begin{proof}
By the Bellman equation and the assumption of the linear MDP (Definition~\ref{def:linear_mdp}), we have that for any policy $\bpi$, there exist weights $\w^\pi_{h}$ such that, for all $z \in \cZ$,
\begin{align}
    \langle \bphi(z), \w^\pi_h \rangle = r_h(z) + \bbP_h V^\pi_{h+1}(z).
\end{align}
The set of all observations available to any agent at instant $t$ is given by $\cU^m_h(t)$ for step $h$, with the cardinality of this set being $U^h_m(t)$. For convenience, let us assume an ordering $\tau = 1, ..., U^h_m(t)$ over this set and use the shorthand $U_m = U^h_m(t)$. Therefore, we have, for any $m \in \cM$,
\begin{align}
    \w^t_{m, h} - \w^\pi_h &= (\bLambda_{m, h}^t)^{-1}\sum_{\tau=1}^{U_m}\left[\bphi_\tau[(r_h+V^t_{m, h+1})(x_\tau)]\right] - \w^\pi_h \\
    &= (\bLambda_{m, h}^t)^{-1}\left\{ -\lambda\w^\pi_h + \sum_{\tau=1}^{U_m}\left[\bphi_\tau[V^t_{m, h+1}(x'_\tau) -\bbP_h V^\pi_{m, h+1}(x_\tau, a_\tau)]\right]\right\}.
\end{align}
\begin{multline}
\implies \w^t_{m, h} - \w^\pi_h = \underbrace{-\lambda(\bLambda_{m, h}^t)^{-1}\w^\pi_h}_{\bv_1} + \underbrace{(\bLambda_{m, h}^t)^{-1}\left\{\sum_{\tau=1}^{U_m}\left[\bphi_\tau[V^t_{m, h+1}(x'_\tau) -\bbP_h V^t_{m, h+1}(z_\tau)]\right] \right\}}_{\bv_2} \\+ \underbrace{(\bLambda_{m, h}^t)^{-1}\left\{\sum_{\tau=1}^{U_m}\left[\bphi_\tau[\bbP_h V^t_{m, h+1} - \bbP_h V^\pi_{m, h+1})(z_\tau)]\right] \right\}}_{\bv_3}.
\end{multline}
Now, we know that for any $z \in \cZ$ for any policy $\pi$,
\begin{align}
    \left| \langle \bphi(z), \bv_1 \rangle \right| \leq \lambda\left| \langle \bphi(z), \bLambda_{m, h}^t)^{-1}\w^\pi_h \rangle \right| \leq \lambda\cdot \lVert \w^\pi_h \rVert \lVert \bphi(z) \rVert_{(\bLambda^t_{m, h})^{-1}} \leq 2H\lambda\sqrt{d}\lVert \bphi(z) \rVert_{(\bLambda^t_{m, h})^{-1}}
\end{align}
Here the last inequality follows from Lemma~\ref{lem:bound_homo_policy_weight}. For the second term, we have by Lemma~\ref{lem:parallel_homo_lsvi} that there exists an absolute constant $C$ independent of $M, T, H, d$ and $c_\beta$, such that, with probability at least $1-\delta'/2$ for all $m \in \cM, t \in [T], h \in [H]$ simultaneously,
\begin{align}
    \left| \langle \bphi(z), \bv_2 \rangle \right| \leq \left\lVert \bphi(z)\right\rVert_{(\bLambda^t_{m, h})^{-1}}\cdot c_\beta\cdot dH\cdot\sqrt{2\log\left(\frac{dMTH}{\delta'}\right)}.
\end{align}
For the last term, note that,
\begin{align}
    &\left| \langle \bphi(z), \bv_3 \rangle \right| \\
    &= \left\langle \bphi(z), (\bLambda_{m, h}^t)^{-1}\left\{\sum_{\tau=1}^{U_m}\left[\bphi_\tau[\bbP_h V^t_{m, h+1} - \bbP_h V^\pi_{m, h+1})(z_\tau)]\right] \right\} \right\rangle \\
    &= \left\langle \bphi(z), (\bLambda_{m, h}^t)^{-1}\sum_{\tau=1}^{U_m}\left[\bphi_\tau\bphi^\top_\tau\int(V^t_{m, h+1} -  V^\pi_{m, h+1})(x')d\bmu_h(x')\right]\right\rangle \\
    &= \left\langle \bphi(z), \int(V^t_{m, h+1} -  V^\pi_{m, h+1})(x')d\bmu_h(x')\right\rangle -\lambda\left\langle \bphi(z), (\bLambda_{m, h}^t)^{-1}\int(V^t_{m, h+1} -  V^\pi_{m, h+1})(x')d\bmu_h(x')\right\rangle\\
    &= \bbP_h(V^t_{m, h+1} - V^\pi_{m, h+1})(x, a) -\lambda\left\langle \bphi(z), (\bLambda_{m, h}^t)^{-1}\int(V^t_{m, h+1} -  V^\pi_{m, h+1})(x')d\bmu_h(x')\right\rangle\\
    &= \bbP_h(V^t_{m, h+1} - V^\pi_{m, h+1})(x, a) + 2H\sqrt{d\lambda}\lVert \bphi(z) \rVert_{(\bLambda^t_{m, h})^{-1}}.
\end{align}
Putting it all together, we have that since $\langle \bphi(z), \w^t_{m, h} - \w^\pi_h \rangle = \langle \bphi(z), \bv_1 + \bv_2 + \bv_3 \rangle$, there exists an absolute constant $C$ independent of $M, T, H, d$ and $c_\beta$, such that, with probability at least $1-\delta'/2$ for all $m \in \cM, t \in [T], h \in [H]$ simultaneously,
\begin{multline*}
    \left| \langle \bphi(x, a), \w^t_{m, h} - \w^\pi_h \rangle \right|\leq  \bbP_h(V^t_{m, h+1} - V^\pi_{m, h+1})(x, a) \\ + \lVert \bphi(z) \rVert_{(\bLambda^t_{m, h})^{-1}}\left( C\cdot dH\cdot\sqrt{2\log\left((c_\beta+2)\frac{dMTH}{\delta'}\right)} + 2H\sqrt{d\lambda} + 2H\lambda\sqrt{d}\right)
\end{multline*}
Since $\lambda \leq 1$ and since $C$ is independent of $c_\beta$, we can select $c_\beta$ such that we have the following for any $(x, a) \in \cS \times \cA$ with probability $1-\delta'/2$ simultaneously for all $h \in [H], m \in \cM, t \in [T]$,
\begin{align}
    \left| \langle \bphi(x, a), \w^t_{m, h} - \w^\pi_h \rangle \right| \leq  \bbP_h(V^t_{m, h+1} - V^\pi_{m, h+1})(x, a) + c_\beta\cdot dH\cdot\lVert \bphi(z) \rVert_{(\bLambda^t_{m, h})^{-1}}\cdot \sqrt{2\log\left(\frac{dMTH}{\delta'}\right)}.
\end{align}
\end{proof}

\begin{lemma}[UCB in the Homogenous Setting]
With probability at least $1-\delta'/2$, we have that for all $(x, a, h, t, m) \in \cS \times \cA \times [H] \times [T] \times \cM$, $Q^t_{m, h}(x, a) \geq Q^\star_{m, h}(x, a)$.
\label{lem:parallel_homo_ucb}
\end{lemma}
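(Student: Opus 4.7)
The plan is to prove this optimism guarantee by backward induction on $h$, starting from $h = H+1$ and descending to $h = 1$, on the high-probability event $\cE$ of measure at least $1 - \delta'/2$ on which the conclusion of Lemma~\ref{lem:parallel_homo_weight_delta} holds simultaneously for all $(m,t,h)$ and all $(x,a) \in \cS \times \cA$. I will condition on $\cE$ throughout, so the overall failure probability will match the $\delta'/2$ claimed in the statement.

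For the base case $h = H+1$, the algorithm sets $V^t_{m,H+1}(\cdot) \leftarrow 0$, which gives $Q^t_{m,H+1} \equiv 0 = Q^\star_{m,H+1}$ by definition. For the inductive step, I assume that $Q^t_{m,h+1}(x,a) \geq Q^\star_{m,h+1}(x,a)$ for all $(x,a)$, which immediately implies $V^t_{m,h+1}(x) \geq V^\star_{m,h+1}(x)$ after taking the maximum over $a$, and hence $\bbP_h(V^t_{m,h+1} - V^\star_{m,h+1})(x,a) \geq 0$ for any $(x,a)$ by monotonicity of expectation.

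Next I apply Lemma~\ref{lem:parallel_homo_weight_delta} with $\pi = \pi^\star_m$, so that $V^\pi_{m,h+1} = V^\star_{m,h+1}$ and $\langle \bphi(x,a), \w^{\pi^\star}_h\rangle = r_h(x,a) + \bbP_h V^\star_{m,h+1}(x,a) = Q^\star_{m,h}(x,a)$ by the Bellman optimality equation and the linear MDP assumption. Rewriting the bound as a two-sided inequality yields
\begin{equation*}
\widehat Q^t_{m,h}(x,a) - Q^\star_{m,h}(x,a) = \langle \bphi(x,a), \w^t_{m,h} - \w^{\pi^\star}_h\rangle \geq -\,\bbP_h(V^t_{m,h+1} - V^\star_{m,h+1})(x,a) \;-\; \beta^t_{m,h}\,\sigma^t_{m,h}(x,a),
\end{equation*}
where I invoke the absolute-value bound from Lemma~\ref{lem:parallel_homo_weight_delta} with $c_\beta \cdot dH\sqrt{2\log(dMTH/\delta')} \leq \beta^t_{m,h}$ and $\sigma^t_{m,h}(x,a) = \lVert \bphi(x,a)\rVert_{(\bLambda^t_{m,h})^{-1}}$. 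Combined with the non-negativity from the inductive hypothesis, this gives $\widehat Q^t_{m,h}(x,a) + \beta^t_{m,h}\sigma^t_{m,h}(x,a) \geq Q^\star_{m,h}(x,a)$.

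Finally, to close the argument I use the fact $Q^\star_{m,h}(x,a) \leq H - h + 1$ (since rewards lie in $[0,1]$) together with the clipped definition
\begin{equation*}
Q^t_{m,h}(x,a) = \min\bigl\{\widehat Q^t_{m,h}(x,a) + \beta^t_{m,h}\sigma^t_{m,h}(x,a),\; H - h + 1\bigr\} \;\geq\; \min\bigl\{Q^\star_{m,h}(x,a),\; H - h + 1\bigr\} \;=\; Q^\star_{m,h}(x,a),
\end{equation*}
completing the induction. The argument is essentially mechanical once Lemma~\ref{lem:parallel_homo_weight_delta} is in hand; the one subtlety worth flagging is that the lemma's bound contains the term $\bbP_h(V^t_{m,h+1} - V^\star_{m,h+1})$ with the \emph{same sign} on both sides of the absolute value, so using the lower half of the $\pm$ bound is crucial in order for the induction hypothesis to produce a favorable non-negativity rather than working against us.
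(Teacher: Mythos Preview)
Your approach---backward induction on $h$ conditioned on the event from Lemma~\ref{lem:parallel_homo_weight_delta}---is exactly what the paper does (it simply cites Lemma~B.5 of \citet{jin2020provably}), and your handling of the clipping at $H-h+1$ is correct.

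One sign needs fixing in your displayed lower bound. As written, you have $\widehat Q^t_{m,h}-Q^\star_{m,h}\geq -\,\bbP_h(V^t_{m,h+1}-V^\star_{m,h+1})-\beta^t_{m,h}\sigma^t_{m,h}$, and then invoke the inductive non-negativity $\bbP_h(V^t_{m,h+1}-V^\star_{m,h+1})\geq 0$; but with the minus sign that inequality points the wrong way and the step does not close. What the proof of Lemma~\ref{lem:parallel_homo_weight_delta} actually shows (see the treatment of $\bv_3$) is that $\langle\bphi(x,a),\w^t_{m,h}-\w^{\pi}_h\rangle=\bbP_h(V^t_{m,h+1}-V^\pi_{m,h+1})(x,a)+\Delta$ with $|\Delta|\leq\beta^t_{m,h}\sigma^t_{m,h}(x,a)$; the $\bbP_h$ term is not inside the absolute value. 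Hence the correct lower bound is
\[
\widehat Q^t_{m,h}(x,a)-Q^\star_{m,h}(x,a)\;\geq\;\bbP_h(V^t_{m,h+1}-V^\star_{m,h+1})(x,a)\;-\;\beta^t_{m,h}\sigma^t_{m,h}(x,a),
\]
and now the inductive non-negativity gives exactly $\widehat Q^t_{m,h}+\beta^t_{m,h}\sigma^t_{m,h}\geq Q^\star_{m,h}$ as you want.
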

\begin{proof}
The proof is done by induction, identical to the proof in Lemma B.5 of~\citet{jin2020provably}, and we urge the reader to refer to the aforementioned source.
\end{proof}
\begin{lemma}[Recursive Relation in Homogenous Settings]
Let $\delta^t_{m, h} = V^t_{m, h}(x^t_{m, h}) - V^{\pi_t}_{m, h}(x^t_{m, h})$, and $\xi^t_{m, h+1} = \bbE\left[\delta^t_{m, h} | x^t_{m, h}, a^t_{m, h}\right] - \delta^t_{m, h}$. Then, with probability at least $1-\alpha$, for all $(t, m, h) \in [T] \times \cM \times [H]$ simultaneously,
\begin{align}
    \delta^t_{m, h} \leq \delta^t_{m, h+1} + \xi^t_{m, h+1} + 2\left\lVert \bphi(x^t_{m, h}, a^t_{m, h})\right\rVert_{(\bLambda^t_{m, h})^{-1}}\cdot c_\beta\cdot dH\cdot\sqrt{2\log\left(\frac{dMTH}{\alpha}\right)}.
\end{align}
\label{lem:parallel_homo_recursion}
\end{lemma}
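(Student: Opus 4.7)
The plan is to unfold the definition of $\delta^t_{m,h}$ into a $Q$-value difference, invoke Lemma~\ref{lem:parallel_homo_weight_delta} with $\pi = \pi_t$ to control that difference pointwise by a sum of a $\bbP_h$-step error plus a UCB-bonus term, and finally recognize the $\bbP_h$-step error as $\delta^t_{m,h+1}+\xi^t_{m,h+1}$ via the definition of the martingale difference.

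First, I would exploit that $\pi_t$ is greedy with respect to $Q^t$. Since $a^t_{m,h} = \argmax_{a\in\cA} Q^t_{m,h}(x^t_{m,h},a)$, both $V^t_{m,h}(x^t_{m,h}) = Q^t_{m,h}(x^t_{m,h},a^t_{m,h})$ and $V^{\pi_t}_{m,h}(x^t_{m,h}) = Q^{\pi_t}_{m,h}(x^t_{m,h},a^t_{m,h})$ hold, yielding $\delta^t_{m,h} = Q^t_{m,h}(x^t_{m,h},a^t_{m,h}) - Q^{\pi_t}_{m,h}(x^t_{m,h},a^t_{m,h})$. By Equation~\ref{eqn:q_ind} and dropping the truncation (which only shrinks $Q^t$, while $Q^{\pi_t}\le H-h+1$ so the inequality we want is preserved), we have $Q^t_{m,h}(x,a) \le \langle \bphi(x,a),\w^t_{m,h}\rangle + \beta^t_{m,h}\sigma^t_{m,h}(x,a)$, while the linear MDP assumption gives $Q^{\pi_t}_{m,h}(x,a) = \langle \bphi(x,a),\w^{\pi_t}_h\rangle$.

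Next, I would apply Lemma~\ref{lem:parallel_homo_weight_delta} with $\pi=\pi_t$ at $(x,a)=(x^t_{m,h},a^t_{m,h})$ to obtain, on the high-probability event of that lemma,
\begin{equation*}
  \langle \bphi(x^t_{m,h},a^t_{m,h}),\w^t_{m,h}-\w^{\pi_t}_h\rangle
  \le \bbP_h(V^t_{m,h+1}-V^{\pi_t}_{m,h+1})(x^t_{m,h},a^t_{m,h})
  + c_\beta\cdot dH\cdot \sigma^t_{m,h}(x^t_{m,h},a^t_{m,h})\cdot\sqrt{2\log(dMTH/\alpha)}.
\end{equation*}
Combining with the $Q^t$ bound above and using $\beta^t_{m,h} \le c_\beta\cdot dH\sqrt{2\log(dMTH/\alpha)}$ by choice of $\beta_t$, the two bonus contributions sum to the factor $2$ appearing in the statement.

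Finally, I would identify the transition term. By the tower property and the Markov property,
\begin{equation*}
  \bbP_h(V^t_{m,h+1}-V^{\pi_t}_{m,h+1})(x^t_{m,h},a^t_{m,h}) = \bbE\bigl[V^t_{m,h+1}(x^t_{m,h+1}) - V^{\pi_t}_{m,h+1}(x^t_{m,h+1})\,\big|\,x^t_{m,h},a^t_{m,h}\bigr] = \bbE[\delta^t_{m,h+1}\mid x^t_{m,h},a^t_{m,h}],
\end{equation*}
so writing $\xi^t_{m,h+1} = \bbE[\delta^t_{m,h+1}\mid x^t_{m,h},a^t_{m,h}] - \delta^t_{m,h+1}$ (this appears to be the intended definition in the statement, treating the subscript on $\delta$ inside the expectation as the one step ahead) gives the claimed recursion. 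The uniform-in-$(t,m,h)$ probability guarantee is inherited directly from Lemma~\ref{lem:parallel_homo_weight_delta}, with $\delta'/2$ taken to be $\alpha/2$, and the complementary half is reserved for Lemma~\ref{lem:parallel_homo_ucb} in later regret steps.

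The only mildly delicate points are (i) the truncation in Equation~\ref{eqn:q_ind}, handled as above by the monotone-in-the-bound argument, and (ii) matching the exact constants so that the bonus coefficient is $2c_\beta dH\sqrt{2\log(dMTH/\alpha)}$ rather than the sum of two possibly differing constants; this is resolved because the $\beta^t_{m,h}$ in the algorithm and the concentration radius in Lemma~\ref{lem:parallel_homo_weight_delta} are chosen with the same $c_\beta$. I do not anticipate any deeper obstacle: given Lemma~\ref{lem:parallel_homo_weight_delta}, the claim is essentially a one-line unrolling plus re-identification of the martingale difference.
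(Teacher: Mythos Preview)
Your proposal is correct and follows essentially the same route as the paper's proof: both convert $\delta^t_{m,h}$ to a $Q$-value difference at the greedy action, invoke Lemma~\ref{lem:parallel_homo_weight_delta} with $\pi=\pi_t$, and identify the $\bbP_h$-term as $\delta^t_{m,h+1}+\xi^t_{m,h+1}$. In fact you spell out more detail than the paper does (the truncation argument, the origin of the factor $2$, and the intended reading of $\xi^t_{m,h+1}$), but the underlying argument is identical.
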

\begin{proof}
By Lemma~\ref{lem:parallel_homo_weight_delta}, we have that for any $(x, a, h, m, t) \in \cS \times \cA \times [H] \times \cM \times [T]$ with probability at least $1-\alpha/2$,
\begin{multline*}
    Q^t_{m, h}(x, a) - Q^{\pi_t}_{m, h}(x, a) \leq \bbP_h(V^t_{m, h+1} - V^{\pi_t}_{m, h})(x, a) \\+ 2\left\lVert \bphi(x, a)\right\rVert_{(\bLambda^t_{m, h})^{-1}}\cdot c_\beta\cdot dH\cdot\sqrt{2\log\left(\frac{dMTH}{\alpha}\right)}.
\end{multline*}
Replacing the definition of $\delta^t_{m, h}$ and $V^{\pi_t}_{m, h}$ finishes the proof.
\end{proof}
\begin{lemma}
For $ \xi^t_{m, h}$ as defined earlier and any $\delta \in (0, 1)$, we have with probability at least $1-\delta/2$,
\begin{align}
\sum_{t=1}^T \sum_{m=1}^M\sum_{h=1}^H \xi^t_{m, h}  \leq \sqrt{2H^3MT\log\left(\frac{2}{\alpha}\right)}.
\end{align}
\label{lem:parallel_homo_martingale}
\end{lemma}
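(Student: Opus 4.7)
The plan is to recognize $\{\xi^t_{m,h}\}$ as a martingale difference sequence with respect to a suitable filtration and then apply an Azuma--Hoeffding inequality. First, I would fix a lexicographic ordering on the index triples $(t,m,h)$ (say, first by episode $t$, then by agent $m$, then by step $h$) and define the filtration $\cF^t_{m,h}$ as the $\sigma$-algebra generated by all transitions occurring at or before $(t,m,h)$ in this order, together with the shared information exchanged during past synchronization rounds. The key observation is that the value functions $V^t_{m,h+1}$ and $V^{\pi_t}_{m,h+1}$ are both computed from data strictly prior to episode $t$ (and therefore $\cF^t_{m,h}$-measurable), while the only fresh randomness entering $\delta^t_{m,h+1} = V^t_{m,h+1}(x^t_{m,h+1}) - V^{\pi_t}_{m,h+1}(x^t_{m,h+1})$ is the transition $x^t_{m,h+1}\sim \bbP_h(\cdot\mid x^t_{m,h},a^t_{m,h})$.

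Consequently, $\bbE[\delta^t_{m,h+1}\mid \cF^t_{m,h}] = \bbE[\delta^t_{m,h+1}\mid x^t_{m,h},a^t_{m,h}]$, which yields $\bbE[\xi^t_{m,h+1}\mid \cF^t_{m,h}] = 0$, i.e.\ $\{\xi^t_{m,h}\}$ is a martingale difference sequence in the chosen order. Next I would derive a uniform almost-sure bound: by the truncation in Equation~\ref{eqn:q_ind} we have $V^t_{m,h}\in[0,H]$, and by the UCB property established in Lemma~\ref{lem:parallel_homo_ucb} together with $V^\star_{m,h}\geq V^{\pi_t}_{m,h}$, we have $\delta^t_{m,h}\in[0,H]$. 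Therefore both $\delta^t_{m,h+1}$ and its conditional expectation lie in $[0,H]$, so $|\xi^t_{m,h+1}|\leq H$ almost surely.

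Finally, I would apply the Azuma--Hoeffding inequality to the sum of $N = MTH$ terms, each bounded by $H$, to obtain
\[
\Pr\!\left[\left|\sum_{t=1}^{T}\sum_{m=1}^{M}\sum_{h=1}^{H}\xi^t_{m,h}\right|\geq \sqrt{2H^3 MT\,\log(2/\alpha)}\right]\ \leq\ \alpha/2,
\]
which gives the claimed bound with probability at least $1-\alpha/2$. The only subtle point, and the step most likely to require care, is verifying the martingale property in the multi-agent setting with data-dependent synchronization: one must check that the value functions used by agent $m$ at episode $t$ depend only on information in $\cF^t_{m,h}$ (no ``look-ahead'' via the synchronization trigger), which is ensured because synchronizations are decided from quantities already in the filtration and only add past transitions to each agent's local history. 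Everything else is a routine application of bounded-differences concentration.
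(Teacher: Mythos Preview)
Your approach is essentially the paper's: order the triples $(t,m,h)$, check that $\{\xi^t_{m,h}\}$ is a bounded martingale difference sequence with respect to the induced filtration, and apply Azuma--Hoeffding over the $MTH$ increments. The one deviation worth flagging is your appeal to Lemma~\ref{lem:parallel_homo_ucb} to conclude $\delta^t_{m,h}\in[0,H]$ and hence $|\xi^t_{m,h}|\le H$: the UCB event holds only with high probability, whereas Azuma--Hoeffding needs an almost-sure bound on the increments, so this step is technically shaky. The paper sidesteps this by using the deterministic facts $V^t_{m,h},V^{\pi_t}_{m,h}\in[0,H]$ to get $|\xi^t_{m,h}|\le 2H$ almost surely; this costs at most a constant factor and keeps the martingale argument clean without coupling it to the separate UCB event.
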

\begin{proof}
We generalize the procedure from~\cite{jin2018q}, by demonstrating that the overall sums can be written as bounded martingale difference sequences with respect to an appropriately chosen filtration. For any $(t, m, h) \in [T] \times [M] \times [H]$, we define the $\sigma$-algebra $\cF_{t, m, h}$ as,
\begin{align}
    \cF_{t, m, h}  &= \sigma\left(\left\{ \left(x^\tau_{l, i}, a^\tau_{l, i}\right)\right\}_{(\tau, l, i) \in [t-1]\times [M] \times [H]} \cup \left\{ \left(x^t_{l, i}, a^t_{l, i}\right)\right\}_{(i, l) \in [h] \times [m-1]} \cup \left\{ \left(x^t_{m, i}, a^t_{m, i}\right)\right\}_{i \in [h]} \right)
\end{align}
Where we denote the $\sigma-$algebra generated by a finite set by $\sigma(\cdot)$. For any $t \in [T], m \in [M], h \in [H]$, we can define the timestamp index $\tau(t, m, h)$ as  $\tau(t, m, h) = (t-1)\cdot HM + h(m-1) + (h-1)$. We see that this ordering ensures that the $\sigma-$algebras from earlier form a filtration. We can see that for any agent $m \in [M]$, $Q^t_{m, h}$ and $V^t_{m, h}$ are both obtained based on the trajectories of the first $(t-1)$ episodes, and are both measurable with respect to $\cF_{t, 1, 1}$ (which is a subset of $\cF_{t, m, h}$ for all $h \in [H]$ and $m \in [M]$). Moreover, note that $a^t_{m, h} \sim \pi_{m, t}(\cdot | x^t_{m, h})$ and $x^t_{m, h+1} \sim \bbP_{m, h}(\cdot | x^t_{m, h}, a^t_{m, h})$. Therefore,
\begin{align}
 \bbE_{\bbP_{m, h}}[\xi^t_{m, h} | \cF_{t, m, h}] =  0.
\end{align}
where we set $\cF_{1, 0, 0}$ with the empty set. We define the martingale $\{U_{t, m, h}\}_{(t, h, m) \in [T]\times[M]\times[H]}$ indexed by $\tau(t, m, h)$ defined earlier, as follows. For any $(t, m, h) \in [T] \times [M] \times [H]$, we define
\begin{align}
    U_{t, m, h} =\left\{ \sum_{(a,b,c)} \xi^a_{b, c} : \tau(a, b, c) \leq \tau(t, m, h)\right\},
\end{align}
Additionally, we have that
\begin{align}
    U_{T, M, H} &= \sum_{t=1}^T \sum_{m=1}^M \sum_{h=1}^H  \xi^t_{m, h}.
\end{align}
Now, we have that for each $m \in \cM$, $V^t_{m, h}, Q^t_{m, h}, V^{\pi_{m, t}}_{m, h}, Q^{\pi_{m, t}}_{m, h}$ take values in $[0, H]$. Therefore, wh have that $ \xi^t_{m, h} \leq 2H$ for all $(t, m, h) \in [T] \times [M] \times [H]$. This allows us to apply the Azuma-Hoeffding inequality~\citep{azuma1967weighted} to $U_{T, M, H}$. We therefore obtain that for all $\tau > 0$,
\begin{align}
    \bbP\left( \sum_{t=1}^T \sum_{m=1}^M \sum_{h=1}^H\xi_{m, h}^t > \tau \right) \leq \exp\left(\frac{-\tau^2}{2H^3MT}\right).
\end{align}
Setting the RHS as $\alpha/2$, we obtain that with probability at least $1-\alpha/2$,
\begin{align}
    \sum_{t=1}^T \sum_{m=1}^M \sum_{h=1}^H \xi_{m, h}^t \leq \sqrt{2H^3MT\log\left(\frac{2}{\alpha}\right)}.
\end{align}
\end{proof}

\begin{lemma}[Variance control via communication in homogenous factored environments]
Let Algorithm~\ref{alg:ind_homo} be run for any $T > 0$ and $M \geq 1$, with $S$ as the communication control factor. Then, the following holds for the cumulative variance.
\begin{align}
    \sum_{m=1}^M\sum_{t=1}^T \left\lVert \bphi(z^t_{m, h}) \right\rVert_{(\bLambda^t_{m, h})^{-1}} \leq  2\log\left(\frac{\det\left( \bLambda^T_h\right)}{\det\left(\lambda\bI_d\right)}\right)\left(\frac{M}{\log 2}\right)\sqrt{S} + 2\sqrt{2MT\log\left(\frac{\det\left( \bLambda^T_h\right)}{\det\left(\lambda\bI_d\right)}\right)}.
\end{align}
\label{lem:parallel_homo_variance_sum}
\end{lemma}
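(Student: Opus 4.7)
The plan is to tie each local covariance $\bLambda^t_{m,h}$ to the pooled covariance $\bLambda^t_h \triangleq \lambda\bI_d + \sum_{n \in \cM}\sum_{\tau < t}\bphi(z^\tau_{n,h})\bphi(z^\tau_{n,h})^\top$ formed from all agents' data, and to split the cumulative sum by the time $\Delta = t - k_t$ since the last synchronization (at episode $k_t$). Since every observation in $\bLambda^t_{m,h}$ also appears in $\bLambda^t_h$, we have $\bLambda^t_{m,h} \preceq \bLambda^t_h$, and the determinantal inequality $\|x\|^2_{A^{-1}} \leq (\det B/\det A)\|x\|^2_{B^{-1}}$ for $A \preceq B$ transfers the local bonus into a pooled one up to a multiplicative factor $\det\bLambda^t_h/\det\bLambda^t_{m,h}$, which the synchronization trigger in Algorithm~\ref{alg:ind_homo} directly controls.

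To bound this factor, I would first establish the Sylvester-based inequality $\det(A + \sum_n B_n) \leq \det(A)\prod_n(\det(A+B_n)/\det(A))$ for PSD $A, B_1, \dots, B_M$, via iteration of $\det(I+X+Y) = \det(I+X)\det(I + Y^{1/2}(I+X)^{-1}Y^{1/2}) \leq \det(I+X)\det(I+Y)$. Summing the per-agent non-trigger condition $\log\det\bLambda^{t+1}_{n,h}/\det\bLambda^{k_t+1}_h \leq S/(t-k_t)$ from step~\ref{step:alg_ind_homo} of Algorithm~\ref{alg:ind_homo} across $n \in \cM$ and applying the Sylvester inequality yields $\log\det\bLambda^t_h/\det\bLambda^t_{m,h} \leq MS/(t-k_t)$, so $\|\bphi(z^t_{m,h})\|_{(\bLambda^t_{m,h})^{-1}} \leq \exp(MS/(2(t-k_t)))\|\bphi(z^t_{m,h})\|_{(\bLambda^t_h)^{-1}}$.

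Next, I would split at the threshold $\Delta^\star = MS/\log 2$. For episodes with $t - k_t \geq \Delta^\star$, the amplification is at most $\sqrt 2$, so Cauchy--Schwarz combined with the elliptic potential lemma applied to the pooled covariance (using any ordering that interleaves agents within episodes) yields the $2\sqrt{2MT\log(\det\bLambda^T_h/\det\lambda\bI_d)}$ contribution. For episodes with $t - k_t < \Delta^\star$, I would argue one sync epoch at a time: within each epoch the per-agent non-trigger condition gives $\sum_{\tau=1}^{\min(\Delta_l,\Delta^\star)}\|\bphi\|^2_{(\bLambda^{t_l+\tau}_{m,h})^{-1}} \leq 2S/\min(\Delta_l,\Delta^\star)$ via the single-agent elliptic potential, so a Cauchy--Schwarz yields at most $\sqrt{2S}$ per agent per epoch.

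The main obstacle is bounding the number of sync epochs $L$ by $\log(\det\bLambda^T_h/\det\lambda\bI_d)/\log 2$ to obtain the first term of the claim. Each sync at episode $t_{l+1}$ is triggered by some agent whose local log-determinant ratio exceeds $S/\Delta_l$, so by $\bLambda^t_{m,h} \preceq \bLambda^t_h$ the pooled log-determinant also jumps by at least $S/\Delta_l$ at that sync; for short epochs ($\Delta_l \leq S/\log 2$) this yields a jump of at least $\log 2$, capping the count of short epochs at $\log(\det\bLambda^T_h/\det\lambda\bI_d)/\log 2$, while long epochs are jointly constrained by $\sum_l 1/\Delta_l \leq \log(\det\bLambda^T_h/\det\lambda\bI_d)/S$ together with $\sum_l \Delta_l \leq T$. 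Multiplying the effective $L \lesssim \log(\det\bLambda^T_h/\det\lambda\bI_d)/\log 2$ against the per-epoch contribution $M\sqrt{2S}$ produces the first term, and combining with the Case~1 contribution delivers the claim.
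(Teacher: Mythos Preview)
Your overall skeleton---relate the local Gram matrix to a pooled one, split into a ``good'' part handled by Cauchy--Schwarz plus the elliptic potential and a ``bad'' part controlled by the synchronization threshold---matches the paper, but your choice of splitting criterion is different from the paper's and your argument for bounding the number of epochs does not close.

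The paper does \emph{not} split by the elapsed time $t-k_t$; it splits \emph{epochs} by whether the pooled determinant ratio $\det(\bar\bLambda^{\sigma_k}_h)/\det(\bar\bLambda^{\sigma_{k-1}}_h)$ exceeds $2$. On ``good'' epochs this ratio is at most $2$, so for every episode $t$ in such an epoch one has $\det(\bar\bLambda^t_h)/\det(\bLambda^t_{m,h}) \leq \det(\bar\bLambda^{\sigma_k}_h)/\det(\bar\bLambda^{\sigma_{k-1}}_h) \leq 2$ directly (since $\bar\bLambda^{\sigma_{k-1}}_h \preceq \bLambda^t_{m,h} \preceq \bar\bLambda^t_h \preceq \bar\bLambda^{\sigma_k}_h$), yielding $\lVert\bphi\rVert_{(\bLambda^t_{m,h})^{-1}} \leq \sqrt{2}\,\lVert\bphi\rVert_{(\bar\bLambda^t_h)^{-1}}$ without any Sylvester detour. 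On ``bad'' epochs the ratio exceeds $2$, and since the product of these ratios telescopes to at most $\det(\bar\bLambda^T_h)/\det(\lambda\bI_d)$, there are at most $\log_2$ of that quantity many bad epochs---this is what produces the $\log(\det)/\log 2$ factor in the first term.

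Your split by $t-k_t\gtrless MS/\log 2$ forces you to count \emph{all} epochs, and your last paragraph does not do this. You correctly cap the number of ``short'' epochs ($\Delta_l \leq S/\log 2$) at $\log(\det)/\log 2$, but for ``long'' epochs the constraints $\sum_l S/\Delta_l \leq \log(\det)$ and $\sum_l \Delta_l \leq T$ do not bound the count by anything like $\log(\det)/\log 2$: taking all $\Delta_l \approx \sqrt{ST/\log(\det)}$ satisfies both constraints while allowing $\Theta(\sqrt{T\log(\det)/S})$ epochs. Multiplying that count by your per-epoch Case~2 cost $M\sqrt{2S}$ gives $\Theta(M\sqrt{T\log(\det)})$, which for the regime $S=\Theta(1)$ is the wrong order and in general does not recover the stated first term $2\log(\det)\,(M/\log 2)\sqrt{S}$. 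The fix is exactly the paper's move: classify epochs by the pooled determinant ratio rather than episodes by elapsed time, so the ``bad'' count is automatic and there is nothing to count on the ``good'' side.
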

\begin{proof}
Consider the following mappings $\nu_M, \nu_T : [MT] \rightarrow [M] \times [T]$.
\begin{align}
    \nu_M(\tau) = \tau (\text{mod } M), \text{and } \nu_T = \left\lceil \frac{\tau}{M}\right\rceil.
\end{align}
Now, consider $\bar\bLambda^\tau_h = \lambda\bI_d + \sum_{u=1}^\tau \bphi\left(z^{\nu_T(u)}_{\nu_M(u), h}\right)\bphi\left(z^{\nu_T(u)}_{\nu_M(u), h}\right)^\top$ for $\tau > 0$ and $\bar\bLambda^0_h = \lambda\bI_d$. Furthermore, assume that global synchronizations occur at round $\bsigma = (\sigma_{1}, ..., \sigma_{n})$ where there are a total of $n-1$ rounds of synchronization and $\sigma_{i} \in [T] \forall\ i \in [N-1]$ and $\sigma_{n} = T$, i.e., the final round.
Let $R_h = \left\lceil  \log\left(\frac{\det\left(\bar\bLambda^T_{h}\right)}{\det\left(\lambda\bI_d\right)}\right) \right\rceil$. It follows that there exist at most $R_h$ periods between synchronization (i.e., intervals $\sigma_{k-1}$ to $\sigma_k$ for $k \in [N]$) in which the following does not hold true:
\begin{align}
    \label{eqn:interval_sync_single_agent}
    1 \leq \frac{\det(\bar\bLambda^{\sigma_{k}}_h)}{\det(\bar\bLambda^{k-1}_h)}\leq 2.
\end{align}
Let us denote the event when Equation~(\ref{eqn:interval_sync_single_agent}) does holds for an interval $\sigma_{k-1}$ to $\sigma_{k}$ as $E$. Now, for any $t \in [\sigma_{k-1}, \sigma_{k}]$, we have, for any $m \in [M]$,
\begin{align}
    \left\lVert \bphi(z^t_{m, h}) \right\rVert_{(\bLambda^t_{m, h})^{-1}} &\leq \left\lVert \bphi(z^t_{m, h}) \right\rVert_{(\bar\bLambda^{t}_{h})^{-1}}\sqrt{\frac{\det\left(\bar\bLambda^t_{h}\right)}{\det\left(\bLambda^t_{m,h}\right)}} \leq \left\lVert \bphi(z^t_{m, h}) \right\rVert_{(\bar\bLambda^t_{h})^{-1}}\sqrt{\frac{\det\left(\bar\bLambda^{\sigma_{k}}_{ h}\right)}{\det\left(\bar\bLambda^{\sigma_{k-1}}_{h}\right)}} \\
    &\leq 2\left\lVert \bphi(z^t_{m, h}) \right\rVert_{(\bar\bLambda^{t}_{h})^{-1}}.
\end{align}
Here, the first inequality follows from the fact that $\bLambda^t_{m, h} \preccurlyeq \bar\bLambda^t_{h}$, the second inequality follows from the fact that $\bLambda^t_{m, h} \preccurlyeq \bar\bLambda^{\sigma_{k}}_{h} \implies \det(\bLambda^t_{m, h}) \leqslant \det(\bar\bLambda^{\sigma_{k}}_{h})$, and $\bLambda^t_{m, h} \succcurlyeq \bar\bLambda^{\sigma_{k-1}}_{h} \implies \det(\bLambda^t_{m, h}) \geqslant \det(\bar\bLambda^{\sigma_{k-1}}_{h})$; and the final inequality follows from the fact that event $E$ holds. Now, we can consider the partial sums only in the intervals for which event $E$ holds. For any $t \in [T]$, consider $\sigma(t) = \max_{i \in [N]} \{\sigma_{i} | \sigma_{i} \leq t\}$ denote the last round of synchronization prior to episode $t$. Then,
\begin{align}
    \sum_{t: E\text{ is true}}^T\sum_{m=1}^M \left\lVert \bphi(z^t_{m, h}) \right\rVert_{(\bLambda^t_{m, h})^{-1}} &\leq \sqrt{MT\sum_{m=1}^M\sum_{t: E\text{ is true}}^T \left\lVert \bphi(z^t_{m, h}) \right\rVert^2_{(\bLambda^t_{m, h})^{-1}}} \\
    \leq 2\sqrt{MT\sum_{m=1}^M\sum_{t: E\text{ is true}}^T \left\lVert \bphi(z^t_{m, h}) \right\rVert^2_{(\bar\bLambda^t_{h})^{-1}}}
    &\leq 2\sqrt{MT\sum_{m=1}^M\sum_{t=1}^T \left\lVert \bphi(z^t_{m, h}) \right\rVert^2_{(\bar\bLambda^t_{h})^{-1}}} \\
    =2\sqrt{MT\sum_{m=1}^M\sum_{\tau=1}^T \left\lVert \bphi(z^{\nu_T(\tau)}_{\nu_M(\tau), h}) \right\rVert^2_{(\bar\bLambda^t_{h})^{-1}}} 
    &= 2\sqrt{MT\log\left(\frac{\det\left( \bLambda^T_h\right)}{\det\left(\lambda\bI_d\right)}\right)}.
\end{align}
Here, the first inequality follows from Cauchy-Schwarz, the second inequality follows from the fact that event $E$ holds, and the final equality follows from Lemma~\ref{lem:variance_sum}. Now, we sum up the cumulative sum for episodes when $E$ does not hold. Consider an interval $\sigma_{k-1}$ to $\sigma_{k}$ for $k \in [N]$ of length $\Delta_{k} = \sigma_{k} - \sigma_{k-1}$ in which $E$ does not hold. We have that,
\begin{align}
    \sum_{m=1}^M\sum_{t=\sigma_{k-1}}^{\sigma_k} \left\lVert \bphi(z^t_{m, h}) \right\rVert_{(\bLambda^t_{m, h})^{-1}} &\leq \sum_{m=1}^M\sqrt{\Delta_{k, h}\sum_{t=\sigma_{k-1}}^{\sigma_k} \left\lVert \bphi(z^t_{m, h}) \right\rVert^2_{(\bLambda^t_{m, h})^{-1}}} \\
    \leq \sum_{m=1}^M\sqrt{\Delta_{k, h}\cdot \log_\omega\left(\frac{\det(\bLambda^{\sigma_{k}}_{m, h})}{\det(\bLambda^{\sigma_{k-1}}_{m, h})}\right)} 
    &\leq \sum_{m=1}^M\sqrt{\Delta_{k, h}\cdot \log_\omega\left(\frac{\det(\bar\bLambda^{\sigma_k}_{ h})}{\det(\bar\bLambda^{\sigma_{k-1}}_{ h})}\right)} \leq M\sqrt{S}.
\end{align}
The last inequality follows from the synchronization criterion. Now, note that there are at most $R_h$ periods in which event $E$ does not hold, and hence the total sum in this period can be bound as,
\begin{align}
    \sum_{(t : E \text{ is not true})}^T \sum_{m=1}^M\left\lVert \bphi(z^t_{m, h}) \right\rVert_{(\bLambda^t_{m, h})^{-1}} &\leq R_hM\sqrt{S} \leq \left(  \log\left(\frac{\det\left( \bLambda^T_h\right)}{\det\left(\lambda\bI_d\right)}\right)+ 1\right)M\sqrt{S}.
\end{align}
Therefore, we can bound the total variance as,
\begin{align*}
    \sum_{m=1}^M\sum_{t=1}^T \left\lVert \bphi(z^t_{m, h}) \right\rVert_{(\bLambda^t_{m, h})^{-1}} &\leq  \left(  \log\left(\frac{\det\left( \bLambda^T_h\right)}{\det\left(\lambda\bI_d\right)}\right) + 1\right)M\sqrt{S} + 2\sqrt{MT\log\left(\frac{\det\left( \bLambda^T_h\right)}{\det\left(\lambda\bI_d\right)}\right)} \\
    &\leq  2\log\left(\frac{\det\left( \bLambda^T_h\right)}{\det\left(\lambda\bI_d\right)}\right)\left(\frac{M}{\log 2}\right)\sqrt{S} + 2\sqrt{2MT\log\left(\frac{\det\left( \bLambda^T_h\right)}{\det\left(\lambda\bI_d\right)}\right)}.
\end{align*}
\end{proof}

We are now ready to prove Theorem~\ref{thm:ind_homo}. We first restate the Theorem for completeness.\\

\noindent\textbf{Theorem~\ref{thm:ind_homo}} (Homogenous Regret).
Algorithm~\ref{alg:ind_homo} when run on $M$ agents with communication threshold $S$, $\beta_t = \cO(H\sqrt{d\log (tMH)})$ and $\lambda = 1$ obtains the following cumulative regret after $T$ episodes, with probability at least $1-\alpha$,
\begin{equation*} 
\fR(T)=\widetilde\cO\left(d^{\frac{3}{2}}H^2\left(M\sqrt{S} + \sqrt{MT}\right)\sqrt{\log\left(\frac{1}{\alpha}\right)}\right).
\end{equation*}
\begin{proof}
    We have by the definition of group regret:
    \begin{align}
        \fR(T) &= \sum_{m=1}^M\sum_{t=1}^T V^\star_{m, 1}(x^t_{m, 1}) - V^{\pi_t}_{m, 1}(x^t_{m, 1}) \leq \sum_{m=1}^M\sum_{t=1}^T \delta^t_{m, 1} \\
        &\leq \sum_{m=1}^M\sum_{t=1}^T\sum_{h=1}^H \xi^t_{m, h} + 2c_\beta\cdot dH\cdot\sqrt{2\log\left(\frac{dMTH}{\alpha}\right)} \left(\sum_{t=1}^T\sum_{m=1}^M\sum_{h=1}^H \left\lVert \bphi(x, a)\right\rVert_{(\bLambda^t_{m, h})^{-1}} \right).
    \end{align}
Where the last inequality holds with probability at least $1-\alpha/2$, via Lemma~\ref{lem:parallel_homo_recursion} and Lemma~\ref{lem:parallel_homo_ucb}. Next, we can bound the first term via Lemma~\ref{lem:parallel_homo_martingale}. We have with probability at least $1-\alpha$, for some absolute constant $c_\beta$,
\begin{align}
     \fR(T) &\leq \sqrt{2H^3MT\log\left(\frac{2}{\alpha}\right)} + 2c_\beta\cdot dH\cdot\sqrt{2\log\left(\frac{dMTH}{\alpha}\right)} \left(\sum_{t=1}^T\sum_{m=1}^M\sum_{h=1}^H \left\lVert \bphi(x, a)\right\rVert_{(\bLambda^t_{m, h})^{-1}} \right).
\end{align}
Finally, to bound the summation, we use Lemma~\ref{lem:parallel_homo_variance_sum}. We have that,
\begin{align}
    \sum_{t=1}^T\sum_{m=1}^M\sum_{h=1}^H \left\lVert \bphi(x, a)\right\rVert_{(\bLambda^t_{m, h})^{-1}} &\leq 2\sum_{h=1}^H\left(\log\left(\frac{\det\left( \bLambda^T_h\right)}{\det\left(\lambda\bI_d\right)}\right)\left(\frac{M}{\log 2}\right)\sqrt{S} + 2\sqrt{2MT\log\left(\frac{\det\left( \bLambda^T_h\right)}{\det\left(\lambda\bI_d\right)}\right)}\right) \\
    &\leq 2H\log(dMT)M\sqrt{S} + 2\sqrt{2dMT\log(MT)}.
\end{align}
Where the last inequality is an application of the determinant-trace inequality and using the fact that $\lVert \bphi(\cdot) \rVert_2 \leq 1$. Replacing this result, we have that with probability at least $1-\alpha$,
\begin{align}
    \fR(T) &\leq \sqrt{2H^3MT\log\left(\frac{2}{\alpha}\right)} + 2c_\beta dH^2\sqrt{2\log\left(\frac{dMTH}{\alpha}\right)} \left(2\log(dMT)M\sqrt{S} + 2\sqrt{2dMT\log(MT)}\right) \\
    &= \ctO\left(d^{3/2}H^2\left(M\sqrt{S} + \sqrt{MT} \right)\sqrt{\log\left(\frac{1}{\alpha}\right)} \right).
\end{align}
\end{proof}

\newpage
\subsection{Proof of Theorem~\ref{thm:ind_hetero_small} (Small Heterogeneity)}

The proof for this section is very similar to that of Theorem~\ref{thm:ind_homo} with some modifications to handle the differences between MDPs as a case of model misspecification. First, we must bound the difference in the projected $Q-$values for any pair of MDPs under the small heterogeneity condition.

\begin{lemma}
Under the small heterogeneity condition (Assumption~\ref{assumption:bellman_small_deviation}), for any policy $\bpi$ over $\cS \times \cA$, let the corresponding weights at step $h$ for two MDPs $m, m' \in \cM$ be given by $\w^\pi_{m, h}, \w^{\pi}_{m', h}$ respectively, i.e., $Q^\pi_{m, h}(x, a) = \langle \bphi(x, a), \w^\pi_{m, h}\rangle$ and  $Q^\pi_{m', h}(x, a) = \langle \bphi(x, a), \w^\pi_{m', h}\rangle$. Then, we have for any $x, a \in \cS \times \cA$,
\begin{align*}
    \left| \langle \bphi(x, a), \w^\pi_{m, h}  - \w^\pi_{m', h} \rangle \right| \leq 2H\xi.
\end{align*}
\end{lemma}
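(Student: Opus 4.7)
The plan is to combine the Bellman equation with the linear MDP parametrization (Definition~\ref{def:linear_mdp}) applied separately to each of the two MDPs, and then execute a simulation-lemma style backward induction on $h$. For any fixed policy $\pi$, since both $m$ and $m'$ are linear with the shared feature map $\bphi$, one can write the weights in the closed form $\w^\pi_{m,h} = \btheta_{m,h} + \int V^\pi_{m,h+1}(x')\, d\bmu_{m,h}(x')$, and analogously for $m'$ with primed parameters. Subtracting these two expressions and pairing with $\bphi(x,a)$ rewrites the inner product of interest as
\[
\langle \bphi(x,a),\, \w^\pi_{m,h} - \w^\pi_{m',h}\rangle = (r_{m,h}-r_{m',h})(x,a) + \bigl(\bbP_{m,h}V^\pi_{m,h+1} - \bbP_{m',h}V^\pi_{m',h+1}\bigr)(x,a).
\]

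Next I would split the transition-related term via a standard ``add and subtract'' step into $\bbP_{m,h}(V^\pi_{m,h+1} - V^\pi_{m',h+1})(x,a) + (\bbP_{m,h}-\bbP_{m',h})V^\pi_{m',h+1}(x,a)$. The first summand is bounded uniformly by $\sup_{x'}|V^\pi_{m,h+1}(x') - V^\pi_{m',h+1}(x')|$, which in turn is dominated by $\Delta_{h+1} := \sup_{x,a}|\langle \bphi(x,a), \w^\pi_{m,h+1} - \w^\pi_{m',h+1}\rangle|$, because $V^\pi$ is a policy-weighted average of the corresponding $Q^\pi$. The second summand is controlled by the TV part of Assumption~\ref{assumption:bellman_small_deviation} together with the uniform value-function bound $\|V^\pi_{m',h+1}\|_\infty \leq H$, yielding a contribution of order $H\xi$. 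Combining everything with the reward-gap bound $|r_{m,h}-r_{m',h}|\leq \xi$ produces a one-step recursion of the form $\Delta_h \leq \Delta_{h+1} + O(H\xi)$.

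Iterating this recursion backward from the boundary $\Delta_{H+1}=0$ for $h = H, H-1, \dots, 1$ collapses the telescope and produces the stated bound, with the precise prefactor depending on the chosen TV normalization. The main conceptual ingredient is that both MDPs admit a linear representation in the \emph{same} feature map $\bphi$, which is what makes $\w^\pi_{m,h}-\w^\pi_{m',h}$ well-defined and controllable via the parameter-level differences $(\btheta_{m,h}-\btheta_{m',h})$ and $(\bmu_{m,h}-\bmu_{m',h})$; everything else is bookkeeping. I do not foresee a serious obstacle: this is essentially the classical simulation lemma specialized to the linear MDP regime, and the only care needed is in the TV-distance normalization, which governs whether the prefactor one obtains matches the stated $2H\xi$ exactly or only up to a small numerical constant.
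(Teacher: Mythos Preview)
Your add-and-subtract decomposition followed by backward induction is the standard simulation-lemma argument, and it is set up correctly. The gap is in the final accounting: the one-step recursion you derive is $\Delta_h \le \Delta_{h+1} + \xi + CH\xi$ (reward gap plus the TV term applied to a value function bounded by $H$), and unrolling this over $H$ steps yields $\Delta_h \le (H-h+1)(1+CH)\xi = O(H^2\xi)$, not $2H\xi$. This is not a matter of TV normalization---the extra factor of $H$ comes from summing an $O(H\xi)$ increment $H$ times, and no choice of convention removes it; $O(H^2\xi)$ is in fact the known tight order for the simulation lemma.

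The paper takes a different, non-inductive route: it passes directly from $\bigl|\bbP_{m,h}V^\pi_{m,h+1} - \bbP_{m',h}V^\pi_{m',h+1}\bigr|$ to the product $\sup_{x'}\bigl|V^\pi_{m,h+1}(x')-V^\pi_{m',h+1}(x')\bigr|\cdot\bigl\|(\bbP_{m,h}-\bbP_{m',h})(x,a)\bigr\|_{\text{TV}}$, bounds the first factor crudely by $H$, and arrives at $\xi+H\xi\le 2H\xi$ in one step. That product inequality, however, is not valid (take $\bbP_{m,h}=\bbP_{m',h}$ while $V^\pi_{m,h+1}\neq V^\pi_{m',h+1}$: the left side is nonzero, the right side vanishes). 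Your decomposition is precisely the correct repair of that step, at the cost of reintroducing the recursive term $\Delta_{h+1}$ and hence the extra factor of $H$. In short: your argument is the sound one and delivers $O(H^2\xi)$; the paper's shortcut to $2H\xi$ rests on an inequality that does not hold as written.
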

\begin{proof}
    The proof follows from the fact that for any $h \in [H]$, 
    \begin{align}
         &\left| \langle \bphi(x, a), \w^\pi_{m, h}  - \w^\pi_{m', h} \rangle \right| \\
         &=  \left| Q^\pi_{m, h}(x, a)  - Q^\pi_{m', h}(x, a) \right| \\
         &\leq |r_{m, h}(x, a) - r_{m', h}(x, a)| + \left| \bbP_{m, h} V^\pi_{m, h+1} (x, a) -  \bbP_{m', h} V^\pi_{m', h+1} (x, a) \right| \\
         &\leq |r_{m, h}(x, a) - r_{m', h}(x, a)| + \sup_{x' \in \cS}|V^\pi_{m, h+1}(x') - V^\pi_{m', h+1}(x')|\cdot \lVert  (\bbP_{m, h}-  \bbP_{m', h})(x, a) \rVert_{\text{TV}}\\
         &\leq  |r_{m, h}(x, a) - r_{m', h}(x, a)| + H\cdot \lVert  (\bbP_{m, h}-  \bbP_{m', h})(x, a) \rVert_{\text{TV}}\\
         &\leq 2H\xi.
    \end{align}
    Here the last inequality follows from Assumption~\ref{assumption:bellman_small_deviation}.
\end{proof}

Now, we reproduce a general result bounding bias introduced by the potentially adversarial noise due to misspecification.
\begin{lemma}\label{lem:parallel_small_hetero_bias}
Let $\{\varepsilon_\tau\}_{\tau=1}^t$ be a sequence such that $| \varepsilon_\tau | \leq B$. We have, for any $(h, t, m) \in [H] \times [T] \times \cM$, and $\bphi \in \bbR^d$,
\begin{align*}
    |\bphi^\top(\bLambda^t_{m, h})^{-1}\sum_{\tau=1}^{U^m_h(t)}\bphi_\tau\varepsilon_\tau| &\leq B\sqrt{dMt}\lVert \bphi \rVert_{(\bLambda^t_{m, h})^{-1}}.
\end{align*}
\end{lemma}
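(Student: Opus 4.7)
The plan is to bound the scalar quantity by using Cauchy--Schwarz twice (once in the ellipsoidal inner product induced by $(\bLambda^t_{m,h})^{-1}$ and once over the index set) and then exploit the fact that $\sum_\tau \bphi_\tau\bphi_\tau^\top \preceq \bLambda^t_{m,h}$ to invoke a standard trace bound.

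First I would absorb the coefficients $\varepsilon_\tau$ via the triangle inequality and $|\varepsilon_\tau|\leq B$:
\begin{align*}
\Big|\bphi^\top(\bLambda^t_{m,h})^{-1}\sum_{\tau=1}^{U^m_h(t)}\bphi_\tau\varepsilon_\tau\Big|
\leq B\sum_{\tau=1}^{U^m_h(t)} \bigl|\bphi^\top(\bLambda^t_{m,h})^{-1}\bphi_\tau\bigr|.
\end{align*}
Then, since $(\bLambda^t_{m,h})^{-1}$ is positive definite, Cauchy--Schwarz in its induced inner product gives $|\bphi^\top(\bLambda^t_{m,h})^{-1}\bphi_\tau|\leq \lVert\bphi\rVert_{(\bLambda^t_{m,h})^{-1}}\lVert\bphi_\tau\rVert_{(\bLambda^t_{m,h})^{-1}}$. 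Pulling $\lVert\bphi\rVert_{(\bLambda^t_{m,h})^{-1}}$ out of the sum and applying Cauchy--Schwarz in $\bbR^{U^m_h(t)}$ yields
\begin{align*}
\sum_{\tau=1}^{U^m_h(t)} \lVert\bphi_\tau\rVert_{(\bLambda^t_{m,h})^{-1}}
\leq \sqrt{U^m_h(t)}\cdot\sqrt{\sum_{\tau=1}^{U^m_h(t)} \lVert\bphi_\tau\rVert^2_{(\bLambda^t_{m,h})^{-1}}}.
\end{align*}

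Next I would evaluate the inner sum with the trace identity $\lVert\bphi_\tau\rVert^2_{(\bLambda^t_{m,h})^{-1}}=\mathrm{tr}\bigl((\bLambda^t_{m,h})^{-1}\bphi_\tau\bphi_\tau^\top\bigr)$, giving
\begin{align*}
\sum_{\tau=1}^{U^m_h(t)} \lVert\bphi_\tau\rVert^2_{(\bLambda^t_{m,h})^{-1}} = \mathrm{tr}\Bigl((\bLambda^t_{m,h})^{-1}\sum_{\tau=1}^{U^m_h(t)}\bphi_\tau\bphi_\tau^\top\Bigr)=\mathrm{tr}\bigl(\bI_d-\lambda(\bLambda^t_{m,h})^{-1}\bigr)\leq d,
\end{align*}
using the definition $\bLambda^t_{m,h}=\lambda\bI_d+\sum_\tau\bphi_\tau\bphi_\tau^\top$ from Eq.~\ref{eqn:gram_ind_homo} and positive semidefiniteness of $(\bLambda^t_{m,h})^{-1}$. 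Finally, the set $\cU^m_h(t)$ contains at most $M(t-1)$ transitions (at most $M$ agents times $t-1$ past episodes), so $U^m_h(t)\leq Mt$. Combining these bounds yields $|\bphi^\top(\bLambda^t_{m,h})^{-1}\sum_\tau\bphi_\tau\varepsilon_\tau|\leq B\sqrt{dMt}\,\lVert\bphi\rVert_{(\bLambda^t_{m,h})^{-1}}$, as required.

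This lemma is essentially a deterministic perturbation estimate — there is no concentration involved — so I do not anticipate any genuine obstacle. The only ``trick'' is recognizing the trace bound on $\sum_\tau\lVert\bphi_\tau\rVert^2_{(\bLambda^t_{m,h})^{-1}}$, which is standard in the linear-bandit and LSVI literature; the remaining work is bookkeeping on the cardinality of the multi-agent observation set $\cU^m_h(t)$.
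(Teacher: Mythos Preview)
Your proof is correct and follows essentially the same route as the paper: triangle inequality to extract the bound $B$, Cauchy--Schwarz in the $(\bLambda^t_{m,h})^{-1}$-inner product, Cauchy--Schwarz over the index set, and the trace bound $\sum_\tau\lVert\bphi_\tau\rVert^2_{(\bLambda^t_{m,h})^{-1}}\leq d$ combined with $U^m_h(t)\leq Mt$. The paper's write-up is terser (and its first displayed inequality is slightly sloppy in how it pulls $B$ outside the sum), whereas you spell out the trace identity explicitly; substantively the arguments are identical.
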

\begin{proof}
    Recall that at any instant the collective set of observations possessed by an agent is given by $\cU^m_h(t)$ with size $U^m_h(t) \leq Mt$. We have that,
    \begin{align}
        |\bphi^\top(\bLambda^t_{m, h})^{-1}\sum_{\tau=1}^{U^m_h(t)}\bphi_\tau\varepsilon_\tau| &\leq B\cdot |\bphi^\top(\bLambda^t_{m, h})^{-1}\sum_{\tau=1}^{U^m_h(t)}\bphi_\tau| \\
        &\leq B\cdot \sqrt{\left[\sum_{\tau=1}^{U^m_h(t)}\bphi^\top(\bLambda^t_{m, h})^{-1}\bphi\right]\cdot\left[\sum_{\tau=1}^{U^m_h(t)}\bphi_\tau^\top(\bLambda^t_{m, h})^{-1}\bphi_\tau\right]}\\
        &\leq B\sqrt{dMt}\lVert \bphi \rVert_{(\bLambda^t_{m, h})^{-1}}.
    \end{align}
\end{proof}
Now we present the primary concentration result for the small heterogeneity setting.
\begin{lemma}
There exists an absolute constant $c_\beta$ such that for $\beta^t_{m, h} = c_\beta\cdot dH (\sqrt{\log(2dMHT/\delta')} + \xi\sqrt{dMT})$ for any policy $\pi$, there exists a constant $c_\beta$ such that for each $x \in \cS, a \in \cA$ we have for all $ m \in \cM, t \in[T], h \in [H]$ simultaneously, with probability at least $1-\delta'/2$,
\begin{multline*}
    \left| \langle \bphi(x, a), \w^t_{m, h} - \w^\pi_{m, h} \rangle \right| \leq \\ \bbP_{m, h}(V^t_{m, h+1} - V^\pi_{m, h+1})(x, a) + c_\beta\cdot dH\cdot\lVert \bphi(z) \rVert_{(\bLambda^t_{m, h})^{-1}} \left(\sqrt{2\log\left(\frac{dMTH}{\delta'}\right)} + 2\xi\sqrt{dMT}\right).
\end{multline*}
\label{lem:parallel_small_hetero_delta}
\end{lemma}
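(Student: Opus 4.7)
The plan is to mirror the decomposition used in the proof of Lemma~\ref{lem:parallel_homo_weight_delta}, but to carefully separate out an extra bias term that arises because agent $m$ performs least squares on transitions and rewards sampled from other agents' MDPs rather than its own. Concretely, recall that $\w^t_{m,h} = (\bLambda^t_{m,h})^{-1}\sum_{\tau=1}^{U^m_h(t)} \bphi_\tau\bigl[r_{n_\tau,h}(z_\tau) + V^t_{m,h+1}(x'_\tau)\bigr]$, where $n_\tau$ denotes the agent that generated the $\tau$-th tuple in $\cU^m_h(t)$, and that by the Bellman equation for the $m$-th linear MDP we have $\langle \bphi(z),\w^\pi_{m,h}\rangle = r_{m,h}(z) + \bbP_{m,h}V^\pi_{m,h+1}(z)$. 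Adding and subtracting both $r_{m,h}(z_\tau)$ and $\bbP_{m,h}V^t_{m,h+1}(z_\tau)$ inside the sum yields the decomposition $\w^t_{m,h}-\w^\pi_{m,h} = \bv_1 + \bv_2 + \bv_3 + \bv_4$, where $\bv_1$, $\bv_2$, $\bv_3$ are exactly as in the homogeneous analysis (regularization, martingale noise of $V^t$ under $\bbP_{m,h}$, and projection onto the linear measure) and $\bv_4$ is the new term that collects the discrepancies $r_{n_\tau,h}(z_\tau)-r_{m,h}(z_\tau)$ and $(\bbP_{n_\tau,h}-\bbP_{m,h})V^t_{m,h+1}(z_\tau)$.

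Next I would bound each piece separately. The terms $\langle\bphi(z),\bv_1\rangle$ and $\langle\bphi(z),\bv_3\rangle$ are controlled verbatim as in Lemma~\ref{lem:parallel_homo_weight_delta}, contributing $\cO(H\sqrt{d\lambda}\lVert\bphi(z)\rVert_{(\bLambda^t_{m,h})^{-1}})$ and the $\bbP_{m,h}(V^t_{m,h+1}-V^\pi_{m,h+1})(x,a)$ term respectively. For $\bv_2$, note that the self-normalized concentration from Lemma~\ref{lem:parallel_homo_lsvi} goes through unchanged since the noise $V^t_{m,h+1}(x'_\tau)-\bbP_{n_\tau,h}V^t_{m,h+1}(z_\tau)$ is a valid martingale difference with respect to the natural filtration (each tuple is measurable, and we centre with respect to the agent $n_\tau$ that actually generated it); after an extra centering step by $\bbP_{m,h}V^t$ one obtains the usual $\cO(dH\sqrt{\log(dMTH/\delta')})\lVert\bphi(z)\rVert_{(\bLambda^t_{m,h})^{-1}}$ contribution, with the bias introduced by the recentering absorbed into $\bv_4$.

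The new term is $\bv_4 = (\bLambda^t_{m,h})^{-1}\sum_\tau \bphi_\tau\,\varepsilon_\tau$ with $\varepsilon_\tau = [r_{n_\tau,h}(z_\tau)-r_{m,h}(z_\tau)] + [\bbP_{n_\tau,h}-\bbP_{m,h}]V^t_{m,h+1}(z_\tau)$. Assumption~\ref{assumption:bellman_small_deviation} gives $|r_{n_\tau,h}(z_\tau)-r_{m,h}(z_\tau)|\leq\xi$, and since $V^t_{m,h+1}$ takes values in $[0,H]$, the total-variation bound yields $|(\bbP_{n_\tau,h}-\bbP_{m,h})V^t_{m,h+1}(z_\tau)|\leq H\xi$, so $|\varepsilon_\tau|\leq (H+1)\xi \leq 2H\xi$. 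Applying Lemma~\ref{lem:parallel_small_hetero_bias} with $B = 2H\xi$ then gives $|\langle\bphi(z),\bv_4\rangle|\leq 2H\xi\sqrt{dMt}\,\lVert\bphi(z)\rVert_{(\bLambda^t_{m,h})^{-1}}$. Summing the four contributions, choosing $c_\beta$ large enough to absorb all constants (and the $H\sqrt{d\lambda}$ regularization term, with $\lambda=1$), and taking a union bound over $(m,t,h)$ as in Lemma~\ref{lem:parallel_homo_lsvi} yields the claimed inequality.

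The main obstacle is the rigorous handling of $\bv_2$ and $\bv_4$: one has to be careful that the martingale centering in the self-normalized bound is done with respect to the true conditional expectation $\bbP_{n_\tau,h}V^t_{m,h+1}$ for each tuple, and that the residual gap between $\bbP_{n_\tau,h}$ and $\bbP_{m,h}$ is then cleanly reabsorbed into the adversarial bias $\bv_4$ before invoking Lemma~\ref{lem:parallel_small_hetero_bias}. Everything else is a direct adaptation of the homogeneous proof.
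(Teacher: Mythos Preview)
Your proposal is correct and follows essentially the same strategy as the paper's proof: decompose $\w^t_{m,h}-\w^\pi_{m,h}$ into regularization, martingale, projection, and heterogeneity-bias pieces, then invoke Lemma~\ref{lem:parallel_homo_lsvi} for the martingale part and Lemma~\ref{lem:parallel_small_hetero_bias} for the bias. The only cosmetic difference is bookkeeping: the paper splits the bias into two terms $\bv_4,\bv_5$ (transition discrepancies applied to $V^t_{m,h+1}$ and $V^\pi_{m,h+1}$ separately) and centers its $\bv_2$ at $\bbP_{m,h}$, whereas you center the martingale at the true conditional $\bbP_{n_\tau,h}$ and collect the reward and transition discrepancies into a single $\bv_4$; both routes produce the same $2H\xi\sqrt{dMt}\,\lVert\bphi(z)\rVert_{(\bLambda^t_{m,h})^{-1}}$ bias contribution.
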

\begin{proof}
    By the Bellman equation and the assumption of the linear MDP (Definition~\ref{def:linear_mdp}), we have that for any policy $\bpi$, there exist weights $\w^\pi_{m, h}$ such that, for all $z \in \cZ$,
\begin{align}
    \langle \bphi(z), \w^\pi_{m, h} \rangle = r_{m, h}(z) + \bbP_{m, h} V^\pi_{h+1}(z).
\end{align}
Recall that the set of all observations available to any agent at instant $t$ is given by $\cU^m_h(t)$ for step $h$, with the cardinality of this set being $U^h_m(t)$. For convenience, let us assume an ordering $\tau = 1, ..., U^h_m(t)$ over this set and use the shorthand $U_m = U^h_m(t)$. Therefore, we have, for any $m \in \cM$,
\begin{align}
    &\w^t_{m, h} - \w^\pi_{m, h} \\
    &= (\bLambda_{m, h}^t)^{-1}\sum_{\tau=1}^{U_m}\left[\bphi_\tau[(r_h+V^t_{m, h+1})(x_\tau)]\right] - \w^\pi_{m, h} \\
    &= (\bLambda_{m, h}^t)^{-1}\left\{ -\lambda\w^\pi_h + \sum_{\tau=1}^{U_m}\left[\bphi_\tau[V^t_{m, h+1}(x'_\tau) -\bbP_{m_\tau, h} V^\pi_{m, h+1}(x_\tau, a_\tau)]\right]\right\}.
\end{align}
\begin{multline}
   \implies \w^t_{m, h} - \w^\pi_{m, h} = \underbrace{-\lambda(\bLambda_{m, h}^t)^{-1}\w^\pi_{m, h}}_{\bv_1} + \underbrace{(\bLambda_{m, h}^t)^{-1}\left\{\sum_{\tau=1}^{U_m}\left[\bphi_\tau[V^t_{m, h+1}(x'_\tau) -\bbP_{m, h} V^t_{m, h+1}(z_\tau)]\right] \right\}}_{\bv_2} \\ + \underbrace{(\bLambda_{m, h}^t)^{-1}\left\{\sum_{\tau=1}^{U_m}\left[\bphi_\tau[\bbP_{m, h} V^t_{m, h+1} - \bbP_{m, h} V^\pi_{m, h+1})(z_\tau)]\right] \right\}}_{\bv_3} \\ + \underbrace{(\bLambda_{m, h}^t)^{-1}\left\{\sum_{\tau=1}^{U_m}\left[\bphi_\tau[\bbP_{m, h} V^t_{m, h+1} - \bbP_{m_\tau, h} V^t_{m, h+1})(z_\tau)]\right] \right\}}_{\bv_4} \\ + \underbrace{(\bLambda_{m, h}^t)^{-1}\left\{\sum_{\tau=1}^{U_m}\left[\bphi_\tau[\bbP_{m, h} V^\pi_{m, h+1} - \bbP_{m_\tau, h} V^\pi_{m, h+1})(z_\tau)]\right] \right\}}_{\bv_5}.
\end{multline}
Now, we know that for any $z \in \cZ$ for any policy $\pi$,
\begin{align}
    \left| \langle \bphi(z), \bv_1 \rangle \right| \leq \lambda\left| \langle \bphi(z), \bLambda_{m, h}^t)^{-1}\w^\pi_h \rangle \right| \leq \lambda\cdot \lVert \w^\pi_h \rVert \lVert \bphi(z) \rVert_{(\bLambda^t_{m, h})^{-1}} \leq 2H\lambda\sqrt{d}\lVert \bphi(z) \rVert_{(\bLambda^t_{m, h})^{-1}}
\end{align}
Here the last inequality follows from Lemma~\ref{lem:bound_homo_policy_weight}. For the second term, we have by Lemma~\ref{lem:parallel_homo_lsvi} that there exists an absolute constant $C$ independent of $M, T, H, d$ and $c_\beta$, such that, with probability at least $1-\delta'/2$ for all $m \in \cM, t \in [T], h \in [H]$ simultaneously,
\begin{align}
    \left| \langle \bphi(z), \bv_2 \rangle \right| \leq \left\lVert \bphi(z)\right\rVert_{(\bLambda^t_{m, h})^{-1}}\cdot c_\beta\cdot dH\cdot\sqrt{2\log\left(\frac{dMTH}{\delta'}\right)}.
\end{align}
For the third term, note that,
\begin{align}
    &\left| \langle \bphi(z), \bv_3 \rangle \right| \\
    &= \left\langle \bphi(z), (\bLambda_{m, h}^t)^{-1}\left\{\sum_{\tau=1}^{U_m}\left[\bphi_\tau[\bbP_h V^t_{m, h+1} - \bbP_h V^\pi_{m, h+1})(z_\tau)]\right] \right\} \right\rangle \\
    &= \left\langle \bphi(z), (\bLambda_{m, h}^t)^{-1}\sum_{\tau=1}^{U_m}\left[\bphi_\tau\bphi^\top_\tau\int(V^t_{m, h+1} -  V^\pi_{m, h+1})(x')d\bmu_h(x')\right]\right\rangle \\
    &= \left\langle \bphi(z), \int(V^t_{m, h+1} -  V^\pi_{m, h+1})(x')d\bmu_h(x')\right\rangle -\lambda\left\langle \bphi(z), (\bLambda_{m, h}^t)^{-1}\int(V^t_{m, h+1} -  V^\pi_{m, h+1})(x')d\bmu_h(x')\right\rangle\\
    &= \bbP_h(V^t_{m, h+1} - V^\pi_{m, h+1})(x, a) -\lambda\left\langle \bphi(z), (\bLambda_{m, h}^t)^{-1}\int(V^t_{m, h+1} -  V^\pi_{m, h+1})(x')d\bmu_h(x')\right\rangle\\
    &= \bbP_h(V^t_{m, h+1} - V^\pi_{m, h+1})(x, a) + 2H\sqrt{d\lambda}\lVert \bphi(z) \rVert_{(\bLambda^t_{m, h})^{-1}}\\
\end{align}
For the fourth and fifth terms, we have that both $[\bbP_{m, h} V^\pi_{m, h+1} - \bbP_{m_\tau, h} V^\pi_{m, h+1})(z_\tau)]$ and $[\bbP_{m, h} V^t_{m, h+1} - \bbP_{m_\tau, h} V^t_{m, h+1})(z_\tau)]$ are bounded by $H\xi$ (from Assumption~\ref{assumption:bellman_small_deviation} and the fact that the value functions are always smaller than $H$). This gives us, by Lemma~\ref{lem:parallel_small_hetero_bias},
\begin{align}
    \left| \langle \bphi(z), \bv_4 + \bv_5 \rangle \right| \leq 2H\xi\sqrt{dMt}\lVert \bphi(z) \rVert_{(\bLambda^t_{m, h})^{-1}}
\end{align}
Putting it all together, we have that since $\langle \bphi(z), \w^t_{m, h} - \w^\pi_{m, h} \rangle = \langle \bphi(z), \bv_1 + \bv_2 + \bv_3 + \bv_4 + \bv_5\rangle$, there exists an absolute constant $C$ independent of $M, T, H, d$ and $c_\beta$, such that, with probability at least $1-\delta'/2$ for all $m \in \cM, t \in [T], h \in [H]$ simultaneously,
\begin{multline}
    \left| \langle \bphi(x, a), \w^t_{m, h} - \w^\pi_{m, h} \rangle \right| \leq  \bbP_{m, h}(V^t_{m, h+1} - V^\pi_{m, h+1})(x, a) + \\  \lVert \bphi(z) \rVert_{(\bLambda^t_{m, h})^{-1}}\left( C\cdot dH\cdot\sqrt{2\log\left((c_\beta+2)\frac{dMTH}{\delta'}\right)} + 2H\sqrt{d\lambda} + 2H\lambda\sqrt{d} +  2H\xi\sqrt{dMT}\right)
\end{multline}
Since $\lambda \leq 1$ and since $C$ is independent of $c_\beta$, we can select $c_\beta$ such that we have the following for any $(x, a) \in \cS \times \cA$ with probability $1-\delta'/2$ simultaneously for all $h \in [H], m \in \cM, t \in [T]$,
\begin{multline}
    \left| \langle \bphi(x, a), \w^t_{m, h} - \w^\pi_{m, h} \rangle \right| \leq \\ \bbP_{m, h}(V^t_{m, h+1} - V^\pi_{m, h+1})(x, a) + c_\beta\cdot dH\cdot\lVert \bphi(z) \rVert_{(\bLambda^t_{m, h})^{-1}} \left(\sqrt{2\log\left(\frac{dMTH}{\delta'}\right)} + 2\xi\sqrt{dMT}\right).
\end{multline}
\end{proof}
We now present an analogous recursive relationship in the small heterogeneity setting.

\begin{lemma}[Recursive Relation in Small Heterogeneous Settings]
Let $\delta^t_{m, h} = V^t_{m, h}(x^t_{m, h}) - V^{\pi_t}_{m, h}(x^t_{m, h})$, and $\xi^t_{m, h+1} = \bbE\left[\delta^t_{m, h} | x^t_{m, h}, a^t_{m, h}\right] - \delta^t_{m, h}$. Then, with probability at least $1-\alpha$, for all $(t, m, h) \in [T] \times \cM \times [H]$ simultaneously,
\begin{align*}
    \delta^t_{m, h} \leq \delta^t_{m, h+1} + \xi^t_{m, h+1} +  c_\beta\cdot dH\cdot\lVert \bphi(x, a) \rVert_{(\bLambda^t_{m, h})^{-1}} \left(\sqrt{2\log\left(\frac{dMTH}{\delta'}\right)} + 2\xi\sqrt{dMT}\right).
\end{align*}
\label{lem:parallel_small_hetero_recursion}
\end{lemma}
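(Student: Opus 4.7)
The plan is to reproduce the argument behind Lemma~\ref{lem:parallel_homo_recursion} verbatim, replacing the homogeneous concentration Lemma~\ref{lem:parallel_homo_weight_delta} by its small-heterogeneity analogue Lemma~\ref{lem:parallel_small_hetero_delta}, which already bakes in the additional $2\xi\sqrt{dMT}$ bias term. The recursion is driven entirely by controlling the one-step estimation gap $Q^t_{m,h}(x^t_{m,h}, a^t_{m,h}) - Q^{\pi_t}_{m,h}(x^t_{m,h}, a^t_{m,h})$, so once we have a pointwise bound on $|\langle \bphi(x,a), \w^t_{m,h} - \w^{\pi_t}_{m,h}\rangle|$ and a matching optimism guarantee, the proof is essentially mechanical.

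Concretely, I would first establish a small-heterogeneity analogue of Lemma~\ref{lem:parallel_homo_ucb}: with probability at least $1-\alpha/2$, $Q^t_{m,h}(x,a) \geq Q^\star_{m,h}(x,a)$ simultaneously for all $(x,a,h,t,m)$. This proceeds by backward induction on $h$ (base case $V^t_{m,H+1} = V^\star_{m,H+1} = 0$), and at the inductive step applies Lemma~\ref{lem:parallel_small_hetero_delta} with $\pi = \pi^\star_m$; the inflated coefficient $\beta^t_{m,h} = c_\beta\,dH\bigl(\sqrt{2\log(dMTH/\delta')} + 2\xi\sqrt{dMT}\bigr)$ is exactly what is needed to dominate the heterogeneity-inflated elliptical deviation from that lemma and close the induction. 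Next, because $\pi_t$ acts greedily with respect to $Q^t_{m,\cdot}$, we have $\delta^t_{m,h} = Q^t_{m,h}(x^t_{m,h}, a^t_{m,h}) - Q^{\pi_t}_{m,h}(x^t_{m,h}, a^t_{m,h})$, and clipping only decreases $Q^t_{m,h}$, so this is bounded above by $\widehat Q^t_{m,h}(x^t_{m,h}, a^t_{m,h}) + \beta^t_{m,h}\sigma^t_{m,h}(x^t_{m,h}, a^t_{m,h}) - Q^{\pi_t}_{m,h}(x^t_{m,h}, a^t_{m,h})$.

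Now apply Lemma~\ref{lem:parallel_small_hetero_delta} at $\pi = \pi_t$ to the first difference: it yields $\widehat Q^t_{m,h}(x^t_{m,h}, a^t_{m,h}) - Q^{\pi_t}_{m,h}(x^t_{m,h}, a^t_{m,h}) \le \bbP_{m,h}(V^t_{m,h+1} - V^{\pi_t}_{m,h+1})(x^t_{m,h}, a^t_{m,h}) + c_\beta dH \sigma^t_{m,h}(x^t_{m,h},a^t_{m,h})\bigl(\sqrt{2\log(dMTH/\delta')} + 2\xi\sqrt{dMT}\bigr)$. The $\beta^t_{m,h}\sigma^t_{m,h}$ contribution from the clipping step is absorbed by enlarging the absolute constant $c_\beta$, since $\beta^t_{m,h}$ has the same scaling as the bonus term. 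Finally, reading the definition of $\xi$ with the standard convention (the statement contains an evident indexing typo inherited from Lemma~\ref{lem:parallel_homo_recursion}; interpret $\xi^t_{m,h+1} = \bbE[\delta^t_{m,h+1} \mid x^t_{m,h}, a^t_{m,h}] - \delta^t_{m,h+1}$, which is what the later martingale-sum step uses), we substitute $\bbP_{m,h}(V^t_{m,h+1} - V^{\pi_t}_{m,h+1})(x^t_{m,h}, a^t_{m,h}) = \delta^t_{m,h+1} + \xi^t_{m,h+1}$ to obtain the claimed recursion.

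The main obstacle is the optimism step: the small-deviation misspecification leaks additional bias of order $H\xi$ per Bellman backup into the value iteration, and this propagates multiplicatively in $h$. The elliptical bound from Lemma~\ref{lem:parallel_small_hetero_delta} is precisely designed so that the single-step extra term is of order $\xi\sqrt{dMT}\cdot\lVert\bphi\rVert_{(\bLambda^t_{m,h})^{-1}}$; I need to verify that choosing $\beta^t_{m,h}$ with the matching $\xi\sqrt{dMT}$ additive scaling is sufficient to keep the UCB inequality invariant under the backward induction. Everything else, including the union bound over $(t,m,h)$ and the bookkeeping for the high-probability event, is identical to the homogeneous argument and imports directly from the proof of Lemma~\ref{lem:parallel_homo_recursion}.
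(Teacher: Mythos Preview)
Your proposal is correct and follows essentially the same route as the paper: apply Lemma~\ref{lem:parallel_small_hetero_delta} at $\pi=\pi_t$ and the realized state-action pair, then substitute the definitions of $\delta^t_{m,h+1}$ and $\xi^t_{m,h+1}$. The only minor divergence is that you fold in the optimism (UCB) argument here, whereas the paper treats that as a separate lemma and invokes it only later in the proof of Theorem~\ref{thm:ind_hetero_small}; for the recursion itself, optimism is not needed since $\delta^t_{m,h} = Q^t_{m,h}(x^t_{m,h}, a^t_{m,h}) - Q^{\pi_t}_{m,h}(x^t_{m,h}, a^t_{m,h})$ follows directly from greediness of $\pi_t$.
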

\begin{proof}
By Lemma~\ref{lem:parallel_small_hetero_delta}, we have that for any $(x, a, h, m, t) \in \cS \times \cA \times [H] \times \cM \times [T]$ with probability at least $1-\alpha/2$,
\begin{multline}
    Q^t_{m, h}(x, a) - Q^{\pi_t}_{m, h}(x, a) = \left| \langle \bphi(x, a), \w^t_{m, h} - \w^\pi_{m, h} \rangle \right| \leq \\ \bbP_{m, h}(V^t_{m, h+1} - V^\pi_{m, h+1})(x, a) + c_\beta\cdot dH\cdot\lVert \bphi(x, a) \rVert_{(\bLambda^t_{m, h})^{-1}} \left(\sqrt{2\log\left(\frac{dMTH}{\delta'}\right)} + 2\xi\sqrt{dMT}\right).
\end{multline}
Replacing the definition of $\delta^t_{m, h}$ and $V^{\pi_t}_{m, h}$ finishes the proof.
\end{proof}

We are now ready to prove Theorem~\ref{thm:ind_hetero_small}. We first restate the Theorem for completeness.\\

\noindent\textbf{Theorem~\ref{thm:ind_hetero_small}}.
Algorithm~\ref{alg:ind_homo} when run on $M$ agents with parameter $S$ in the small deviation setting (Assumption~\ref{assumption:bellman_small_deviation}), with $\beta_t = \cO(H\sqrt{d\log (tMH)} + \xi\sqrt{dMT})$ and $\lambda = 1$ obtains the following cumulative regret after $T$ episodes, with probability at least $1-\alpha$,
\begin{equation*} 
\fR(T) = \ctO\left(d^{3/2}H^2\left(M\sqrt{S} + \sqrt{MT} \right)\left(\sqrt{\log\left(\frac{1}{\alpha}\right)} + 2\xi\sqrt{dMT}\right)\right).
\end{equation*}
\begin{proof}
    We have by the definition of group regret:
    \begin{align}
        &\fR(T)\\
        &= \sum_{m=1}^M\sum_{t=1}^T V^\star_{m, 1}(x^t_{m, 1}) - V^{\pi_t}_{m, 1}(x^t_{m, 1}) \leq \sum_{m=1}^M\sum_{t=1}^T \delta^t_{m, 1} \\
        &\leq \sum_{m=1}^M\sum_{t=1}^T\sum_{h=1}^H \xi^t_{m, h} + 4c_\beta\cdot dH\cdot\left(\sqrt{\log\left(\frac{dMTH}{\alpha}\right)} +  \xi\sqrt{dMT}\right) \left(\sum_{t=1}^T\sum_{m=1}^M\sum_{h=1}^H \left\lVert \bphi(x, a)\right\rVert_{(\bLambda^t_{m, h})^{-1}} \right).
    \end{align}
Where the last inequality holds with probability at least $1-\alpha/2$, via Lemma~\ref{lem:parallel_small_hetero_recursion} and Lemma~\ref{lem:parallel_homo_ucb}. Next, we can bound the first term via Lemma~\ref{lem:parallel_homo_martingale}. We have with probability at least $1-\alpha$, for some absolute constant $c_\beta$,
\begin{multline*}
     \fR(T) \leq \sqrt{2H^3MT\log\left(\frac{2}{\alpha}\right)}\\ + 4c_\beta\cdot dH\cdot\left(\sqrt{\log\left(\frac{dMTH}{\alpha}\right)}+ \xi\sqrt{dMT}\right) \left(\sum_{t=1}^T\sum_{m=1}^M\sum_{h=1}^H \left\lVert \bphi(x, a)\right\rVert_{(\bLambda^t_{m, h})^{-1}} \right).
\end{multline*}
Finally, to bound the summation, we use Lemma~\ref{lem:parallel_homo_variance_sum}. We have that,
\begin{align}
    \sum_{t=1}^T\sum_{m=1}^M\sum_{h=1}^H \left\lVert \bphi(x, a)\right\rVert_{(\bLambda^t_{m, h})^{-1}} &\leq 2\sum_{h=1}^H\left(\log\left(\frac{\det\left( \bLambda^T_h\right)}{\det\left(\lambda\bI_d\right)}\right)\left(\frac{M}{\log 2}\right)\sqrt{S} + 2\sqrt{2MT\log\left(\frac{\det\left( \bLambda^T_h\right)}{\det\left(\lambda\bI_d\right)}\right)}\right) \\
    &\leq 2H\log(dMT)M\sqrt{S} + 2\sqrt{2dMT\log(MT)}.
\end{align}
Where the last inequality is an application of the determinant-trace inequality and using the fact that $\lVert \bphi(\cdot) \rVert_2 \leq 1$. Replacing this result, we have that with probability at least $1-\alpha$,
\begin{multline*}
    \fR(T) \leq \sqrt{2H^3MT\log\left(\frac{2}{\alpha}\right)} \\
    + 4c_\beta\cdot dH^2\cdot\left(\sqrt{\log\left(\frac{dMTH}{\alpha}\right)}+ \xi\sqrt{dMT}\right) \left(2\log(dMT)M\sqrt{S} + 2\sqrt{2dMT\log(MT)}\right)
\end{multline*}
\begin{align*}
    \implies \fR(T)  = \ctO\left(d^{3/2}H^2\left(M\sqrt{S} + \sqrt{MT} \right)\left(\sqrt{\log\left(\frac{1}{\alpha}\right)} + 2\xi\sqrt{dMT}\right)\right).
\end{align*}
\end{proof}

\newpage
\subsection{Proof for Theorem~\ref{thm:ind_hetero_large} (Large Heterogeneity)}
The proof for this section is largely similar to that of Theorem~\ref{thm:ind_homo}, however since we use the modified feature, the analysis differs in several key places. First we introduce the basic result which relates the variance with the coefficient of heterogeneity.

\begin{lemma}[Variance Decomposition] 
\label{lem:parallel_large_hetero_variance_decomp}
Under the heterogeneous parallel MDP assumption (Definition~\ref{def:bellman_hetero}) and coefficient of heterogeneity defined in Definition~\ref{def:coff_heterogeneity}, we have that,
\begin{align*}
    \max_{h \in [H]} \log\det\left(\btLambda^T_h\right) &\leq \left(d+\lambda+ \chi\right)\log(MT).
\end{align*}
\end{lemma}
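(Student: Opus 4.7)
My plan is to exploit the block structure of the augmented feature $\btphi_\tau = [\bphi(x_\tau,a_\tau)^\top, \bkappa(n_\tau)^\top]^\top$. The first $d$ coordinates vary freely with the state–action pair, but the last $k$ coordinates take only $M$ distinct values, namely $\{\bkappa(m) : m \in \cM\}$, and by the definition of the coefficient of heterogeneity these values live in a subspace of dimension at most $\chi$. So although the ambient feature space has dimension $d+k$, the effective dimension of the span of $\{\btphi_\tau\}_\tau$ is at most $d+\chi$.

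Concretely, collect the observed features into a design matrix $\bX = [\bX_1 \mid \bX_2] \in \bbR^{N \times (d+k)}$, where $N \le MT$ is the number of observations for step $h$, $\bX_1$ has rows $\bphi(x_\tau,a_\tau)^\top$, and $\bX_2$ has rows $\bkappa(n_\tau)^\top$. Then
\[
\btLambda^T_h = \bX^\top\bX + \lambda \bI_{d+k},
\]
and since $\mathrm{rank}(\bX_1) \le d$ and $\mathrm{rank}(\bX_2) \le \chi$, we have $\mathrm{rank}(\bX^\top\bX) \le d+\chi$. Consequently at most $d+\chi$ eigenvalues of $\btLambda^T_h$ strictly exceed $\lambda$, while the remaining $d+k-(d+\chi)$ eigenvalues equal $\lambda$ exactly.

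With $\mu_1 \ge \cdots \ge \mu_{d+\chi} \ge 0$ denoting the nonzero eigenvalues of $\bX^\top\bX$, the log-determinant decomposes as
\[
\log\det(\btLambda^T_h) \;=\; \sum_{i=1}^{d+\chi} \log(\mu_i+\lambda) \;+\; (k-\chi)\log\lambda.
\]
I would then use the AM–GM inequality on the first sum together with the trace bound $\sum_i \mu_i = \mathrm{tr}(\bX^\top\bX) = \sum_\tau \lVert \btphi_\tau \rVert_2^2 \le MT$ (using the normalization assumption in Definition~\ref{def:bellman_hetero}) to conclude
\[
\sum_{i=1}^{d+\chi} \log(\mu_i+\lambda) \;\le\; (d+\chi)\log\!\Big(\lambda + \tfrac{MT}{d+\chi}\Big).
\]
Combining with the residual $(k-\chi)\log\lambda$ term and absorbing the $\lambda$-dependent constants into the stated coefficient then yields the claimed bound $(d+\lambda+\chi)\log(MT)$, after taking the maximum over $h$ and noting that the rank argument applies uniformly for each step.

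The main subtlety (and only real obstacle) is the rank argument on $\bX_2$: it relies on identifying the span of $\{\bkappa(m)\}_{m\in\cM}$ with the $\chi$-dimensional subspace induced by the Gram matrix $\bK^\kappa_h$ in Definition~\ref{def:coff_heterogeneity}. Once this identification is in place, the remainder is a standard determinant–trace computation.
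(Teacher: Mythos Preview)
Your argument is correct, but it takes a different route from the paper's own proof. The paper passes to the kernel side via Sylvester's identity, writing $\log\det(\bX^\top\bX + \lambda\bI_{d+k}) = \log\det(\bX\bX^\top + \lambda\bI_{N})$ (using $\lambda=1$), then exploits the additive block decomposition $\bX\bX^\top = \bX_1\bX_1^\top + \bX_2\bX_2^\top$ together with the submodularity inequality $\log\det(A+B+I)\le\log\det(A+I)+\log\det(B+I)$ (Madiman's Theorem~IV) to bound the two pieces separately: the $\bphi$-block contributes $d\log(MT)$ via the usual determinant--trace bound, and the $\bkappa$-block contributes $\chi\log(MT)$ via the rank argument you identify. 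Your approach instead stays on the feature side, uses rank subadditivity $\mathrm{rank}([\bX_1\,|\,\bX_2])\le\mathrm{rank}(\bX_1)+\mathrm{rank}(\bX_2)$ to bound the number of nontrivial eigenvalues of $\bX^\top\bX$ directly by $d+\chi$, and then applies AM--GM once. Your route is arguably more elementary---it avoids invoking Madiman's inequality and the kernel-side flip entirely---and yields the slightly sharper intermediate bound $(d+\chi)\log\bigl(\lambda+\tfrac{MT}{d+\chi}\bigr)+(k-\chi)\log\lambda$, which for $\lambda=1$ already implies the stated result without the extra $\lambda\log(MT)$ term. The identification you flag as the main subtlety (relating $\chi$ to the span of the agent contexts) is exactly the same step the paper uses when asserting $\mathrm{rank}(\btK_h^T)=\mathrm{rank}(\bK^\kappa_h)$, so you are on solid ground there.
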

\begin{proof}
   We know, from the form of $\btLambda^T_h$ that, 
   \begin{align}
       \log\det\left(\btLambda^T_h\right) &= \log\det\left((\btPhi^T_h)^\top(\btPhi^T_h\right) + \lambda\bI_{d+k}) = \log\det\left((\btPhi^T_h)(\btPhi^T_h)^\top + \lambda\bI_{MT}\right).
   \end{align}
   Here, $\btPhi^T_h \in \bbR^{MT\times (d+k)}$ is the matrix of all features $\btphi(x, a, m)$ for step $h$ until episode $T$. Now, observe that the matrix $(\btPhi^T_h)(\btPhi^T_h)^\top$ can be rewritten as the sum of two matrices $(\btPhi^T_h)(\btPhi^T_h)^\top = (\bPhi^T_h)(\bPhi^T_h)^\top + \btK_h^T$, where $[\btK_h^T]_{i, j} = \bnu(m_i)^\top\bnu(m_j), \btK_h^T \in \bbR^{MT \times MT}$, i.e., the corresponding dot-product contribution from the agent-specific features between any pair of transitions, and $(\bPhi^T_h)(\bPhi^T_h)^\top$ refers to the regular (agent-agnostic) features, i.e., $[(\bPhi^T_h)(\bPhi^T_h)^\top]_{i, j} = \bphi_i^\top\bphi_j$. Now, from Theorem IV of~\citet{madiman2008entropy}, we have that,
   \begin{align}
        \log\det\left((\btPhi^T_h)(\btPhi^T_h)^\top + \lambda\bI_{MT}\right) &\leq \log\det\left((\bPhi^T_h)(\bPhi^T_h)^\top + \lambda\bI_{MT}\right) + \log\det\left(\btK_h^T + \lambda\bI_{MT}\right) \\
        &= \log\det\left((\bPhi^T_h)^\top(\bPhi^T_h) + \lambda\bI_{d}\right) + \log\det\left(\btK_h^T + \lambda\bI_{MT}\right) \\
        &\leq d\log(MT) +  \log\det\left(\btK_h^T + \lambda\bI_{MT}\right) \\
        &\leq d\log(MT) +  \lambda\log(MT) + \text{rank}(\btK^T_h)\cdot\log(MT) \\
        &= (d+\lambda)\log(MT) + \text{rank}(\bK^\kappa_h)\log(MT).
   \end{align}
The second inequality follows from $\lVert \bphi(\cdot) \rVert \leq 1$ and then applying an AM-GM inequality followed by the determinant-trace inequality (as is common in bandit analyses). The final equality follows by the fact that since $\btK_h^T$ is $T \times T$ tiles of $\bK^\kappa_h$ followed by permutations, which implies that $\text{rank}(\btK_h^T) = \text{rank}(\bK^\kappa_h)$. Taking the maximum over all $h \in [H]$ and gives us the result.
\end{proof}

We first present a variant of the previous concentration result to bound the least-squares value iteration error (analog of Lemma~\ref{lem:parallel_homo_lsvi}).
\begin{lemma}
Under the setting of Theorem~\ref{thm:ind_hetero_large}, let $c'_\beta$ be the constant defining $\beta$, and $\btS^t_{m, h}$ and $\btLambda^k_t$ be defined as follows.
\begin{multline*}
    \btS^t_{m, h} = \sum_{n=1}^M \sum_{\tau=1}^{k_t}\btphi(n, x^\tau_{n, h}, a^\tau_{n, h}) \left[V^t_{m, h+1}(n, x^\tau_{n, h+1})- (\bbP_{m, h}V^t_{m, h+1})(n, x^\tau_{n, h}, a^\tau_{n, h})\right] \\+ \sum_{\tau = k_t + 1}^{t-1} \btphi(n, x^\tau_{m, h}, a^\tau_{m, h})\left[V^t_{m, h+1}(m, x^\tau_{m, h+1})- (\bbP_{m, h}V^t_{m, h+1})(m, x^\tau_{m, h}, a^\tau_{m, h})\right],
\end{multline*} 
\begin{align*}
    \btLambda^t_{m, h} =  \sum_{n=1}^M \sum_{\tau=1}^{k_t}\btphi(n, x^\tau_{n, h}, a^\tau_{n, h})\btphi(n, x^\tau_{n, h}, a^\tau_{n, h})^\top + \sum_{\tau = k_t + 1}^{t-1} \btphi(n, x^\tau_{m, h}, a^\tau_{m, h})\btphi(n, x^\tau_{m, h}, a^\tau_{m, h})^\top + \lambda\bI_{d+k}.
\end{align*}
Where $V \in \cV$ and $\cN_\epsilon$ denotes the $\epsilon-$covering of the value function space $\cV$. Then, there exists an absolute constant $C$ independent of $M, T, H, d$ and $c'_\beta$, such that, with probability at least $1-\delta'/2$ for all $m \in \cM, t \in [T], h \in [H]$ simultaneously,
\begin{align*}
    \left\lVert \btS^t_{m, h} \right\rVert_{(\btLambda^t_{m, h})^{-1}} &\leq C\cdot (d+k)H\sqrt{2\log\left(\frac{(c'_\beta + 2)(d+k)MTH}{\delta'}\right)}.
\end{align*}
\label{lem:parallel_large_hetero_lsvi}
\end{lemma}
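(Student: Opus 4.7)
The plan is to mirror the proof of Lemma~\ref{lem:parallel_homo_lsvi} but with the enlarged feature map $\btphi \in \bbR^{d+k}$ in place of $\bphi$. The argument decomposes into two steps: a self-normalized martingale bound for a fixed value function, and a covering-number union bound over the value-function class induced by Algorithm~\ref{alg:ind_hetero}.

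First, I would construct the same stochastic-process/filtration machinery as in Lemma~\ref{lem:parallel_homo_lsvi}. For a fixed last-synchronization index $k_t \leq t$, define $\tilde\btphi_i = \btphi(\mu(i),\ x^{\nu(i)}_{\mu(i),h},\ a^{\nu(i)}_{\mu(i),h}) \otimes \bone_{d+k}\{\mu(i)=m \lor \nu(i)\leq k_t\}$, and let $\cF_\tau$ be the $\sigma$-algebra generated by the first $\tau$ such samples. Because the Bellman estimator that defines $V^t_{m,h+1}$ depends only on observations from other agents up to $k_t$ plus agent $m$'s own observations up to $t-1$, the function $V^t_{m,h+1}$ is measurable with respect to $\cF_{\eta(t,m)}$. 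By construction $\sum_\tau \tilde\btphi_\tau\{V^t_{m,h+1}(\tilde x_\tau) - \bbE[V^t_{m,h+1}(\tilde x_\tau)|\cF_{\tau-1}]\} = \btS^t_{m,h}$ and $\lambda\bI_{d+k} + \sum_\tau \tilde\btphi_\tau\tilde\btphi_\tau^\top = \btLambda^t_{m,h}$, so we can invoke the self-normalized martingale bound of Lemma~\ref{lem:self_normalized_single_task} to get, uniformly over $V\in\cV$, a bound involving $\log(\det(\btLambda^t_{m,h})/\det(\lambda\bI_{d+k})) + 2\log(|\cN_\varepsilon|/\delta') + 2M^2t^2\varepsilon^2/(H^2\lambda)$. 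A union bound over $k_t\in[t]$, $t\in[T]$, $m\in\cM$, $h\in[H]$ then yields a bound that holds simultaneously.

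Second, I would bound the determinant term via Lemma~\ref{lem:parallel_large_hetero_variance_decomp}, which gives $\log\det(\btLambda^T_h)\leq (d+\lambda+\chi)\log(MT)$; although this is not strictly needed for the stated concentration result (we may equivalently apply an AM-GM / determinant-trace step using $\lVert\btphi\rVert\leq 1$, giving $\log\det(\btLambda^t_{m,h})/\det(\lambda\bI_{d+k}) \leq (d+k)\log((t+\lambda)/\lambda)$), it is cleaner because the only bound on $\lVert\btphi\rVert_2$ comes from the normalization in Definition~\ref{def:bellman_hetero}. For the covering number, I would prove an analog of Lemma~\ref{lem:covering_ind_homo} in $\bbR^{d+k}$: a direct bound on the weight norm of a value function in $\cV_{\text{UCB}}$ for this algorithm yields $\lVert\w\rVert_2 \leq 2H\sqrt{(d+k)MT/\lambda}$ (by the same argument as Lemma~\ref{lem:bound_homo_algo_weight} but with features in $d+k$ dimensions), and a standard $\varepsilon$-net construction on both the linear weights and the exploration matrix gives $\log|\cN_\varepsilon| \leq (d+k)\log(1 + 8H\sqrt{(d+k)MT/(\lambda\varepsilon^2)}) + (d+k)^2\log(1 + 8(d+k)^{1/2}B^2/(\lambda\varepsilon^2))$, where $B$ upper bounds $\beta^t_{m,h}$.

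Finally, I would choose $\varepsilon^\star = (d+k)H/T$ and $\lambda=1$ and simplify the resulting expression to obtain the absolute-constant form $\lVert\btS^t_{m,h}\rVert_{(\btLambda^t_{m,h})^{-1}} \leq C\cdot(d+k)H\sqrt{2\log((c'_\beta+2)(d+k)MTH/\delta')}$, exactly as in the homogeneous case but with $d\mapsto d+k$. The main obstacle is the self-referential dependence on $c'_\beta$: the bound on $\lVert\w\rVert_2$ inside the covering number depends on $\beta = c'_\beta\cdot H\sqrt{(d+k)\log(tMH)}$, which in turn appears inside the $\log|\cN_\varepsilon|$ that determines $\beta$. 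I resolve this the same way as in Lemma~\ref{lem:parallel_homo_lsvi}: extract an absolute constant $C$ independent of $c'_\beta$, then choose $c'_\beta$ large enough that the inequality closes on itself, following the fixed-point selection from Lemma B.4 of \citet{jin2020provably}.
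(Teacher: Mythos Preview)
Your proposal is correct and follows essentially the same approach as the paper: the paper's proof of this lemma is a single sentence stating that it is identical to Lemma~\ref{lem:parallel_homo_lsvi} with the combined features of dimensionality $(d+k)$ in place of $d$ and an adjusted constant $c'_\beta$. Your write-up is in fact more detailed than the paper's own, spelling out the filtration, the covering-number analog in $\bbR^{d+k}$, and the fixed-point choice of $c'_\beta$ explicitly.
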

\begin{proof}
   The proof is identical to that of Lemma~\ref{lem:parallel_homo_lsvi}, except that we utilize the combined features of dimensionality $(d+k)$, which requires us to select an alternative constant $c'_\beta$ in the bound.
\end{proof}

\begin{lemma}\label{lem:parallel_large_hetero_weight_delta}
There exists an absolute constant $c'_\beta$ such that for $\beta^t_{m, h} = c'_\beta\cdot dH \sqrt{\log(2(d+k)MHT/\delta')}$ for any policy $\pi$, there exists a constant $c_\beta$ such that for each $x \in \cS, a \in \cA$ we have for all $ m \in \cM, t \in[T], h \in [H]$ simultaneously, with probability at least $1-\delta'/2$,
\begin{multline*}
     \left| \langle \btphi(n, x, a), \w^t_{m, h} - \w^\pi_{m, h} \rangle \right| \leq  \bbP_{m, h}(V^t_{m, h+1} - V^\pi_{m, h+1})(n, x, a) \\
     + c_\beta\cdot (d+k)H\cdot\lVert \btphi(n, z) \rVert_{(\btLambda^t_{m, h})^{-1}}\cdot \sqrt{2\log\left(\frac{(d+k)MTH}{\delta'}\right)}.
\end{multline*}
\end{lemma}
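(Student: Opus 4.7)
The plan is to mimic the decomposition used for the homogeneous case (Lemma~\ref{lem:parallel_homo_weight_delta}), but working with the augmented feature map $\btphi$ of dimension $d+k$ and the corresponding regularized Gram matrix $\btLambda^t_{m,h}$. By the heterogeneous linear MDP assumption (Definition~\ref{def:bellman_hetero}) and the Bellman equation, for any policy $\pi$ there exist weights $\w^\pi_{m,h} \in \bbR^{d+k}$ such that $\langle \btphi(n, x, a), \w^\pi_{m,h}\rangle = r_{n, h}(x,a) + \bbP_{n, h}V^\pi_{m,h+1}(n, x, a)$ for all $(n,x,a)$, and moreover $\lVert \w^\pi_{m,h}\rVert \le 2H\sqrt{d+k}$ (this bound is the analog of Lemma~\ref{lem:bound_homo_policy_weight}; it follows from $\max\{\lVert \btmu_h(\cS)\rVert, \lVert \bttheta_h\rVert\} \le \sqrt{d+k}$ together with $|V^\pi_{m,h+1}|\le H$).

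Then I would plug in the closed form of $\w^t_{m,h} = (\btLambda^t_{m,h})^{-1}\sum_\tau \btphi_\tau[r_{n_\tau,h}(x_\tau,a_\tau) + V^t_{m,h+1}(n_\tau, x'_\tau)]$ and add and subtract $\bbP_{n_\tau,h}V^t_{m,h+1}(n_\tau, x_\tau, a_\tau)$ inside the sum. This splits $\w^t_{m,h} - \w^\pi_{m,h}$ into three pieces: (i) a regularization bias $-\lambda(\btLambda^t_{m,h})^{-1}\w^\pi_{m,h}$; (ii) a martingale/noise piece $(\btLambda^t_{m,h})^{-1}\sum_\tau \btphi_\tau[V^t_{m,h+1}(n_\tau, x'_\tau) - \bbP_{n_\tau,h}V^t_{m,h+1}(n_\tau, x_\tau, a_\tau)]$; and (iii) a transition-kernel piece $(\btLambda^t_{m,h})^{-1}\sum_\tau \btphi_\tau\bbP_{n_\tau,h}(V^t_{m,h+1} - V^\pi_{m,h+1})(n_\tau, x_\tau, a_\tau)$. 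Crucially, piece (iii) uses the \emph{agent-specific} kernel $\bbP_{n_\tau,h}$ of the transition that actually occurred; because the heterogeneous linear MDP encodes all agents' kernels as $\langle \btphi(n, x, a), \btmu_h(\cdot)\rangle$, piece (iii) inherits a clean representation by the same $\btmu_h$, so inner products with $\btphi(n, x, a)$ reduce to $\bbP_{n, h}(V^t_{m,h+1} - V^\pi_{m,h+1})(n, x, a)$ plus a $\lambda$-regularization residual, exactly as in the homogeneous case.

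Having set this up, I apply Cauchy--Schwarz in the $\btLambda^t_{m,h}$-norm to each term. Term (i) is bounded by $\lambda\lVert \w^\pi_{m,h}\rVert \lVert \btphi(n, x, a)\rVert_{(\btLambda^t_{m,h})^{-1}} \le 2H\lambda\sqrt{d+k}\lVert \btphi(n, x, a)\rVert_{(\btLambda^t_{m,h})^{-1}}$. Term (ii) is bounded by the self-normalized concentration in Lemma~\ref{lem:parallel_large_hetero_lsvi}, which yields $\lVert\btphi(n, x, a)\rVert_{(\btLambda^t_{m,h})^{-1}} \cdot C(d+k)H\sqrt{2\log((d+k)MTH/\delta')}$ uniformly over $m,t,h$. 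Term (iii) contributes $\bbP_{n,h}(V^t_{m,h+1} - V^\pi_{m,h+1})(n, x, a)$ plus a $2H\sqrt{(d+k)\lambda}\lVert \btphi(n, x, a)\rVert_{(\btLambda^t_{m,h})^{-1}}$ residual from $\lambda(\btLambda^t_{m,h})^{-1}\int(V^t_{m,h+1} - V^\pi_{m,h+1})\mathrm{d}\btmu_h$. Combining and taking $\lambda=1$, the three $\lVert \btphi\rVert_{(\btLambda^t_{m,h})^{-1}}$ residuals can be absorbed by choosing $c_\beta$ (independent of the constant $C$) large enough, yielding the stated bound.

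The main obstacle is bookkeeping around the heterogeneity: one must be careful that in term (iii) each summand uses the correct kernel $\bbP_{n_\tau,h}$ of the agent whose transition generated that tuple (not $\bbP_{m,h}$), and that the representation $\bbP_{n_\tau,h}(\cdot|x,a) = \langle \btphi(n_\tau, x, a), \btmu_h(\cdot)\rangle$ makes this collapse back to a single $\btmu_h$ regardless of $n_\tau$. Once this is in place, the martingale concentration is handled verbatim by Lemma~\ref{lem:parallel_large_hetero_lsvi} and the rest is bookkeeping on $(d+k)$-dimensional covering numbers (which is where the slightly different $c'_\beta$ ultimately gets absorbed into the final $c_\beta$).
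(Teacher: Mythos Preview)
Your proposal is correct and follows essentially the same approach as the paper, which simply states that the proof is identical to Lemma~\ref{lem:parallel_homo_weight_delta} with the substitution of Lemma~\ref{lem:parallel_large_hetero_lsvi} for Lemma~\ref{lem:parallel_homo_lsvi}. You have in fact been more explicit than the paper about the one genuinely new point: that in term (iii) each summand carries the kernel $\bbP_{n_\tau,h}$ of the agent who generated the transition, and that the shared measure $\btmu_h$ in Definition~\ref{def:bellman_hetero} makes all of these collapse to $\langle\btphi(n,x,a),\int(\cdot)\,d\btmu_h\rangle$ regardless of $n_\tau$, so no additional misspecification term appears (in contrast to the small-heterogeneity analysis).
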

\begin{proof}
   The proof for this is identical to Lemma~\ref{lem:parallel_homo_weight_delta}, however we modify the application of Lemma~\ref{lem:parallel_homo_lsvi} with Lemma~\ref{lem:parallel_large_hetero_lsvi} instead.
\end{proof}

\begin{lemma}[UCB in the Heterogeneous Setting]
With probability at least $1-\delta'/2$, we have that for all $(x, a, h, t, m) \in \cS \times \cA \times [H] \times [T] \times \cM$, $Q^t_{m, h}(x, a) \geq Q^\star_{m, h}(x, a)$.
\label{lem:parallel_large_hetero_ucb}
\end{lemma}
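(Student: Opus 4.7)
The plan is to follow the standard optimism-under-uncertainty induction, adapted from Lemma~\ref{lem:parallel_homo_ucb} (the homogeneous version) to the heterogeneous feature map $\btphi$. I will perform backward induction on $h$, with the only non-trivial step being to verify that the exploration bonus $\beta^t_{m,h}\lVert\btphi(n,z)\rVert_{(\btLambda^t_{m,h})^{-1}}$ dominates the estimation error established in Lemma~\ref{lem:parallel_large_hetero_weight_delta} when applied to the agent-specific optimal policy $\pi^\star_m$.

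First, the base case $h=H+1$ is immediate since $Q^t_{m,H+1}\equiv 0 \equiv Q^\star_{m,H+1}$. For the inductive step, assume $Q^t_{m,h+1}(n,x,a)\geq Q^\star_{m,h+1}(n,x,a)$ for all $(n,x,a)$, which in particular yields $V^t_{m,h+1}(n,x)\geq V^\star_{m,h+1}(n,x)=V^{\pi^\star_m}_{m,h+1}(n,x)$. Apply Lemma~\ref{lem:parallel_large_hetero_weight_delta} with $\pi=\pi^\star_m$: on the $(1-\delta'/2)$-probability event where the conclusion of Lemma~\ref{lem:parallel_large_hetero_lsvi} holds, we have for all $(m,t,h)$ and all $(n,x,a)\in\ctZ$,
\begin{equation*}
\bigl|\langle\btphi(n,x,a),\w^t_{m,h}-\w^{\pi^\star_m}_{m,h}\rangle\bigr| \leq \bbP_{m,h}(V^t_{m,h+1}-V^{\pi^\star_m}_{m,h+1})(n,x,a) + c_\beta(d+k)H\,\lVert\btphi(n,x,a)\rVert_{(\btLambda^t_{m,h})^{-1}}\sqrt{2\log\tfrac{(d+k)MTH}{\delta'}}.
\end{equation*}

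Next, I will calibrate $\beta^t_{m,h}$: by the choice $\beta^t_{m,h}=\cO(H\sqrt{(d+k)\log(tMH)})$ from Theorem~\ref{thm:ind_hetero_large}, up to the absolute constant this value upper bounds $c_\beta(d+k)H\sqrt{2\log((d+k)MTH/\delta')}$, so the bonus term in \eqref{eqn:q_ind_hetero}$+\beta^t_{m,h}\sigma^t_{m,h}$ absorbs the concentration error. Combined with the inductive hypothesis $\bbP_{m,h}(V^t_{m,h+1}-V^{\pi^\star_m}_{m,h+1})\geq 0$, this gives
\begin{equation*}
\widehat Q^t_{m,h}(n,x,a)+\beta^t_{m,h}\sigma^t_{m,h}(n,x,a) \;\geq\; \langle\btphi(n,x,a),\w^{\pi^\star_m}_{m,h}\rangle \;=\; Q^{\pi^\star_m}_{m,h}(n,x,a) \;=\; Q^\star_{m,h}(n,x,a).
\end{equation*}

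Finally, the clipping in Eq.~\ref{eqn:q_ind} to $H-h+1$ is harmless since $Q^\star_{m,h}\leq H-h+1$ by boundedness of per-step rewards, so $Q^t_{m,h}(n,x,a)\geq Q^\star_{m,h}(n,x,a)$ for all $(n,x,a)$, closing the induction. The main obstacle, and the only place where the heterogeneous analysis diverges meaningfully from the homogeneous one, is ensuring that $\beta^t_{m,h}$ is large enough in the $(d+k)$-dimensional ambient space while still using the augmented covariance $\btLambda^t_{m,h}$; this is handled by the revised constant $c'_\beta$ in Lemma~\ref{lem:parallel_large_hetero_lsvi} and has already been absorbed into the statement of the $\beta^t_{m,h}$ schedule for Theorem~\ref{thm:ind_hetero_large}.
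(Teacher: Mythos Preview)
Your proposal is correct and follows exactly the approach the paper intends: the paper's own proof is a one-line pointer to Lemma~B.5 of \citet{jin2020provably}, i.e., the standard backward induction using the weight-error bound (here Lemma~\ref{lem:parallel_large_hetero_weight_delta}) together with the inductive nonnegativity of $\bbP_{m,h}(V^t_{m,h+1}-V^\star_{m,h+1})$ and the harmlessness of the $H-h+1$ clipping. Your sketch in fact spells out more detail than the paper does, and correctly identifies that the only change from the homogeneous case is swapping in $\btphi$, $\btLambda^t_{m,h}$, and the $(d+k)$-dependent constant $c'_\beta$.
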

\begin{proof}
The proof is done by induction, identical to the proof in Lemma B.5 of~\citet{jin2020provably}, and we urge the reader to refer to the aforementioned source.
\end{proof}
\begin{lemma}[Recursive Relation in Heterogeneous Settings]
Let $\delta^t_{m, h} = V^t_{m, h}(x^t_{m, h}) - V^{\pi_t}_{m, h}(x^t_{m, h})$, and $\xi^t_{m, h+1} = \bbE\left[\delta^t_{m, h} | x^t_{m, h}, a^t_{m, h}\right] - \delta^t_{m, h}$. Then, with probability at least $1-\alpha$, for all $(t, m, h) \in [T] \times \cM \times [H]$ simultaneously,
\begin{align}
    \delta^t_{m, h} \leq \delta^t_{m, h+1} + \xi^t_{m, h+1} + 2\left\lVert \btphi(m, x^t_{m, h}, a^t_{m, h})\right\rVert_{(\btLambda^t_{m, h})^{-1}}\cdot c'_\beta\cdot (d+k)H\cdot\sqrt{2\log\left(\frac{(d+k)MTH}{\alpha}\right)}.
\end{align}
\label{lem:parallel_large_hetero_recursion}
\end{lemma}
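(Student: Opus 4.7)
The plan is to mirror the proof of Lemma~\ref{lem:parallel_homo_recursion} in the heterogeneous setting, replacing the ingredients by their heterogeneous analogues. The key observation is that once we have the per-step error bound from Lemma~\ref{lem:parallel_large_hetero_weight_delta} and the optimism bound from Lemma~\ref{lem:parallel_large_hetero_ucb}, the recursion is obtained by carefully unrolling the definition of $\delta^t_{m,h}$ through one application of the Bellman equation.

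First, fix $(t, m, h) \in [T] \times \cM \times [H]$. By Lemma~\ref{lem:parallel_large_hetero_weight_delta} applied with policy $\pi = \pi_t$ (the greedy policy at episode $t$), for any $(x, a) \in \cS \times \cA$ we have with probability at least $1-\alpha/2$, simultaneously over all $(t, m, h)$,
\begin{align*}
  Q^t_{m, h}(x, a) - Q^{\pi_t}_{m, h}(m, x, a)
  &= \langle \btphi(m, x, a), \w^t_{m,h} - \w^{\pi_t}_{m,h}\rangle \\
  &\leq \bbP_{m, h}(V^t_{m, h+1} - V^{\pi_t}_{m, h+1})(m, x, a) \\
  &\quad + c'_\beta (d+k)H \cdot \lVert \btphi(m, x, a)\rVert_{(\btLambda^t_{m, h})^{-1}} \sqrt{2\log\tfrac{(d+k)MTH}{\alpha}}.
\end{align*}
This is the heterogeneous analogue of the intermediate inequality in Lemma~\ref{lem:parallel_homo_recursion}; the only change is that the confidence radius now scales with $(d+k)$ instead of $d$, which is exactly where the additional $k$-dependence in the statement originates.

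Next, I evaluate this inequality at $(x, a) = (x^t_{m, h}, a^t_{m, h})$. By the algorithm's greedy rule and the UCB property established in Lemma~\ref{lem:parallel_large_hetero_ucb}, we have $V^t_{m, h}(x^t_{m, h}) = Q^t_{m, h}(x^t_{m, h}, a^t_{m, h})$ and $V^{\pi_t}_{m, h}(x^t_{m, h}) = Q^{\pi_t}_{m, h}(m, x^t_{m, h}, a^t_{m, h})$, so the left-hand side equals $\delta^t_{m, h}$. For the first term on the right, note that
\begin{align*}
  \bbP_{m, h}(V^t_{m, h+1} - V^{\pi_t}_{m, h+1})(m, x^t_{m, h}, a^t_{m, h})
  = \bbE\!\left[ V^t_{m, h+1}(x^t_{m, h+1}) - V^{\pi_t}_{m, h+1}(x^t_{m, h+1}) \,\Big|\, x^t_{m, h}, a^t_{m, h} \right],
\end{align*}
which by definition of $\xi^t_{m, h+1}$ equals $\delta^t_{m, h+1} + \xi^t_{m, h+1}$. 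Substituting this in and using the factor of $2$ to absorb a lower-order $\lambda$-dependent term (as in Lemma~\ref{lem:parallel_homo_recursion}) yields the claimed recursion.

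The main obstacle, should there be one, is ensuring that the uniform-in-$(t, m, h)$ probability bound from Lemma~\ref{lem:parallel_large_hetero_weight_delta} actually carries over to the stochastic trajectory points $(x^t_{m,h}, a^t_{m,h})$; but since that lemma is stated with a uniform guarantee over \emph{all} $(x, a) \in \cS \times \cA$ simultaneously, the instantiation at the realized trajectory is immediate and requires no additional union bound. Everything else is bookkeeping, and the result follows by the same substitution as in the homogeneous case.
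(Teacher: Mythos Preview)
Your proposal is correct and follows essentially the same approach as the paper's own proof: apply Lemma~\ref{lem:parallel_large_hetero_weight_delta} with $\pi=\pi_t$, evaluate at the realized state-action pair, and substitute the definitions of $\delta^t_{m,h}$ and $\xi^t_{m,h+1}$. Your write-up is in fact more detailed than the paper's (which simply says ``replacing the definition of $\delta^t_{m,h}$ and $V^{\pi_t}_{m,h}$ finishes the proof''), and your explanation for the factor~$2$ (the UCB bonus added to $\widehat Q$ combines with the error bound from the lemma) is the correct mechanism.
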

\begin{proof}
By Lemma~\ref{lem:parallel_large_hetero_weight_delta}, we have that for any $(x, a, h, m, t) \in \cS \times \cA \times [H] \times \cM \times [T]$ with probability at least $1-\alpha/2$,
\begin{multline}
    Q^t_{m, h}(x, a) - Q^{\pi_t}_{m, h}(x, a) \leq \bbP_{m, h}(V^t_{m, h+1} - V^{\pi_t}_{m, h})(x, a)\\ + 2\left\lVert \btphi(x, a)\right\rVert_{(\btLambda^t_{m, h})^{-1}}\cdot c'_\beta\cdot (d+k)H\cdot\sqrt{2\log\left(\frac{(d+k)MTH}{\alpha}\right)}.
\end{multline}
Replacing the definition of $\delta^t_{m, h}$ and $V^{\pi_t}_{m, h}$ finishes the proof.
\end{proof}
We are now ready to prove Theorem~\ref{thm:ind_hetero_large}. We first restate the Theorem for completeness.\\

\noindent\textbf{Theorem~\ref{thm:ind_hetero_large}}.
Algorithm~\ref{alg:ind_hetero} when run on $M$ agents with parameter $S$ in the heterogeneous setting (Definition~\ref{def:bellman_hetero}), with $\beta_t = \cO(H\sqrt{(d+k)\log (tMH)})$ and $\lambda = 1$ obtains the following cumulative regret after $T$ episodes, with probability at least $1-\alpha$,
\begin{align*}
\fR(T)=\ctO\left((d+k)H^2\left(M(d +\chi)\sqrt{S} + \sqrt{(d +\chi)MT} \right)\sqrt{\log\left(\frac{1}{\alpha}\right)} \right).
\end{align*}
\begin{proof}
    We have by the definition of group regret:
    \begin{align}
        &\fR(T)\\ &= \sum_{m=1}^M\sum_{t=1}^T V^\star_{m, 1}(x^t_{m, 1}) - V^{\pi_t}_{m, 1}(x^t_{m, 1}) \leq \sum_{m=1}^M\sum_{t=1}^T \delta^t_{m, 1} \\
        &\leq \sum_{m=1}^M\sum_{t=1}^T\sum_{h=1}^H \xi^t_{m, h} + 2c'_\beta\cdot (d+k)H\cdot\sqrt{2\log\left(\frac{(d+k)MTH}{\alpha}\right)} \left(\sum_{t=1}^T\sum_{m=1}^M\sum_{h=1}^H \left\lVert \btphi(m, x, a)\right\rVert_{(\btLambda^t_{m, h})^{-1}} \right).
    \end{align}
Where the last inequality holds with probability at least $1-\alpha/2$, via Lemma~\ref{lem:parallel_large_hetero_recursion} and Lemma~\ref{lem:parallel_large_hetero_ucb}. Next, we can bound the first term via Lemma~\ref{lem:parallel_homo_martingale}. We have with probability at least $1-\alpha$, for some absolute constant $c'_\beta$,
\begin{multline}
     \fR(T) \leq \sqrt{2H^3MT\log\left(\frac{2}{\alpha}\right)} \\ + 2c'_\beta\cdot (d+k)H\cdot\sqrt{2\log\left(\frac{(d+k)MTH}{\alpha}\right)} \left(\sum_{t=1}^T\sum_{m=1}^M\sum_{h=1}^H \left\lVert \btphi(m, x, a)\right\rVert_{(\btLambda^t_{m, h})^{-1}} \right).
\end{multline}
Finally, to bound the summation, we use Lemma~\ref{lem:parallel_homo_variance_sum}. We have that,
\begin{align}
    &\sum_{t=1}^T\sum_{m=1}^M\sum_{h=1}^H \left\lVert \btphi(x, a)\right\rVert_{(\btLambda^t_{m, h})^{-1}} \\
    &\leq 2\sum_{h=1}^H\left(\log\left(\frac{\det\left( \btLambda^T_h\right)}{\det\left(\lambda\bI_d\right)}\right)\left(\frac{M}{\log 2}\right)\sqrt{S} + 2\sqrt{2MT\log\left(\frac{\det\left( \btLambda^T_h\right)}{\det\left(\lambda\bI_d\right)}\right)}\right) \\
    &\leq 2H(d+\chi)\log(MT)M\sqrt{S} + 2H\sqrt{2(d+\chi)MT\log(MT)}.
\end{align}
Where the last inequality is an application of the variance decomposition (Lemma~\ref{lem:parallel_large_hetero_variance_decomp}) and using the fact that $\lVert \bphi(\cdot) \rVert_2 \leq 1$. Replacing this result, we have that with probability at least $1-\alpha$,
\begin{multline}
    \fR(T) \leq \sqrt{2H^3MT\log\left(\frac{2}{\alpha}\right)} \\
    + 2c'_\beta\cdot (d+k)H^2\cdot\sqrt{2\log\left(\frac{dMTH}{\alpha}\right)} \left(2\log(MT)M(d +\chi)\sqrt{\cdot S} + 2\sqrt{2(d +\chi)MT\log(MT)}\right).
\end{multline}
\begin{align}
    \implies  \fR(T) = \ctO\left((d+k)H^2\left(M(d +\chi)\sqrt{S} + \sqrt{(d +\chi)MT} \right)\sqrt{\log\left(\frac{1}{\alpha}\right)} \right).
\end{align}
\end{proof}
\newpage
\section{Multiagent MDP Proofs}
\subsection{Proof of Proposition~\ref{prop:mg_bellman_optimal}}
We restate the Proposition for clarity.

\noindent\textbf{Proposition~\ref{prop:mg_bellman_optimal}.} For the scalarized value function given in Equation~\ref{eqn:v_scalarized}, the Bellman optimality conditions are given as, for all $h \in [H], \x \in \cS, \ba \in \cA$ for any fixed $\bupsilon \in \bUpsilon$,
\begin{align*}
    Q^\star_{\bupsilon, h}(\x, \ba) = \fs_\bupsilon\br_h(\x, \ba) + \bbP_h V^\star_{\bupsilon, h}(\x, \ba), V^\star_{\bupsilon, h}(\x) = \max_{\ba \in \cA} Q^\star_{\bupsilon, h}(\x, \ba)\text{, and } V^\star_{\bupsilon, H+1}(\x) = 0.
\end{align*}
\begin{proof}
We prove the above result by reducing the scalarized MMDP to an equivalent MDP. Observe that for any fixed $\bupsilon \in \bUpsilon$, the (vector-valued) rewards can be scalarized to a scalar reward. For any step $h \in [H]$, for any \textit{fixed} $\bupsilon \in \bUpsilon$, consider the MDP with state space $\cS = \cS_1 \times ... \times \cS_M$, action space $\cA = \cA_1 \times ... \times \cA_M$ and reward function $r'_h$ such that for all $(\x, \ba) \in \cS \times \cA, r'_h(\x, \ba) = \bupsilon^\top\br_h(\x, \ba)$. Therefore $r'_h(\x, \ba) \in [0, 1]$ (since $\br_h$ lies on the $M-$dimensional simplex). Therefore, if the group of agents cooperate to optimize the scalarized reward (for any \textit{fixed} scalarization parameter), the optimal (joint) policy coincides with the optimal policy for the aforementioned MDP defined over the \textit{joint} state and action spaces. The optimal policy for the scalarized MDP is given by the greedy policy with respect to the following parameters:
\begin{align}
    Q^\star_{\bupsilon, h}(\x, \ba) = r'_h(\x, \ba) + \bbP_h V^\star_{\bupsilon, h}(\x, \ba), V^\star_{\bupsilon, h}(\x) = \max_{\ba \in \cA} Q^\star_{\bupsilon, h}(\x, \ba)\text{, and } V^\star_{\bupsilon, H+1}(\x) = 0.
\end{align}
Replacing the reward function with the vector-valued reward in terms of $\bupsilon$ provides us the result.
\end{proof}

\subsection{Proof of Proposition~\ref{prop:mg_pareto_frontier}}

We first restate the Proposition for clarity.

\noindent\textbf{Proposition~\ref{prop:mg_pareto_frontier}}. For any parameter $\bupsilon \in \bUpsilon$, the optimal greedy policy $\bpi^\star_\bupsilon$ with respect to the scalarized $Q$-value that satisfies Proposition~\ref{prop:mg_bellman_optimal} lies in the Pareto frontier $\bPi^\star$.

\begin{proof}
The proof proceeds by contradiction. Assume that $\bpi^\star_\bupsilon$ does not lie in the Pareto frontier, then there exists a policy $\bpi' \in \bPi$ such that $\bV^{\bpi'}_1(\x) \succeq \bV^{\bpi^\star_\bupsilon}_1(\x)$ for all $\x \in \cS$ and $\bpi \neq \bpi^\star_\bupsilon$. Consider the final step $H$. Then, for any state $\x \in \cS$, we have that if $\bV^{\bpi'}_H(\x) \succeq \bV^{\bpi^\star_\bupsilon}_H(\x)$, then,
\begin{align}
    \br_H(\x, \bpi'(\x)) \succeq \br_H(\x, \bpi^\star_\bupsilon(\x)) \implies \fs_\bupsilon\br_H(\x, \bpi'(\x)) \geq \fs_\bupsilon\br_H(\x,  \bpi^\star_\bupsilon(\x)).
\end{align}
However, this is only true with equality if $\bpi'(\x) = \bpi^\star_\bupsilon(\x)$ for all $\x \in \cS$, as for any $\x \in \cS$, $\bpi^\star_{\bupsilon, H}(\x) = \argmax[ \fs_\bupsilon\br_H(\x, \ba)]\geq \fs_\bupsilon\br_H(\x, \ba')$ for any other $\ba' \in \cA$. Therefore, we have that $\bpi'_H(\x) = \bpi^\star_{\bupsilon, H}(\x)$ for each $\x \in \cS$, and that $\bV^{\bpi'}_H(\x) = \bV^{\bpi^\star_\bupsilon}_H(\x)$. This implies that $\bbP_{H}\bV^{\bpi'}_H(\x, \ba) = \bbP_{H}\bV^{\bpi^\star_\bupsilon}_H(\x, \ba)$ for all $\x \in \cS$ and $\ba \in \cA$. Now, if $\bV^{\bpi'}_{H-1}(\x) \succeq \bV^{\bpi^\star_\bupsilon}_{H-1}(\x)$, then we have that,
{
\small
\begin{align}
    &\br_{H-1}(\x, \bpi'_{H-1}(\x)) + \bbE_{\x' \sim \bbP_H(\cdot | \x, \bpi'_{H-1}(\x))}\left[ \bV^{\bpi'}_H(\x')\right] \succeq \br_{H-1}(\x, \bpi^\star_\bupsilon(\x)) + \bbP_{H}\bV^{\bpi^\star_\bupsilon}_H(\x, \bpi^\star_\bupsilon(\x)) \\
    &\implies \br_{H-1}(\x, \bpi'_{H-1}(\x)) + \bbE_{\x' \sim \bbP_H(\cdot | \x, \bpi'_{H-1}(\x))}\left[ \bV^{\bpi^\star_\bupsilon}_H(\x')\right] \succeq \br_{H-1}(\x, \bpi^\star_\bupsilon(\x)) + \bbP_{H}\bV^{\bpi^\star_\bupsilon}_H(\x, \bpi^\star_\bupsilon(\x)) \\
    &\implies \fs_\bupsilon\left(\br_{H-1}(\x, \bpi'_{H-1}(\x)) + \bbE_{\x' \sim \bbP_H(\cdot | \x, \bpi'_{H-1}(\x))}\left[ \bV^{\bpi^\star_\bupsilon}_H(\x')\right]\right) \geq \fs_\bupsilon\left(\br_{H-1}(\x, \bpi^\star_\bupsilon(\x)) + \bbP_{H}\bV^{\bpi^\star_\bupsilon}_H(\x, \bpi^\star_\bupsilon(\x))\right) \\
    &\implies \fs_\bupsilon\br_{H-1}(\x, \bpi'_{H-1}(\x)) + \bbE_{\x' \sim \bbP_H(\cdot | \x, \bpi'_{H-1}(\x))}\left[ \bV^{\bpi^\star_\bupsilon}_H(\x')\right] \geq \fs_\bupsilon\br_{H-1}(\x, \bpi^\star_\bupsilon(\x)) + \bbP_{H}\bV^{\bpi^\star_\bupsilon}_H(\x, \bpi^\star_\bupsilon(\x)).
\end{align}
}%
This is true only if $\bpi'_{H-1}(\x) = \bpi^\star_{\bupsilon, H}(\x)$ for each $\x \in \cS$, as $\bpi^\star_{\bupsilon, H}$ is the greedy policy with respect to $\fs_\bupsilon\br_{H-1}(\x, \ba) + \bbP_{H}\bV^{\bpi^\star_{\bupsilon, H}}_H(\x, \ba)$. Continuing this argument inductively for $h = H-2, H-3, ..., 1$ we obtain that $\bV^{\bpi'}_1(\x) \succeq \bV^{\bpi^\star_\bupsilon}_1(\x)$ for each $\x \in \cS$ only if $\bpi' = \bpi^\star_\bupsilon$. This is a contradiction as we assumed that $\bpi' \neq \bpi^\star_\bupsilon$, and hence $\bpi^\star_\bupsilon$ lies in $\bPi^\star$.
\end{proof}

\subsection{Proof of Proposition~\ref{prop:bayes_regret_bound}}
We first restate Proposition~\ref{prop:bayes_regret_bound} for clarity.

\noindent\textbf{Proposition~\ref{prop:bayes_regret_bound}}.
For any scalarization $\fs$ that is Lipschitz and bounded $\bUpsilon$, we have that $\fR_B(T) \leq \frac{1}{T}\fR_C(T) + o(1).$

\begin{proof}
We will follow the approach in~\cite{paria2020flexible} (Appendix B.3). Recall that $\bUpsilon$ is a bounded subset of $\bbR^M$. Now, we have that since $\fs_\bupsilon(\cdot) = \bupsilon^\top(\cdot)$, we have that $\fs_\bupsilon$ is Lipschitz with constant $M$ with respect to the $\ell_1-$norm, i.e., for any $\y \in \bbR^M$,
\begin{align}
    |\fs_\bupsilon(\y) - \fs_{\bupsilon'}(\y)| \leq M\lVert \bupsilon - \bupsilon' \rVert_1.
\end{align}
Now, consider the Wasserstein distance conditioned on the history $\cH$ between the sampling distribution $p_\bUpsilon$ on $\bUpsilon$ and the empirical distribution $\widehat p_\bUpsilon$ corresponding to $\{\bupsilon_t\}_{t=1}^T$,
\begin{align}
    W_1(p_\bUpsilon, \widehat p_\bUpsilon) = \inf_{q}\left\{ \bbE_q\lVert X-Y\rVert_1, q(X) = p_\bUpsilon, q(Y) = \widehat p_\bUpsilon\right\},
\end{align}
where $q$ is a joint distribution on the RVs $X, Y$ with marginal distributions equal to $p_\bUpsilon$ and $\widehat p_\bUpsilon$. We therefore have for some randomly drawn samples $\bupsilon_1, \bupsilon_2, ..., \bupsilon_T$ and for any arbitrary sequence of (joint) policies $\widehat\bPi_T = \{\bpi_1, ..., \bpi_T\}$, for any state $\x \in \cS$,
\begin{align}
    \frac{1}{T}\sum_{t=1}^T \max_{\x \in \cS}\left[V^{\bpi_t}_{\bupsilon_t, 1}(\x) - \bbE_{\bupsilon \in \bUpsilon}\left[\max_{\bpi \in \widehat\bPi_T} V^{\bpi}_{\bupsilon, 1}(\x)\right]\right] &\leq \frac{1}{T}\sum_{t=1}^T \max_{\x \in \cS}\left[V^{\bpi_t}_{\bupsilon_t, 1}(\x) - \bbE_{\bupsilon \in \bUpsilon}\left[\max_{\bpi \in \widehat\bPi_T} V^{\bpi}_{\bupsilon, 1}(\x)\right]\right] \\
    &\leq \bbE_{q(X, Y)}\left[\max_{\x \in \cS}\left[\max_{\bpi \in \widehat\bPi_T}  V^{\bpi}_{X, 1}(\x) - \max_{\bpi \in \widehat\bPi_T} V^{\bpi}_{Y, 1}(\x)\right]\right] \\
    &\leq \bbE_{q(X, Y)}\left[M\lVert X - Y \rVert_1\right].
\end{align}
Taking an expectation with respect to $\cH = \{\bupsilon_1, .., \bupsilon_T\}$, we have,
\begin{align}
   &\fR_B(T) - \frac{1}{T}\fR_C(T)\\  &= \bbE_{\bupsilon \in \bUpsilon} \left[\max_{\x \in \cS}\left[V^{\star}_{\bupsilon, 1}(\x) - \max_{\bpi \in \widehat\bPi_T} V^{\bpi}_{\bupsilon, 1}(\x)\right]\right] - \bbE_{\cH}\left[\frac{1}{T}\sum_{t=1}^T \max_{\x \in \cS}\left[V^{\star}_{\bupsilon_t, 1}(\x) - V^{\bpi_t}_{\bupsilon_t, 1}(\x)\right]\right] \\
   &= \bbE_{\cH}\left[\frac{1}{T}\sum_{t=1}^T \max_{\x \in \cS}\left[V^{\star}_{\bupsilon_t, 1}(\x) - \max_{\bpi \in \widehat\bPi_T} V^{\bpi}_{\bupsilon_t, 1}(\x)\right]\right] - \bbE_{\cH}\left[\frac{1}{T}\sum_{t=1}^T \max_{\x \in \cS}\left[V^{\star}_{\bupsilon_t, 1}(\x) - V^{\bpi_t}_{\bupsilon_t, 1}(\x)\right]\right] \\
   &\leq \bbE_{\cH}\left[\frac{1}{T}\sum_{t=1}^T \max_{\x \in \cS}\left[V^{\bpi_t}_{\bupsilon_t, 1}(\x) - \bbE_{\bupsilon \in \bUpsilon}\left[\max_{\bpi \in \widehat\bPi_T} V^{\bpi}_{\bupsilon, 1}(\x)\right]\right]\right]\\
   &\leq M\bbE_{q(X, Y)}\left[\lVert X - Y \rVert_1\right].
\end{align}
The penultimate inequality follows from $\max$ being a contraction mapping in bounded domains, and the final inequality follows from the previous analysis. To complete the proof, we first take an infimum over $q$ and observe that the subsequent RHS converges at a rate of $\ctO(T^{-1/M})$ under mild regulatory conditions, as shown by~\citet{canas2012learning}.
\end{proof}

\subsection{Proof of Theorem~\ref{thm:ind_mg}}
The proof in the multiagent MDP setting is similar to that of the parallel MDP setting. There are several differences: first, in each episode, since we sample a scalarization parameter $\bupsilon_t$ from $\bUpsilon_T$, we would like to derive concentration results \textit{independent} of the scalarization parameter. We do this by utilizing \textit{vector-valued} concentration results and utilizing the monotonicity of the scalarized $Q-$value. We first present a vector-valued concentration result.
\begin{lemma}
\label{lem:mg_lsvi}
For any $m \in [M], h \in [H]$ and $t \in [T]$, let $k_t$ denote the episode after which the last global synchronization has taken place, and $\bS^h_t$ and $\bLambda^h_t$ be defined as follows.
\begin{align*}
    \bS^h_{\bupsilon, t} = \sum_{\tau=1}^{k_t}\bPhi(\x^\tau_{h}, \ba^\tau_{h}) \left[\bv^t_{\bupsilon, h+1}(\x^\tau_{h+1})- (\bbP_h\bv^t_{\bupsilon, h+1})(\x^\tau_{h}, \ba^\tau_{h})\right],\ \bLambda^h_t = \lambda\bI_d + (\bPhi^{k_t}_h)^\top(\bPhi^{k_t}_h).
\end{align*} 
Where $\bv^t_{\bupsilon, h+1}(\x) = {\bm 1}_M \cdot V^t_{\bupsilon, h+1}(\x) \ \forall\ \x \in \cS$, ${\bm 1}_M$ denotes the all-ones vector in $\bbR^M$, and $C_\beta$ is the constant such that $\beta^t_h = C_\beta\cdot dH\sqrt{\log(TMH)}$. Then, there exists a constant $C$ such that with probability at least $1-\delta$,
\begin{align*}
    \sup_{\bupsilon \in \bUpsilon}\left\lVert \bS^t_{\bupsilon, h} \right\rVert_{(\bLambda^t_{ h})^{-1}} &\leq C\cdot dH\sqrt{2\log\left(\frac{(C_\beta + 2)dMTH}{\delta'}\right)}.
\end{align*}
\end{lemma}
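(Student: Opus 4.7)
The proof follows the scaffolding of Lemma~\ref{lem:parallel_homo_lsvi}, with two additional ingredients unique to the multi-agent setting: (i) handling the vector-valued structure of $\bv^t_{\bupsilon, h+1}$, and (ii) ensuring the bound holds uniformly over the entire scalarization simplex $\bUpsilon$. I would first fix a deterministic $k_t \leq t$ and construct a filtration $\{\cF_\tau\}_{\tau \geq 0}$ adapted to the joint-agent observations up to episode $k_t$, such that $V^t_{\bupsilon, h+1}$ is $\cF_{k_t}$-measurable for every $\bupsilon \in \bUpsilon$ (the value function only depends on observations up to the last synchronization, by the rare-switching protocol). The randomness of the true $k_t$ is absorbed by a union bound over $k_t \in [t]$ at the end, exactly as in Step~1 of Lemma~\ref{lem:parallel_homo_lsvi}.

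Because $\bv^t_{\bupsilon, h+1}(\x) = \bone_M V^t_{\bupsilon, h+1}(\x)$ is collinear with $\bone_M$, the noise term reduces to a scalar-weighted sum,
\[
\bS^t_{\bupsilon, h} = \sum_{\tau=1}^{k_t} \bu_\tau \, \varepsilon_\tau(\bupsilon), \qquad \bu_\tau \triangleq \bPhi(\bz^\tau_h)\bone_M \in \bbR^d,
\]
with $\varepsilon_\tau(\bupsilon) \in [-H, H]$ a scalar martingale difference w.r.t.\ $\cF_\tau$. Using the PSD inequality $\bone_M\bone_M^\top \preceq M \bI_M$ one obtains $\sum_\tau \bu_\tau\bu_\tau^\top \preceq M\left(\bLambda^h_t - \lambda\bI\right)$, hence $\lVert\bS^t_{\bupsilon,h}\rVert_{(\bLambda^h_t)^{-1}} \leq \sqrt{M}\,\lVert\bS^t_{\bupsilon,h}\rVert_{(\lambda\bI + \sum_\tau \bu_\tau\bu_\tau^\top)^{-1}}$, and the $\sqrt{M}$ factor is absorbed into the final absolute constant $C$; alternatively, one can invoke the vector-valued self-normalized bound of \citet{chowdhury2020no} directly on the $\bbR^M$-valued $\bv^t_{\bupsilon,h+1}$ with noise norm $\sqrt{M}H$, which yields the same rate. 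Either route reduces the problem to a standard scalar self-normalized tail bound (analogous to Lemma~\ref{lem:self_normalized_single_task}) applied to a \emph{fixed} pair $(V, \bupsilon)$.

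To strip the dependence on $(V, \bupsilon)$, I would use a double covering argument. The scalarized UCB value-function class is parameterized by (a) a bounded weight $\w \in \bbR^d$ with $\lVert\w\rVert_2 \leq 2H\sqrt{d}$ (from Lemma~\ref{lem:bound_mg_policy_weight_main_paper}), (b) a bonus parameter $\beta^t_h \leq C_\beta\cdot dH\sqrt{\log(TMH)}$, and (c) the scalarization $\bupsilon \in \bUpsilon \subseteq \Delta^M$. The first two contribute a log-covering number of $\ctO(d^2)$ exactly as in Step~2 of Lemma~\ref{lem:parallel_homo_lsvi}. For the third, since $V^t_{\bupsilon,h+1}(\x) = \max_{\ba}\,\bupsilon^\top \widehat{\bQ}_{t, h+1}(\x,\ba)$ plus a $\bupsilon$-independent bonus, the map $\bupsilon \mapsto V^t_{\bupsilon,h+1}(\x)$ is a supremum of affine functions of $\bupsilon$ and hence is Lipschitz in $\bupsilon$ with constant $\lesssim H\sqrt{M}$; an $\varepsilon$-net over the simplex of cardinality $(1/\varepsilon)^{O(M)}$ then suffices to uniformize over $\bUpsilon$. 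Combining both covers with a union bound over $k_t \in [t]$, $t \in [T]$, $h \in [H]$, and choosing $\varepsilon \asymp dH/T$ as in the parallel analysis yields the claimed bound with an appropriate absolute constant $C$.

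The principal obstacle is verifying that the $(1/\varepsilon)^{O(M)}$ factor from covering $\bUpsilon$ contributes only inside the $\sqrt{\log(\cdot)}$ term and does not introduce polynomial dependence on $M$ in front of $dH$. This is ensured by the Lipschitz argument together with a polynomially small $\varepsilon$, so that $M$ enters only through the inner logarithm, which is consistent with the $\log(dMTH/\delta')$ factor in the stated bound. After balancing, the final absolute constant $C$ is selected by a fixed-point argument identical to that at the end of Lemma~\ref{lem:parallel_homo_lsvi}.
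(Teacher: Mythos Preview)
Your approach diverges from the paper's in two places, and in both the claimed $M$-independence of the constant $C$ does not survive.

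First, the scalar reduction via $\bu_\tau = \bPhi(\bz^\tau_h)\bone_M$ gives $\lVert\bS\rVert_{(\bLambda^h_t)^{-1}} \leq \sqrt{M}\,\lVert\bS\rVert_{(\lambda\bI+\sum_\tau\bu_\tau\bu_\tau^\top)^{-1}}$, and that $\sqrt{M}$ sits \emph{outside} the logarithm; it cannot be absorbed into an absolute constant. Your alternative route has the same defect: taking the noise parameter to be $\sqrt{M}H$ in the vector-valued self-normalized bound multiplies the right-hand side by $\sqrt{M}$ as well. The paper instead applies the multi-task self-normalized inequality (Lemma~\ref{lem:self_normalized_martingale_multi_task}) directly to $\sum_\tau \bPhi(\bz^\tau_h)\bepsilon^t_{\tau,h}$ with sub-Gaussian parameter $\sigma=H$, so no polynomial $M$-factor appears in front.

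Second, and more importantly, your separate $\varepsilon$-net over $\bUpsilon\subseteq\Delta^M$ contributes $M\log(1/\varepsilon)$ to $\log|\cN_\varepsilon|$; after the square root this is $\sqrt{M\log(1/\varepsilon)}$, not $\sqrt{\log(M\cdots)}$. Your claim that ``$M$ enters only through the inner logarithm'' is therefore incorrect. The paper avoids covering $\bUpsilon$ altogether: it works with the distance $\text{dist}_\bUpsilon(\bv,\bv')=\sup_{\x\in\cS,\bupsilon\in\bUpsilon}\lVert\bv_\bupsilon(\x)-\bv'_\bupsilon(\x)\rVert_1$ and shows (Lemma~\ref{lem:covering_mg}) that this is controlled by $\lVert\w_1-\w_2\rVert_2+\lVert\bA_1-\bA_2\rVert_F$ alone, because $|\bupsilon^\top\cdot|\leq|\cdot|_1$ uniformly over the simplex. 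Hence a single cover of $(\w,\bA)\in\bbR^d\times\bbR^{d\times d}$ works for all $\bupsilon$ simultaneously, giving $\log|\cN_\varepsilon|=\cO(d^2\log(\cdot))$ with $M$ appearing only inside the logarithm. This is the idea your argument is missing. (Minor points: the relevant weight bound is Lemma~\ref{lem:bound_mg_algo_weights}, not Lemma~\ref{lem:bound_mg_policy_weight_main_paper}; and in the MMDP setting no union bound over $k_t$ is taken, since all agents share identical data up to the last synchronization.)
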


\begin{proof}The proof is done in two steps. The first step is to bound the deviations in $\bS$ for any fixed function $V$ by a martingale concentration. The second step is to bound the resulting concentration over all functions $V$ by a covering argument. Finally, we select appropriate constants to provide the form of the result required.

\textbf{\underline{Step 1}}.
Recall that $\bS^t_{\bupsilon, h} = \sum_{\tau=1}^{k_t} \bPhi(\bz^\tau_h)[V^t_{\bupsilon, h+1}(\x^\tau_{h+1}) -(\bbP_h V^t_{\bupsilon, h+1})(\bz^\tau_h)]$, where $\bv^t_{\bupsilon, h+1}$ is the vector with each entry being $V^t_{\bupsilon, h+1}$. We have that $V^t_{\bupsilon, h+1}(\x^\tau_{h+1}) -(\bbP_h V^t_{\bupsilon, h+1})(\bz^\tau_h) = \bv^t_{\bupsilon, h+1} - \bbP_h \bv^t_{\bupsilon, h+1}$. Consider the following distance metric $\text{dist}_\bUpsilon$,
\begin{align}
    \text{dist}_\bUpsilon(\bv, \bv') &= \sup_{\x \in \cS, \bupsilon \in \bUpsilon} \left\lVert \bv(\x) - \bv(\x')\right\rVert_1
    .
\end{align}Let $\cV_{\bUpsilon}$ be the family of all vector-valued UCB value functions that can be produced by the algorithm, and now let $\cN_\epsilon$ be an $\epsilon-$covering of $\cV_{\bUpsilon}$ under $\text{dist}_\bUpsilon$, i.e., for every $\bv \in \cV_\bUpsilon$, there exists $\bv' \in \cN_\epsilon$ such that $\text{dist}_\bUpsilon(\bv, \bv') \leq \epsilon$. Now, here again, we adopt a similar strategy as the independent case. To bound the RHS, we decompose $\bS^t_{\bupsilon, h}$ in terms of the covering described earlier. We know that since $\cN_\epsilon$ is an $\epsilon-$covering of $\cV_{\bUpsilon}$, there exists a $\bv' \in \cN_\epsilon$ and $\bDelta = \bv^t_{\bupsilon, h+1} - \bv'$ such that,
\begin{align}
    \bS^t_{\bupsilon, h} &= \sum_{\tau=1}^{k_t} \bPhi(\bz^\tau_h)\left[\bv'(\x^\tau_{h+1}) - \bbP_h \bv'(\bz^\tau_h)\right] + \sum_{\tau=1}^{k_t} \bPhi(\bz^\tau_h)\left[\bDelta(\x^\tau_{h+1}) - \bbP_h \bDelta(\bz^\tau_h)\right].
\end{align}
Now, observe that by the definition of the covering, we have that $\lVert \bDelta \rVert_1 \leq \epsilon$. Therefore, we have that $\left\lVert \bDelta(\x) \right\rVert_{(\bLambda^t_h)^{-1}} \leq \epsilon/\sqrt{\lambda}$, and $\left\lVert \bbP_h\bDelta(\bz) \right\rVert_{(\bLambda^t_h)^{-1}} \leq \epsilon/\sqrt{\lambda}$ for all $\bz \in \cZ, \x \in \cS, h \in [H]$. Therefore, since $\left\lVert \bPhi(\bz) \right\rVert_2 \leq \sqrt{M}$,
\begin{align}
    \left\lVert  \bS^t_{\bupsilon, h} \right\rVert^2_{(\bLambda^t_h)^{-1}} &\leq 2\left\lVert \sum_{\tau=1}^{k_t} \bPhi(\bz^\tau_h)\left[\bv'(\x^\tau_{h+1}) - \bbP_h \bv'(\bz^\tau_h)\right] \right\rVert_{(\bLambda^t_h)^{-1}} + \frac{8Mt^2\epsilon^2}{\lambda}.
\end{align}
Consider the substitution $\bepsilon^t_{\tau, h} = \bv'(\x^\tau_{h+1}) - \bbP_h \bv'(\bz^\tau_{h})$. To bound the first term on the RHS, we consider the filtration $\left\{\cF_\tau\right\}_{\tau=0}^\infty$ where $\cF_0$ is empty, and $\cF_\tau = \sigma\left(\left\{\bigcup \left(\x^i_{h+1}, \bphi(\bz^i_h)\right)\right\}_{i \leq \tau}\right)$, and $\sigma$ denotes the $\sigma-$algebra generated by a finite set. Then, we have that,
\begin{align*}
    &\left\lVert \sum_{\tau=1}^{k_t} \bPhi(\bz^\tau_h)\left[\bv'(\x^\tau_{h+1}) - \bbP_h \bv'(\bz^\tau_h)\right] \right\rVert_{(\bLambda^t_h)^{-1}} \\
    &= \left\lVert \sum_{\tau=1}^{k_t} \bPhi(\bz^\tau_h)\left[\bv'(\x^\tau_{h+1}) - \bbE\left[ \bv'(\x^\tau_{h+1}) | \cF_{\tau-1}\right]\right] \right\rVert_{(\bLambda^t_h)^{-1}} = \left\lVert \sum_{\tau=1}^{k_t} \bPhi(\bz^\tau_h) \bepsilon^t_{\tau, h} \right\rVert_{(\bLambda^t_h)^{-1}}.
\end{align*}
Note that for each $\bepsilon^t_{\tau, h}$, each entry is bounded by $2H$, and therefore we have that the vector $\bepsilon^t_{\tau, h}$ is $H-$sub-Gaussian. Then, applying Lemma~\ref{lem:self_normalized_martingale_multi_task}, we have that,
\begin{align}
   \left\lVert \sum_{\tau=1}^{k_t} \bPhi(\bz^\tau_h) \bepsilon^t_{\tau, h} \right\rVert_{(\bLambda^t_h)^{-1}} &\leq H^2\log\left(\frac{\det\left(\bLambda^{t}_h\right)}{\det\left(\lambda\bI_d\right)\delta^2}\right)\leq H^2\log\left(\frac{\det\left(\bar\bLambda^{t}_h\right)}{\det\left(\lambda\bI_d\right)\delta^2}\right).
\end{align}
Replacing this result for each $\bv \in \cN_\epsilon$, we have by a union bound over each $t \in [T], h \in [H]$, we have with probability at least $1-\delta$, simultaneously for each $t \in [T], h \in [H]$,
\begin{align}
\sup_{\bupsilon_t \in \bUpsilon, \bv \in \cV_\bUpsilon} \left\lVert \bS^t_{\bupsilon, h} \right\rVert_{(\bLambda^t_h)^{-1}} &\leq  2H\sqrt{\log\left(\frac{\det\left(\bar\bLambda^{t}_h\right)}{\det\left(\lambda\bI_d\right)}\right) + \log\left(\frac{HT|\cN_\epsilon|}{\delta}\right) + \frac{2Mt^2\epsilon^2}{H^2\lambda}} \\
&\leq  2H\sqrt{d\log\left(\frac{Mt+\lambda}{\lambda}\right) + \log\left(\frac{|\cN_\epsilon|}{\delta}\right) + \log(HT)+ \frac{2Mt^2\epsilon^2}{H^2\lambda}}.
\end{align}
The last step follows once again by first noticing that $\lVert \bPhi(\cdot) \rVert \leq \sqrt{M}$ and then applying an AM-GM inequality, and then using the determinant-trace inequality.

\textbf{\underline{Step 2}}.
Here $\cN_\epsilon$ is an $\epsilon-$covering of the function class $\cV_{\bUpsilon}$ for any $h \in [H], m \in [M]$ or $t \in [T]$ under the distance function $\text{dist}_\bUpsilon(\bv, \bv') = \sup_{\x \in \cS, \bupsilon \in \bUpsilon} \left\lVert \bv(\x) - \bv(\x')\right\rVert_1$. To bound this quantity by the appropriate covering number, we first observe that for any $V \in \cV_{\bUpsilon}$, we have that the policy weights are bounded as $2HM\sqrt{dT/\lambda}$ (Lemma~\ref{lem:bound_mg_algo_weights}). Therefore, by Lemma~\ref{lem:covering_mg} we have for any constant $B$ such that $\beta^t_{h} \leq B$,
\begin{align}
    \log \left(\cN_\varepsilon\right) \leq d\cdot\log\left(1 + \frac{8HM^3}{\varepsilon}\sqrt{\frac{dT}{\lambda}}\right) + d^2\log\left(1 + \frac{8Md^{1/2}B^2}{\lambda\varepsilon^2}\right).
\end{align}
Recall that we select the hyperparameters $\lambda = 1$ and $\beta = \cO(dH\sqrt{\log(TMH)}$, and to balance the terms in $\bar\beta^t_h$ we select $\epsilon = \epsilon^\star = dH/\sqrt{MT^2}$. Finally, we obtain that for some absolute constant $C_\beta$, by replacing the above values,
\begin{align}
    \log \left(\cN_\varepsilon\right) \leq d\cdot\log\left(1 + \frac{8M^{7/2}T^{3/2}}{d^{1/2}}\right) + d^2\log\left(1 + 8C_\beta d^{1/2}MT^2\log(TMH)\right).
\end{align}
Therefore, for some absolute constant $C'$ independent of $M, T, H, d$ and $C_\beta$, we have,
\begin{align}
    \log |\cN_\varepsilon| \leq C'd^2 \log\left(C_\beta \cdot dMT\log(TMH)\right).
\end{align}
Replacing this result in the result from Step 1, we have that with probability at least $1-\delta'/2$ for all $t \in [T], h \in [H]$ simultaneously,
\begin{align*}
    &\left\lVert \bS^t_{\bupsilon, h} \right\rVert_{(\bLambda^t_{h})^{-1}} \\
    &\leq 2H\sqrt{(d+2)\log\frac{MT+\lambda}{\lambda} + 2\log\left(\frac{1}{\delta'}\right) +  C'd^2 \log\left(C_\beta \cdot dMT\log(TMH)\right) + 2 + 4\log(TH)}.
\end{align*}
This implies that there exists an absolute constant $C$ independent of $M, T, H, d$ and $C_\beta$, such that, with probability at least $1-\delta'/2$ for all $t \in [T], h \in [H], \bupsilon \in \bUpsilon$ simultaneously,
\begin{align}
    \left\lVert \bS^t_{\bupsilon, h} \right\rVert_{(\bLambda^t_{ h})^{-1}} &\leq C\cdot dH\sqrt{2\log\left(\frac{(C_\beta + 2)dMTH}{\delta'}\right)}.
\end{align}
\end{proof}

Next, we present the key result for cooperative value iteration, which demonstrates that for any agent the estimated $Q-$values have bounded error for any policy $\pi$. This result is an extension of Lemma B.4 of~\cite{jin2020provably} on to the multiagent MDP setting. 
\begin{lemma}\label{lem:mg_homo_weight_delta}
There exists an absolute constant $c_\beta$ such that for $\beta^t_{m, h} = c_\beta\cdot dH \sqrt{\log(2dMHT/\delta')}$ for any policy $\pi$, there exists a constant $C'_\beta$ such that for each $x \in \cS, a \in \cA$ we have for all $ m \in \cM, t \in[T], h \in [H]$ simultaneously, with probability at least $1-\delta'/2$,
\begin{align*}
     \left| \langle \bphi(x, a), \w^t_{m, h} - \w^\pi_h \rangle \right| \leq  \bbP_h(V^t_{m, h+1} - V^\pi_{m, h+1})(x, a) + C'_\beta\cdot dH\cdot\lVert \bphi(z) \rVert_{(\bLambda^t_{m, h})^{-1}}\cdot \sqrt{2\log\left(\frac{dMTH}{\delta'}\right)}.
\end{align*}
\end{lemma}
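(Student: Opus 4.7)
The plan is to mirror the argument in Lemma~\ref{lem:parallel_homo_weight_delta} almost verbatim, but swap in the vector-valued concentration result of Lemma~\ref{lem:mg_lsvi} in place of the scalar concentration that was used in the parallel case. First, by the Bellman equation for the scalarized MDP and the linearity of the weights given in Lemma~\ref{lem:bound_mg_policy_weight_main_paper} (applied under Definition~\ref{def:linear_mg}), for any policy $\pi$ there exist weights $\w^\pi_h$ with $\lVert \w^\pi_h \rVert \leq 2H\sqrt{d}$ such that $\langle \bphi(z), \w^\pi_h \rangle = r_h(z) + \bbP_h V^\pi_{h+1}(z)$. I would then write the closed-form least-squares solution $\w^t_{m,h} = (\bLambda^t_{m,h})^{-1}\sum_\tau \bphi_\tau[r_h(z_\tau) + V^t_{m,h+1}(x'_\tau)]$ and express the difference as the sum
\[
\w^t_{m,h} - \w^\pi_h = \underbrace{-\lambda(\bLambda^t_{m,h})^{-1}\w^\pi_h}_{\bv_1} + \underbrace{(\bLambda^t_{m,h})^{-1}\!\!\sum_\tau\! \bphi_\tau\bigl[V^t_{m,h+1}(x'_\tau) - \bbP_h V^t_{m,h+1}(z_\tau)\bigr]}_{\bv_2} + \underbrace{(\bLambda^t_{m,h})^{-1}\!\!\sum_\tau\! \bphi_\tau \bbP_h(V^t_{m,h+1} - V^\pi_{h+1})(z_\tau)}_{\bv_3}.
\]

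The three contributions to $\langle \bphi(x,a), \w^t_{m,h} - \w^\pi_h\rangle$ would then be controlled separately. For $\bv_1$, Cauchy--Schwarz together with $\lVert \w^\pi_h \rVert \leq 2H\sqrt{d}$ yields $|\langle \bphi(z),\bv_1\rangle| \leq 2H\lambda\sqrt{d}\,\lVert \bphi(z)\rVert_{(\bLambda^t_{m,h})^{-1}}$. For $\bv_2$, I would apply Lemma~\ref{lem:mg_lsvi}, which already provides a uniform-in-$\bupsilon$ self-normalized bound of order $dH\sqrt{\log(dMTH/\delta')}\lVert \bphi(z)\rVert_{(\bLambda^t_{m,h})^{-1}}$ via a covering argument on $\cV_{\bUpsilon}$; a direct specialization yields the $C\cdot dH\sqrt{2\log(dMTH/\delta')}\,\lVert \bphi(z)\rVert_{(\bLambda^t_{m,h})^{-1}}$ contribution. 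For $\bv_3$, I would use the linearity of $\bbP_h$, writing $\bbP_h V(z) = \langle \bphi(z), \int V\,d\bmu_h\rangle$, which gives the algebraic identity $\langle \bphi(z),\bv_3\rangle = \bbP_h(V^t_{m,h+1}-V^\pi_{h+1})(x,a) - \lambda \langle \bphi(z),(\bLambda^t_{m,h})^{-1}\int(V^t_{m,h+1}-V^\pi_{h+1})\,d\bmu_h\rangle$, with the correction term bounded by $2H\sqrt{d\lambda}\,\lVert \bphi(z)\rVert_{(\bLambda^t_{m,h})^{-1}}$ since $\lVert \bmu_h(\cS)\rVert \leq \sqrt{d}$ and the value gap is at most $H$.

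Summing the three pieces, setting $\lambda = 1$, and absorbing the $O(H\sqrt{d})$ additive slack into the dominant $dH\sqrt{\log(\cdot)}$ scale by choosing $C'_\beta$ sufficiently large (exactly as in the proof of Lemma~\ref{lem:parallel_homo_weight_delta}) yields the claim, on the same $1-\delta'/2$ event used by Lemma~\ref{lem:mg_lsvi}. The only subtlety, and the part I expect to be the main obstacle, is ensuring that the concentration from Lemma~\ref{lem:mg_lsvi} can be invoked when $V^t_{m,h+1}$ is the algorithm's data-dependent UCB value function rather than a fixed deterministic function. This is handled exactly as in Step 2 of Lemma~\ref{lem:mg_lsvi}'s proof: the algorithm's value functions lie in the class $\cV_\bUpsilon$ whose $\epsilon$-covering number was already bounded via Lemma~\ref{lem:covering_mg}, so the concentration holds uniformly over all candidate $V^t_{m,h+1}$, after paying only a $\log |\cN_\epsilon|$ factor that is absorbed into $C'_\beta$. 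Once this measurability/covering bookkeeping is in place the rest is routine algebra.
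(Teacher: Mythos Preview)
Your proposal is correct and follows essentially the same approach as the paper: the same three-term decomposition $\bv_1+\bv_2+\bv_3$, the weight-norm bound for $\bv_1$, the vector-valued self-normalized concentration of Lemma~\ref{lem:mg_lsvi} (with its built-in covering over $\cV_\bUpsilon$) for $\bv_2$, and the linearity-of-$\bbP_h$ identity for $\bv_3$. The only cosmetic discrepancy is that the paper's proof is written in the MMDP notation $\bPhi(\x,\ba)$, $\w^t_{\bupsilon,h}$, $\bLambda^t_h$ and takes a final inner product with $\bupsilon$ (using $\lVert\bupsilon\rVert_2\leq 1$), whereas you phrased everything in the scalar $\bphi$ notation of the lemma statement; structurally the arguments are identical.
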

\begin{proof}
By the Bellman equation and the assumption of the linear MMDP (Definition~\ref{def:linear_mg}), we have that for any policy $\bpi$, and $\bupsilon \in \bUpsilon$, there exist weights $\w^\pi_{\bupsilon, h}$ such that, for all $\bz \in \cZ = \cS \times \cA$, 
\begin{align}
    \bupsilon^\top\bPhi(\bz)^\top\w^\pi_{\bupsilon, h} = \bupsilon^\top \br_{h}(\bz) + \bbP_h V^\pi_{\bupsilon, h+1}(\bz) = \bupsilon^\top \left(\br_{h}(\bz) + {\bm 1}_M\cdot\bbP_h V^\pi_{\bupsilon, h+1}(\bz)\right).
\end{align}
We have,
\begin{align}
    \w^t_{\bupsilon, h} - \w^\pi_{\bupsilon, h} &= (\bLambda_{h}^t)^{-1}\sum_{\tau=1}^{k_t}\left[\bPhi_\tau(\x_\tau, \ba_\tau)[\br_h(\x_\tau, \ba_\tau)+{\bm 1}_M\cdot V^t_{\bupsilon, h+1}(\x_\tau)]\right] - \w^\pi_{\bupsilon, h} \\
    &= (\bLambda_{h}^t)^{-1}\left\{ -\lambda\w^\pi_{\bupsilon, h} + \sum_{\tau=1}^{k_t}\left[\bPhi_\tau(\x_\tau, \ba_\tau)[{\bm 1}_M\cdot(V^t_{\bupsilon, h+1}(\x'_\tau) -\bbP_h V^\pi_{\bupsilon, h+1}(\x_\tau, \ba_\tau))]\right]\right\}. 
\end{align}
\begin{multline}
    \w^t_{\bupsilon, h} - \w^\pi_{\bupsilon, h} =  \underbrace{-\lambda(\bLambda_{h}^t)^{-1}\w^\pi_{\bupsilon, h}}_{\bv_1} \\ + \underbrace{(\bLambda_{h}^t)^{-1}\left\{\sum_{\tau=1}^{k_t}\left[\bPhi_\tau(\x_\tau, \ba_\tau)[{\bm 1}_M\cdot(V^t_{\bupsilon, h+1}(\x'_\tau) -\bbP_h V^t_{\bupsilon, h+1}(\x_\tau, \ba_\tau))]\right] \right\}}_{\bv_2} \\ + \underbrace{(\bLambda_{m, h}^t)^{-1}\left\{\sum_{\tau=1}^{k_t}\left[\bPhi_\tau(\x_\tau, \ba_\tau)[{\bm 1}_M\cdot(\bbP_h V^t_{\bupsilon, h+1} -\bbP_h V^\pi_{\bupsilon, h+1})(\x_\tau, \ba_\tau)]\right] \right\}}_{\bv_3}
\end{multline}
Now, we know that for any $\bz \in \cZ$ for any policy $\pi$,
\begin{align*}
    &\left\lVert \langle \bPhi(\bz), \bv_1 \rangle \right\rVert_2 \\
    &\leq \lambda\lVert \langle \bPhi(\bz), (\bLambda_{h}^t)^{-1}\w^\pi_{\bupsilon, h} \rangle \rVert_2 \leq \lambda\cdot \lVert \w^\pi_{\bupsilon, h} \rVert \lVert \bPhi(\bz) \rVert_{(\bLambda^t_{h})^{-1}} \leq 2HM\lambda\sqrt{d}\cdot\lVert \bPhi(\bz) \rVert_{(\bLambda^t_{h})^{-1}}
\end{align*}
Here the last inequality follows from Lemma~\ref{lem:bound_mg_policy_weight}. For the second term, we have by Lemma~\ref{lem:mg_lsvi} that there exists an absolute constant $C_\beta$, independent of $M, T, H, d$  such that, with probability at least $1-\delta'/2$ for all $t \in [T], h \in [H], \bupsilon \in \bUpsilon$ simultaneously,
\begin{align}
    \left\lVert \langle \bPhi(\bz), \bv_2 \rangle \right\rVert_2 \leq \left\lVert \bPhi(\bz)\right\rVert_{(\bLambda^t_{h})^{-1}}\cdot C_\beta\cdot dH\cdot\sqrt{2\log\left(\frac{dMTH}{\delta'}\right)}.
\end{align}
For the last term, note that,
\begin{align}
    &\langle \bPhi(\x, \ba), \bv_3 \rangle  \\
    &= \left\langle \bPhi(\bz), (\bLambda_{ h}^t)^{-1}\left\{\sum_{\tau=1}^{k_t}\bPhi(\x_\tau, \ba_\tau)[{\bm 1}_M\cdot(\bbP_h V^t_{\bupsilon, h+1} -\bbP_h V^\pi_{\bupsilon, h+1})(\x_\tau, \ba_\tau)] \right\} \right\rangle \\
    &= \left\langle \bPhi(\bz), (\bLambda_{ h}^t)^{-1}\left\{\sum_{\tau=1}^{k_t}\bPhi(\x_\tau, \ba_\tau)\bPhi(\x_\tau, \ba_\tau)^\top\int (V^t_{\bupsilon, h+1} - V^\pi_{\bupsilon, h+1})(\x')d\bmu_h(\x')\right\} \right\rangle \\
    &= \left\langle \bPhi(\bz), (\bLambda_{ h}^t)^{-1}\left\{\sum_{\tau=1}^{k_t}\bPhi(\x_\tau, \ba_\tau)\bPhi(\x_\tau, \ba_\tau)^\top\int (V^t_{\bupsilon, h+1} - V^\pi_{\bupsilon, h+1})(\x')d\bmu_h(\x') \right\} \right\rangle \\
    &= \left\langle \bPhi(\bz),\int (V^t_{\bupsilon, h+1} - V^\pi_{\bupsilon, h+1})(\x')d\bmu_h(\x')  \right\rangle -\lambda\left\langle \bPhi(\bz), (\bLambda_{ h}^t)^{-1}\int (V^t_{\bupsilon, h+1} - V^\pi_{\bupsilon, h+1})(\x')d\bmu_h(\x')\right\rangle\\
    &= \int (V^t_{\bupsilon, h+1} - V^\pi_{\bupsilon, h+1})(\x')\left\langle \bPhi(\bz),\bmu_h(\x')  \right\rangle -\lambda\left\langle \bPhi(\bz), (\bLambda_{ h}^t)^{-1}\int (V^t_{\bupsilon, h+1} - V^\pi_{\bupsilon, h+1})(\x')d\bmu_h(\x')\right\rangle\\
    &= {\bm 1}_M\cdot\left(\bbP_h(V^t_{\bupsilon, h+1} - V^\pi_{\bupsilon, h+1})(\x, \ba)\right) -\lambda \left\langle \bPhi(\bz), (\bLambda_{ h}^t)^{-1}\int (V^t_{\bupsilon, h+1} - V^\pi_{\bupsilon, h+1})(\x')d\bmu_h(\x')\right\rangle\\
    &\leq {\bm 1}_M\cdot\left(\bbP_h(V^t_{\bupsilon, h+1} - V^\pi_{\bupsilon, h+1})(\x, \ba) + 2H\sqrt{d\lambda}\lVert \bPhi(\x, \ba) \rVert_{(\bLambda^t_{h})^{-1}}\right).
\end{align}
Putting it all together, we have that since $\langle \bPhi(\x, \ba), \w^t_{\bupsilon, h} - \w^\pi_{\bupsilon, h} \rangle = \langle \bPhi(\x, \ba), \bv_1 + \bv_2 + \bv_3 \rangle$, there exists an absolute constant $C_\beta$ independent of $M, T, H, d$, such that, with probability at least $1-\delta'/2$ for all $t \in [T], h \in [H], \bupsilon \in \bUpsilon$ simultaneously,
\begin{multline}
    \left| \langle \bupsilon^\top\bPhi(\x, \ba), \w^t_{\bupsilon, h} - \w^\pi_{\bupsilon, h} \rangle \right| \leq  \bupsilon^\top{\bm 1}_M\cdot\left(\bbP_h(V^t_{\bupsilon, h+1} - V^\pi_{\bupsilon, h+1})(\x, \ba)\right) \\+ \lVert \bPhi(\x, \ba) \rVert_{(\bLambda^t_{h})^{-1}}\lVert \bupsilon \rVert_2\left(2H\sqrt{d\lambda} + C_\beta\cdot dH\cdot\sqrt{2\log\left(\frac{dMTH}{\delta'}\right)} + 2HM\lambda\sqrt{d}\right)
\end{multline}
Since $\lambda \leq 1$ and $\lVert \bupsilon \rVert_2 \leq 1$, there exists a constant $C'_\beta$ that we have the following for any $(x, a) \in \cS \times \cA$ with probability $1-\delta'/2$ simultaneously for all $h \in [H], \bupsilon \in \bUpsilon, t \in [T]$,
\begin{align*}
    &\left| \langle \bupsilon^\top\bPhi(\x, \ba), \w^t_{\bupsilon, h} - \w^\pi_{\bupsilon, h} \rangle \right| \\
    &\leq  \bbP_h(V^t_{\bupsilon, h+1} - V^\pi_{\bupsilon, h+1})(\x, \ba) + C'_\beta\cdot dMH\cdot\lVert \Phi(\bz)\rVert_{(\bLambda^t_{h})^{-1}}\cdot \sqrt{2\log\left(\frac{dMTH}{\delta'}\right)}.
\end{align*}
\end{proof}

\begin{lemma}[UCB in the Multiagent Setting]
With probability at least $1-\delta'/2$, we have that for all $(x, a, h, t, \bupsilon) \in \cS \times \cA \times [H] \times [T] \times \bUpsilon$, $Q^t_{\upsilon, h}(x, a) \geq Q^\star_{\upsilon, h}(x, a)$.
\label{lem:mg_ucb}
\end{lemma}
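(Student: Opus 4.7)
The plan is to prove the UCB inequality by backward induction on $h$ from $H+1$ down to $1$, exactly mirroring the structure of Lemma~\ref{lem:parallel_homo_ucb} (which itself follows Lemma B.5 of Jin et al.~2020), but lifted to handle the random scalarization uniformly over $\bUpsilon$. The base case $h = H+1$ is immediate since $V^t_{\bupsilon, H+1} \equiv 0 \equiv V^\star_{\bupsilon, H+1}$ and both Q-functions vanish, for every $\bupsilon \in \bUpsilon$.

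For the inductive step I would fix arbitrary $\bupsilon \in \bUpsilon$, $(\x, \ba) \in \cS \times \cA$, and $h \in [H]$, and assume $V^t_{\bupsilon, h+1}(\x') \geq V^\star_{\bupsilon, h+1}(\x')$ pointwise in $\x'$ (which follows from $Q^t_{\bupsilon, h+1} \geq Q^\star_{\bupsilon, h+1}$ by maximizing over $\ba'$). Whenever the truncation at $H-h+1$ in Equation~\ref{eqn:q_mg} binds, the inequality holds trivially because $Q^\star_{\bupsilon, h} \leq H-h+1$. Otherwise, Proposition~\ref{prop:mg_bellman_optimal} together with the linearity of scalarized Q-values (Lemma~\ref{lem:bound_mg_policy_weight_main_paper}) lets me write $Q^\star_{\bupsilon, h}(\x, \ba) = \bupsilon^\top \bPhi(\x, \ba)^\top \w^{\pi^\star_\bupsilon}_{\bupsilon, h}$, while the algorithm gives $Q^t_{\bupsilon, h}(\x, \ba) = \bupsilon^\top \bPhi(\x, \ba)^\top \widehat\bQ_{t, h} + \beta_t\,\lVert \bPhi(\x, \ba)\rVert_{(\bLambda^h_t)^{-1}}$. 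Subtracting and invoking Lemma~\ref{lem:mg_homo_weight_delta} with $\pi = \pi^\star_\bupsilon$ (so that $V^\pi_{\bupsilon, h+1} = V^\star_{\bupsilon, h+1}$) yields the one-sided estimate
\[
Q^t_{\bupsilon, h}(\x, \ba) - Q^\star_{\bupsilon, h}(\x, \ba) \geq \bbP_h\bigl(V^t_{\bupsilon, h+1} - V^\star_{\bupsilon, h+1}\bigr)(\x, \ba) + \bigl(\beta_t - \mathrm{err}_t\bigr)\lVert \bPhi(\x, \ba)\rVert_{(\bLambda^h_t)^{-1}},
\]
where $\mathrm{err}_t = \cO\bigl(dMH\sqrt{\log(dMTH/\delta')}\bigr)$ is the concentration slack from Lemma~\ref{lem:mg_homo_weight_delta}. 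The induction hypothesis makes the first summand non-negative, and $\beta_t$ is tuned precisely so that $\beta_t \geq \mathrm{err}_t$, giving $Q^t_{\bupsilon, h}(\x, \ba) \geq Q^\star_{\bupsilon, h}(\x, \ba)$ and closing the induction.

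The main obstacle is keeping the argument uniform in $\bupsilon$: a fresh $\bupsilon_t$ is sampled each episode, so the favorable concentration event cannot be conditioned on a particular scalarization. Fortunately, Lemma~\ref{lem:mg_lsvi}, and hence Lemma~\ref{lem:mg_homo_weight_delta}, already supply a $\sup_{\bupsilon \in \bUpsilon}$ bound by virtue of the covering argument in the metric $\mathrm{dist}_\bUpsilon$, so a single high-probability event of size $1-\delta'/2$ carries the induction through every $\bupsilon \in \bUpsilon$ simultaneously. A secondary bookkeeping point is reconciling the residual $M$-factor in the weight-error bound with the choice $\beta_t = \cO(H\sqrt{d\log(tMH)})$ stated in Theorem~\ref{thm:ind_mg}; this is absorbed via $\lVert \bupsilon \rVert_2 \leq 1$ on the simplex and the cancellation between the $M$-sized policy weight norm (Lemma~\ref{lem:bound_mg_policy_weight_main_paper}) and the scalarization, but pinning down the absolute constants cleanly is the step most likely to require care.
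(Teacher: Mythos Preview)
Your proposal is correct and follows essentially the same route as the paper: backward induction on $h$, invoking Lemma~\ref{lem:mg_homo_weight_delta} with $\pi=\pi^\star_\bupsilon$, using the inductive hypothesis to make the $\bbP_h(V^t_{\bupsilon,h+1}-V^\star_{\bupsilon,h+1})$ term non-negative, and closing via $\beta_t$ dominating the concentration slack uniformly over $\bupsilon\in\bUpsilon$. Your explicit treatment of the truncation case and the uniformity-in-$\bupsilon$ point are slightly more careful than the paper's terse write-up, and your flagged bookkeeping concern about the stray $M$-factor is legitimate---the paper's own statements of Lemma~\ref{lem:mg_homo_weight_delta} and its proof are inconsistent on exactly this point.
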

\begin{proof}
We prove this result by induction. First, for the last step $H$, note that the statement holds as $Q^t_{\bupsilon, H}(\x, \ba) \geq Q^\star_{\bupsilon, H}(\x, \ba)$ for all $\bupsilon$. Recall that the value function at step $H+1$ is zero. Therefore, by Lemma~\ref{lem:mg_homo_weight_delta}, we have that, for any $\bupsilon \in \bUpsilon$,
\begin{align}
    \left| \langle \bupsilon^\top\bPhi(\x, \ba), \w^t_{\bupsilon, H}\rangle - Q^\star_{\bupsilon, H}(\x, \ba) \right| \leq C'_\beta\cdot dH\cdot\lVert \Phi(\bz)\rVert_{(\bLambda^t_{h})^{-1}}\cdot \sqrt{2\log\left(\frac{dMTH}{\delta'}\right)}.
\end{align}
We have $Q^\star_{\bupsilon, H}(\x, \ba) \leq \langle \bupsilon^\top\bPhi(\x, \ba), \w^t_{\bupsilon, H}\rangle + C'_\beta\cdot dH\cdot\lVert \Phi(\bz)\rVert_{(\bLambda^t_{h})^{-1}}\cdot \sqrt{2\log\left(\frac{dMTH}{\delta'}\right)} = Q^t_{\bupsilon, H}$. Now, for the inductive case, we have by Lemma~\ref{lem:mg_homo_weight_delta} for any $h \in [H], \bupsilon \in \bUpsilon$,
\begin{align*}
    &\left| \langle \bupsilon^\top\bPhi(\x, \ba), \w^t_{\bupsilon, h} - \w^\star_{\bupsilon, h} \rangle - \left(\bbP_h V^\star_{\bupsilon, h+1}(\x, \ba) - \bbP_h V^t_{\bupsilon, h+1}(\x, \ba)\right)\right|\\ &\leq   C'_\beta\cdot dH\cdot\lVert \Phi(\bz)\rVert_{(\bLambda^t_{h})^{-1}}\cdot \sqrt{2\log\left(\frac{dMTH}{\delta'}\right)}.
\end{align*}
By the inductive assumption we have $Q^t_{\bupsilon, h+1}(\x, \ba) \geq Q^\star_{\bupsilon, h+1}(\x, \ba)$ implying $\bbP_h V^\star_{\bupsilon, h+1}(\x, \ba) - \bbP_h V^t_{\bupsilon, h+1}(\x, \ba) \geq 0$. Substituting the appropriate Q value formulations we have,
\begin{align}
   Q^\star_{\bupsilon, h} \leq  \langle \bupsilon^\top\bPhi(\x, \ba), \w^t_{\bupsilon, h}\rangle + C'_\beta\cdot dH\cdot\lVert \Phi(\bz)\rVert_{(\bLambda^t_{h})^{-1}}\cdot \sqrt{2\log\left(\frac{dMTH}{\delta'}\right)} = Q^t_{\bupsilon, h}(\x, \ba).
\end{align}
\end{proof}
\begin{lemma}[Recursive Relation in Multiagent MDP Settings]
For any $\bupsilon \in \bUpsilon$, let $\delta^t_{\bupsilon, h} = V^t_{\bupsilon, h}(\x^t_h) - V^{\pi_t}_{\bupsilon, h}(\x^t_{h})$, and $\xi^t_{\bupsilon, h+1} = \bbE\left[\delta^t_{\bupsilon, h} | \x^t_h, \ba^t_h\right] - \delta^t_{\bupsilon, h}$. Then, with probability at least $1-\alpha$, for all $(t, h) \in [T] \times [H]$ simultaneously,
\begin{align}
    \delta^t_{\bupsilon, h} \leq \delta^t_{\bupsilon, h+1} + \xi^t_{\bupsilon, h+1} + 2\left\lVert \bPhi(\x^t_h, \ba^t_h)\right\rVert_{(\bLambda^t_{h})^{-1}}\cdot C'_\beta\cdot dH\cdot\sqrt{2\log\left(\frac{dMTH}{\alpha}\right)}.
\end{align}
\label{lem:mg_recursion}
\end{lemma}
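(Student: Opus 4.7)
The plan is to mirror the homogenous parallel-MDP recursion (Lemma~\ref{lem:parallel_homo_recursion}) but adapted to the scalarized multiagent setting, relying on Lemma~\ref{lem:mg_homo_weight_delta} (the per-step $Q$-error bound) together with Lemma~\ref{lem:mg_ucb} (optimism of $Q^t_{\bupsilon,h}$). First, I would fix $\bupsilon \in \bUpsilon$ and $t, h$, and rewrite $\delta^t_{\bupsilon,h}$ in terms of $Q$-values. Since the joint action $\ba^t_h$ is chosen greedily with respect to $Q^t_{\bupsilon_t,h}$, and since $\pi_t$ is this same greedy policy, one has $V^t_{\bupsilon,h}(\x^t_h) = Q^t_{\bupsilon,h}(\x^t_h,\ba^t_h)$ and $V^{\pi_t}_{\bupsilon,h}(\x^t_h) = Q^{\pi_t}_{\bupsilon,h}(\x^t_h,\ba^t_h)$; this lets me convert $\delta^t_{\bupsilon,h}$ into a $Q$-value gap at the concrete state-action pair $(\x^t_h,\ba^t_h)$.

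Next, I would apply Lemma~\ref{lem:mg_homo_weight_delta} (with $\pi = \pi_t$) at $(\x^t_h, \ba^t_h)$ to obtain, on the high-probability event of that lemma,
\begin{align*}
Q^t_{\bupsilon,h}(\x^t_h,\ba^t_h) - Q^{\pi_t}_{\bupsilon,h}(\x^t_h,\ba^t_h)
\leq \bbP_h\bigl(V^t_{\bupsilon,h+1} - V^{\pi_t}_{\bupsilon,h+1}\bigr)(\x^t_h,\ba^t_h) + C'_\beta dH \lVert \bPhi(\x^t_h,\ba^t_h)\rVert_{(\bLambda^t_h)^{-1}} \sqrt{2\log(dMTH/\alpha)}.
\end{align*}
By the definition of $\bbP_h$ as expectation over the next state, $\bbP_h(V^t_{\bupsilon,h+1}-V^{\pi_t}_{\bupsilon,h+1})(\x^t_h,\ba^t_h) = \bbE[\delta^t_{\bupsilon,h+1}\mid \x^t_h,\ba^t_h]$. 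Adding and subtracting $\delta^t_{\bupsilon,h+1}$ converts this conditional expectation into $\delta^t_{\bupsilon,h+1} + \xi^t_{\bupsilon,h+1}$, which yields the claimed recursion (the factor $2$ in front of the variance term comes from the clipping argument already used in Lemma~\ref{lem:mg_ucb}/Lemma~\ref{lem:mg_homo_weight_delta}, where the same $C'_\beta$-factor appears on both sides of the optimism inequality).

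For uniformity over $(t,h) \in [T] \times [H]$ and over $\bupsilon \in \bUpsilon$, I would invoke the high-probability event from Lemma~\ref{lem:mg_homo_weight_delta}, which already holds simultaneously for all such triples, so no additional union bound is needed. Since the RHS of Lemma~\ref{lem:mg_homo_weight_delta} is stated uniformly in $\bupsilon$ (it depends on $\bupsilon$ only through $\lVert \bupsilon\rVert_2 \leq 1$ which was absorbed into the constant $C'_\beta$), the recursion holds for every $\bupsilon \in \bUpsilon$ on the same event.

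The only mildly subtle point, and the one I would write out most carefully, is the greedy identity $V^{\pi_t}_{\bupsilon,h}(\x^t_h) = Q^{\pi_t}_{\bupsilon,h}(\x^t_h,\ba^t_h)$: here $\pi_t$ is greedy with respect to $Q^t_{\bupsilon_t,h}$, not with respect to $Q^{\pi_t}_{\bupsilon,h}$, so the equality uses the fact that $\pi_t$'s action at $\x^t_h$ is \emph{by construction} $\ba^t_h$, not any sort of greediness with respect to $\pi_t$'s own value function. Once this is noted, the remainder of the argument is a direct transcription of the homogenous proof, and the lemma follows.
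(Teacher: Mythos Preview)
Your proposal is correct and follows essentially the same route as the paper: invoke Lemma~\ref{lem:mg_homo_weight_delta} with $\pi=\pi_t$ at $(\x^t_h,\ba^t_h)$, then unwind the definitions of $\delta^t_{\bupsilon,h}$ and $\xi^t_{\bupsilon,h+1}$ exactly as in the homogeneous case (Lemma~\ref{lem:parallel_homo_recursion}). Your added remark about why $V^{\pi_t}_{\bupsilon,h}(\x^t_h)=Q^{\pi_t}_{\bupsilon,h}(\x^t_h,\ba^t_h)$ holds---because $\ba^t_h$ is $\pi_t$'s action by construction, not by any greediness with respect to $Q^{\pi_t}$---is a useful clarification that the paper's terse proof omits; note however that the companion identity $V^t_{\bupsilon,h}(\x^t_h)=Q^t_{\bupsilon,h}(\x^t_h,\ba^t_h)$ only holds when $\bupsilon=\bupsilon_t$ (since $\ba^t_h$ is greedy for $Q^t_{\bupsilon_t,h}$), which is indeed the only instance of $\bupsilon$ for which the lemma is subsequently used in the proof of Theorem~\ref{thm:ind_mg}.
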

\begin{proof}
By Lemma~\ref{lem:mg_homo_weight_delta}, we have that for any $(x, a, h, \bupsilon, t) \in \cS \times \cA \times [H] \times \bUpsilon \times [T]$ with probability at least $1-\alpha/2$,
\begin{align*}
    &Q^t_{\bupsilon, h}(\x, \ba) - Q^{\pi_t}_{\bupsilon, h}(\x, \ba) \\ &\leq \bbP_h(V^t_{\bupsilon, h+1} - V^{\pi_t}_{\bupsilon, h})(\x, \ba) + 2\left\lVert \bphi(\x, \ba)\right\rVert_{(\bLambda^t_{h})^{-1}}\cdot C_\beta\cdot dH\cdot\sqrt{2\log\left(\frac{dMTH}{\alpha}\right)}.
\end{align*}
Replacing the definition of $\delta^t_{\bupsilon, h}$ and $V^{\pi_t}_{\bupsilon, h}$ finishes the proof.
\end{proof}
\begin{lemma}
Let $\bupsilon_1, \bupsilon_2, ..., \bupsilon_T$ be $T$ i.i.d. samples from $\bUpsilon$. For $ \xi^t_{\bupsilon_t, h}$ as defined earlier and any $\delta \in (0, 1)$, we have with probability at least $1-\delta/2$,
\begin{align}
\sum_{t=1}^T \sum_{h=1}^H \xi^t_{\bupsilon_t, h}  \leq \sqrt{2H^3T\log\left(\frac{2}{\alpha}\right)}.
\end{align}
\label{lem:mg_martingale}
\end{lemma}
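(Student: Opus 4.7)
The plan is to recognize $\sum_{t,h} \xi^t_{\bupsilon_t, h}$ as a martingale with bounded increments and invoke Azuma--Hoeffding, following the template of Lemma~\ref{lem:parallel_homo_martingale} but with a filtration that also tracks the random scalarization sampled each episode.

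First, I would construct a filtration $\{\cF_{t,h}\}$ indexed by $\tau(t,h) = (t-1)H + h$, where $\cF_{t,h}$ is generated by (i) all complete trajectories from previous episodes $\{(\x^\tau_i, \ba^\tau_i, \br_i(\x^\tau_i,\ba^\tau_i))\}_{\tau \leq t-1, i \in [H]}$, (ii) the scalarizations $\bupsilon_1, \ldots, \bupsilon_t$ sampled up to and including the current episode, and (iii) the first $h$ state--action pairs $\{(\x^t_i, \ba^t_i)\}_{i\leq h}$ of episode $t$. Crucially, $\bupsilon_t$ is placed in $\cF_{t,1}$, which is consistent with Algorithm~\ref{alg:mg} sampling $\bupsilon_t$ at the very start of each episode. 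Because $\{\bupsilon_t\}$ are i.i.d.\ draws from $p_\bUpsilon$ and are independent of the history given $\cF_{t-1,H}$, the filtration is well-defined.

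Second, I would verify the martingale-difference property. The iterates $V^t_{\bupsilon_t, h+1}$ and $V^{\pi_t}_{\bupsilon_t, h+1}$ are both functions of (a) the history used to build $\widehat\bQ_{t,\cdot}$, which only uses data from the first $k_t \leq t-1$ episodes after the synchronization rule in Algorithm~\ref{alg:mg} fires, and (b) the scalarization $\bupsilon_t$; hence they are measurable with respect to $\cF_{t,h}$. Since $\x^t_{h+1} \sim \bbP_h(\cdot \mid \x^t_h, \ba^t_h)$ is drawn independently of everything else given $(\x^t_h, \ba^t_h)$, we obtain
\begin{equation*}
\bbE[\delta^t_{\bupsilon_t, h+1} \mid \cF_{t,h}] = \bbE[\delta^t_{\bupsilon_t, h+1} \mid \x^t_h, \ba^t_h, \bupsilon_t],
\end{equation*}
so $\bbE[\xi^t_{\bupsilon_t, h+1} \mid \cF_{t,h}] = 0$, as desired.

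Third, I would bound the increments. Since $\bupsilon_t \in \bUpsilon \subseteq \Delta^M$ and $\br_h \in [0,1]^M$ componentwise, the scalarized one-step reward $\bupsilon_t^\top \br_h(\x,\ba) \in [0,1]$; combined with Proposition~\ref{prop:mg_bellman_optimal} and the implicit truncation of $Q^t$ through the optimistic recursion, every scalarized value lies in $[0,H]$, giving $|\delta^t_{\bupsilon_t,h}| \leq H$ and therefore $|\xi^t_{\bupsilon_t,h}| \leq 2H$. Summing along the filtration yields a bounded martingale with $TH$ terms, and a direct application of Azuma--Hoeffding with deviation set to $\sqrt{2H^3 T \log(2/\alpha)}$ delivers the claim with probability at least $1-\alpha/2$.

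The main (minor) obstacle is sequencing the filtration so that $\bupsilon_t$ (and hence the implied joint policy $\pi_t$ greedy with respect to the scalarized $Q^t$) is $\cF_{t,h}$-measurable for all $h \in [H]$ before the transition noise at step $h$ is revealed; once this bookkeeping is in place, the concentration is structurally identical to the single-agent argument, with the random scalarization handled transparently by the conditioning on $\bupsilon_t \in \cF_{t,1}$.
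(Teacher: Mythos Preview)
Your proposal is correct and follows essentially the same route as the paper: the paper's proof simply says ``identical to Lemma~\ref{lem:parallel_homo_martingale}, since $|\xi^t_{\bupsilon_t,h}|\leq H$ regardless of $\bupsilon_t$,'' so both arguments reduce to Azuma--Hoeffding on a bounded martingale of length $TH$. If anything, your treatment is more careful than the paper's, since you explicitly place $\bupsilon_t$ into $\cF_{t,1}$ and verify that this makes $V^t_{\bupsilon_t,\cdot}$ and $\pi_t$ measurable before the step-$h$ transition noise is revealed---a bookkeeping point the paper elides.
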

\begin{proof}
The proof is identical to Lemma~\ref{lem:parallel_homo_martingale}, since $|\xi^t_{\bupsilon_t, h}| \leq H$ regardless of $\bupsilon_t$, which allows us to apply the Martingale concentration with the same analysis.
\end{proof}

We are now ready to prove Theorem~\ref{thm:ind_mg}. We first restate the Theorem for completeness.

\noindent\textbf{Theorem~\ref{thm:ind_mg}}.
\textbf{\texttt{CoopLSVI}} when run on a multiagent MDP with $M$ agents and communication threshold $S$, $\beta_t = \cO(dH\sqrt{\log (tMH)})$ and $\lambda = 1 - \frac{1}{MTH}$ obtains the following regret after $T$ episodes, with probability at least $1-\alpha$,
\begin{equation*}
\fR_C(T)=\widetilde\cO\left(d^{\frac{3}{2}}H^2\sqrt{ST\log\left(\frac{1}{\alpha}\right)}\right).
\end{equation*}
\begin{proof}
    We have by the definition of cumulative regret:
    \begin{align}
        \fR_C(T) &= \sum_{t=1}^T \bbE_{\bupsilon_t \sim \bUpsilon}\left[ \max_{\x^t_1 \in \cS}\left[V^\star_{\bupsilon_t, 1}(\x^t_{1}) - V^{\pi_t}_{\bupsilon_t, 1}(\x^t_{1})\right]\right] =  \bbE_{\bupsilon_t \sim \bUpsilon}\left[\sum_{t=1}^T \max_{\x^t_1 \in \cS}\left[V^\star_{\bupsilon_t, 1}(\x^t_{1}) - V^{\pi_t}_{\bupsilon_t, 1}(\x^t_{1})\right]\right].
    \end{align}
    Our analysis focuses only on the term inside the expectation, which we will bound via terms that are independent of $\bupsilon_1, ..., \bupsilon_T$, bounding $\fR_C$. We have,
    \begin{align*}
    &\sum_{t=1}^T \max_{\x^t_1 \in \cS}\left[V^\star_{\bupsilon_t, 1}(\x^t_{1}) - V^{\pi_t}_{\bupsilon_t, 1}(\x^t_{1})\right] \\
    &\leq \sum_{t=1}^T \max_{\x^t_1 \in \cS} \delta^t_{\bupsilon_t, 1} \\
        &\leq \sum_{t=1}^T\left[\max_{\x^t_1 \in \cS} \sum_{h=1}^H \xi^t_{\bupsilon_t, h} + 2C'_\beta\cdot dH\cdot\sqrt{2\log\left(\frac{dMTH}{\alpha}\right)} \left(\sum_{t=1}^T\sum_{h=1}^H \left\lVert \bPhi(\x^t_h, \ba^t_h)\right\rVert_{(\bLambda^t_{h})^{-1}} \right)\right].
    \end{align*}
Where the last inequality holds with probability at least $1-\alpha/2$, via Lemma~\ref{lem:mg_recursion} and Lemma~\ref{lem:mg_ucb}. Next, we can bound the first term via Lemma~\ref{lem:mg_martingale}. We have with probability at least $1-\alpha$, for some absolute constant $C'_\beta$,
\begin{align*}
    &\sum_{t=1}^T \max_{\x^t_1 \in \cS}\left[V^\star_{\bupsilon_t, 1}(\x^t_{1}) - V^{\pi_t}_{\bupsilon_t, 1}(\x^t_{1})\right]  \\ &\leq \sqrt{2H^3T\log\left(\frac{2}{\alpha}\right)} + 2C'_\beta\cdot dH\cdot\sqrt{2\log\left(\frac{dMTH}{\alpha}\right)} \left(\sum_{t=1}^T\sum_{h=1}^H \left\lVert \bPhi(\x^t_h, \ba^t_h)\right\rVert_{(\bLambda^t_{h})^{-1}} \right).
\end{align*}
Finally, to bound the summation, we can use the technique in Theorem 4 of~\citet{abbasi2011improved}. Assume that the last time synchronization of rewards occured was at instant $k_T$. We therefore have, by Lemma 12 of~\citet{abbasi2011improved}, for any $h \in [H]$
\begin{align}
    \sum_{t=1}^T\left\lVert \bPhi(\x^t_h, \ba^t_h)\right\rVert_{(\bLambda^t_{h})^{-1}} \leq \frac{\det(\bar\bLambda^T_{h})}{\det(\bLambda^T_{h})}\sum_{t=1}^T\left\lVert \bPhi(\x^t_h, \ba^t_h)\right\rVert_{(\bar\bLambda^t_{h})^{-1}} \leq \sqrt{S}\sum_{t=1}^T\left\lVert \bPhi(\x^t_h, \ba^t_h)\right\rVert_{(\bar\bLambda^T_{h})^{-1}} 
\end{align}
Here $\bar\bLambda^T_h = \sum_{t=1}^T \bPhi(\x^t_h, \ba^t_h)\bPhi(\x^t_h, \ba^t_h)^\top$ and the last inequality follows from the algorithms' synchronization condition. Replacing this result, we have that,
\begin{align}
    \sum_{t=1}^T\sum_{h=1}^H \left\lVert \bPhi(\x^t_h, \ba^t_h)\right\rVert_{(\bLambda^t_{h})^{-1}} &\leq 2\sum_{h=1}^H\left(\sqrt{S}\sum_{t=1}^T\left\lVert \bPhi(\x^t_h, \ba^t_h)\right\rVert_{(\bar\bLambda^T_{h})^{-1}} \right)
    &\leq 2H\sqrt{ST\cdot d\log\frac{MT+\lambda}{\lambda}}.
\end{align}
Where the last inequality is an application of Lemma~\ref{lem:variance_sum} and using the fact that $\lVert \bPhi(\cdot) \rVert_2 \leq \sqrt{M}$. Replacing this result, we have that with probability at least $1-\alpha$,
\begin{align*}
    &\sum_{t=1}^T \max_{\x^t_1 \in \cS}\left[V^\star_{\bupsilon_t, 1}(\x^t_{1}) - V^{\pi_t}_{\bupsilon_t, 1}(\x^t_{1})\right] \\ &\leq \sqrt{2H^3T\log\left(\frac{2}{\alpha}\right)} + 2C'_\beta\cdot dH^2\cdot\sqrt{2ST\log\left(\frac{dMTH}{\alpha}\right)\cdot d\log\frac{MT+\lambda}{\lambda}}.
\end{align*}
Taking expectation of the RHS over $\bupsilon_1, ..., \bupsilon_T$ gives us the final result (the $\ctO$ notation hides polylogarithmic factors).
\end{proof}
\subsection{Proof of Lemma~\ref{lem:communication_complexity_mg}}
\begin{proof}
Let the total rounds of communication triggered by the threshold condition in any step $h \in [H]$ be given by $n_h$. Then, we have, by the communication criterion,
\begin{align}
    S^{n_h}< \frac{\det\left(\bLambda^T_h\right)}{\det\left(\bLambda^0_h\right)} \leq (1 + MT/d)^d.
\end{align}
Where the last inequality follows from Lemma~\ref{lem:variance_sum} and the fact that $\lVert \bPhi \rVert \leq \sqrt{M}$. This gives us that $n_h \leq d\log_S\left(1+MT/d)\right) + 1$. Furthermore, by noticing that $n \leq \sum_{h=1}^H n_h$, we have the final result.
\end{proof}

\newpage
\section{Weight Norm Bounds}
\begin{lemma}[Bound on Weights of Homogenous Value Functions, Lemma B.1 of~\citet{jin2020provably}]
\label{lem:bound_homo_policy_weight}
Under the linear MDP Assumption (Definition~\ref{def:linear_mdp}), for any fixed policy $\pi$, let $\left\{\w^\pi_h\right\}_{h \in [H]}$ be the weights such that $Q^\pi_h(x, a) = \langle \bphi(x, a), \w^\pi_h\rangle$ for all $(x, a, h) \in \cS \times \cA \times [H]$ and $m \in \cM$. Then, we have,
\begin{align*}
    \lVert \w^\pi_h \rVert_2 \leq 2H\sqrt{d}.
\end{align*}
\end{lemma}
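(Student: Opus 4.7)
The plan is to proceed by a direct decomposition of $\w^\pi_h$ into a reward component and a transition component, and then bound each piece using the normalization inequalities stated in Definition~\ref{def:linear_mdp}. The argument is standard and mirrors Lemma B.1 of~\citet{jin2020provably}; I include it here essentially for completeness.

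First I would invoke the Bellman equation for the policy $\pi$, namely $Q^\pi_h(x,a) = r_h(x,a) + (\bbP_h V^\pi_{h+1})(x,a)$. Substituting the linear-MDP decompositions $r_h(x,a) = \langle \bphi(x,a), \btheta_h\rangle$ and $\bbP_h(\cdot \mid x,a) = \langle \bphi(x,a), \bmu_h(\cdot)\rangle$, I can push the inner product outside the expectation to obtain
\begin{equation*}
Q^\pi_h(x,a) = \left\langle \bphi(x,a),\ \btheta_h + \int V^\pi_{h+1}(x')\, d\bmu_h(x') \right\rangle,
\end{equation*}
so that a valid choice of weights is $\w^\pi_h = \btheta_h + \int V^\pi_{h+1}(x')\, d\bmu_h(x')$.

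Next I would apply the triangle inequality and bound each term. For the first term, the assumption gives $\lVert \btheta_h \rVert \leq \sqrt{d}$ directly. For the second term, since rewards lie in $[0,1]$ and the horizon remaining is at most $H$, we have $\lvert V^\pi_{h+1}(x')\rvert \leq H$ for all $x' \in \cS$; pulling the supremum out of the integral yields $\lVert \int V^\pi_{h+1}(x')\, d\bmu_h(x') \rVert \leq H\,\lVert \bmu_h(\cS)\rVert \leq H\sqrt{d}$. Combining the two bounds gives $\lVert \w^\pi_h \rVert \leq \sqrt{d} + H\sqrt{d} \leq 2H\sqrt{d}$, which is the claim.

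There is no real obstacle here; the only mild subtlety is to be careful that $\w^\pi_h$ is well-defined even when the state space $\cS$ is uncountable, which requires interpreting $\int V^\pi_{h+1}(x')\, d\bmu_h(x')$ as a Bochner (vector-valued) integral with respect to the signed measures $\mu_h^1, \ldots, \mu_h^d$. Since $V^\pi_{h+1}$ is bounded and measurable and $\bmu_h$ has finite total variation by assumption, this integral exists coordinatewise, and the norm bound above carries through component by component via Jensen / the triangle inequality for integrals. With that technicality noted, the argument is complete.
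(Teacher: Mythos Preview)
Your proof is correct and is exactly the standard argument from Lemma~B.1 of \citet{jin2020provably}, which the paper simply cites without reproducing; the paper's own proof of the analogous multiagent statement (Lemma~\ref{lem:bound_mg_policy_weight}) follows precisely the same decomposition $\w^\pi_h = \btheta_h + \int V^\pi_{h+1}(x')\,d\bmu_h(x')$ and the same two-term bound that you give.
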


\begin{lemma}[Linearity of weights in MMDP]
\label{lem:bound_mg_policy_weight}
Under the linear MMDP Assumption (Definition~\ref{def:linear_mg}), for any policy $\pi$ and $\bupsilon \in \bUpsilon$, there exists weights $\{\w^\pi_{\bupsilon, h}\}_{h \in [H]}$ such that $Q^\pi_{\bupsilon, h}(\x, \ba) = \bupsilon^\top \bPhi(\x, \ba)^\top \w^\pi_{\bupsilon, h}$ for all $(\x, \ba, h) \in \cS \times \cA \times [H]$, where $\lVert \w^\pi_{\bupsilon, h} \rVert_2 \leq 2H\sqrt{d}$.
\end{lemma}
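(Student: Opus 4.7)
The plan is to proceed by backward induction on $h$ from $H+1$ down to $1$, constructing $\w^\pi_{\bupsilon, h}$ block by block while exploiting the key structural feature of Definition~\ref{def:linear_mg}: the reward features $\bphi_m$ vary across agents (occupying the upper $d_1$ block of the $m$-th column of $\bPhi$), while the transition features $\bphi_c$ are shared across columns (occupying the lower $d_2$ block of every column). The base case at $h = H+1$ is trivial since $\bV^\pi_{H+1} \equiv 0$, so ${\bm 0}_{d}$ suffices. For the inductive step, I would first apply the Bellman equation to each coordinate of $\bQ^\pi_h$ and use the linearity of rewards and transitions in Definition~\ref{def:linear_mg} to obtain, for every agent $m$,
\begin{align*}
Q^\pi_{m, h}(\x, \ba) = \bphi_m(\x, \ba)^\top \btheta_h + \bphi_c(\x, \ba)^\top \bw^\pi_{m, h+1}, \quad \bw^\pi_{m, h+1} \triangleq \int V^\pi_{m, h+1}(\x')\, d\bmu_h(\x') \in \bbR^{d_2}.
\end{align*}

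Next I would scalarize. Since $\bUpsilon \subseteq \Delta^M$ we have $\sum_m \upsilon_m = 1$, so the scalarized $Q$-value becomes
\begin{align*}
Q^\pi_{\bupsilon, h}(\x, \ba) = \sum_{m=1}^M \upsilon_m Q^\pi_{m, h}(\x, \ba) = \Big(\sum_m \upsilon_m \bphi_m(\x, \ba)\Big)^{\!\top} \btheta_h + \bphi_c(\x, \ba)^\top \Big(\sum_m \upsilon_m \bw^\pi_{m, h+1}\Big).
\end{align*}
On the other hand, from the block form of $\bPhi$, a direct computation (which again uses $\sum_m \upsilon_m = 1$ to collapse the shared $\bphi_c$ factor) shows that for any $\w = [\w_1^\top, \w_2^\top]^\top \in \bbR^{d_1+d_2}$,
\begin{align*}
\bupsilon^\top \bPhi(\x, \ba)^\top \w = \Big(\sum_m \upsilon_m \bphi_m(\x, \ba)\Big)^{\!\top} \w_1 + \bphi_c(\x, \ba)^\top \w_2.
\end{align*}
Matching the two displays suggests the explicit choice $\w^\pi_{\bupsilon, h} \triangleq \bigl[\btheta_h^\top,\ \bigl(\sum_m \upsilon_m \bw^\pi_{m, h+1}\bigr)^{\!\top}\bigr]^\top \in \bbR^d$, which yields the claimed linear representation for every $(\x, \ba, h) \in \cS \times \cA \times [H]$.

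Finally, I would bound the norm using the normalization assumptions of Definition~\ref{def:linear_mg}. We have $\lVert \btheta_h \rVert \leq \sqrt{d_1}$, and since $|V^\pi_{m, h+1}| \leq H$ together with $\lVert \bmu_h(\cS) \rVert \leq \sqrt{d_2}$, each integral satisfies $\lVert \bw^\pi_{m, h+1} \rVert \leq H\sqrt{d_2}$; a $\bupsilon$-convex combination preserves this bound. Applying $\sqrt{a^2 + b^2} \leq a + b$ with $d_1 + d_2 = d$ then gives $\lVert \w^\pi_{\bupsilon, h} \rVert_2 \leq \sqrt{d_1} + H\sqrt{d_2} \leq (H+1)\sqrt{d} \leq 2H\sqrt{d}$ for $H \geq 1$. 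I do not anticipate any serious obstacle; the only real subtlety is the observation that $\sum_m \upsilon_m = 1$ is precisely what makes a single shared $d$-dimensional weight vector suffice across all scalarization directions, even though the per-agent transition integrals $\bw^\pi_{m, h+1}$ themselves differ across $m$.
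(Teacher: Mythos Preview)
Your proof is correct and follows essentially the same route as the paper: construct the weight vector explicitly as $\w^\pi_{\bupsilon,h}=[\btheta_h^\top,\ (\int V^\pi_{\bupsilon,h+1}(\x')\,d\bmu_h(\x'))^\top]^\top$ via the Bellman equation and the block structure of $\bPhi$, then bound the two blocks separately using the normalization in Definition~\ref{def:linear_mg}. The only cosmetic difference is that the paper scalarizes first (working directly with $V^\pi_{\bupsilon,h+1}$ in the transition integral), whereas you compute the per-agent integrals $\bw^\pi_{m,h+1}$ and then take the $\bupsilon$-convex combination; by linearity $\sum_m\upsilon_m\bw^\pi_{m,h+1}=\int V^\pi_{\bupsilon,h+1}(\x')\,d\bmu_h(\x')$, so the resulting weight vector is identical. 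As a minor remark, the backward-induction framing is harmless but not actually needed: your construction at step $h$ only uses the boundedness $|V^\pi_{m,h+1}|\leq H$, not any inductive linearity hypothesis.
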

\begin{proof}
By the Bellman equation and Proposition~\ref{prop:mg_bellman_optimal}, we have that for any MDP corresponding to the scalarization parameter $\bupsilon \in \bUpsilon$ and any policy $\bpi$, state $\x \in \cS$, joint action $\ba \in \cA$,
\begin{align}
    Q^\pi_{\bupsilon, h}(\x, \ba) &= \bupsilon^\top\br_h(\x, \ba) + \bbP_hV^\pi_{\bupsilon, h+1}(\x, \ba) \\
    &= \bupsilon^\top\left(\br_h(\x, \ba) + {\bm 1}_M\cdot\bbP_hV^\pi_{\bupsilon, h+1}(\x, \ba)\right) \\
    &= \bupsilon^\top\left(\bPhi(\x, \ba)^\top \begin{bmatrix}
    \btheta_h \\
    {\bm 0}_{d_2}
    \end{bmatrix} + \int V^\pi_{\bupsilon, h+1}(\x')\bPhi(\x, \ba)^\top \begin{bmatrix}
    {\bm 0}_{d_1} \\
    d\bmu_h(\x')
    \end{bmatrix}d\x'\right) \\
    &= \bupsilon^\top\bPhi(\x, \ba)^\top\w^\pi_{\bupsilon, h}.
\end{align}
Where $\w^\pi_{\bupsilon, h} =  \begin{bmatrix} \btheta_h \\ \int V^\pi_{\bupsilon, h+1}(\x') d\bmu(\x')d\x' \end{bmatrix}$. Therefore, since $\lVert \btheta_h \rVert \leq \sqrt{d}$ and $\lVert \int V^\pi_{\bupsilon, h+1}(\x') d\bmu(\x') \rVert \leq H\sqrt{d}$, the result follows. 
\end{proof}
\begin{lemma}[Bound on Weights of \texttt{\textbf{CoopLSVI}} Policy for MDPs]
\label{lem:bound_homo_algo_weight}
At any $t \in [T]$ for any $m \in \cM$ and all $h \in [H]$, we have that the weights $\w^t_{m, h}$ of Algorithm~\ref{alg:ind_homo} satisfy,
\begin{align*}
    \lVert \w^t_{m, h} \rVert_2 \leq 2H\sqrt{dMt/\lambda}.
\end{align*}
\end{lemma}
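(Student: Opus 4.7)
The plan is to work directly from the closed form $\w^t_{m,h} = (\bLambda^t_{m,h})^{-1}\bu^t_{m,h}$ with $\bu^t_{m,h} = \sum_{\tau \in \psi^m_h(t)} \bphi_\tau y_\tau$, and bound $\lVert \w^t_{m,h} \rVert_2$ by taking a supremum over unit test vectors $\bv \in \bbR^d$. Two easy facts will set up the whole computation: first, each target satisfies $|y_\tau| = |r_h(x_\tau,a_\tau) + V^t_{m,h+1}(x'_\tau)| \leq 1 + H \leq 2H$, since the reward is in $[0,1]$ and the truncation in Eq.~\ref{eqn:q_ind} together with the non-negativity of the UCB bonus guarantees $V^t_{m,h+1}(\cdot) \in [0, H]$; second, $|\psi^m_h(t)| = |\cU^m_h(t)| \leq Mt$, because at any point in the protocol each agent's buffer contains at most one transition per (agent, episode) pair, drawn from at most $M$ agents over $t$ episodes.

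With those in hand, I would fix an arbitrary unit vector $\bv$ and chain together Cauchy--Schwarz with the elliptical potential bound:
\begin{align*}
|\bv^\top \w^t_{m,h}|
&= \Big|\sum_\tau y_\tau\, \bv^\top(\bLambda^t_{m,h})^{-1}\bphi_\tau\Big|
\leq 2H \sum_\tau \big|\langle (\bLambda^t_{m,h})^{-1/2}\bv,\, (\bLambda^t_{m,h})^{-1/2}\bphi_\tau\rangle\big| \\
&\leq 2H\, \lVert \bv \rVert_{(\bLambda^t_{m,h})^{-1}} \sum_\tau \lVert \bphi_\tau \rVert_{(\bLambda^t_{m,h})^{-1}}
\leq \frac{2H}{\sqrt{\lambda}} \sqrt{|\cU^m_h(t)|\cdot\sum_\tau \lVert \bphi_\tau \rVert^2_{(\bLambda^t_{m,h})^{-1}}}.
\end{align*}
The inner Cauchy--Schwarz and the operator-norm bound $\lVert (\bLambda^t_{m,h})^{-1/2}\bv \rVert_2 \leq \lambda^{-1/2}\lVert \bv \rVert_2$ do all the work in the second line.

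The one step that needs the structure of $\bLambda^t_{m,h}$ is controlling $\sum_\tau \lVert \bphi_\tau \rVert^2_{(\bLambda^t_{m,h})^{-1}}$. Since $\bLambda^t_{m,h} = \lambda\bI_d + \sum_\tau \bphi_\tau \bphi_\tau^\top$, we have $\sum_\tau \bphi_\tau\bphi_\tau^\top \preceq \bLambda^t_{m,h}$, so
\[
\sum_\tau \lVert \bphi_\tau \rVert^2_{(\bLambda^t_{m,h})^{-1}}
= \mathrm{tr}\!\left((\bLambda^t_{m,h})^{-1}\sum_\tau \bphi_\tau\bphi_\tau^\top\right)
\leq \mathrm{tr}(\bI_d) = d.
\]
Substituting back gives $|\bv^\top \w^t_{m,h}| \leq 2H\sqrt{dMt/\lambda}$ uniformly in unit $\bv$, hence the claim after taking the supremum.

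There is no real obstacle here; the only thing to be careful about is the uniform bound $|y_\tau| \leq 2H$, which requires invoking the explicit $\min\{\cdot,H-h+1\}$ truncation in Eq.~\ref{eqn:q_ind} and the non-negativity of $Q^t_{m,h}$ implied by the UCB bonus $\beta^t_{m,h}\sigma^t_{m,h}$ together with the non-negativity of rewards. Everything else is the standard Cauchy--Schwarz plus elliptical-potential argument from the linear-bandit/LSVI literature, adapted only to the larger buffer size $|\cU^m_h(t)|\leq Mt$ that reflects the pooled multi-agent history.
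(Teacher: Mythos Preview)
Your proposal is correct and follows essentially the same route as the paper's proof: test against unit vectors, bound the targets by $2H$, apply Cauchy--Schwarz to split $\bv^\top(\bLambda^t_{m,h})^{-1}\bphi_\tau$ into $\lVert \bv\rVert_{(\bLambda^t_{m,h})^{-1}}\lVert \bphi_\tau\rVert_{(\bLambda^t_{m,h})^{-1}}$, and then control $\sum_\tau \lVert \bphi_\tau\rVert^2_{(\bLambda^t_{m,h})^{-1}}\leq d$ together with $|\cU^m_h(t)|\leq Mt$. The only cosmetic difference is that you obtain the $d$ via the direct trace identity $\mathrm{tr}\big((\bLambda^t_{m,h})^{-1}\sum_\tau \bphi_\tau\bphi_\tau^\top\big)\leq \mathrm{tr}(\bI_d)$, whereas the paper invokes the elliptical potential lemma for the same bound; your version is arguably the cleaner justification here since no telescoping is needed.
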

\begin{proof}
For any vector $\bv \in \bbR^d | \lVert \bv \rVert = 1$,
\begin{align}
    \left| \bv^\top\widehat\btheta^t_{m, h} \right| &= \left|\bv^\top\left(\bLambda^t_{m, h}\right)^{-1}\left(\sum_{\tau=1}^{U^m_h(t)}\left[\bphi(x_\tau, a_\tau)\left[r(x_\tau, a_\tau) + \max_{a} Q_{m, h+1}(x'_\tau, a)\right]\right]\right) \right| \\
    &\leq 2H\cdot\left|\bv^\top\left(\bLambda^t_{m, h}\right)^{-1}\left(\sum_{\tau=1}^{U^m_h(t)}\bphi(x_\tau, a_\tau)\right) \right| \\
    &\leq 2H\cdot\sqrt{\left|\left(\sum_{\tau=1}^{U^m_h(t)}\lVert \bv\rVert^2_{\left(\bLambda^t_{m, h}\right)^{-1}}\lVert  \bphi(x_\tau, a_\tau)\rVert^2_{\left(\bLambda^t_{m, h}\right)^{-1}}\right) \right|} \\
    &\leq 2H \lVert \bv \rVert \sqrt{dU^m_h(t)/\lambda} \leq 2H\sqrt{dMt/\lambda}.
\end{align}
The penultimate inequality follows from Lemma~\ref{lem:variance_sum} and the final inequality follows from the fact that $U^m_h(t) \leq Mt$. The remainder of the proof follows from the fact that for any vector $\w, \lVert \w \rVert = \max_{\bv: \lVert \bv \rVert = 1} |\bv^\top \w|$.
\end{proof}

\begin{lemma}[Bound on Weights in MMDP \textbf{\texttt{CoopLSVI}}]
\label{lem:bound_mg_algo_weights}
For any $t \in [T], h \in [H], \bupsilon \in \bUpsilon$, the weight $\w^t_{\bupsilon, h}$ in \textbf{\texttt{CoopLSVI}} in the multiagent MDP satisfies,
\begin{align*}
    \lVert \w^t_{\bupsilon, h} \rVert_2 \leq 2HM\sqrt{dt/\lambda}.
\end{align*}
\end{lemma}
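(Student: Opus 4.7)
The plan is to mimic the structure of the proof of Lemma~\ref{lem:bound_homo_algo_weight}, adapting the single-agent Cauchy--Schwarz argument to the vector-valued regression targets $\y_{\tau,h}\in\bbR^M$ used by the multiagent algorithm. First I would write out the closed-form regularized least-squares solution
\[
\w^t_{\bupsilon,h} \;=\; (\bLambda^h_t)^{-1}\sum_{\tau=1}^{k_t}\bPhi(\bz^h_\tau)\,\y_{\tau,h},
\]
where $\y_{\tau,h}=\br_h(\bz^h_\tau)+{\bm 1}_M\cdot V^t_{\bupsilon,h+1}(\x^\tau_{h+1})$. Each coordinate of $\y_{\tau,h}$ lies in $[0,H+1]\subseteq[0,2H]$ because rewards are in $[0,1]$ and value functions are bounded by $H$; the dependence on $\bupsilon$ enters only through $V^t_{\bupsilon,h+1}$, but the coordinatewise bound is uniform in $\bupsilon$.

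Next, I would pick an arbitrary unit vector $\bv\in\bbR^d$ and expand $\bv^\top\w^t_{\bupsilon,h}$ column-by-column, writing $\bpsi_m(\bz)$ for the $m$th column of $\bPhi(\bz)\in\bbR^{d\times M}$:
\[
|\bv^\top\w^t_{\bupsilon,h}| \;=\; \Bigl|\sum_{\tau=1}^{k_t}\sum_{m=1}^M [\y_{\tau,h}]_m \cdot \bv^\top(\bLambda^h_t)^{-1}\bpsi_m(\bz^h_\tau)\Bigr| \;\leq\; 2H\sum_{\tau,m}\bigl|\bv^\top(\bLambda^h_t)^{-1}\bpsi_m(\bz^h_\tau)\bigr|.
\]
Two applications of Cauchy--Schwarz then reduce this to a trace bound: first
$|\bv^\top(\bLambda^h_t)^{-1}\bpsi_m(\bz^h_\tau)|\leq \lVert\bv\rVert_{(\bLambda^h_t)^{-1}}\cdot\lVert\bpsi_m(\bz^h_\tau)\rVert_{(\bLambda^h_t)^{-1}}$, and then $\sum_{\tau,m}\lVert\bpsi_m(\bz^h_\tau)\rVert_{(\bLambda^h_t)^{-1}}\leq \sqrt{Mk_t\cdot\sum_{\tau,m}\lVert\bpsi_m(\bz^h_\tau)\rVert^2_{(\bLambda^h_t)^{-1}}}$. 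Since $\lVert\bv\rVert_{(\bLambda^h_t)^{-1}}\leq 1/\sqrt{\lambda}$ and $\sum_{\tau,m}\lVert\bpsi_m(\bz^h_\tau)\rVert^2_{(\bLambda^h_t)^{-1}}=\mathrm{tr}\!\left((\bLambda^h_t)^{-1}\sum_\tau \bPhi(\bz^h_\tau)\bPhi(\bz^h_\tau)^\top\right)\leq \mathrm{tr}\!\left((\bLambda^h_t)^{-1}\bLambda^h_t\right)=d$, this yields $|\bv^\top\w^t_{\bupsilon,h}|\leq 2H\sqrt{Mdk_t/\lambda}\leq 2H\sqrt{Mdt/\lambda}$.

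Taking the supremum over unit vectors gives $\lVert\w^t_{\bupsilon,h}\rVert_2\leq 2H\sqrt{Mdt/\lambda}$, which is stronger than (and hence implies) the stated bound $2HM\sqrt{dt/\lambda}$. There is no real obstacle here: the proof is a routine generalization of the scalar-target case. The one subtle point worth highlighting is that the trace bound contributes only a factor of $d$ and not $Md$, despite the $M$-dimensional targets, because $\sum_\tau \bPhi(\bz^h_\tau)\bPhi(\bz^h_\tau)^\top$ is still a $d\times d$ matrix dominated by $\bLambda^h_t$; the $M$ enters only through the count of summands in the second Cauchy--Schwarz step, which is why the final bound carries a $\sqrt{M}$ (absorbed into the coarser $M$ of the statement).
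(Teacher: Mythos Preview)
Your argument is correct and follows essentially the same template as the paper's proof: write out the closed-form ridge solution, test against a unit vector $\bv$, bound the regression targets uniformly by $2H$, apply Cauchy--Schwarz, and finish with the trace identity $\mathrm{tr}\bigl((\bLambda^h_t)^{-1}\sum_\tau\bPhi(\bz^h_\tau)\bPhi(\bz^h_\tau)^\top\bigr)\le d$.

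The one organizational difference is where the $M$ enters. The paper applies Cauchy--Schwarz first over $\tau$ and then bounds the whole target vector $\y_{\tau,h}$ at once via $\lvert\bv^\top(\bLambda^h_t)^{-1}\bPhi_\tau\y_{\tau,h}\rvert\le\lVert\bv^\top(\bLambda^h_t)^{-1}\bPhi_\tau\rVert_2\lVert\y_{\tau,h}\rVert_2$, which costs a full factor of $M$ (or at best $\sqrt{M}$) from $\lVert\y_{\tau,h}\rVert$. You instead expand column-by-column, bound each scalar entry $[\y_{\tau,h}]_m\le 2H$, and only pick up $\sqrt{M}$ from the count of summands in the second Cauchy--Schwarz; the trace bound then absorbs the column sum without an additional $M$. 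This is why you land at $2H\sqrt{Mdt/\lambda}$ rather than the paper's $2HM\sqrt{dt/\lambda}$. Your bound is genuinely sharper by a factor of $\sqrt{M}$, and the observation you flag---that $\sum_\tau\bPhi(\bz^h_\tau)\bPhi(\bz^h_\tau)^\top$ is still $d\times d$ regardless of $M$---is exactly the reason the tighter bound is available.
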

\begin{proof}

For any vector $\bv \in \bbR^d | \lVert \bv \rVert = 1$,
\begin{align}
    \left| \bv^\top \w^t_{\bupsilon, h} \right| &= \left|\bv^\top\left(\bLambda^t_{ h}\right)^{-1}\left(\sum_{\tau=1}^{k_t}\left[\bPhi(\x_\tau, \ba_\tau)\left[\br_h(\x_\tau, \ba_\tau) + \max_{\ba \in \cA} Q_{\bupsilon, h+1}(\x'_\tau, \ba)\right]\right]\right) \right| \\
    &\leq \sqrt{k_t\cdot \sum_{\tau=1}^{k_t}\left(\bv^\top\left(\bLambda^t_{ h}\right)^{-1}\left[\bPhi(\x_\tau, \ba_\tau)\left[\br_h(\x_\tau, \ba_\tau) + \max_{\ba \in \cA} Q_{\bupsilon, h+1}(\x'_\tau, \ba)\right]\right]\right)^2} \\
    &\leq HM\sqrt{k_t\cdot \sum_{\tau=1}^{k_t}\left\lVert\bv^\top\left(\bLambda^t_{ h}\right)^{-1}\bPhi(\x_\tau, \ba_\tau)\right\rVert_2^2} \\
    &\leq 2HM\sqrt{k_t\cdot\sum_{\tau=1}^{k_t}\lVert \bv\rVert^2_{\left(\bLambda^t_{h}\right)^{-1}}\lVert  \bPhi(\x_\tau, \ba_\tau)\rVert^2_{\left(\bLambda^t_{h}\right)^{-1}}} \\
    &\leq 2HM \lVert \bv \rVert \sqrt{dk_t/\lambda} \leq 2HM\sqrt{dt/\lambda}.
\end{align}
The penultimate inequality follows from Lemma~\ref{lem:variance_sum} and the final inequality follows from the fact that $k_t \leq t$. The remainder of the proof follows from the fact that for any vector $\w, \lVert \w \rVert = \max_{\bv: \lVert \bv \rVert = 1} |\bv^\top \w|$.
\end{proof}

\section{Martingale Concentration Bounds}
\begin{lemma}[Lemma E.2 of~\cite{yang2020provably}, Lemma D.4 of~\cite{jin2018q}]
Let $\{x_\tau\}_{\tau=1}^\infty$ and $\{\bphi_\tau\}_{\tau=1}^\infty$ be an $\cS$-valued and an $\cH$-valued stochastic process adapted to filtration $\{\cF_\tau\}_{\tau=0}^\infty$ respectively, where we assume that $\lVert \bphi_\tau \rVert_2 \leq 1$ for all $\tau \geq 1$. Besides, for any $t \geq 1$, define $\bLambda_t : \cH \rightarrow \cH$ as $\bLambda_t = \lambda \bI_d + \sum_{\tau=1}^t \bphi_\tau\bphi_\tau^\top$ with $\lambda > 1$. Then, for any $\delta > 0$ with probability at least $1-\delta$, we have,
\begin{align*}
    &\sup_{V \in \cV}\left\lVert \sum_{\tau=1}^t \bphi_\tau\left\{ V(x_\tau) - \bbE[V(x_\tau) | \cF_{\tau-1}]\right\} \right\rVert^2_{\bLambda_t^{-1}}  \\ &\leq 4H^2\cdot\log\frac{\det\left(\bLambda_t\right)}{\det\left(\lambda\bI_d\right)} + 4H^2t(\lambda-1) + 8H^2\log\left(\frac{|\cN_\epsilon|)}{\delta}\right) + \frac{8t^2\epsilon^2}{\lambda}.
\end{align*}
\label{lem:self_normalized_single_task}
\end{lemma}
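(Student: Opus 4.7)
The plan is to combine a standard self-normalized martingale bound for a single function with a uniform bound over $\cV$ obtained through an $\epsilon$-cover under $\sup_x$-distance. First, I would fix any $V \in \cV$ and pick $V' \in \cN_\epsilon$ with $\sup_x |V(x) - V'(x)| \leq \epsilon$. Writing $D_\tau(V) = V(x_\tau) - \bbE[V(x_\tau)|\cF_{\tau-1}]$ and decomposing,
\begin{equation*}
\sum_{\tau=1}^t \bphi_\tau D_\tau(V) = \sum_{\tau=1}^t \bphi_\tau D_\tau(V') + \sum_{\tau=1}^t \bphi_\tau D_\tau(V-V'),
\end{equation*}
so by $\lVert a+b\rVert^2 \leq 2\lVert a \rVert^2 + 2\lVert b\rVert^2$ in the norm induced by $\bLambda_t^{-1}$, it suffices to bound the two pieces separately.

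For the main term, note that $D_\tau(V')$ is a martingale difference sequence with respect to $\{\cF_\tau\}$ that is bounded by $2H$ (hence $H$-sub-Gaussian) since value functions take values in $[0,H]$. A direct application of the self-normalized concentration of Abbasi-Yadkori \emph{et al.} (2011, Theorem 1), with the regularizer $\lambda \bI_d$ and $\lVert \bphi_\tau \rVert \leq 1$, gives that with probability at least $1-\delta/|\cN_\epsilon|$,
\begin{equation*}
\left\lVert \sum_{\tau=1}^t \bphi_\tau D_\tau(V') \right\rVert^2_{\bLambda_t^{-1}} \leq 2H^2 \log \frac{\det(\bLambda_t)}{\det(\lambda \bI_d)} + 2H^2 t (\lambda - 1) + 2H^2 \log \frac{|\cN_\epsilon|^2}{\delta^2},
\end{equation*}
where the $t(\lambda - 1)$ slack absorbs the difference between the theorem's assumption of $\lambda \geq 1$ on $\lVert \bphi\rVert \leq 1$ features and our normalization. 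Taking a union bound over all $V' \in \cN_\epsilon$ gives the same bound uniformly on the cover with probability $1-\delta$, yielding (after the factor of 2) the first three terms of the claim.

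For the approximation term, $|D_\tau(V - V')| \leq 2\epsilon$ since $\sup_x |V(x) - V'(x)| \leq \epsilon$ and both the instantaneous value and the conditional expectation are controlled by $\epsilon$. Then
\begin{equation*}
\left\lVert \sum_{\tau=1}^t \bphi_\tau D_\tau(V - V')\right\rVert^2_{\bLambda_t^{-1}} \leq \lambda_{\max}(\bLambda_t^{-1}) \left\lVert \sum_{\tau=1}^t \bphi_\tau D_\tau(V - V') \right\rVert_2^2 \leq \frac{1}{\lambda} \cdot \left(2\epsilon t\right)^2 = \frac{4 t^2 \epsilon^2}{\lambda},
\end{equation*}
using $\lambda_{\max}(\bLambda_t^{-1}) \leq 1/\lambda$ and $\lVert \bphi_\tau\rVert \leq 1$. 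Combining the two bounds with the factor of $2$ from the $(a+b)^2$ splitting reproduces the four terms on the right-hand side. The main obstacle is the covering step: the bound is only useful if $\log |\cN_\epsilon|$ can be controlled for the particular class $\cV$ at hand (e.g.\ the UCB value functions in Algorithm~\ref{alg:ind_homo}), which is why the $\ell_\infty$-covering lemmas invoked in Step 2 of Lemma~\ref{lem:parallel_homo_lsvi} are needed to instantiate the result. All remaining work is algebraic manipulation of the self-normalized inequality and the union bound, so the technical difficulty is concentrated in choosing $\epsilon$ correctly to balance the $\log|\cN_\epsilon|$ and $t^2\epsilon^2/\lambda$ contributions downstream.
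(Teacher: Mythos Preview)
Your proposal is correct and follows exactly the standard two-step argument (self-normalized martingale bound plus $\epsilon$-covering under the $\ell_\infty$ distance, combined via the inequality $\lVert a+b\rVert^2 \leq 2\lVert a\rVert^2 + 2\lVert b\rVert^2$) that the cited references use; the paper itself does not supply a separate proof of this lemma and simply imports it from~\citet{yang2020provably} and~\citet{jin2018q}. The only loose end is your justification of the $4H^2t(\lambda-1)$ term, which you describe as a ``slack'' but do not derive explicitly---in the cited proofs this term arises from translating between the $\bI_d$-regularized and $\lambda\bI_d$-regularized self-normalized bounds (or equivalently from the version in~\citet{chowdhury2017kernelized}), so you should make that step explicit rather than hand-wave it.
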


\subsection{Multi-task concentration bound \citep{chowdhury2020no}}
We assume the multi-agent kernel $\bGamma$ to be continuous relative to the operator norm on $\cL(\bbR^n)$, the space of bounded linear operators from $\bbR^n$ to itself (for some $n > 0$). Then the RKHS $\cH_\bGamma(\cX^n)$ associated with the kernel $\bGamma$ is a subspace of the space of continuous functions from $\cX^n$ to $\bbR^n$, and hence, $\bGamma$ is a Mercer kernel. Let $\mu$ be a measure on the (compact) set $\cX^n$. Since $\bGamma$ is a Mercer kernel on $\cX$ and $\sup_{\bX \in \cX^n} \lVert \bGamma(\bX, \bX) \rVert < \infty$, the RKHS $\cH_\bGamma(\cX^n)$ is a subspace of $L^2(\cX^n, \mu; \bbR^n)$, the Banach space of measurable functions $g : \cX^n \rightarrow \bbR^n$ such that $\int_{\cX^n} \lVert g(\bX) \rVert^2 d\mu(\bX) < \infty$, with norm $\lVert g \rVert_{L^2} = \left( \int_{\cX^n} \lVert g(\bX) \rVert^2 d\mu(\bX). \right)^{1/2}$. Since $\bGamma(\bX, \bX) \in \cL(\bbR^n)$ is a compact operator, by the Mercer theorem

We can therefore define a feature map $\Phi : \cX^M \rightarrow \cL(\bbR^n, \ell^2)$ of the multi-agent kernel $\Gamma$ by
\begin{align}
    \Phi(\bX)^\top \y = \left(\sqrt{\nu_1}\psi_1(\x_1)^\top\y, \sqrt{\nu_2}\psi_2(\x_2)^\top\y, ..., \sqrt{\nu_M}\psi_M(\x_M)^\top\y\right), \ \forall \bX \in \cX^M, \y \in \bbR^m.
\end{align}
We then obtain $F(\bX) = \Phi(\bX)^\top \btheta^\star$ and $\Gamma(\bX, \bX') = \Phi(\bX)^\top\Phi(\bX') \ \forall \ \bX, \bX' \in \cX^M$.

Define $\bS_t = \sum_{\tau=1}^t \Phi(\bX_\tau)^\top \varepsilon_\tau$, where $\varepsilon_1, ..., \varepsilon_t$ are the noise vectors in $\bbR^M$. Now consider $\cF_{t-1}$, the $\sigma$-algebra generated by the random variables $\{\bX_\tau, \varepsilon_\tau\}_{\tau=1}^{t-1}$ and $\bX_t$. We can see that $\bS_t$ is $\cF_t$-measurable, and additionally, $\bbE[\bS_t | \cF_{t-1}] = \bS_{t-1}$. Therefore, $\left\{\bS_{t}\right\}_{t \geqslant 1}$ is a martingale with outputs in $\ell^2$ space. Following \cite{chowdhury2020no}, consider now the map $\Phi_{\cX_t} : \ell^2 \rightarrow \bbR^{Mt}$:
\begin{align}
    \Phi_{\cX_t}\btheta = \left[\left(\Phi(\bX_1)^\top\btheta\right)^\top, \left(\Phi(\bX_1)^\top\btheta\right)^\top, ..., \left(\Phi(\bX_t)^\top\btheta\right)^\top\right]^\top, \ \forall \ \btheta \in \ell^2.
\end{align}
Additionally, denote $\bV_t := \Phi_{\cX_t}^\top\Phi_{\cX_t}$ be a map from $\ell^2$ to itself, with $\bI$ being the identity operator in $\ell^2$. We have the following result from~\cite{chowdhury2020no} that provides us with a self-normalized martingale bound.
\begin{lemma}[Lemma 3 of~\cite{chowdhury2020no}]
Let the noise vectors $\left\{\bepsilon_t\right\}_{t \geqslant 1}$ be $\sigma$-sub-Gaussian. Then, for any $\eta > 0$ and $\delta \in (0, 1]$, with probability at least $1-\delta$, the following holds uniformly over all $t \geqslant 1$:
\begin{align*}
    \left\lVert \bS_t \right\rVert_{\left(\bV_t + \eta \bI\right)^{-1}} \leqslant \sigma\sqrt{2\log(1/\delta) + \log\det(\bI + \eta^{-1}\bV_t)}.
\end{align*}
Alternatively stated, we have again that  with probability at least $1-\delta$, the following holds uniformly over all $t \geqslant 1$:
\begin{align*}
    \left\lVert \bEpsilon_t \right\rVert_{\left(\left(\bK_t + \eta \bI\right)^{-1} + \bI\right)^{-1}}^2 \leqslant 2\sigma^2\log\left[\frac{\sqrt{\det(\bI(1+\eta) + \bK_t)}}{\delta}\right].
\end{align*}
\label{lem:self_normalized_martingale_multi_task}
\end{lemma}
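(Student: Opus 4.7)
The plan is to extend the classical method-of-mixtures argument for scalar self-normalized martingales (Abbasi-Yadkori, Pal, Szepesvari, 2011) to the vector-valued, operator-valued setting, essentially following Chowdhury and Gopalan. First I would introduce the scalar supermartingale parameterized by $\btheta \in \ell^2$,
\[
    M_t(\btheta) \;=\; \exp\!\left(\tfrac{1}{\sigma}\langle \btheta, \bS_t\rangle \;-\; \tfrac{1}{2\sigma^2}\|\Phi_{\cX_t}\btheta\|_2^2\right).
\]
Using the $\sigma$-sub-Gaussianity of $\bepsilon_\tau \in \bbR^n$ together with the filtration $\{\cF_\tau\}$ and the identity $\langle \btheta, \Phi(\bX_\tau)^\top\bepsilon_\tau\rangle = \langle \Phi(\bX_\tau)\btheta, \bepsilon_\tau\rangle$, the per-step increment is conditionally sub-Gaussian with proxy $\sigma\|\Phi(\bX_\tau)\btheta\|_2$; a direct computation then shows $\{M_t(\btheta)\}_{t\ge 0}$ is a nonnegative supermartingale with $M_0(\btheta)=1$, so $\bbE[M_t(\btheta)]\le 1$ for each fixed $\btheta$.

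Next I would integrate $M_t(\btheta)$ against a Gaussian mixture. Let $h$ be the centered Gaussian measure on $\ell^2$ with covariance $\eta^{-1}\bI$, and set $\bar M_t := \int M_t(\btheta)\, h(d\btheta)$. By Fubini and the tower property, $\bar M_t$ remains a nonnegative supermartingale with $\bbE[\bar M_0]=1$. The technical crux is an explicit Gaussian integral: combining the prior with the quadratic $\|\Phi_{\cX_t}\btheta\|_2^2$ and completing the square in $\btheta$ yields
\[
    \bar M_t \;=\; \det\!\bigl(\bI + \eta^{-1}\bV_t\bigr)^{-1/2}\,\exp\!\left(\tfrac{1}{2\sigma^2}\,\|\bS_t\|_{(\bV_t+\eta\bI)^{-1}}^2\right).
\]
Applying Ville's maximal inequality, $\bbP(\sup_t \bar M_t \ge 1/\delta) \le \delta$; on the complement event, rearranging and taking logarithms produces the first stated bound uniformly in $t \ge 1$.

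For the second, kernel-based form, I would appeal to standard primal/dual identities: the Weinstein--Aronszajn identity $\det(\bI + \eta^{-1}\Phi_{\cX_t}^\top\Phi_{\cX_t}) = \det(\bI + \eta^{-1}\Phi_{\cX_t}\Phi_{\cX_t}^\top)$ reduces the operator determinant to one over the finite-dimensional Gram matrix $\bK_t = \Phi_{\cX_t}\Phi_{\cX_t}^\top$, and a Woodbury-type computation rewrites $\|\bS_t\|_{(\bV_t+\eta\bI)^{-1}}^2$ in terms of $\bEpsilon_t = (\bepsilon_1,\dots,\bepsilon_t)$ and $\bK_t$, yielding the stated dual expression. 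The main obstacle I anticipate is making rigorous sense of the Gaussian prior and operator determinant on the infinite-dimensional ambient space $\ell^2$; this is handled by invoking the Mercer decomposition of $\bGamma$ (guaranteed by the operator-norm continuity hypothesis on the multi-agent kernel) to put $\bV_t$ in a discrete, trace-class eigenbasis, reducing every integral to a classical finite-dimensional Gaussian calculation on the span of the observed features before passing to the limit.
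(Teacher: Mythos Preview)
The paper does not supply its own proof of this lemma: it is quoted verbatim as Lemma~3 of \citet{chowdhury2020no} and used as a black box, so there is no in-paper argument to compare against. Your proposed route---the method-of-mixtures supermartingale of Abbasi-Yadkori, P\'al and Szepesv\'ari lifted to the operator-valued setting, followed by Ville's maximal inequality and the Weinstein--Aronszajn/Woodbury primal--dual rewriting---is exactly the argument Chowdhury and Gopalan give, so your plan is correct and matches the cited source. One minor cosmetic point: the scaling in your displayed $M_t(\btheta)$ is slightly off (with the $1/\sigma$ in front of the linear term, the quadratic penalty should be $\tfrac{1}{2}\|\Phi_{\cX_t}\btheta\|_2^2$ rather than $\tfrac{1}{2\sigma^2}\|\Phi_{\cX_t}\btheta\|_2^2$ for the supermartingale property to hold), but this is a bookkeeping slip and does not affect the structure of the argument.
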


\section{Covering Number Bounds}
\begin{lemma}[Covering Number of the Euclidean Ball]
\label{lem:covering_euclidean_ball}
For any $\varepsilon > 0$, the $\varepsilon-$covering number of the Euclidean ball in $\bbR^d$ with radius $R > 0$ is less than $(1+2R/\varepsilon)^d$.
\end{lemma}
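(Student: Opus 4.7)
The plan is to prove this by the classical volume packing argument, which is the standard way to bound covering numbers of Euclidean balls.

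First, I would reduce covering to packing. Let $\cC \subseteq B_R$ be a maximal $\varepsilon$-packing of the ball $B_R \subseteq \bbR^d$ of radius $R$, that is, a maximal set of points with pairwise distances strictly greater than $\varepsilon$. By maximality, every $\x \in B_R$ lies within distance $\varepsilon$ of some point in $\cC$ (else $\cC \cup \{\x\}$ would still be a valid packing, contradicting maximality). Thus $\cC$ is an $\varepsilon$-cover of $B_R$, and it suffices to upper bound $|\cC|$.

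Second, I would apply the volume argument. Since the points in $\cC$ are pairwise at distance $> \varepsilon$, the open balls of radius $\varepsilon/2$ centered at points of $\cC$ are pairwise disjoint. Each such ball is contained in the enlarged ball $B_{R + \varepsilon/2}$ of radius $R + \varepsilon/2$ centered at the origin. Denoting by $V_d$ the volume of the unit ball in $\bbR^d$, disjointness and containment give
\begin{equation*}
|\cC| \cdot V_d \cdot (\varepsilon/2)^d \;\leq\; V_d \cdot (R + \varepsilon/2)^d,
\end{equation*}
so that
\begin{equation*}
|\cC| \;\leq\; \left(\frac{R + \varepsilon/2}{\varepsilon/2}\right)^d \;=\; \left(1 + \frac{2R}{\varepsilon}\right)^d,
\end{equation*}
which is exactly the claimed bound.

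There is no real obstacle here; the only subtlety is the standard reduction from covering to packing via maximality, and making sure the inflated ball $B_{R+\varepsilon/2}$ is the correct container (this uses that $\cC \subseteq B_R$ so that balls of radius $\varepsilon/2$ around packing points lie inside $B_{R+\varepsilon/2}$). Everything else reduces to a one-line volume comparison, so the proof is short.
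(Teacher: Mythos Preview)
Your argument is correct and is the standard volume/packing proof of this classical fact. The paper itself states this lemma without proof, citing it as a known auxiliary result, so there is nothing to compare against; your proposal supplies exactly the customary justification.
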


\begin{lemma}[Covering Number for UCB-style value functions, Lemma D.6 of~\cite{jin2020provably}]
\label{lem:covering_ind_homo}
Let $\cV$ denote a class of functions mapping from $\cS$ to $\bbR$ with the following parameteric form
\begin{align*}
    V(\cdot) = \min\left\{\max_{a \in \cA} \left[\w^\top\bphi(\cdot, a) + \beta\sqrt{\bphi(\cdot, a)^\top \bLambda^{-1} \bphi(\cdot, a)}\right], H \right\},
\end{align*}
where the parameters $(\w, \beta, \bLambda)$ are such that $\lVert \w \rVert \leq L$, $\beta \in (0, B]$, $\lVert \bphi(x, a) \rVert \leq 1 \ \forall (x, a) \in \cS \times \cA$, and the minimum eigenvalue of $\bLambda$ satisfies $\lambda_{\min}(\bLambda) \geq \lambda$. Let $\cN_\varepsilon$ be the $\varepsilon-$covering number of $\cV$ with respect to the distance $\text{dist}(V, V') = \sup_{x \in \cS} |V(x) - V'(x)|$. Then,
\begin{align*}
    \log \cN_\varepsilon \leq d \log\left(1+4L/\varepsilon\right) + d^2 \log\left(1 + 8d^{1/2}B^2/(\lambda\epsilon^2)\right).
\end{align*}
\end{lemma}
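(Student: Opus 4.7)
The plan is to follow the standard covering argument from Jin et al.\ (2020, Lemma D.6) by reparametrizing the exploration bonus and then applying Euclidean ball coverings to the resulting parameter spaces. The key observation is that the clipping operation $\min\{\cdot, H\}$ and the $\max$ over the finite action set $\cA$ are both $1$-Lipschitz, so the sup-distance between two value functions $V, V' \in \cV$ is controlled by the worst-case pointwise gap of the inner expression $\w^\top \bphi(x,a) + \beta\sqrt{\bphi(x,a)^\top \bLambda^{-1} \bphi(x,a)}$. I will reparametrize by setting $\bA = \beta^2\bLambda^{-1}$; the quadratic form then becomes $\sqrt{\bphi^\top \bA \bphi}$, and the feasible set of $\bA$'s is the set of PSD matrices with operator norm at most $B^2/\lambda$, whose Frobenius norm is at most $\sqrt{d}B^2/\lambda$.

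First I would argue that for any two $V, V' \in \cV$ corresponding to parameter pairs $(\w,\bA)$ and $(\w',\bA')$,
\begin{equation*}
\mathrm{dist}(V, V') \;\leq\; \sup_{x,a}\bigl|(\w-\w')^\top \bphi(x,a)\bigr| \;+\; \sup_{x,a}\Bigl|\sqrt{\bphi(x,a)^\top \bA \bphi(x,a)} - \sqrt{\bphi(x,a)^\top \bA' \bphi(x,a)}\Bigr|.
\end{equation*}
Using $\lVert \bphi \rVert \leq 1$, the first term is at most $\lVert \w - \w' \rVert$. For the second term I would use $|\sqrt{u}-\sqrt{v}| \leq \sqrt{|u-v|}$ for $u,v \geq 0$, then the inequality $|\bphi^\top(\bA - \bA')\bphi| \leq \lVert \bA - \bA' \rVert_F \cdot \lVert \bphi\bphi^\top \rVert_F \leq \lVert \bA - \bA' \rVert_F$, to bound it by $\sqrt{\lVert \bA - \bA' \rVert_F}$. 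Hence $\mathrm{dist}(V,V') \leq \lVert \w-\w' \rVert + \sqrt{\lVert \bA - \bA' \rVert_F}$.

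Next I would construct covers independently in the two parameter spaces. Using Lemma~\ref{lem:covering_euclidean_ball}, the set $\{\w : \lVert \w \rVert \leq L\}$ admits an $(\varepsilon/2)$-cover $\cC_{\w}$ in $\bbR^d$ of size at most $(1 + 4L/\varepsilon)^d$. Viewing $\bbR^{d\times d}$ as $\bbR^{d^2}$ with Frobenius norm, the set $\{\bA : \lVert \bA \rVert_F \leq \sqrt{d}B^2/\lambda\}$ admits an $(\varepsilon/2)^2$-cover $\cC_{\bA}$ of size at most $(1 + 8\sqrt{d}B^2/(\lambda\varepsilon^2))^{d^2}$. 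Taking the product cover and applying the previous inequality shows that for any $V \in \cV$ there is $V' \in \cC_\w \times \cC_\bA$ with $\mathrm{dist}(V, V') \leq \varepsilon/2 + \sqrt{\varepsilon^2/4} = \varepsilon$, proving that $\cN_\varepsilon \leq |\cC_\w|\cdot|\cC_\bA|$, and taking logarithms yields the stated bound.

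The only mildly delicate step is the Frobenius bound on $\bA$: one must observe that since $\bLambda \succeq \lambda \bI_d$ one has $\lVert \bLambda^{-1} \rVert_{\mathrm{op}} \leq 1/\lambda$ and hence $\lVert \bA \rVert_{\mathrm{op}} \leq B^2/\lambda$, so $\lVert \bA \rVert_F \leq \sqrt{d}\lVert \bA \rVert_{\mathrm{op}} \leq \sqrt{d}B^2/\lambda$. Everything else is a routine combination of Lipschitz continuity of the clip/max operations and Euclidean ball coverings; no novel ingredient is required beyond these.
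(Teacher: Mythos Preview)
Your proposal is correct and reproduces exactly the standard argument of Lemma~D.6 in \cite{jin2020provably}, which is precisely what the paper cites for this result without giving its own proof; the paper's analogous multi-agent covering bound (Lemma~\ref{lem:covering_mg}) follows the same reparametrization $\bA=\beta^2\bLambda^{-1}$, the same Lipschitz/Cauchy--Schwarz decomposition, and the same product of Euclidean-ball covers. Nothing further is needed.
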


\begin{lemma}[Covering number for multiagent MDP UCB-style functions]
\label{lem:covering_mg}
Let $\cV$ denote a class of functions mapping from $\cS$ to $\bbR$ with the following parameteric form
\begin{align*}
    \bv_\bupsilon(\cdot) = {\bm 1}_M\cdot\min\left\{\max_{\ba \in \cA}\left[\langle\bupsilon, \bv(\cdot, \ba)\rangle + \beta\left\lVert\bPhi(\cdot, \ba)^\top \bLambda^{-1} \bPhi(\cdot, \ba)\right\rVert\right], H \right\}, \bv(\cdot, \ba) = \w^\top\bPhi(\cdot, \ba)
\end{align*}
where the parameters $(\w, \beta, \bLambda)$ are such that $\w \in \bbR^d, \lVert \w \rVert_2 \leq L$, $\beta \in (0, B]$, $\lVert \bPhi(\x, \ba) \rVert \leq \sqrt{M} \ \forall (\x, \ba) \in \cS \times \cA$, and the minimum eigenvalue of $\bLambda$ satisfies $\lambda_{\min}(\bLambda) \geq \lambda$. Let $\cN_\varepsilon$ be the $\varepsilon-$covering number of $\cV$ with respect to the distance $\text{dist}(\bv, \bv') = \sup_{\x \in \cS, \bupsilon \in \bUpsilon } |\bv_\bupsilon(\x) - \bv_\bupsilon'(\x)|$. Then,
\begin{align*}
    \log \left(\cN_\varepsilon\right) \leq d\cdot\log\left(1 + \frac{4LM^{2}}{\varepsilon}\right) + d^2\log\left(1 + \frac{8Md^{1/2}B^2}{\lambda\varepsilon^2}\right).
\end{align*}
\end{lemma}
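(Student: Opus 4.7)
The plan is to adapt the covering-number argument of Lemma~\ref{lem:covering_ind_homo} (Lemma D.6 of~\citet{jin2020provably}) to handle the two multiagent-specific features of $\cV$: the supremum over the scalarization parameter $\bupsilon \in \bUpsilon$ in the distance, and the matrix-valued quadratic form $\bPhi(\cdot, \ba)^\top \bLambda^{-1}\bPhi(\cdot, \ba)$ in place of the scalar bonus of the single-agent case. I will reparametrize each element of $\cV$ by a pair $(\w, \bA) \in \bbR^d \times \bbR^{d \times d}$ where $\bA = \beta^2 \bLambda^{-1}$ absorbs $\beta$ into the quadratic form (so that $\beta\lVert \bPhi^\top \bLambda^{-1}\bPhi\rVert^{1/2} = \lVert \bPhi^\top \bA\bPhi\rVert^{1/2}$), and then build a product cover on this pair. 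Under the assumptions $\beta \leq B$ and $\lambda_{\min}(\bLambda) \geq \lambda$, we get the uniform bound $\lVert \bA\rVert_F \leq \sqrt{d}\,B^2/\lambda$, so $\bA$ lives in a Frobenius ball and $\w$ in a Euclidean ball of radius $L$, both admitting the standard volumetric estimate of Lemma~\ref{lem:covering_euclidean_ball}.

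The heart of the argument is a Lipschitz estimate on $\bv_\bupsilon(\x)$ with respect to $(\w, \bA)$ that is uniform in $\bupsilon$ and $\x$. Using that $\min\{\cdot, H\}$ and $\max_{\ba}$ are $1$-Lipschitz contractions together with the triangle inequality, the distance between two elements of $\cV$ with parameters $(\w_1, \bA_1), (\w_2, \bA_2)$ is bounded by
\begin{align*}
  \sup_{\bupsilon, \x, \ba}\bigl|\bupsilon^\top \bPhi(\x, \ba)^\top(\w_1 - \w_2)\bigr| \;+\; \sup_{\x, \ba}\bigl|\lVert\bPhi^\top \bA_1 \bPhi\rVert^{1/2} - \lVert\bPhi^\top \bA_2 \bPhi\rVert^{1/2}\bigr|.
\end{align*}
For the $\w$-term, dualizing $\bupsilon$ and using $\lVert \bPhi\rVert_{\mathrm{op}} \leq \sqrt{M}$, together with the inflation coming from the ${\bm 1}_M$ factor in the output of $\bv_\bupsilon$, yields a Lipschitz constant of order $M^2$ in $\lVert \w_1 - \w_2\rVert_2$. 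For the $\bA$-term, the scalar inequality $|\sqrt{a}-\sqrt{b}|\leq\sqrt{|a-b|}$ combined with $\lVert\bPhi^\top(\bA_1 - \bA_2)\bPhi\rVert \leq M\lVert\bA_1 - \bA_2\rVert_F$ gives a bound of order $\sqrt{M\lVert\bA_1 - \bA_2\rVert_F}$. Crucially, this avoids any covering of $\bUpsilon$ itself, which would have introduced additional simplex-geometry dependence.

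Finally, choosing an $\varepsilon/2$-cover of the $\w$-ball at Euclidean resolution $\Theta(\varepsilon/M^2)$ and an $\bA$-cover of the Frobenius ball at resolution $\Theta(\varepsilon^2/M)$, and then taking the Cartesian product of the two, produces an $\varepsilon$-cover of $\cV$. Invoking Lemma~\ref{lem:covering_euclidean_ball} for each component and taking logs gives the target bound $d\log(1+4LM^2/\varepsilon) + d^2\log(1+8Md^{1/2}B^2/(\lambda\varepsilon^2))$. The main obstacle I expect is the careful bookkeeping of the $\bupsilon$-supremum together with the matrix-valued exploration bonus: the $\ell_\infty$-on-${\bm 1}_M$ versus operator-norm-on-$\bPhi$ comparisons must be threaded through correctly so that the final exponents on $M$ match the statement without a spurious covering of $\bUpsilon$, and so that the square-root Lipschitz bound in $\bA$ produces the $\varepsilon^2$ in the denominator rather than a looser $\varepsilon$.
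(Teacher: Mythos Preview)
Your approach is essentially the same as the paper's: reparametrize each function by the pair $(\w,\bA)$ with $\bA=\beta^2\bLambda^{-1}$, derive a uniform-in-$\bupsilon$ Lipschitz bound that splits into a linear $\w$-term (picking up the $M^2$ factor from $\lVert\bPhi\rVert\leq\sqrt{M}$ and the ${\bm 1}_M$ inflation) and a square-root $\bA$-term via $|\sqrt{a}-\sqrt{b}|\leq\sqrt{|a-b|}$, then take a product cover and invoke Lemma~\ref{lem:covering_euclidean_ball} on each factor. The paper's writeup is slightly less explicit about the square-root step (its displayed chain omits it but the choice of an $\varepsilon^2/4$-cover for $\bA$ presupposes exactly the inequality you state), so your sketch is if anything cleaner on that point; the overall argument and the resulting constants are the same.
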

\begin{proof}
We have that for two matrices $\bA_1 = \beta^2\bLambda_1^{-1}, \bA_2=  \beta^2\bLambda_2^{-1}$ and weight matrices $\w_1, \w_2 \in \bbR^{d}$, by a strategy similar to that of Lemma~\ref{lem:covering_ind_homo},
\begin{align}
    &\sup_{\bupsilon \in \bUpsilon, \x \in \cS}\left| \bv_\bupsilon(\x)-\bv'_\bupsilon(\x) \right|_1 \\
    &=  M\cdot\sup_{\x \in \cS, \bupsilon \in \bUpsilon}\left| \bupsilon^\top\bv(\x)-\bupsilon^\top\bv'(\x) \right|\\
    &\leq  M\cdot\sup_{\x \in \cS}\left| \bv(\x)-\bv'(\x) \right|_1\\
    &\leq  M\cdot\sup_{\x \in \cS, \ba \in \cA}\left| \left[\w_1^\top\bPhi(\cdot, \ba) + \left\lVert\bPhi(\cdot, \ba)^\top \bA_1 \bPhi(\cdot, \ba)\right\rVert_2\right]-\left[\w_2^\top\bPhi(\cdot, \ba) + \left\lVert\bPhi(\cdot, \ba)^\top \bA_2 \bPhi(\cdot, \ba)\right\rVert_2\right] \right|_1\\
    &\leq  M\cdot\sup_{\x \in \cS, \ba \in \cA}\left| \left(\w_1-\w_2\right)^\top\bPhi(\cdot, \ba) + \left\lVert\bPhi(\cdot, \ba)^\top \bA_1 \bPhi(\cdot, \ba)\right\rVert_2- \left\lVert\bPhi(\cdot, \ba)^\top \bA_2 \bPhi(\cdot, \ba)\right\rVert_2 \right|_1\\
    &\leq  M\cdot\sup_{\x \in \cS, \ba \in \cA}\left| \left(\w_1-\w_2\right)^\top\bPhi(\cdot, \ba)\right|_1 +M\cdot\sup_{\x \in \cS, \ba \in \cA}\left|  \left\lVert\bPhi(\cdot, \ba)^\top \bA_1 \bPhi(\cdot, \ba)\right\rVert_2- \left\lVert\bPhi(\cdot, \ba)^\top \bA_2 \bPhi(\cdot, \ba)\right\rVert_2 \right|\\
    &\leq  M\cdot\sup_{\x \in \cS, \ba \in \cA}\left| \left(\w_1-\w_2\right)^\top\bPhi(\cdot, \ba)\right|_1 +M\cdot\sup_{\x \in \cS, \ba \in \cA}\left\lVert\bPhi(\cdot, \ba)^\top \left(\bA_1-\bA_2\right) \bPhi(\cdot, \ba)\right\rVert_2 \\
    &\leq  M^{3/2}\cdot\sup_{\bPhi : \lVert \bPhi \rVert \leq \sqrt{M}} \left[\left\lVert \left(\w_1 - \w_2\right)^\top\bPhi\right\rVert_2\right] + M\cdot\sup_{\bPhi : \lVert \bPhi \rVert \leq \sqrt{M}}\left\lVert\bPhi^\top \left(\bA_1-\bA_2\right) \bPhi\right\rVert_2 \\
    &\leq  M^{2}\cdot\left\lVert \w_1 - \w_2\right\rVert_2 + M^{2}\left\lVert\bA_1 - \bA_2\right\rVert_2 \\
    &\leq  M^{2}\cdot\left\lVert \w_1 - \w_2\right\rVert_2 + M^{2}\left\lVert\bA_1 - \bA_2\right\rVert_F \\
\end{align}
Now, let $\cC_{\w}$ be an $\varepsilon/(2M^{2})$ cover of $\left\{\w \in \bbR^{d} \ \big| \ \lVert \w \rVert_2 \leq L\right\}$ with respect to the Frobenius-norm, and $\cC_\bA$ be an $\varepsilon^2/4$ cover of $\left\{ \bA \in \bbR^{d \times d} | \lVert \bA \rVert_F \leq (M^2d)^{1/2}B^2\lambda^{-1}\right\}$ with respect to the Frobenius norm. By Lemma~\ref{lem:covering_euclidean_ball} we have,
\begin{align}
    |\cC_{\w}| \leq (1 + 4LM^{2}/\varepsilon)^{d}, |\cC_\bA| \leq (1 + 8(M^2d)^{1/2}B^2/(\lambda\varepsilon^2))^{d^2}.
\end{align}
Therefore, we can select, for any $\bv_\bupsilon(\cdot)$, corresponding weight $\w \in \cC_{\w}$, and matrix $\bA \in \cC_\bA$. Therefore, $\cN_\varepsilon \leq |\cC_\bA| \cdot |\cC_{\w}| $. This gives us,
\begin{align}
    \log \left(\cN_\varepsilon\right) \leq d\cdot\log\left(1 + \frac{4LM^{2}}{\varepsilon}\right) + d^2\log\left(1 + \frac{8Md^{1/2}B^2}{\lambda\varepsilon^2}\right).
\end{align}
\end{proof}
\section{Auxiliary Results}
\begin{lemma}[Elliptical Potential, Lemma 3 of~\citet{abbasi2011improved}]
Let $\bm x_1, \bm x_2, ..., \bm x_n \in \mathbb R^d$ be vectors such that $\lVert \bm x \rVert_2 \leq L$. Then, for any positive definite matrix $\bm U_0 \in \mathbb R^{d \times d}$, define $\bm U_t := \bm U_{t-1} + \bm x_t \bm x_t^\top$ for all $t$. Then, for any $\nu > 1$,
\begin{align*}
    \sum_{t=1}^n \lVert \bm x_t \rVert_{\bm U_{t-1}^{-1}}^2 \leq 2d\log_\nu\left(\frac{\text{tr}(\bm U_0) + nL^2}{d \det^{1/d}(\bm U_0)}\right).
\end{align*}
\label{lem:variance_sum}
\end{lemma}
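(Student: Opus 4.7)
The plan is to prove the bound via three classical ingredients: the matrix determinant lemma, an elementary scalar inequality relating $y$ to $\log_\nu(1+y)$, and the determinant--trace (AM--GM) inequality. The argument is essentially the standard elliptical potential argument, with the base-$\nu$ logarithm tracked throughout.

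First I would apply the matrix determinant lemma to the rank-one update $\bm U_t = \bm U_{t-1} + \bm x_t \bm x_t^\top$, which (after left-multiplying by $\bm U_{t-1}^{-1/2}$ and computing the determinant of $\bm I + \bm v_t \bm v_t^\top$ where $\bm v_t = \bm U_{t-1}^{-1/2} \bm x_t$) yields the identity
\begin{equation*}
    \det(\bm U_t) = \det(\bm U_{t-1})\bigl(1 + \|\bm x_t\|^2_{\bm U_{t-1}^{-1}}\bigr).
\end{equation*}
Telescoping over $t = 1, \ldots, n$ gives the compact form $\sum_{t=1}^n \log_\nu\bigl(1 + \|\bm x_t\|^2_{\bm U_{t-1}^{-1}}\bigr) = \log_\nu \bigl(\det(\bm U_n)/\det(\bm U_0)\bigr)$, converting the sum we want to bound into a bounded ratio of determinants.

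The second step passes from $\|\bm x_t\|^2_{\bm U_{t-1}^{-1}}$ to its log-version. By concavity of $y \mapsto \log_\nu(1+y)$, there is an absolute constant (depending on $\nu$, absorbed into the factor $2$ in the target bound) such that $y \leq c_\nu \log_\nu(1+y)$ uniformly on $[0, y_{\max}]$, where $y_{\max}$ is an a priori upper bound on $\|\bm x_t\|^2_{\bm U_{t-1}^{-1}}$ obtained from $\|\bm x_t\|_2 \leq L$ and the fact that $\bm U_{t-1} \succeq \bm U_0$. Summing this inequality across $t$ and feeding in the telescoped identity produces $\sum_{t=1}^n \|\bm x_t\|^2_{\bm U_{t-1}^{-1}} \leq 2 \log_\nu \bigl(\det(\bm U_n)/\det(\bm U_0)\bigr)$.

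The third step bounds the log-determinant ratio. By AM--GM applied to the eigenvalues of $\bm U_n$, $\det(\bm U_n)^{1/d} \leq \operatorname{tr}(\bm U_n)/d$. Since $\operatorname{tr}(\bm U_n) = \operatorname{tr}(\bm U_0) + \sum_{t=1}^n \|\bm x_t\|_2^2 \leq \operatorname{tr}(\bm U_0) + nL^2$, we obtain
\begin{equation*}
    \log_\nu \frac{\det(\bm U_n)}{\det(\bm U_0)} \leq d \log_\nu \frac{\operatorname{tr}(\bm U_0) + nL^2}{d \det^{1/d}(\bm U_0)},
\end{equation*}
and combining with Step 2 yields the stated bound. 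The main obstacle will be Step 2: making the inequality $y \leq c_\nu \log_\nu(1+y)$ hold on a range large enough to cover all realized values $\|\bm x_t\|^2_{\bm U_{t-1}^{-1}}$, while still producing the factor of $2$ in the final bound. This is handled either by exploiting a regularization-based uniform upper bound $y_{\max} = L^2 / \lambda_{\min}(\bm U_0)$ together with the appropriate choice of $\nu$, or by splitting the sum into "large" and "small" variance terms and controlling each separately.
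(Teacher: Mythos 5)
The paper offers no proof of this lemma at all: it is imported verbatim as ``Lemma 3 of \citet{abbasi2011improved}'' in the Auxiliary Results section, so there is nothing in-paper to compare your argument against. Judged on its own, your outline is the standard elliptical potential argument, and Steps 1 and 3 (the rank-one determinant identity $\det(\bm U_t) = \det(\bm U_{t-1})(1+\lVert \bm x_t\rVert^2_{\bm U_{t-1}^{-1}})$, telescoping, and the determinant--trace/AM--GM bound on $\log\det(\bm U_n)$) are correct and are exactly how the cited source proceeds.

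Step 2, however, is a genuine gap, and you have correctly identified but not resolved it. The inequality $y \leq 2\log_\nu(1+y)$ holds only on a bounded interval $[0, y^*_\nu]$ whose length shrinks as $\nu$ grows and which is empty once $\ln\nu \geq 2$; meanwhile the lemma's hypotheses bound $y_{\max} = L^2/\lambda_{\min}(\bm U_0)$ by nothing at all, since $\bm U_0$ is only assumed positive definite. In \citet{abbasi2011improved} this is handled by proving the bound for $\sum_t \min\{1, \lVert \bm x_t\rVert^2_{\bm U_{t-1}^{-1}}\}$, or by assuming $\lambda_{\min}(\bm U_0) \geq \max(1, L^2)$ so that every summand lies in $[0,1]$ where $y \leq 2\ln(1+y)$ holds; neither hypothesis appears in the statement you are proving. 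Your fallback of splitting into ``large'' and ``small'' terms does not rescue the constant either: the number of episodes with $\lVert \bm x_t\rVert^2_{\bm U_{t-1}^{-1}} > 1$ is indeed at most $\log_2(\det \bm U_n/\det \bm U_0)$, but each such episode contributes up to $y_{\max}$ to the left-hand side, leaving a multiplicative factor $L^2/\lambda_{\min}(\bm U_0)$ that the stated bound does not contain. To be fair, the statement itself is a loose transcription --- as written, the right-hand side tends to $0$ as $\nu \to \infty$ while the left-hand side is fixed, so ``for any $\nu > 1$'' cannot be literally correct --- and a complete proof must either add the capping/regularization hypothesis or restrict $\nu$ accordingly.
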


\bibliographystyle{icml2021}
\bibliography{refs}

\end{document}